\documentclass[twoside,11pt]{article}

%

%
%
%

\usepackage[abbrvbib, preprint]{jmlr2e}

\usepackage{amsmath,amsfonts}
\usepackage{mathtools}
\usepackage{enumerate,color,xcolor}
\usepackage{subfigure}
\usepackage{url}
\usepackage[textsize=footnotesize]{todonotes}

\usepackage{natbib}





\usepackage{graphicx}      
\usepackage{subfigure}
\usepackage{url}

\newcommand{\rev}[1]{\textcolor{black}{#1}}
\usepackage{comment}

\newcommand{\ones}{\textbf{1}}

\newcommand{\beq}{\begin{eqnarray}} 
\newcommand{\eeq}{\end{eqnarray}}
\newcommand{\beqs}{\begin{eqnarray*}} 
\newcommand{\eeqs}{\end{eqnarray*}}
\newcommand{\R}{\mathbb{R}}


\renewcommand{\k}[1]{#1^{(k)}}
\newcommand{\km}[1]{#1^{(k-1)}}
\newcommand{\kp}[1]{#1^{(k+1)}}
\ShortHeadings{Decentralized Stochastic Gradient Langevin Algorithms}{G\"{u}rb\"{u}zbalaban, Gao, Hu and Zhu}
\firstpageno{1}

\begin{document}
\title{Decentralized Stochastic Gradient Langevin\\ Dynamics and Hamiltonian Monte Carlo}

\author{\name Mert G\"{u}rb\"{u}zbalaban \email mg1366@rutgers.edu \\
       \addr Department of Management Science and Information Systems \\
       Rutgers Business School\\
       Piscataway, NJ 08854, United States of America
       \AND
       \name Xuefeng Gao* \email xfgao@se.cuhk.edu.hk \\
       \addr Department of Systems Engineering and Engineering Management\\
       The Chinese University of Hong Kong\\
       Shatin, N.T., Hong Kong, China
       \AND
       \name Yuanhan Hu* \email yh586@scarletmail.rutgers.edu \\
       \addr 
       Department of Management Science and Information Systems \\
       Rutgers Business School\\
       Piscataway, NJ 08854, United States of America
       \AND
       \name Lingjiong Zhu* \email zhu@math.fsu.edu \\
       \addr Department of Mathematics \\
       Florida State University \\
       Tallahassee, FL 32306, United States of America\\
       ~\\
       \name * \addr The authors are in alphabetical order.\\
       }

\editor{}

\maketitle

\begin{abstract}
Stochastic gradient Langevin dynamics (SGLD) and stochastic gradient Hamiltonian Monte Carlo (SGHMC) are two popular Markov Chain Monte Carlo (MCMC) algorithms for Bayesian inference that can scale to large datasets, allowing to sample from the posterior distribution of the parameters of a statistical model given the input data and the prior distribution over the model parameters. However, these algorithms do not apply to the decentralized learning setting, when a network of agents are working collaboratively to learn the parameters of a statistical model without sharing their individual data due to privacy reasons or communication constraints. We study two algorithms: Decentralized SGLD (DE-SGLD) and Decentralized SGHMC (DE-SGHMC) which are adaptations of SGLD and SGHMC methods that allow scaleable Bayesian inference in the decentralized setting for large datasets. We show that when the posterior distribution is strongly log-concave and smooth, the iterates of these algorithms converge linearly to a neighborhood of the target distribution in the 2-Wasserstein distance if their parameters are selected appropriately. We illustrate the efficiency of our algorithms on decentralized Bayesian linear regression and Bayesian logistic regression problems.
\end{abstract}


\begin{keywords}
Langevin dynamics, Hamiltonian Monte Carlo, decentralized algorithms, decentralized Bayesian inference, stochastic gradient, momentum acceleration, Heavy-ball method, convergence rate, Wasserstein distance.
\end{keywords}


%

\section{Introduction}
Recent decades have witnessed the era of big data, and there has been an exponential growth in the amount of data collected and stored with ever-increasing rates. Since the rate at which data is generated is often outpacing our ability to analyze it in terms of computational resources at hand, there has been a lot of recent interests for developing scaleable machine learning algorithms which are efficient on large datasets. 

In the modern world, digital devices such as smart phones, tablets, wearables, sensors or video cameras are major sources of data generation. Often these devices are connected over a communication network (such as a wireless network or a sensor network) that has a high latency or a limited bandwidth.   
Because of communication constraints and privacy constraints, gathering all these data for centralized processing is often impractical or infeasible. Decentralized machine learning algorithms have received a lot of attention for such applications where agents can collaboratively learn a predictive model without sharing their own data but sharing only their local models with their immediate neighbors at some frequency to generate a global model; 
see e.g. \citet{arjevani2020ideal,he2018cola,hendrikx2019accelerated,kungurtsev2020}. 

A number of approaches for scaleable decentralized learning have been proposed in the literature such as decentralized stochastic approximation and optimization algorithms \citep{gorbunov2019optimal,nedic2020distributed,scaman2019optimal,uribe2017optimal} or decentralized maximum-likelihood estimation approaches \citep{blatt2004distributed,rabbat2004decentralized}. However, these approaches are optimization-based or in the maximum-likelihood settings, and therefore lead to point estimates for the model parameters to be learned. On the other hand, Bayesian methods allow a characterization of the full posterior distribution over the parameters, and therefore can provide a more detailed grasp of uncertainties that are part of the learning process and offer robustness to overfitting. There are a number of scaleable Bayesian methods in the literature based on variational inference methods \citep{sato2001online,hoffman2010online,lin2013online}, Bayesian coreset methods \citep{huggins2016coresets,coreset-19} and Markov Chain Monte Carlo (MCMC) based methods including
Stochastic Gradient Langevin Dynamics (SGLD) \citep{welling2011bayesian}, Stochastic Gradient Hamiltonian Monte Carlo (SGHMC) \citep{chen2014stochastic,ZXG2018} and their variants that can handle streaming data \citep{broderick2013streaming}. There are also versions of these methods such as consensus Monte Carlo methods which distribute and parallelize the computations needed for Monte Carlo sampling across many computational nodes on a cluster \citep{d-sgld,xu2014distributed,variational-mcmc,broderick2013streaming}, however none of these methods are applicable to the decentralized setting either because they need to move the data to a centralized location or because they require a global computational unit with which each learning agent is in communication or is the main thread on a multi-threaded computer which is not applicable to decentralized learning applications. In this paper, we consider two algorithms DE-SGLD and DE-SGHMC which are adaptations of the SGLD and SGHMC algorithms to the decentralized setting and show that they can be both theoretically and practically efficient for sampling from the posterior distribution when the density of the target distribution $\pi(x) \propto e^{-f(x)}$ is strongly log-concave (i.e. $f$ is strongly convex) and $f$ is smooth.


Before introducing the DE-SGLD algorithm, we consider the problem of decentralized Bayesian inference: We have $N$ agents connected over a network $\mathcal{G}=(\mathcal{V},\mathcal{E})$ where $\mathcal{V}=\{1,2,\dots,N\}$ represents the agents and $\mathcal{E}\subseteq \mathcal{V}\times \mathcal{V}$ is the set of edges; i.e. $i$ and $j$ are connected if $(i,j) \in \mathcal{E}$ where the network is undirected, i.e. $(i,j) \in \mathcal{E}$ then $(j,i) \in \mathcal{E}$. Let $A = [a_1,\dots,a_n]$ be a dataset consisting of $n$ independent and identically distributed (i.i.d.) data vectors sampled from a parametrized distribution $p(A|x)$ where the parameter $x\in \mathbb{R}^d$ has a common prior distribution $p(x)$. Due to the decentralization in the data collection, each agent $i$ possesses a subset $A_i$ of the data where $A_i = \{a_1^i, a_2^i, \dots, a_{n_i}^i \}$ and $n_i$ is the number of samples of the agent $i$. The data is held disjointly over agents; i.e. $A = \cup_i A_i$ with $A_i \cap A_j = \emptyset$ for $j\neq i$. The goal is to
sample from the posterior distribution $p(x|A) \propto p(A|x) p(x)$.
Since the data points are independent, the log-likelihood function will be additive; $\log p(A|x) = \sum_{i=1}^N \sum_{j=1}^{n_i} \log p(a_j^i | x)$. Thus, if we set
\begin{equation} 
f(x) := \sum_{i=1}^N f_i(x), \quad f_i(x) := -\sum_{j=1}^{n_i}\log p\left(a_j^i | x\right) - \frac{1}{N} \log p(x),
\label{def-bayesian-inf}
\end{equation}
the aim is to sample from the posterior distribution 
with density $\pi(x):= p(x|A)\propto e^{-f(x)}$. The functions $f_i(x)$ are called ``component functions" where $f_i(x)$ is associated to the local data of agent $i$ and is only accessible by the agent $i$. Clearly, different choices of the log-likelihood function and therefore the component functions result in different problems. In particular, this framework covers many Bayesian inference problems such as Bayesian linear regression \citep{hoff2009first}, Bayesian logistic regression \citep{hoff2009first}, Bayesian principal component analysis \citep{dubey2016variance} or Bayesian deep learning \citep{wang2016towards,polson2017}. 

Let $x_i^{(k)}$ denote the local variable of node $i$ at iteration $k$. The decentralized SGLD (DE-SGLD) algorithm (previously considered in  \citet{swenson2020distributed} in the non-convex global optimization setting) consists of a weighted averaging with the local variables $x_j^{(k)}$ of node $i$'s immediate neighbors $j\in\Omega_{i}:=\{j : (i,j) \in \mathcal{G}\}$ as well as a stochastic gradient step over the node's component function $f_i(x)$, i.e. 
\begin{equation}\label{eqn:1}
x_{i}^{(k+1)}=\sum_{j\in\Omega_{i}}W_{ij}x_{j}^{(k)}-\eta\tilde\nabla f_{i} \left(x_{i}^{(k)}\right)+\sqrt{2\eta}w_{i}^{(k+1)},
\end{equation} 
where $\eta>0$ is the stepsize, $W_{ij}$ are the entries of a doubly stochastic weight matrix $W$ with $W_{ij}>0$ only if $i$ is connected to $j$, $w_{i}^{(k)}$
are independent and identically distributed (i.i.d.) Gaussian random variables with zero mean and identity covariance matrix for every $i$ and $k$, and $\tilde\nabla f_{i} \left(x_{i}^{(k)}\right)$ is an unbiased stochastic estimate of the deterministic gradient $\nabla f_{i} \left(x_{i}^{(k)}\right)$ with a bounded variance (see \eqref{gradient:noise:i} for more details). When the number of data points $n_i$ is large, stochastic estimates $\tilde\nabla f_i(x)$ are cheaper to compute compared to actual gradients $\nabla f_i(x)$ and can for instance be estimated from a minibatch of data, i.e. from randomly selected smaller subsets of data. This allows the DE-SGLD method to be scaleable to big data settings when $n_i$ can be large. When gradients are deterministic, DE-SGLD algorithm reduces to the decentralized Langevin algorithm previously considered and studied in \citet{kungurtsev2020}. Without the Gaussian noise, the iterations are also equivalent to the decentralized stochastic gradient algorithm \citep{swenson-journal,robust-network-asg} which has its origins in the decentralized gradient descent (DGD) methods introduced in \citet{nedic2009distributed}.



\textbf{Contributions.} 
In this paper, our contributions can be summarized as follows: 

First, we give non-asymptotic performance guarantees for DE-SGLD when each of the components $f_i(x)$ is smooth and strongly convex in which case the target distribution has density $\pi(x) \propto e^{-f(x)}$ that is strongly log-concave (i.e. $f$ is strongly convex) and $f$ is smooth. More specifically, we provide an explicit upper bound on the Wasserstein distance between the target distribution $\pi(x)$ and the distribution of the iterate  $x_i^{(k)}$ of node $i$. Our results show that the distribution of the iterates $x_i^{(k)}$ converges to a neighborhood of the posterior distribution $\pi(x)$ linearly (geometrically fast in $k$) in the Wasserstein metric with a properly chosen stepsize. We also provide explicit bounds on the size of this neighborhood as a function 
of the noise level $\sigma^2$ in the stochastic gradients, the number of agents $N$ and the dimension $d$. We can also show similar results for the averaged iterates $\bar{x}^{(k)} = \frac{1}{N}\sum_{i=1}^N x_i^{(k)}$. Our proof technique relies on analyzing DE-SGLD as a perturbed version of the Euler-Maruyama discretization of the overdamped Langevin diffusion (properly defined in Section~\ref{sec:background}) and use the fact that this diffusion admits the posterior distribution with density $\pi(x) \propto e^{-f(x)}$ as the stationary distribution 
where the perturbation effect is due to the stochasticity of the gradients and due to the ``network effect" where agents are only able to communicate with their immediate neighbours. For achieving the results, we first derive a uniform $L_2$ bound on the gradients (Lemma~\ref{lem:0}) as well as a uniform $L_2$ bound on the deviation of the iterates $\bar{x}_i^{(k)}$ from their mean $\bar{x}^{(k)}$ over the agents (Lemma~\ref{lem:1}). Then, we derive an $L^{2}$ bound
on the error between the average of gradients $\frac{1}{N}\sum_{i=1}^{N}\nabla f_{i}\left(x_{i}^{(k)}\right)$ and the scaled gradient of the average $\frac{1}{N}\nabla f\left(\bar{x}^{(k)}\right)$ (Lemma~\ref{lem:2}). Finally, we control the error between the mean iterates and the discretization of the overdamped diffusion (Lemma~\ref{lem:3}) and build on the existing results which characterizes the Wasserstein distance between the overdamped diffusion and its discretization. Putting everything together, we obtain our main result for DE-SGLD (Theorem~\ref{thm:overdamped}). 

Second, we propose a new algorithm decentralized SGHMC (DE-SGHMC) which can be viewed as the decentralized version of the SGHMC algorithm. In centralized settings, it is known that SGHMC algorithm can be faster than the SGLD algorithm both in practice and in theory \citep{GGZ,chen2014stochastic}. The underlying reason is  that SGHMC is based on a discretization of the (underdamped) inertial Langevin diffusion which can converge to its equilibrium faster than the overdamped diffusion due to a momentum-based acceleration effect \citep{GGZ,Eberle}. This effect is analogous to the fact that momentum-based optimization methods can accelerate gradient descent \citep{polyak1987introduction,nesterov1983method,su2016differential}. We show that with proper choice of the stepsize and momentum parameters, the distribution of the DE-SGHMC iterates $x_i^{(k)}$ will converge to a neighborhood of the posterior distribution $\pi(x)$ linearly (in $k$) in the Wasserstein metric
(Theorem~\ref{thm:underdamped}). To our knowledge, these are first non-asymptotic performance guarantees for SGHMC methods in the decentralized setting. The approach we take is analogous to our analysis of the DE-SGLD however obtaining stability (uniform $L_2$) bounds on the iterates requires significantly more work. For this purpose, we develop a novel analysis where we show that the DE-SGHMC iterates can be viewed as a noisy version of Polyak's (deterministic) heavy-ball method \citep{polyak1987introduction}; the noise comes from stochasticity of the gradients  (which is proportional to the stepsize $\eta$), the injected Gaussian noise (which is proportional to $\sqrt{\eta}$) and the network effect where iterates can only access information from their neighbors. 
When the stepsize $\eta$ is sufficiently small, the Gaussian noise dominates the stochastic gradient noise and therefore existing analysis for stochastic heavy-ball methods \citep{can2019accelerated,kuru-privacy} are not directly applicable to obtain stability estimates (i.e. uniform $L_2$ bounds) when $\eta$ is sufficiently small. Our analysis relies on a careful choice of the Lyapunov function and obtaining sufficient conditions on the stepsize and the momentum parameters of the DE-SGHMC algorithm to guarantee stability (Lemma~\ref{lemma-l2-underdamped}). 
As a by-product, our results contribute to the growing literature about the stability of the heavy-ball methods where it has been observed repeatedly that optimization methods such as heavy-ball and Nesterov's accelerated gradient methods are more sensitive to noise in the iterations compared to gradient descent methods \citep{robust-asg,robust-network-asg,flammarion2015averaging,devolder2014first,can2019accelerated}. Recent literature focused on the amount of noise heavy-ball methods can tolerate before they diverge and on their convergence rate subject to noise and perturbations \citep{can2019accelerated,flammarion2015averaging,liu2020improved,kuru-privacy}. Our analysis in the proof of Lemma~\ref{lemma-l2-underdamped} provides sufficient conditions for noisy heavy-ball iterations to be stable when subject to noise that is on the order of the square root of the stepsize. 

Finally, we provide numerical experiments that illustrate our theory and showcase the practical performance of the DE-SGLD and DE-SGHMC algorithms: We show on Bayesian linear regression and Bayesian logistic regression tasks that our method allows each agent to sample from the posterior distribution efficiently without communicating local data.

\textbf{Related literature.} Decentralized optimization has been studied in the literature in the last few decades, at least going back to the seminal works of \citet{bertsekas1989parallel,tsitsiklis1984problems} which studied minimization of objective functions 
when the parameter vector can be decentralized. There has also been a growing literature and a lot of recent interest on decentralized optimization with first-order methods for both deterministic and stochastic optimization; See e.g. \citet{d-sgd,robust-network-asg,arjevani2020ideal,can2019decentralized,pu2020asymptotic} and also the surveys \citet{nedic2020distributed,YANG2019278}. Among the papers published in this area, \citet{swenson2020distributed,swenson-journal}
are most relevant to our paper, where the authors study a class of algorithms including DE-SGLD and show that DE-SGLD iterates with a particular decaying stepsize schedule converge in probability to the set of global minima for non-convex objectives under some assumptions. Momentum-based acceleration techniques based on heavy-ball method \citep{xin2019distributed} and Nesterov's accelerated gradient method have also been studied for solving optimization problems in the decentralized setting \citep{robust-network-asg,arjevani2020ideal,qu2016accelerated,pmlr-v108-xu20b}, we refer the readers to \citet{nedic2020distributed} for a survey in decentralized optimization. However, these papers are focused on solving optimization problems and the results do not apply to our setting where we are interested in sampling from the posterior distribution. 

There are also a number of papers for distributed Bayesian inference \rev{based on data-parallel MCMC algorithms \citep{GCWG2015,neiswanger2014asymptotically,xu2014distributed, scott2016bayes, variational-mcmc,scott2017comparing,rendell2020global,ahn2015large} where the computations are parallelized in a distributed computing environment}, however these papers are not applicable to the decentralized setting either. The variational inference methods which approximate the posterior distribution with a tractable distribution in the exponential family can be applied in the decentralized setting \citep{CH2014,lalitha2019decentralized} where agents average the parameters of their local parametrized distribution that estimates the posterior distribution, however to our knowledge, 
convergence rate guarantees to a posterior distribution for such approaches in the decentralized setting are not provided except the special case when the posterior distribution is in the exponential family \citep{lalitha2019decentralized}. \rev{There are also other parallel MCMC techniques \citep{wang2013parallelizing,neiswanger2014asymptotically,wang2015parallelizing,chowdhury2018parallel,nishihara2014parallel} which require a central node to aggregate the samples generated at each computational node to estimate the posterior distribution; these methods are also not directly applicable to the decentralized setting.}

Finally, very recently \citet{kungurtsev2020} showed that in the special case when the gradients are deterministic (i.e. when $\sigma=0$), DE-SGLD algorithm converges to the target distribution $\pi(x)$ with rate $\mathcal{O}(\frac{1}{\sqrt{k}})$ for decaying stepsize $\alpha_k = \frac{1}{k}$ in the Wasserstein metric for strongly convex and smooth $f$ with bounded gradients. Since strongly convex functions on $\mathbb{R}^d$ cannot have bounded gradients, these results are not applicable to problems we consider in this paper.
In a concurrent work,
\citet{PBGG2020} studied a Bayesian learning algorithm based on the decentralized Langevin dynamics in a non-convex setting.
They obtained theoretical convergence guarantees in KL-divergence
and evaluated the proposed algorithm on a wide variety
of machine learning tasks. \rev{In another recent work, \cite{cadena2021stochastic} proposed a modified Langevin dynamics algorithm for sensor networks. This algorithm can be implemented in a decentralized manner, where each sensor communicates with a randomly selected subset of sensors either via direct links or via multi-hop mechanism. The authors also show that when the gradient of the logarithm of the target density is bounded and Lipschitz, the proposed algorithm converges to the true centralized posterior distribution for networks where the communication delays are bounded.}

\section{Preliminaries and Background}\label{sec:background}



\paragraph{Langevin algorithms.} \textit{Langevin algorithms} are core MCMC methods in statistics that allow one to sample from a given density $\pi(x)$ of interest. The classical Langevin Monte Carlo algorithm is based on the
\emph{overdamped (or first-order) Langevin diffusion}; see e.g. \citet{Dalalyan,DM2016,DM2017,DK2017}:
\begin{equation}\label{eq:overdamped-2}
dX(t)=-\nabla f(X(t))dt+\sqrt{2}dW_{t},
\end{equation}
where $f:\mathbb{R}^{d}\rightarrow\mathbb{R}$
and $W_{t}$ is a standard $d$-dimensional Brownian motion that starts at zero
at time zero. Under some mild assumptions on $f$, the diffusion \eqref{eq:overdamped-2} admits a unique stationary distribution with the density $\pi(x) \propto e^{-f(x)}$,
also known as the \emph{Gibbs distribution} \citep{pavliotis2014stochastic}. For computational purposes, this diffusion is simulated by considering its discretization. 
Although various discretization schemes are proposed, Euler-Maruyama discretization is the simplest one:
\begin{equation}\label{discrete:overdamped}
x_{k+1}=x_{k}-\eta \nabla f(x_k)+\sqrt{2\eta}w_{k}\,,
\end{equation}
where $\eta>0$ is the stepsize parameter, and $w_k \in \R^d$ is a sequence of i.i.d. standard Gaussian random vectors $\mathcal{N}(0,I_{d})$. But then the discretized chain \eqref{discrete:overdamped} does not converge to the target $\pi$ and has a bias that needs to be properly characterized to provide performance guarantees \citep{DK2017}.\footnote{In principle, Metropolis-Hasting correction step can be employed to correct for the discretization errors, however for large-scale datasets, this correction step is computationally expensive and thus it is often not employed \citep{dalalyan2018kinetic,DK2017,teh2016consistency}. For this reason, we will not consider Metropolis-Hasting steps in our algorithms and analyses.}
There has been growing recent interest in the non-asymptotic analysis of discretized Langevin diffusions \eqref{discrete:overdamped}, motivated by applications to large-scale data analysis and Bayesian inference. The discretized Langevin diffusions admit convergence guarantees to a stationary distribution in a variety of metrics and under various assumptions on $f$; {see e.g. \citet{Dalalyan,DM2017, DM2016,BEL2015,CB2018,EH2020,DK2017,Barkhagen2021,Raginsky,xu2018global,Chau2019,Zhang2019}.}



On the other hand, one can also design sampling algorithms based on 
 the underdamped (a.k.a. inertial or kinetic) Langevin diffusion given by the SDE; see e.g. \citet{Cheng,cheng-nonconvex,dalalyan2018kinetic,GGZ,GGZ2,Ma2019,Akyildiz2020,JianfengLu,ZXG2019}:
\begin{align}
&dV(t)=-\gamma V(t)dt- \nabla f (X(t))dt+\sqrt{2\gamma }dW_{t}, \label{eq:VL}
\\
&dX(t)=V(t)dt, \label{eq:XL}
\end{align}
where $\gamma>0$ is the friction coefficient, $X(t),V(t) \in \mathbb{R}^d$ models the position and the momentum of a particle moving in a field of force (described by the gradient of $f$) plus a random (thermal) force described by the Brownian noise, and $W_{t}$ is a standard $d$-dimensional Brownian
motion that starts at zero at time zero. It is known that under some mild assumptions on $f$, the Markov process $(X(t), V(t))_{t\geq 0}$ is ergodic and admits a unique stationary distribution $\pi$ with density
$\pi(x,v) \propto \exp\left(- \left(\frac{1}{2} \Vert v\Vert^2 +f(x)\right) \right)$ \citep{pavliotis2014stochastic}.
Hence, the $x$-marginal distribution of the stationary distribution with the density $\pi(x,v)$ is exactly the invariant distribution of the overdamped Langevin diffusion. For approximate sampling, 
various discretization schemes of \eqref{eq:VL}-\eqref{eq:XL}
have been used in the literature; see e.g. 
\citet{cheng-nonconvex,teh2016consistency,chen2016bridging,carin-2015-langevin-integrators}.

\paragraph{Decentralized setting.} Agents are connected over a network $\mathcal{G}=(V,E)$
where $f_i:\mathbb{R}^d\to \mathbb{R}$ is the local objective of the agent $i$ and we assume $\mathcal{G}$ is connected. Agents can only communicate with immediate neighbors using links defined by the edge set $\mathcal{E}$. 
We associate this network with an $N\times N$ symmetric, doubly stochastic\footnote{A square matrix $A\in\mathbb{R}^{N\times N}$ is called \emph{doubly stochastic} if its entries $A_{ij}$ are non-negative and if its rows and columns all sum up to $1$, i.e. if $\sum_{j=1}^N A_{ij} = 1$ for all $i=1,2,\dots,N$ and $\sum_{i=1}^N A_{ij}=1$ for all $j=1,2,\dots,N$.} weight matrix $W$. 
We have $W_{ij}=W_{ji}>0$ if $\{i,j\}\in E$ and $i\neq j$,
and $W_{ij}=W_{ji}=0$ if $\{i,j\}\not\in E$ and $i\neq j$,
and finally $W_{ii}=1-\sum_{j\neq i}W_{ij}>0$
for every $1\leq i\leq N$.
The eigenvalues of $W$ ordered
in a descending manner satisfy:
\begin{equation}
1=\lambda_{1}^{W}>\lambda_{2}^{W}\geq\cdots\geq\lambda_{N}^{W}>-1\,,
\label{ineq-network-spectrum}
\end{equation}
with $W\ones = \ones$ where $\ones$ is a vector of length $N$ with each entry equal to one. For any connected $\mathcal{G}$, there is always a choice of $W$ that satisfies the eigenvalue conditions \citep{can2019accelerated,boyd2006randomized}. A possible choice is the Metropolis weights \citep{xiao2006space,olshevsky2017linear} where $W_{ij} = \frac{1}{\max(d_i, d_j)}$ if $(i,j) \in \mathcal{E}$ where $d_i$ is the degree (number of neighbors) of the node $i$. The mixing matrix $W$ can also be chosen in many other ways \citep{can2019decentralized,boyd2006randomized}. In this paper, we will assume that $W$ is given and fixed.

Our objective is to sample from a target distribution with density $\pi(x) \propto e^{- f(x)}$ on $\mathbb{R}^d$ where
\begin{equation}\label{defn:f}
f(x):= \sum_{i=1}^{N}f_{i}(x).
\end{equation}

The agents can only pass vectors between their neighbors (not matrices) as the communication is typically more expensive than local computations in modern applications \citep{woodruff2017distributed}. Throughout this paper, we assume $f_i \in \mathcal{S}_{\mu,L}({\mathbb{R}}^d)$ for every $i=1,2,\dots,N,$\footnote{Our results in this paper would also hold if $f_i \in \mathcal{S}_{\mu_i,L_i}({\mathbb{R}}^d)$ and one considers $\mu=\min_i \mu_i$ and $L=\max_i L_i$ in our main theorems.} where $\mathcal{S}_{\mu,L}(\mathbb{R}^{d})$ denotes the set
of functions from $\mathbb{R}^{d}$ to $\mathbb{R}$ that are
$\mu$-strongly convex and $L$-smooth, that is, {\color{black}
for any $g\in\mathcal{S}_{\mu,L}(\mathbb{R}^{d})$, 
for every $x,y\in\mathbb{R}^{d}$,
\begin{equation}\label{eq:Lip}
\frac{L}{2}\Vert x-y\Vert^{2} \geq g(x)-g(y) - \nabla g(y)^{T}(x-y) \geq \frac{\mu}{2}\Vert x-y\Vert^{2}.
\end{equation}}

\paragraph{Wasserstein distance.} 
Define $\mathcal{P}_{2}(\mathbb{R}^{d})$
as the space consisting of all the Borel probability measures $\nu$
on $\mathbb{R}^{d}$ with the finite 2nd moment
(based on the Euclidean norm).
For any two Borel probability measures $\nu_{1},\nu_{2}\in\mathcal{P}_{2}(\mathbb{R}^{d})$, 
the $2$-Wasserstein
distance $\mathcal{W}_{2}$ (see e.g. \citet{villani2008optimal}) is defined as:
$\mathcal{W}_{2}(\nu_{1},\nu_{2}):=\left(\inf\mathbb{E}\left[\Vert Z_{1}-Z_{2}\Vert^{2}\right]\right)^{1/2},$
where the infimum is taken over all joint distributions of the random variables $Z_{1},Z_{2}$ with marginal distributions
$\nu_{1},\nu_{2}$ respectively.

\paragraph{Notations.} 
For two matrices $A \in \mathbb{R}^{m \times n}$ and $B \in \mathbb{R}^{p \times q}$, we denote their Kronecker product by $A\otimes B$. 
We use $I_{d}$ to denote the $d\times d$ identity matrix; if the dimension $d$ is clear from the context we will also use $I$ to denote the identity matrix. We denote $x_{\ast}\in\mathbb{R}^{d}$ as the (unique) minimizer of $f\in\mathcal{S}_{\mu,L}$ defined in \eqref{defn:f}.
Moreover, we also denote
\begin{equation}\label{eq:x-ast-ND}
x^{\ast}=\left[x_{\ast}^{T},x_{\ast}^{T},\ldots,x_{\ast}^{T}\right]^{T}\in\mathbb{R}^{Nd}.
\end{equation}
For any random variable $X$, we use $\mathcal{L}(X)$
to denote the probability distribution of $X$. We say that the distribution $\pi(x)\propto e^{-f(x)}$ is strongly log-concave if $f(x)$ is $\mu$-strongly convex for some $\mu>0$. Given two functions $g(x)$ and $h(x)$ defined on a subset $\mathcal{D}$ of real numbers, we say $h(x) = \mathcal{O}(g(x))$ as $x\to a$ if there exist positive numbers $\delta$ and $M$ such that for all $x\in\mathcal{D}$ with $0<|x-a|<\delta$, we have $|f(x)|\leq Mg(x)$ whereas we say $h(x) = \Theta(g(x))$ if there exist positive numbers $\delta$ and $M_1, M_2$ such that for all $x\in\mathcal{D}$ with $0<|x-a|<\delta$, we have $M_1 g(x) \leq |f(x)|\leq M_2 g(x)$. The dependency to the point $a$ will be omitted if it is clear from the context. Given real scalars $x,y$, we consider the ratio $h(x,y) := \frac{x^{k}-y^{k}}{x-y}$ with the convention that 
$h(y,y) := \lim_{x\to y} h(x,y) = ky^{k-1}$.

\section{Decentralized Stochastic Gradient Langevin Dynamics}

We recall from \eqref{eqn:1} that decentralized stochastic gradient Langevin dynamics (DE-SGLD) are based on stochastic estimates $\tilde \nabla f_i(x)$ of the actual gradients $\nabla f_i(x)$. We make the following assumption throughout this paper regarding the stochastic estimates $\tilde \nabla f_i(x)$ which basically says that the gradient error is unbiased with a finite variance. This is a common assumption in the literature for analyzing stochastic optimization and stochastic-gradient MCMC algorithms; see e.g. \citet{dalalyan2018kinetic,chen2016stochastic,liu2020improved}. 
\begin{assumption}\label{assumption}
Let $x_i^{(k)}$ denote the local variable of node $i$ at iteration $k$. At iteration $k$, node $i$ has access to $\tilde \nabla f_i\left(x_i^{(k)}, z_i^{(k)}\right)$ where $z_i^{(k)}$ is a random variable independent of $\{z_j^{(t)}\}_{j=1,\dots,N, t=1,\dots,k-1}$ and $\{z_j^{(k)}\}_{j\neq i}$. To simplify the notation, we suppress the $z_i^{(k)}$ dependency and let $\tilde \nabla f_i\left(x_i^{(k)}\right)$ denote $\tilde \nabla f_i\left(x_i^{(k)}, z_i^{(k)}\right)$.
We assume the \emph{gradient noise} defined as
\begin{equation}\label{eq-grad-noise}
\xi_{i}^{(k+1)}:=\tilde{\nabla}f_{i}\left(x_{i}^{(k)}\right)-\nabla f_{i}\left(x_{i}^{(k)}\right),
\end{equation}
is unbiased with a finite second moment, i.e.,
\begin{equation}\label{gradient:noise:i}
\mathbb{E}\left[\xi_{i}^{(k+1)}\Big|\mathcal{F}_{k}\right]=0,
\qquad
\mathbb{E}\left\Vert\xi_{i}^{(k+1)}\right\Vert^{2}\leq\sigma^{2},
\end{equation}
where $\mathcal{F}_{k}$ is the natural filtration
of the iterates $x_i^{(k)}$ up to (and including) time $k$.
\end{assumption}

Based on 
\eqref{eq-grad-noise}, we rewrite the DE-SGLD iterations \eqref{eqn:1} in terms of the gradient noise $\xi_i^{(k+1)}$ as 

\begin{equation*}
x_{i}^{(k+1)}=\sum_{j\in\Omega_{i}}W_{ij}x_{j}^{(k)}-\eta \nabla f_{i}\left(x_{i}^{(k)}\right) - \eta \xi_i^{(k+1)}
+\sqrt{2\eta}w_{i}^{(k+1)},
\end{equation*}
where $\eta>0$ is the stepsize, $w_{i}^{(k)}$
are i.i.d. Gaussian noise with mean $0$ and covariance being identity matrices and $\Omega_{i}=\{j : (i,j) \in \mathcal{G}\}$ are the neighbors of the node $i$.\footnote{We adopt the convention that the node is a neighbor of itself, i.e. $(i,i) \in \mathcal{G}$.}  
By defining the column vector
\begin{equation*}
x^{(k)}:=\left[\left(x_{1}^{(k)}\right)^{T},\left(x_{2}^{(k)}\right)^{T},\ldots,\left(x_{N}^{(k)}\right)^{T}\right]^{T}\in\mathbb{R}^{Nd},
\end{equation*}
which concetenates the local decision variables into a single vector, we can express the DE-SGLD iterations further as
\begin{equation}\label{iterates:decentralized:overdamped}
x^{(k+1)}=\mathcal{W}x^{(k)}-\eta \nabla F\left(x^{(k)}\right)
-\eta\xi^{(k+1)}+\sqrt{2\eta}w^{(k+1)}, \quad \mbox{with} \quad \mathcal{W} = W \otimes I_d, 
\end{equation}
where we recall that $\otimes$ denotes the Kronecker product, $F:\mathbb{R}^{Nd}\rightarrow\mathbb{R}$ is defined as
\begin{equation}\label{eq:F}
F(x):=F(x_{1},\ldots,x_{N})
=\sum_{i=1}^{N}f_{i}(x_{i}),
\end{equation}
and
\begin{equation*}
w^{(k+1)}:=\left[\left(w_{1}^{(k+1)}\right)^{T},\left(w_{2}^{(k+1)}\right)^{T},\ldots,\left(w_{N}^{(k+1)}\right)^{T}\right]^{T}
\end{equation*}
are i.i.d. Gaussian noise with mean $0$ and with a covariance matrix given by the identity matrix. The vectors
\begin{equation*}
\xi^{(k+1)}:=\left[\left(\xi_{1}^{(k+1)}\right)^{T},\left(\xi_{2}^{(k+1)}\right)^{T},\ldots,\left(\xi_{N}^{(k+1)}\right)^{T}\right]^{T}
\end{equation*}
are the gradient noise so that
\begin{equation}\label{total:noise}
\mathbb{E}\left[\xi^{(k+1)}\Big|\mathcal{F}_{k}\right]=0,
\qquad
\mathbb{E}\left\Vert\xi^{(k+1)}\right\Vert^{2}\leq\sigma^{2}N.
\end{equation}
Let us define the average at $k$-th iteration
$\bar{x}^{(k)}:=\frac{1}{N}\sum_{i=1}^{N}x_{i}^{(k)}$.
Since $\mathcal{W}$ is doubly stochastic, we get
\begin{equation} \label{eq:avg-iterates}
\bar{x}^{(k+1)}=\bar{x}^{(k)}-\eta\frac{1}{N}\sum_{i=1}^{N}\nabla f_{i}\left(x_{i}^{(k)}\right)
-\eta\bar{\xi}^{(k+1)}
+\sqrt{2\eta}\bar{w}^{(k+1)},
\end{equation}
where 
\begin{equation} \label{eq:w-bar}
\bar{w}^{(k+1)}:=\frac{1}{N}\sum_{i=1}^{N}w_{i}^{(k+1)}\sim\frac{1}{\sqrt{N}}\mathcal{N}(0,I_{d}),
\qquad
\bar{\xi}^{(k+1)}:=\frac{1}{N}\sum_{i=1}^{N}\xi_{i}^{(k+1)},
\end{equation}
that satisfies 
\begin{equation}\label{bar:grad:noise}
\mathbb{E}\left[\bar{\xi}^{(k+1)}\Big|\mathcal{F}_{k}\right]=0,
\qquad
\mathbb{E}\left\Vert\bar{\xi}^{(k+1)}\right\Vert^{2}\leq\frac{\sigma^{2}}{N}.
\end{equation}

We now state the main result of this section, which bounds the 
average of $\mathcal{W}_{2}$ distance between the distribution of $x_{i}^{(k)}$ and the target distribution $\pi$ (that has a density proportional to $\exp(-f(x))$) over $1\leq i\leq N$. This result provides also a bound on the $\mathcal{W}_2$ distance of the node averages $\bar{x}^{(k)}$ and the target distribution $\pi$. To facilitate the presentation, we
define the second largest magnitude of the eigenvalues of
$W$ as
\begin{equation} \label{eq:gamma}
\bar{\gamma}:=\max\left\{\left|\lambda_{2}^{W}\right|,\left|\lambda_{N}^{W}\right|\right\} \in [0,1)\,,
\end{equation}
which is related to the connectivity of the graph $\mathcal{G}$. For instance, consider Metropolis weights where $W_{ij} = \frac{1}{\max(d_i, d_j)}$ if $(i,j) \in \mathcal{E}$ where $d_i$ is the degree (number of neighbors) of the node $i$ with the convention that each node is a neighbor of itself. In this case, for complete graphs with $N$ nodes where each node is connected to all the other nodes, we have $\bar{\gamma}=0$ whereas for a circular graph with $N$ nodes we have $d_i = 3$ for every $i$ and $\bar{\gamma}=\frac{1}{3} + \frac{2}{3}\cos(\frac{2\pi}{N}) = 1 - \mathcal{O}(\frac{1}{N})$ (see \citet[Example 1.1 and Example 1.5]{chung1997spectral}).

\begin{theorem} \label{thm:overdamped}
Assume $\mathbb{E}\Vert x^{(0)}\Vert^{2}<\infty$ and 
$\eta \in \big(0, \bar{\eta})$ where $\bar{\eta}:=\min(\frac{1+\lambda_N^W}{L},\frac{1}{L+\mu})$. 
Then, for every $k$, DE-SGLD iterates $x_i^{(k)}$ given by \eqref{eqn:1}  and their average $\bar{x}^{(k)}$ satisfy
\begin{align*}
\mathcal{W}_{2}\left(\mathcal{L}\left(\bar{x}^{(k)}\right),\pi\right) 
&\leq (1-\mu\eta)^{k}\left(\left(\mathbb{E}\Vert \bar{x}^{(0)}-x_{\ast}\Vert^{2}\right)^{1/2}+\sqrt{2\mu^{-1}dN^{-1}}\right)\nonumber\\
&\quad  
+
\left(\bar{\gamma}^2 \frac{\left(1-\eta\mu\left(1-\frac{\eta L}{2}\right)\right)^{k}-\bar{\gamma}^{2k}}
{\left(1-\eta\mu\left(1-\frac{\eta L}{2}\right)\right)-\bar{\gamma}^{2}}\right)^{1/2}
\frac{2L}{\sqrt{N}}\left(\mathbb{E}\left\Vert x^{(0)}\right\Vert^{2}\right)^{1/2} 
+ \sqrt{\eta} E_1, 
\end{align*}
and
\begin{align*}
&    E_1 := \frac{1.65L}{\mu}\sqrt{dN^{-1}} + \frac{\sigma}{\sqrt{\mu(1-\frac{\eta L}{2})N}}\\
&\qquad
+\left(\frac{\eta}{\mu(1-\frac{\eta L}{2})}+\frac{(1+\eta L)^{2}}{\mu^{2}(1-\frac{\eta L}{2})^{2}}\right)^{1/2}
\cdot
\left(\frac{4L^{2}D^{2}\eta}{N(1-\bar{\gamma})^{2}}
+\frac{4L^{2}\sigma^{2}\eta}{(1-\bar{\gamma}^{2})}
+\frac{8L^{2}d}{(1-\bar{\gamma}^{2})}
\right)^{1/2}\,.
\end{align*}
Furthermore,
\begin{align}
&\frac{1}{N}\sum_{i=1}^{N}\mathcal{W}_{2}\left(\mathcal{L}\left(x_{i}^{(k)}\right),\pi\right)\nonumber
\\
&\leq (1-\mu\eta)^{k}\left(\left(\mathbb{E}\Vert \bar{x}^{(0)}-x_{\ast}\Vert^{2}\right)^{1/2}+\sqrt{2\mu^{-1}dN^{-1}}\right) + \frac{2\bar{\gamma}^{k}}{\sqrt{N}}\left(\mathbb{E}\left\Vert x^{(0)}\right\Vert^{2}\right)^{1/2}\nonumber
\\
&\quad +\left(\bar{\gamma}^2 \frac{\left(1-\eta\mu\left(1-\frac{\eta L}{2}\right)\right)^{k}-\bar{\gamma}^{2k}}
{\left(1-\eta\mu\left(1-\frac{\eta L}{2}\right)\right)-\bar{\gamma}^{2}}\right)^{1/2}
\frac{2L}{\sqrt{N}}\left(\mathbb{E}\left\Vert x^{(0)}\right\Vert^{2}\right)^{1/2} + \sqrt{\eta} E_2+ \eta E_3, \label{ineq-thm-overdamped-to-prove}
\end{align}
with $E_{2}:=E_{1}+\frac{2\sqrt{2d}}{\sqrt{1-\bar{\gamma}^{2}}}$ and $E_3 := \frac{2D}{\sqrt{N}(1-\bar{\gamma})}
+\frac{2\sigma}{\sqrt{1-\bar{\gamma}^{2}}}$,
where $x_{\ast}$ is the minimizer of $f$,
$\bar{x}^{(0)}=\frac{1}{N}\sum_{i=1}^{N}x_{i}^{(0)}$,
$D$ is defined in \eqref{eqn:D1},
$\mathcal{L}\left(x_{i}^{(k)}\right)$ denotes the law of $x_{i}^{(k)}$ 
and $\pi$ is the Gibbs distribution with probability density function proportional to $\exp(-f(x))$.
\end{theorem}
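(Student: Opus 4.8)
The plan is to view the node-average $\bar x^{(k)}$ as a perturbed Euler--Maruyama discretization of an overdamped diffusion whose invariant law is $\pi$, and then transfer the estimate to the individual nodes through the consensus error. Because $\mathcal W$ is doubly stochastic, \eqref{eq:avg-iterates} shows that $\bar x^{(k)}$ moves by the gradient-type step $-\tfrac{\eta}{N}\sum_i\nabla f_i(x_i^{(k)})$ plus the averaged noise $\sqrt{2\eta}\,\bar w^{(k+1)}\sim\sqrt{2\eta/N}\,\mathcal N(0,I_d)$; at exact consensus this is precisely the discretization, with stepsize $\eta$, of $dZ=-\tfrac1N\nabla f(Z)\,dt+\sqrt{2/N}\,dW$, whose stationary distribution is $\pi\propto e^{-f}$. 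I would therefore introduce the reference chain $z^{(k)}$ obtained from this scheme and driven by the same Gaussians as $\bar x^{(k)}$, and split
\begin{equation*}
\mathcal W_2\big(\mathcal L(\bar x^{(k)}),\pi\big)\le\mathcal W_2\big(\mathcal L(\bar x^{(k)}),\mathcal L(z^{(k)})\big)+\mathcal W_2\big(\mathcal L(z^{(k)}),\pi\big).
\end{equation*}
Since $\tfrac1N f\in\mathcal S_{\mu,L}(\mathbb R^d)$, the existing non-asymptotic analysis of the discretized strongly-log-concave diffusion controls the second term: for $\eta<\bar\eta$ the map $z\mapsto z-\tfrac{\eta}{N}\nabla f(z)$ contracts with factor $1-\mu\eta$, giving the $(1-\mu\eta)^k$ decay of $\mathcal W_2(\mathcal L(z^{(0)}),\pi)\le(\mathbb E\|\bar x^{(0)}-x_\ast\|^2)^{1/2}+\sqrt{2\mu^{-1}dN^{-1}}$ together with the discretization bias $\sqrt\eta\,\tfrac{1.65L}{\mu}\sqrt{d/N}$ (the temperature $\tfrac1N$ producing the $\sqrt{d/N}$ scaling).

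For the first term I would run a synchronous coupling so the injected Gaussian noise cancels, and estimate $e^{(k)}:=\bar x^{(k)}-z^{(k)}$ in squared $L_2$. Writing the gradient average as $\tfrac1N[\nabla f(\bar x^{(k)})-\nabla f(z^{(k)})]+\tfrac1N G^{(k)}$ with $G^{(k)}:=\sum_i\nabla f_i(x_i^{(k)})-\nabla f(\bar x^{(k)})$, the first bracket is a gradient-map displacement of $\tfrac1N f$ and, by the smoothness (co-coercivity) estimate, contracts $\mathbb E\|e^{(k)}\|^2$ by a factor bounded by $1-\eta\mu(1-\tfrac{\eta L}{2})$; the term $G^{(k)}$ is an $\mathcal F_k$-measurable ``network drift'' entering at order $(\tfrac{\eta}{N})^2\mathbb E\|G^{(k)}\|^2$ after a Young split, and $\eta\bar\xi^{(k+1)}$ is conditionally mean zero with $\mathbb E\|\bar\xi^{(k+1)}\|^2\le\sigma^2/N$ by \eqref{bar:grad:noise}, so it adds only its variance $\eta^2\sigma^2/N$ in quadrature. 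Crucially $G^{(k)}$ is controlled by Lemma~\ref{lem:2}, which through $L$-smoothness gives $\|G^{(k)}\|\le L\sqrt N\,\|x^{(k)}-\mathbf 1\otimes\bar x^{(k)}\|$ and then invokes the consensus bound of Lemma~\ref{lem:1} (itself resting on the uniform gradient bound $D$ of Lemma~\ref{lem:0}), so the recursion for $\mathbb E\|e^{(k)}\|^2$ closes.

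Unrolling this squared recursion is the content of Lemma~\ref{lem:3}. The bound of Lemma~\ref{lem:1} splits $\mathbb E\|x^{(k)}-\mathbf 1\otimes\bar x^{(k)}\|^2$ into a transient decaying like $\bar\gamma^{2k}$ and a steady level, the injected noise entering the $N-1$ consensus directions with variance $\propto\eta(N-1)d$ damped by $1-\bar\gamma^2$ and the gradient / stochastic-gradient terms supplying the $D$- and $\sigma$-dependent pieces. Convolving the squared transient $\bar\gamma^{2k}$ with the drift geometric sequence $(1-\eta\mu(1-\tfrac{\eta L}{2}))^{k}$ produces exactly the factor $\bar\gamma^2\tfrac{(1-\eta\mu(1-\eta L/2))^{k}-\bar\gamma^{2k}}{(1-\eta\mu(1-\eta L/2))-\bar\gamma^2}$ which, taken under a square root and scaled by $\tfrac{2L}{\sqrt N}(\mathbb E\|x^{(0)}\|^2)^{1/2}$, is the transient term in the statement, while summing the steady level and the quadrature noise against the contraction geometric series gives the accumulation factor $\big(\tfrac{\eta}{\mu(1-\eta L/2)}+\tfrac{(1+\eta L)^2}{\mu^2(1-\eta L/2)^2}\big)^{1/2}$ times $\big(\tfrac{4L^2D^2\eta}{N(1-\bar\gamma)^2}+\tfrac{4L^2\sigma^2\eta}{1-\bar\gamma^2}+\tfrac{8L^2d}{1-\bar\gamma^2}\big)^{1/2}$ and the $\sigma/\sqrt{\mu(1-\eta L/2)N}$ term; these are the $\sigma$- and network-dependent parts of $E_1$, complementing the discretization bias already obtained from the reference chain. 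Adding the two triangle-inequality contributions gives the bound on $\mathcal W_2(\mathcal L(\bar x^{(k)}),\pi)$.

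To pass to the individual nodes I would insert $\bar x^{(k)}$ and use $\mathcal W_2(\mathcal L(x_i^{(k)}),\mathcal L(\bar x^{(k)}))\le(\mathbb E\|x_i^{(k)}-\bar x^{(k)}\|^2)^{1/2}$; averaging over $i$ and applying Cauchy--Schwarz turns $\tfrac1N\sum_i(\mathbb E\|x_i^{(k)}-\bar x^{(k)}\|^2)^{1/2}$ into $\tfrac1{\sqrt N}(\mathbb E\|x^{(k)}-\mathbf 1\otimes\bar x^{(k)}\|^2)^{1/2}$, which is again the consensus bound of Lemma~\ref{lem:1}. Its transient part contributes the extra $\tfrac{2\bar\gamma^k}{\sqrt N}(\mathbb E\|x^{(0)}\|^2)^{1/2}$ and its $O(\sqrt\eta)$ steady level (after the $\tfrac1{\sqrt N}$ normalization) supplies the additional $\tfrac{2\sqrt{2d}}{\sqrt{1-\bar\gamma^2}}$ inside $E_2$ and the $\eta E_3$ term, establishing \eqref{ineq-thm-overdamped-to-prove}. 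I expect the main obstacle to be this coupled consensus analysis: the network drift $G^{(k)}$ is not at a fixed level but itself decays at the network rate $\bar\gamma$, so the error recursion is a convolution of two geometric sequences, and one must simultaneously keep the $O(\eta)$ network/gradient contributions and the $O(\sqrt\eta)$ injected-noise contributions on their correct scales --- handling the conditionally mean-zero gradient noise in quadrature rather than by the triangle inequality --- to obtain the advertised $O(\sqrt\eta)$ bias instead of an $O(1)$ one.
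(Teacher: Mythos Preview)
Your proposal is correct and follows essentially the same route as the paper: you introduce the same reference chain (the Euler--Maruyama discretization of $dZ=-\tfrac1N\nabla f(Z)\,dt+\sqrt{2/N}\,dW$), use the triangle inequality $\mathcal W_2(\mathcal L(\bar x^{(k)}),\pi)\le\mathcal W_2(\mathcal L(\bar x^{(k)}),\mathcal L(z^{(k)}))+\mathcal W_2(\mathcal L(z^{(k)}),\pi)$, control the second piece via Lemma~\ref{lem:DK} and Lemma~\ref{bound:Gibbs}, and control the first via synchronous coupling exactly as in Lemma~\ref{lem:3} (co-coercivity contraction plus a Young split on the network drift, with the gradient noise handled in quadrature), feeding in Lemmas~\ref{lem:0}--\ref{lem:2} for the consensus bounds and then the Cauchy--Schwarz step \eqref{by:cauchy:schwarz} plus Lemma~\ref{lem:1} to pass to individual nodes. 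The convolution of the $\bar\gamma^{2k}$ and $(1-\eta\mu(1-\eta L/2))^k$ sequences that you flag is precisely the mechanism producing the mixed transient term in the statement, and your identification of which pieces land at order $\sqrt\eta$ versus $\eta$ matches the paper's bookkeeping.
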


\begin{remark}\label{remark-desgld}
We observe that in the setting of Theorem~\ref{thm:overdamped}, the asymptotic error with respect to the target distribution in 2-Wasserstein satisfies 
$ \limsup_{k\to\infty}\mathcal{W}_{2}\left(\mathcal{L}\left(\bar{x}^{(k)}\right),\pi\right) = \mathcal{O}\left(\sqrt{\eta}\right),$
where $\mathcal{O}(\cdot)$ hides other constants ($d$, $\mu$, $L$, $\sigma, N$ and $\bar{\gamma}$). 
This shows that the asymptotic error can be made arbitrarily smaller by choosing $\eta>0$ small enough. In particular, for sufficiently small $\eta$, it is easy to check that $\left(1-\eta\mu\left(1-\frac{\eta L}{2}\right)\right) \geq \bar{\gamma}^2$ and consequently from Theorem~\ref{thm:overdamped}, 
\begin{align}
\mathcal{W}_{2}\left(\mathcal{L}\left(\bar{x}^{(2K)}\right),\pi\right) &\leq \left(1-\eta\mu\left(1-\frac{\eta L}{2}\right)\right)^{K}\psi_0 +  \mathcal{O}\left(\sqrt{\eta} \right) \label{ineq wass with constant step} \\
&\leq e^{-\eta\mu \left(1-\frac{\eta L}{2}\right)K} a_0 +  \mathcal{O}\left(\sqrt{\eta} \right)
\label{ineq wass part 2}
\end{align}
for some $a_0$ (that depends on the initialization $x^{(0)}, d, \mu, L, \sigma,  \bar{\gamma}$ and $N$) where $K$ is the iteration budget. Given $K$,
if we choose
$\eta = \frac{c}{\mu K}$ for some constant $c$, then the right-hand side of \eqref{ineq wass with constant step} becomes $\Theta(1)$ as $K\to \infty$; this is because $(1-\frac{c}{K})^K \to e^{-c}=\Theta(1)$ as $K\to\infty$ for any constant $c>0$. This is not desirable, as ideally, we want the Wasserstein error bound (right-hand side of \eqref{ineq wass with constant step}) go to zero if the iteration budget $K\to\infty$. This can be achieved by choosing a stepsize such as 
$\eta = \frac{c\log\sqrt{K}}{\mu K}$ for a constant $c>1$. Then, given $c>1$ fixed, if $K$ is large enough satisfying $K\geq \bar{K}$ with $\bar{K}=\max(e, \frac{a^2}{e})$ where $a :=\frac{c(L+\mu)}{2\mu(1+\lambda)}$, 
then the stepsize $\eta = \frac{c\log\sqrt{K}}{\mu K}$ satisfies the assumptions of Theorem~\ref{thm:overdamped} 
(this follows simply from the inequality 
$\log(K) \leq 1 + \frac{K-e}{\sqrt{e K}}$ for $K\geq e$).
Consequently, from \eqref{ineq wass part 2}, we obtain
\beq\mathcal{W}_{2}\left(\mathcal{L}\left(\bar{x}^{(2K)}\right),\pi\right) = 
\mathcal{O}\left(\frac{1}{{(\sqrt{K})}^c}
+
\frac{\sqrt{c\log(K)}}{\sqrt{K}}
\right)
= 
\mathcal{O}\left(
\frac{\sqrt{\log(K)}}{\sqrt{K}}\right)\,,
\label{eq-overdamped-bound-in-iter-number}
\eeq
where the last $\mathcal{O}(\cdot)$ term hides constants that depends on $x^{(0)}, d, \mu, L, \sigma,  \bar{\gamma}, N$ and $c$. 
This shows that to sample from a distribution that is $\varepsilon$ close to the target in the 2-Wasserstein distance, it suffices to have $\mathcal{O}(\frac{1}{\varepsilon^2})$ iterations of DE-SGLD, ignoring logarithmic factors. The appearance of logarithmic factors in the iteration complexity as well as in \eqref{eq-overdamped-bound-in-iter-number} is related to the fact that constant stepsize is used, 
and similar logarithmic factors also appear even in centralized SGLD methods with constant stepsize (see \citet[Theorem~1 and Section 2]{DK2017}). It is possible to avoid the logarithmic terms by employing a time-varying stepsize similar to \citet[Theorem~2]{DK2017}.
\end{remark}

\begin{remark}\label{remark-spectral-gap} 
\rev{The upper bound given for the 2-Wasserstein distances to the target $\pi$ in Theorem~\ref{thm:overdamped} is monotonically increasing in the parameter $\bar{\gamma}$. To see this, consider the function $H(x,y) := {\frac{x^k - y^k}{x-y} y}={\sum_{i=0}^{k-1} x^i y^{k-i}}$ for $x\in (0,1), y\in (0,1)$ with the convention that $H(y,y):= {ky^k}$. For given $x$ fixed, the partial derivative $\partial_y H(x,y) = \sum_{i=0}^{k-1} (k-i) x^i y^{k-1-i}>0$. Therefore $H$ is monotonically decreasing in $y$, so is the function $\sqrt{H}$. If we set $y=\bar{\gamma}^2$ and $x=1 - \eta \mu (1-\frac{\eta L}{2})$, the third term that appears in the bound \eqref{ineq-thm-overdamped-to-prove}
is an affine function of $\sqrt{H}$ and hence monotonically increasing in $\bar{\gamma}^2$ and in $\bar{\gamma}$. Finally, after a straightforward computation it can be seen that the remaining terms $E_1, E_2$ and $E_3$ that appear in the bound \eqref{ineq-thm-overdamped-to-prove} are also monotonically increasing in $\bar{\gamma}$. It follows from this argument that closer $\bar{\gamma}$ to zero, better connectivity properties the network has (with $\bar{\gamma} = 0$ for complete graphs that are fully-connected) and the Wasserstein distance to the target becomes (smaller) better. Hence, roughly speaking, the parameter $\bar{\gamma}$ determines the additional cost of the distributed algorithm (i.e. increased bias and variance) when there is not full connectivity among the nodes.}
\end{remark}

\begin{remark}
\rev{In Assumption~\ref{assumption}, we assumed that the variance of the gradient noise is bounded. 
It is a reasonable assumption in many applications including linear regressions with stochastic gradients estimated using minibatches, since one can show that if the stepsize $\eta>0$ is small enough the variance of the gradients for DE-SGLD will stay bounded and satisfy our assumptions on the gradient noise (Assumption~\ref{assumption}) with an analysis similar to \citet[Section K]{universally-optimal-sgd}. We will illustrate this point in detail 
in Appendix~\ref{sec:gradient:noise:assump}.}
\end{remark}

\subsection{Proof of Theorem~\ref{thm:overdamped}}\label{proof:overdamped}

To facilitate the analysis, let us define $x_{k}$ from the iterates:
\begin{equation} \label{eq:x_k}
x_{k+1}=x_{k}-\eta \frac{1}{N} \nabla f(x_{k})+\sqrt{2\eta}\bar{w}^{(k+1)},
\end{equation}
where $x_{0}= \bar{x}_0=\frac{1}{N}\sum_{i=1}^{N}x_{i}^{(0)}$ and $\bar{w}^{(k+1)}$ is defined in \eqref{eq:w-bar}.  
This is an Euler-Maruyama discretization (with stepsize $\eta$)
of the continuous-time overdamped Langevin diffusion:  
\begin{equation}\label{eq:over-N}
d X_{t}=-  \frac{1}{N} \nabla f(X_{t})dt+\sqrt{2N^{-1}}dW_{t},
\end{equation}
where $W_{t}$ is a standard $d$-dimensional Brownian motion.

To bound the average of $\mathcal{W}_{2}$ distance between $\mathcal{L}\left(x_{i}^{(k)}\right)$ and $\pi$ over $1\leq i\leq N$, the main idea of our proof technique is to bound the following three terms: (1) the $L^2$ distance between $x_{i}^{(k)}$ and their average (mean) $\bar{x}^{(k)}=\frac{\sum_{i=1}^N x_{i}^{(k)}}{N}$ for $1\leq i\leq N$; (2) the $L^2$ distance between the average iterate $\bar x^{(k)}$ and iterates $x_k$ obtained from Euler-Maruyama discretization of overdamped SDE; and (3) the $\mathcal{W}_{2}$ distance between between $\mathcal{L}\left(x_k\right)$ and $\pi$, i.e. the convergence of Euler-Maruyama discretization of the overdamped SDE. The next subsections are devoted to controlling each of these three terms.

\subsubsection{Uniform $L^2$ bounds between $x_{i}^{(k)}$ and their average}

We first state a key lemma which provides $L^2$ bounds on the gradients $\nabla F\left(x^{(k)}\right)$ that are uniform in $k$, where $F$ is defined in \eqref{eq:F}. Recall from \eqref{eq:x-ast-ND} that $x_{\ast}\in\mathbb{R}^{d}$ denotes the unique minimizer of $f(x)$, and
$x^{\ast}=\left[x_{\ast}^{T},x_{\ast}^{T},\ldots,x_{\ast}^{T}\right]^{T}$
is an $Nd$-dimensional vector. We view DE-SGLD as a decentralized gradient descent (DGD) method subject to stochastic gradient and Gaussian noise, and our analysis is inspired by the proof techniques of \citet{Yuan16} for analyzing DGD methods. The proof of this lemma is provided in the Appendix.

\begin{lemma}\label{lem:0}
Under the assumptions of Theorem~\ref{thm:overdamped},
we have,
\begin{equation*}
\mathbb{E}\left\Vert\nabla F\left(x^{(k)}\right)\right\Vert^{2}
\leq D^{2}, \quad \mbox{for any } k,
\end{equation*}
where
\begin{equation}\label{eqn:D1}
D^{2}:=4L^{2}\mathbb{E}\left\Vert x^{(0)}-x^{\ast}\right\Vert^{2}
+8L^{2}\frac{C_{1}^{2}\eta^{2}N}{(1-\bar{\gamma})^{2}}
+\frac{2L^{2}(\eta\sigma^{2}N+2dN)}{\mu(1+\lambda_{N}^{W}-\eta L)}
+4\left\Vert \nabla F\left(x^{\ast}\right)\right\Vert^{2}.
\end{equation}
Here, $x^* \in \mathbb{R}^{Nd}$ is given in \eqref{eq:x-ast-ND}, $\bar{\gamma}$ is defined by \eqref{eq:gamma} and
\begin{equation}\label{def-C1}
 \quad C_1 := \bar{C}_1\cdot \left(1 + \frac{2(L+\mu)}{\mu   }\right),
 \,\text{where}\quad
 \bar{C}_1 := \sqrt{2L\sum_{i=1}^N \left(f_i\left(0\right) -f_i^*\right)}, \quad f_i^* := \min_{x\in\mathbb{R}^d} f_i(x).
\end{equation}
\end{lemma}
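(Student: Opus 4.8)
The plan is to reduce the gradient bound to a uniform second-moment bound on the iterates and then control that second moment by a contraction argument. First I would use $L$-smoothness of each $f_i$ (so $\Vert\nabla f_i(x_i)\Vert\le L\Vert x_i-x_\ast\Vert+\Vert\nabla f_i(x_\ast)\Vert$) and sum over $i$ to get, with $x^\ast$ as in \eqref{eq:x-ast-ND},
\[
\mathbb{E}\left\Vert \nabla F\left(x^{(k)}\right)\right\Vert^2 \le 2L^2\,\mathbb{E}\left\Vert x^{(k)}-x^\ast\right\Vert^2 + 2\left\Vert \nabla F\left(x^\ast\right)\right\Vert^2,
\]
so it suffices to bound $\mathbb{E}\Vert x^{(k)}-x^\ast\Vert^2$ uniformly in $k$. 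Writing $e^{(k)}:=x^{(k)}-x^\ast$ and using $\mathcal{W}x^\ast=x^\ast$ (since $W\ones=\ones$), the iteration \eqref{iterates:decentralized:overdamped} reads $e^{(k+1)}=\mathcal{W}e^{(k)}-\eta\nabla F(x^{(k)})-\eta\xi^{(k+1)}+\sqrt{2\eta}\,w^{(k+1)}$. Because $\xi^{(k+1)}$ is conditionally mean-zero given $\mathcal{F}_k$ and $w^{(k+1)}$ is independent and mean-zero, the cross terms vanish in expectation and \eqref{total:noise} yields
\[
\mathbb{E}\left\Vert e^{(k+1)}\right\Vert^2 \le \mathbb{E}\left\Vert \mathcal{W}e^{(k)}-\eta\nabla F\left(x^{(k)}\right)\right\Vert^2 + \eta^2\sigma^2 N + 2\eta d N.
\]

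The core is the deterministic term $\Vert\mathcal{W}e^{(k)}-\eta\nabla F(x^{(k)})\Vert^2$. Let $P:=\tfrac{1}{N}(\ones\ones^\top)\otimes I_d$ be the projection onto the consensus subspace and $P^\perp:=I-P$. Since $\mathcal{W}$ commutes with $P$, acts as the identity on the consensus subspace, and satisfies $\Vert\mathcal{W}P^\perp\Vert\le\bar{\gamma}$, I would expand and handle the cross term $\langle(\mathcal{W}-I)e^{(k)},\nabla F(x^{(k)})-\nabla F(x^\ast)\rangle$ — which only sees the disagreement part, as $(\mathcal{W}-I)e^{(k)}=(\mathcal{W}-I)P^\perp x^{(k)}$ — by Young's inequality. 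This is exactly where the consensus error $\mathbb{E}\Vert P^\perp x^{(k)}\Vert^2=\mathbb{E}\Vert x^{(k)}-\ones\otimes\bar{x}^{(k)}\Vert^2$ enters, and I would bound it uniformly: projecting \eqref{iterates:decentralized:overdamped} onto the range of $P^\perp$ gives a recursion that contracts at rate $\bar{\gamma}$ and is driven by $\eta P^\perp\nabla F$, so unrolling produces a bound of order $\frac{\eta^2 N}{(1-\bar{\gamma})^2}$ times an a priori control of $\frac1N\Vert\nabla F\Vert^2$ along the trajectory. That a priori control is precisely the constant $C_1$ of \eqref{def-C1}: strong convexity prevents the iterates from drifting away from a fixed reference, while $L$-smoothness gives $\Vert\nabla F(\mathbf{0})\Vert\le\bar{C}_1=\sqrt{2L\sum_i(f_i(0)-f_i^\ast)}$, yielding a uniform-in-$k$ bound on $\mathbb{E}\Vert P^\perp x^{(k)}\Vert^2$ of order $\frac{C_1^2\eta^2 N}{(1-\bar{\gamma})^2}$, matching the second term of $D^2$.

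With the consensus error controlled, I would establish a one-step contraction of the form $\mathbb{E}\Vert\mathcal{W}e^{(k)}-\eta\nabla F(x^{(k)})\Vert^2\le\rho\,\mathbb{E}\Vert e^{(k)}\Vert^2+c$ for a rate $\rho<1$, where $c$ collects the consensus contribution and the $\Vert\nabla F(x^\ast)\Vert^2$ term coming from Young's inequality. The contraction uses the standard coercivity estimate for $\mu$-strongly convex, $L$-smooth functions,
\[
\left\langle \nabla F(x)-\nabla F(x^\ast),\,x-x^\ast\right\rangle \ge \tfrac{\mu L}{\mu+L}\Vert x-x^\ast\Vert^2+\tfrac{1}{\mu+L}\Vert\nabla F(x)-\nabla F(x^\ast)\Vert^2,
\]
together with the spectrum of $\mathcal{W}$; the stepsize restriction $\eta<\bar{\eta}=\min\!\left(\frac{1+\lambda_N^W}{L},\frac{1}{L+\mu}\right)$ is exactly what keeps $\mathcal{W}-\eta\nabla^2 F$ a strict contraction in every eigendirection (the most negative eigenvalue $\lambda_N^W-\eta L$ stays above $-1$), which accounts for the factor $1+\lambda_N^W-\eta L$ in \eqref{eqn:D1}. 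Combining with the expected recursion above gives $\mathbb{E}\Vert e^{(k+1)}\Vert^2\le\rho\,\mathbb{E}\Vert e^{(k)}\Vert^2+b$ with $1-\rho=\Theta\!\left(\eta\mu(1+\lambda_N^W-\eta L)\right)$ and $b$ of order $\eta^2\sigma^2 N+\eta d N$; unrolling this geometric recursion and adding the uniform consensus bound yields
\[
\mathbb{E}\Vert e^{(k)}\Vert^2 \le 2\,\mathbb{E}\Vert x^{(0)}-x^\ast\Vert^2+\frac{4C_1^2\eta^2 N}{(1-\bar{\gamma})^2}+\frac{\eta\sigma^2 N+2dN}{\mu(1+\lambda_N^W-\eta L)}+\frac{\Vert\nabla F(x^\ast)\Vert^2}{L^2},
\]
uniformly in $k$. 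Substituting into the smoothness bound of the first paragraph reproduces the four terms of $D^2$ in \eqref{eqn:D1}.

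The main obstacle is the coupling between the optimization error $\mathbb{E}\Vert e^{(k)}\Vert^2$ and the consensus error $\mathbb{E}\Vert P^\perp x^{(k)}\Vert^2$: the natural bound on the latter needs a uniform bound on $\Vert\nabla F\Vert$ along the trajectory, which is precisely what the lemma asserts. Breaking this apparent circularity is the delicate step, and I would resolve it by first proving the crude, contraction-independent a priori estimate encapsulated in $C_1$ (using only strong convexity and the reference value $\sum_i(f_i(0)-f_i^\ast)$) to bound the consensus error, and only afterwards running the sharp contraction for $\mathbb{E}\Vert e^{(k)}\Vert^2$. A secondary care point is the bookkeeping of Young's-inequality parameters so that the $\Vert\nabla F(x^{(k)})-\nabla F(x^\ast)\Vert^2$ terms generated by the cross term are fully absorbed by the negative coercivity term, which is where the threshold $\eta<\frac{1}{L+\mu}$ is used, while $\eta<\frac{1+\lambda_N^W}{L}$ secures the disagreement contraction.
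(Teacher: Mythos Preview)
Your plan diverges from the paper in a way that matters, and the point where you feel the ``main obstacle'' is exactly where your argument has a gap.

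\textbf{How the paper proceeds.} The paper does \emph{not} center the recursion at $x^\ast$. Instead it introduces the penalized function
\[
F_{\mathcal{W},\eta}(x):=\tfrac{1}{2\eta}x^T(I-\mathcal{W})x+F(x),
\]
and observes that DE-SGLD is \emph{literally} noisy gradient descent on $F_{\mathcal{W},\eta}$: since $\mathcal{W}x^{(k)}-\eta\nabla F(x^{(k)})=x^{(k)}-\eta\nabla F_{\mathcal{W},\eta}(x^{(k)})$, the update is $x^{(k+1)}=x^{(k)}-\eta\nabla F_{\mathcal{W},\eta}(x^{(k)})-\eta\xi^{(k+1)}+\sqrt{2\eta}\,w^{(k+1)}$. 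Letting $x_\eta^\ast$ be the minimizer of $F_{\mathcal{W},\eta}$ (so $\nabla F_{\mathcal{W},\eta}(x_\eta^\ast)=0$), one gets a \emph{clean} one-step contraction
\[
\mathbb{E}\|x^{(k+1)}-x_\eta^\ast\|^2\le\bigl(1-\mu\eta(1+\lambda_N^W-\eta L)\bigr)\,\mathbb{E}\|x^{(k)}-x_\eta^\ast\|^2+\eta^2\sigma^2 N+2\eta dN,
\]
with no consensus error, no cross term to absorb, and no circularity. The constant $C_1$ then enters only through the \emph{static, deterministic} bound $\|x_\eta^\ast-x^\ast\|\le C_1\eta\sqrt{N}/(1-\bar\gamma)$, which is exactly the DGD fixed-point result of Yuan et~al.\ (the $x_\eta^\ast$ coincides with the noiseless DGD limit). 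The four terms of $D^2$ fall out by writing $\|\nabla F(x^{(k)})\|^2\le 2L^2\|x^{(k)}-x_\eta^\ast\|^2+4L^2\|x_\eta^\ast-x^\ast\|^2+4\|\nabla F(x^\ast)\|^2$.

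\textbf{Where your plan breaks.} You center at $x^\ast$, for which $\nabla F_{\mathcal{W},\eta}(x^\ast)=\nabla F(x^\ast)\neq 0$; this is why you are forced to inject the consensus error and then face a circular dependence. Your proposed escape---``a crude, contraction-independent a~priori estimate encapsulated in $C_1$''---misreads what $C_1$ is. In the paper $C_1$ bounds only the \emph{deterministic} distance $\|x_\eta^\ast-x^\ast\|$; it says nothing about $\|\nabla F(x^{(s)})\|$ along the \emph{stochastic} trajectory. The observation $\|\nabla F(0)\|\le\bar C_1$ does not transfer to the iterates, and ``strong convexity prevents drifting'' is precisely the contraction you are trying to prove. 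As written, the circularity is not broken. The fix is not a separate consensus bound but the change of reference point to $x_\eta^\ast$, which makes the consensus error disappear from the contraction step altogether.
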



It is clear from the DE-SGLD iterations that the deviations between the iterates $x_i^{(k)}$ and their means $\bar{x}^{(k)}$ depend on the magnitude of the gradients $\nabla F(x^{(k)})$, the stepsize as well as the magnitude of the injected Gaussian noise. Building on Lemma~\ref{lem:0} which gives us a control over the second moment of the gradients, in the next result we provide uniform $L_2$ bounds between the iterates $x_i^{(k)}$ and their means. The proof can be found in the Appendix.

\begin{lemma}\label{lem:1}
Under the assumptions of Theorem~\ref{thm:overdamped}, for any $k$, we have
\begin{equation*}
\sum_{i=1}^{N}\mathbb{E}\left\Vert x_{i}^{(k)}-\bar{x}^{(k)}\right\Vert^{2}
\leq
4\bar{\gamma}^{2k}\mathbb{E}\left\Vert x^{(0)}\right\Vert^{2}
+\frac{4D^{2}\eta^{2}}{(1-\bar{\gamma})^{2}}
+\frac{4\sigma^{2}N\eta^{2}}{(1-\bar{\gamma}^{2})}
+\frac{8dN\eta}{(1-\bar{\gamma}^{2})},
\end{equation*}
where $D$ is defined in \eqref{eqn:D1} and $\bar{\gamma}$ is given in \eqref{eq:gamma}. 
\end{lemma}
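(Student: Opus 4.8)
The plan is to track the \emph{consensus error} directly in the stacked coordinates. Introduce the averaging matrix $J := \frac{1}{N}\ones\ones^{T} \in \mathbb{R}^{N\times N}$ and the orthogonal projection $\mathcal{P} := (I_N - J)\otimes I_d \in \mathbb{R}^{Nd\times Nd}$, and set $e^{(k)} := \mathcal{P}x^{(k)}$. Since the $i$-th block of $\mathcal{P}x^{(k)}$ is exactly $x_i^{(k)} - \bar{x}^{(k)}$, one has the identity $\sum_{i=1}^N \mathbb{E}\|x_i^{(k)} - \bar{x}^{(k)}\|^2 = \mathbb{E}\|e^{(k)}\|^2$, so it suffices to bound the right-hand side. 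Two algebraic facts drive the argument: first, $W$ is symmetric doubly stochastic with $W\ones = \ones$, so $WJ = JW = J$, whence $\mathcal{W}$ and $\mathcal{P}$ commute, $\mathcal{P}\mathcal{W} = \mathcal{W}\mathcal{P}$; second, because the eigenvalue $1$ of $W$ is precisely the one removed by $I_N - J$, the operator $\mathcal{W}$ is a strict contraction on the range of $\mathcal{P}$, i.e. $\|\mathcal{W}^{m}\mathcal{P}\|_{\mathrm{op}} \le \bar{\gamma}^{m}$ for every $m\ge 0$, with $\bar{\gamma}$ as in \eqref{eq:gamma}.

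Applying $\mathcal{P}$ to the stacked recursion \eqref{iterates:decentralized:overdamped} and using $\mathcal{P}\mathcal{W} = \mathcal{W}\mathcal{P}$ gives $e^{(k+1)} = \mathcal{W}e^{(k)} - \eta\,\mathcal{P}\nabla F(x^{(k)}) - \eta\,\mathcal{P}\xi^{(k+1)} + \sqrt{2\eta}\,\mathcal{P}w^{(k+1)}$. Unrolling from $e^{(0)} = \mathcal{P}x^{(0)}$ yields the explicit decomposition $e^{(k)} = T_1 + T_2 + T_3 + T_4$, where $T_1 = \mathcal{W}^{k}\mathcal{P}x^{(0)}$ is the propagated initial deviation, $T_2 = -\eta\sum_{j=0}^{k-1}\mathcal{W}^{k-1-j}\mathcal{P}\nabla F(x^{(j)})$ the deterministic gradient contribution, and $T_3 = -\eta\sum_{j=0}^{k-1}\mathcal{W}^{k-1-j}\mathcal{P}\xi^{(j+1)}$, $T_4 = \sqrt{2\eta}\sum_{j=0}^{k-1}\mathcal{W}^{k-1-j}\mathcal{P}w^{(j+1)}$ the gradient-noise and injected-Gaussian-noise contributions. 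I then bound $\mathbb{E}\|e^{(k)}\|^2 \le 4\sum_{r=1}^{4}\mathbb{E}\|T_r\|^2$ via the elementary inequality $\|\sum_{r=1}^4 v_r\|^2 \le 4\sum_{r=1}^4\|v_r\|^2$; this single crude step is what produces the common factor $4$ in the statement.

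It remains to estimate the four second moments, and the crucial point is that the deterministic and the noise terms call for two different summation bounds, which is exactly what produces the two distinct denominators $(1-\bar{\gamma})^2$ and $(1-\bar{\gamma}^2)$. For $T_1$, contraction gives $\mathbb{E}\|T_1\|^2 \le \bar{\gamma}^{2k}\mathbb{E}\|x^{(0)}\|^2$. For $T_2$ I use the Minkowski inequality in $L^2$ together with the uniform gradient bound $\left(\mathbb{E}\|\nabla F(x^{(j)})\|^2\right)^{1/2}\le D$ from Lemma~\ref{lem:0} and $\|\mathcal{W}^{m}\mathcal{P}\|_{\mathrm{op}}\le\bar{\gamma}^{m}$, so that $\left(\mathbb{E}\|T_2\|^2\right)^{1/2}\le \eta D\sum_{j=0}^{k-1}\bar{\gamma}^{k-1-j}\le \frac{\eta D}{1-\bar{\gamma}}$, i.e. the $\frac{\eta^2 D^2}{(1-\bar{\gamma})^2}$ term. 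For $T_3,T_4$ I instead exploit that $\{\xi^{(j+1)}\}$ and $\{w^{(j+1)}\}$ are martingale-difference sequences adapted to $\mathcal{F}_j$ (Assumption~\ref{assumption}): the summands are orthogonal in $L^2$, so the second moment of each sum equals the sum of second moments. Combining orthogonality with $\|\mathcal{W}^{m}\mathcal{P}\|_{\mathrm{op}}\le\bar{\gamma}^{m}$, the bound $\mathbb{E}\|\xi^{(j+1)}\|^2\le \sigma^2 N$ from \eqref{total:noise}, and $\mathbb{E}\|w^{(j+1)}\|^2 = Nd$, gives $\mathbb{E}\|T_3\|^2\le \eta^2\sigma^2 N\sum_{j=0}^{k-1}\bar{\gamma}^{2(k-1-j)}\le \frac{\eta^2\sigma^2 N}{1-\bar{\gamma}^2}$ and $\mathbb{E}\|T_4\|^2\le 2\eta Nd\sum_{j=0}^{k-1}\bar{\gamma}^{2(k-1-j)}\le \frac{2\eta Nd}{1-\bar{\gamma}^2}$. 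Multiplying all four estimates by $4$ and summing reproduces the claimed bound term by term.

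I expect the main obstacle — and the step most prone to error — to be the clean justification of the two spectral facts about $\mathcal{P}$ and $\mathcal{W}$ (commutation, and the contraction $\|\mathcal{W}^m\mathcal{P}\|_{\mathrm{op}}\le\bar{\gamma}^m$ through the spectral decomposition of the Kronecker product) and, above all, the orthogonality argument for $T_3,T_4$. This orthogonality is precisely what sharpens the noise denominators from the naive $(1-\bar{\gamma})$, which a one-step Young's-inequality recursion would give, to the stated $(1-\bar{\gamma}^2)$; one must check carefully that $\mathcal{W}^{k-1-j}\mathcal{P}$ is deterministic (hence $\mathcal{F}_j$-measurable), so that conditioning on $\mathcal{F}_j$ indeed annihilates the cross terms $\mathbb{E}[\langle\mathcal{W}^{k-1-j}\mathcal{P}\xi^{(j+1)}, \mathcal{W}^{k-1-i}\mathcal{P}\xi^{(i+1)}\rangle]$ for $i<j$, and likewise for the $w$-sum.
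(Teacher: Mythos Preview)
Your proof is correct and follows essentially the same route as the paper's: both unroll the stacked recursion, apply the four-term split with the crude factor $4$, use the spectral bound $\|W^m - \tfrac{1}{N}1_N1_N^T\| = \bar\gamma^m$ (which is exactly your $\|\mathcal{W}^m\mathcal{P}\|_{\mathrm{op}}\le\bar\gamma^m$), and then treat the gradient sum via a triangle/convexity argument and the two noise sums via $L^2$-orthogonality of the martingale increments. The only cosmetic differences are that the paper works with $W^m - J$ written out rather than the projector $\mathcal{P}$, and it bounds $T_2$ by a Jensen-type normalization $\big(\sum_j \bar\gamma^{k-1-j}\big)^2\cdot\big(\text{weighted average of }\|\nabla F(x^{(j)})\|^2\big)$ instead of your Minkowski step; both yield the identical $\eta^2 D^2/(1-\bar\gamma)^2$.
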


Note that we can deduce from \eqref{eq:avg-iterates} that
\begin{equation}\label{eq:avg-error}
\bar{x}^{(k+1)}=\bar{x}^{(k)}-\eta \frac{1}{N} \nabla f\left(\bar{x}^{(k)}\right)
+\eta\mathcal{E}_{k+1}-\eta\bar{\xi}^{(k+1)}
+\sqrt{2\eta}\bar{w}^{(k+1)},
\end{equation}
where
\begin{equation}\label{eq-gradient-error}
\mathcal{E}_{k+1}
:=
\frac{1}{N}\sum_{i=1}^{N} \left[ \nabla f_i\left(\bar{x}^{(k)}\right) - \nabla f_{i}\left(x_{i}^{(k)}\right)\right].
\end{equation}
We observe that the average iterate $\bar{x}^{(k)}$ in \eqref{eq:avg-error} follows a gradient descent dynamics subject to gradient errors and Gaussian noise, if we view $\mathcal{E}_k$ as a gradient error term. Since $\nabla f_i$ is Lipschitz by our assumptions, the gradient error \eqref{eq-gradient-error} can be controlled based on Lemma~\ref{lem:1}. In particular, as a corollary of Lemma~\ref{lem:1}, we obtain the following result; the proof is given in the Appendix for the sake of completeness.

\begin{lemma}\label{lem:2}
Under the assumptions of Theorem~\ref{thm:overdamped},
for any $k$, we have
\begin{equation*}
\mathbb{E}\left\Vert\mathcal{E}_{k+1}\right\Vert^{2}
\leq
\frac{4L^{2}\bar{\gamma}^{2k}}{N}\mathbb{E}\left\Vert x^{(0)}\right\Vert^{2}
+\frac{4L^{2}D^{2}\eta^{2}}{N(1-\bar{\gamma})^{2}}
+\frac{4L^{2}\sigma^{2}\eta^{2}}{(1-\bar{\gamma}^{2})}
+\frac{8L^{2}d\eta}{(1-\bar{\gamma}^{2})},
\end{equation*} 
where $\mathcal{E}_{k+1}$ is defined in \eqref{eq-gradient-error}.
\end{lemma}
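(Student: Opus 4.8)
The plan is to obtain Lemma~\ref{lem:2} as a direct corollary of the consensus deviation bound in Lemma~\ref{lem:1}, using only the Lipschitz continuity of each gradient. Recall from \eqref{eq-gradient-error} that $\mathcal{E}_{k+1}=\frac{1}{N}\sum_{i=1}^{N}\left[\nabla f_{i}\left(\bar{x}^{(k)}\right)-\nabla f_{i}\left(x_{i}^{(k)}\right)\right]$ is an average over the agents of the gradient discrepancies between the node average $\bar{x}^{(k)}$ and the local iterate $x_{i}^{(k)}$. Since this is an average of $N$ vectors, the first step is to apply convexity of the squared Euclidean norm (equivalently, Jensen's or the Cauchy--Schwarz inequality) to pass the norm inside the sum, yielding
\[
\left\Vert \mathcal{E}_{k+1}\right\Vert^{2}\leq\frac{1}{N}\sum_{i=1}^{N}\left\Vert \nabla f_{i}\left(\bar{x}^{(k)}\right)-\nabla f_{i}\left(x_{i}^{(k)}\right)\right\Vert^{2}.
\]

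The second step is to invoke smoothness. Because each $f_{i}\in\mathcal{S}_{\mu,L}(\mathbb{R}^{d})$, the upper inequality in \eqref{eq:Lip} forces $\nabla f_{i}$ to be $L$-Lipschitz, so each summand is dominated by $L^{2}\Vert x_{i}^{(k)}-\bar{x}^{(k)}\Vert^{2}$. This gives the deterministic pointwise bound $\Vert \mathcal{E}_{k+1}\Vert^{2}\leq\frac{L^{2}}{N}\sum_{i=1}^{N}\Vert x_{i}^{(k)}-\bar{x}^{(k)}\Vert^{2}$, and taking expectations produces $\mathbb{E}\Vert\mathcal{E}_{k+1}\Vert^{2}\leq\frac{L^{2}}{N}\sum_{i=1}^{N}\mathbb{E}\Vert x_{i}^{(k)}-\bar{x}^{(k)}\Vert^{2}$.

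The final step is to substitute the bound on $\sum_{i=1}^{N}\mathbb{E}\Vert x_{i}^{(k)}-\bar{x}^{(k)}\Vert^{2}$ supplied by Lemma~\ref{lem:1} and distribute the prefactor $L^{2}/N$ across the four terms. The factor $N$ cancels in the $\sigma^{2}$ and $d$ contributions while being retained in the $D^{2}$ term, which reproduces the claimed estimate exactly. I expect there to be essentially no serious obstacle: the conceptually difficult work, namely controlling the consensus deviations of the iterates around their mean, has already been done in Lemma~\ref{lem:1}. The only point deserving a line of care is the standard conversion of $L$-smoothness into an $L$-Lipschitz gradient inequality, which follows immediately from the right-hand inequality in \eqref{eq:Lip}.
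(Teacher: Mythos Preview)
Your proposal is correct and follows essentially the same approach as the paper: apply Jensen's inequality to the average defining $\mathcal{E}_{k+1}$, use the $L$-Lipschitz property of each $\nabla f_i$, take expectations, and plug in Lemma~\ref{lem:1}. One tiny wording slip: the $L$-Lipschitz gradient comes from the \emph{upper} (left-hand) inequality in \eqref{eq:Lip}, not the right-hand one, but you already stated this correctly earlier in the proposal.
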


\subsubsection{$L^2$ distance between the mean and the discretized overdamped SDE}

Recall the iterates $x_{k}$ defined in \eqref{eq:x_k} which is 
an Euler-Maruyama discretization of the continuous-time overdamped Langevin SDE in \eqref{eq:over-N} with stepsize $\eta$, and the mean $\bar{x}^{(k)}$ in \eqref{eq:avg-error}.
Since the $L^{2}$ bound of the error term $\mathcal{E}_{k+1}$
can be controlled as in Lemma~\ref{lem:2}, 
we will show that the mean $\bar{x}^{(k)}$ and $x_{k}$ are close to each other
in $L^{2}$ distance. Indeed, we have the following estimate:

\begin{lemma}\label{lem:3}
Under the assumptions of Theorem~\ref{thm:overdamped},
for every $k$,
\begin{align*}
\mathbb{E}\left\Vert\bar{x}^{(k)}-x_{k}\right\Vert^{2}\nonumber
&\leq
\eta\left(\frac{\eta}{\mu(1-\frac{\eta L}{2})}+\frac{(1+\eta L)^{2}}{\mu^{2}(1-\frac{\eta L}{2})^{2}}\right)
\left(\frac{4L^{2}D^{2}\eta}{N(1-\bar{\gamma})^{2}}
+\frac{4L^{2}\sigma^{2}\eta}{(1-\bar{\gamma}^{2})}
+\frac{8L^{2}d}{(1-\bar{\gamma}^{2})}\right)
\nonumber
\\
&\qquad\qquad\qquad
+\frac{\eta\sigma^{2}}{\mu(1-\frac{\eta L}{2})N}
+\frac{\bar{\gamma}^{2k}-
\left(1-\eta\mu\left(1-\frac{\eta L}{2}\right)\right)^{k}}
{\bar{\gamma}^{2}-1+\eta\mu\left(1-\frac{\eta L}{2}\right)}
\frac{4L^{2}\bar{\gamma}^{2}}{N}\mathbb{E}\left\Vert x^{(0)}\right\Vert^{2}.
\end{align*}
\end{lemma}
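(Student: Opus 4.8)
The plan is to control the coupled error $e_k := \bar x^{(k)} - x_k$, where $\bar x^{(k)}$ evolves by \eqref{eq:avg-error} and $x_k$ by \eqref{eq:x_k}. The decisive structural observation is that the two recursions are driven by the \emph{same} Gaussian increment $\bar w^{(k+1)}$, so subtracting them cancels the $\sqrt{2\eta}\,\bar w^{(k+1)}$ term exactly; together with the shared initialization $x_0 = \bar x^{(0)}$ (hence $e_0 = 0$), this is what makes $\mathbb{E}\|e_k\|^2$ small (of order $\eta$). First I would write the one-step recursion
\begin{equation*}
e_{k+1} = e_k - \tfrac{\eta}{N}\bigl(\nabla f(\bar x^{(k)}) - \nabla f(x_k)\bigr) + \eta\,\mathcal{E}_{k+1} - \eta\,\bar\xi^{(k+1)},
\end{equation*}
where $\mathcal{E}_{k+1}$ is the gradient-consensus error \eqref{eq-gradient-error} and $\bar\xi^{(k+1)}$ is the averaged stochastic-gradient noise.

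Next I would take $L^2$ norms conditionally on $\mathcal{F}_k$. Since $\mathcal{E}_{k+1}$, $e_k$ and $\nabla f(\bar x^{(k)}) - \nabla f(x_k)$ are all $\mathcal{F}_k$-measurable while $\mathbb{E}[\bar\xi^{(k+1)}\mid\mathcal{F}_k]=0$, the noise term is orthogonal to the rest and contributes only $\eta^2\,\mathbb{E}\|\bar\xi^{(k+1)}\|^2 \le \eta^2\sigma^2/N$ by \eqref{bar:grad:noise}. For the deterministic part I would use that $\tfrac1N f$ is $\mu$-strongly convex and $L$-smooth: co-coercivity gives
\begin{equation*}
\bigl\|e_k - \tfrac{\eta}{N}(\nabla f(\bar x^{(k)}) - \nabla f(x_k))\bigr\|^2 \le \bigl(1 - 2\eta\mu + \eta^2\mu L\bigr)\|e_k\|^2,
\end{equation*}
which is a genuine contraction precisely because the stepsize restriction $\eta<\bar\eta$ guarantees $2-\eta L>0$ and the factor below $1$. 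The cross term between this contracted vector and $\eta\mathcal{E}_{k+1}$ I would absorb with a Young inequality whose parameter is tuned to $\eta\mu(1-\tfrac{\eta L}{2})$; adding this to the co-coercivity factor collapses the contraction to exactly $\rho := 1 - \eta\mu(1-\tfrac{\eta L}{2})$ and produces a coefficient $B \asymp \tfrac{\eta(1+\eta L)^2}{\mu(1-\eta L/2)}$ multiplying $\mathbb{E}\|\mathcal{E}_{k+1}\|^2$. The outcome is a scalar recursion $\mathbb{E}\|e_{k+1}\|^2 \le \rho\,\mathbb{E}\|e_k\|^2 + B\,\mathbb{E}\|\mathcal{E}_{k+1}\|^2 + \eta^2\sigma^2/N$.

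Then I would insert the bound of Lemma~\ref{lem:2}, splitting $\mathbb{E}\|\mathcal{E}_{k+1}\|^2$ into its initialization-dependent decaying part $\propto \bar\gamma^{2k}\mathbb{E}\|x^{(0)}\|^2$ and its $k$-independent part $C_{\mathcal{E}}$, and unroll the recursion from $e_0 = 0$. The $C_{\mathcal{E}}$-part and the $\sigma^2$-part each meet a plain geometric sum $\sum_{j}\rho^{k-1-j} \le \tfrac{1}{1-\rho} = \tfrac{1}{\eta\mu(1-\eta L/2)}$; since $\tfrac{B}{1-\rho}$ equals the prefactor $\tfrac{\eta}{\mu(1-\eta L/2)} + \tfrac{(1+\eta L)^2}{\mu^2(1-\eta L/2)^2}$, these reproduce the first two terms of the claimed bound. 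The decaying part meets the \emph{mixed} geometric sum $\sum_{j=0}^{k-1}\rho^{k-1-j}\bar\gamma^{2j} = \tfrac{\bar\gamma^{2k}-\rho^k}{\bar\gamma^2-\rho}$, which yields the third term.

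The main obstacle is not any single inequality but the constant bookkeeping: one must tune the co-coercivity factor and the Young parameter so that they combine to the single clean rate $\rho$, verify $\tfrac{B}{1-\rho}$ is exactly the stated prefactor, and carry the initialization-dependent term through the recursion so that the geometric summation lands on $\tfrac{\bar\gamma^{2k}-\rho^k}{\bar\gamma^2-\rho}$ rather than on two separately decaying contributions. A secondary point requiring care is the boundary case $\rho = \bar\gamma^2$, where the mixed sum must be interpreted through the limiting convention $h(y,y) = k y^{k-1}$ introduced in the Notations.
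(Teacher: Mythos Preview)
Your proposal is correct and mirrors the paper's proof essentially step by step: the paper likewise subtracts \eqref{eq:x_k} from \eqref{eq:avg-error} to cancel the common Brownian increment, expands $\|e_{k+1}\|^2$, uses $L$-smoothness and $\mu$-strong convexity of $\tfrac1N f$ to produce the factor $1-2\eta\mu(1-\tfrac{\eta L}{2})$, eliminates the $\bar\xi^{(k+1)}$ cross term by conditioning, and handles the $\mathcal{E}_{k+1}$ cross term via the bound $\|e_k-\tfrac{\eta}{N}(\nabla f(\bar x^{(k)})-\nabla f(x_k))\|\le(1+\eta L)\|e_k\|$ followed by Young's inequality with the constant $c=\tfrac{\mu(1-\eta L/2)}{1+\eta L}$, which collapses the rate to $\rho=1-\eta\mu(1-\tfrac{\eta L}{2})$ and gives exactly the coefficient $B=\eta\bigl(\eta+\tfrac{(1+\eta L)^2}{\mu(1-\eta L/2)}\bigr)$ you identified. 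The unrolling with the split into constant and $\bar\gamma^{2k}$-decaying parts of Lemma~\ref{lem:2}, the plain geometric sum for the constant part, and the mixed sum $\sum_j\rho^{k-1-j}\bar\gamma^{2j}$ for the decaying part (with the limiting convention when $\rho=\bar\gamma^2$) are all carried out in the paper exactly as you outline.
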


\subsubsection{$\mathcal{W}_2$ distance between the iterates and the Gibbs distribution}

Bounds on the $\mathcal{W}_{2}$ distance between the Euler-Maruyama discretization
$x_{k}$ of the overdamped Langevin diffusion
and Gibbs distribution $\pi$ has been established in the literature. We note that the function $\frac{1}{N}f$ is $\frac{\mu}{N}$-strongly convex
and $\frac{L}{N}$-smooth, and we state Theorem~4 in \citet{DK2017} as follows. 

\begin{lemma}[Theorem~4 in \citet{DK2017}]\label{lem:DK}
For any $\eta 
\in (0,\frac{2N}{L+\mu}]$, we have
\begin{equation*}
\mathcal{W}_{2}\left(\mathcal{L}(x_{k}),\pi\right)
\leq
(1-\mu\eta)^{k}\mathcal{W}_{2}\left(\mathcal{L}(x_{0}),\pi\right)
+\frac{1.65L}{\mu}\sqrt{\eta dN^{-1}}.
\end{equation*}
\end{lemma}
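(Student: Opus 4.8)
The plan is to observe that the auxiliary chain $x_k$ defined in \eqref{eq:x_k} is exactly a \emph{centralized} Langevin Monte Carlo (LMC) recursion targeting $\pi\propto e^{-f}$, so that the statement is a direct instance of the non-asymptotic guarantee of \citet{DK2017} once the correct effective stepsize and convexity/smoothness constants have been read off. There is therefore essentially no new analysis to perform; the only point requiring attention is a rescaling that brings the drift coefficient and the injected-noise covariance into the normalization under which DK's theorem is stated.

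First I would rewrite \eqref{eq:x_k} in standard LMC form. Using $\bar{w}^{(k+1)}\sim\tfrac{1}{\sqrt{N}}\mathcal{N}(0,I_d)$ from \eqref{eq:w-bar}, the injected noise obeys $\sqrt{2\eta}\,\bar{w}^{(k+1)}\stackrel{d}{=}\sqrt{2\eta/N}\,\zeta_{k+1}$ for i.i.d.\ standard Gaussian $\zeta_{k+1}$, so that \eqref{eq:x_k} becomes $x_{k+1}=x_k-\tfrac{\eta}{N}\nabla f(x_k)+\sqrt{2(\eta/N)}\,\zeta_{k+1}$. This is precisely the LMC iteration for the potential $f\in\mathcal{S}_{\mu,L}(\mathbb{R}^d)$ with effective stepsize $h:=\eta/N$, whose invariant law is the stationary distribution $\pi\propto e^{-f}$ of \eqref{eq:over-N}; the latter is checked directly from the stationary Fokker--Planck equation of \eqref{eq:over-N}, where the drift $\tfrac{1}{N}\nabla f$ and temperature $N^{-1}$ combine to give exactly $\pi\propto e^{-f}$. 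I would also record that the hypothesis $h\le 2/(\mu+L)$ of DK's theorem translates into $\eta/N\le 2/(L+\mu)$, i.e.\ into the stated range $\eta\in(0,2N/(L+\mu)]$.

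With the reduction in place, I would simply invoke \citet[Theorem~4]{DK2017}, whose guarantee has the shape $\mathcal{W}_{2}(\mathcal{L}(x_k),\pi)\le(1-\mu h)^k\,\mathcal{W}_{2}(\mathcal{L}(x_0),\pi)+1.65\,\tfrac{L}{\mu}\sqrt{hd}$ for $h\le 2/(\mu+L)$, where the scale-invariant ratio $M/m=L/\mu$ enters the bias constant. Reading off $h=\eta/N$ produces the additive bias $\tfrac{1.65L}{\mu}\sqrt{\eta d N^{-1}}$ of the statement together with a geometric contraction governed by the product of $\mu$ and the effective stepsize, and assembling the two terms yields the claimed inequality. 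The assumption $\mathbb{E}\Vert x^{(0)}\Vert^{2}<\infty$, inherited through $x_0=\bar{x}^{(0)}=\tfrac{1}{N}\sum_i x_i^{(0)}$, guarantees $\mathcal{L}(x_0)\in\mathcal{P}_{2}(\mathbb{R}^d)$, so the Wasserstein quantities are well defined and DK's theorem applies verbatim.

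Since this is a quoted external result, I do not anticipate a genuine obstacle: the whole content of the argument is the scale reduction of the second paragraph. The one place deserving care is tracking the factor $N$ consistently across the three quantities in which it appears --- the drift coefficient $1/N$, the Gaussian covariance $N^{-1}$, and the temperature $N^{-1}$ of the diffusion \eqref{eq:over-N} --- so that the effective stepsize $h=\eta/N$ and the ratio $M/m=L/\mu$ feed correctly into DK's inequality and the stepsize admissibility condition comes out as the stated interval.
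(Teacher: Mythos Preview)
Your reduction to centralized LMC with effective stepsize $h=\eta/N$ is exactly the right move, and this is all the paper does as well (it simply cites \citet{DK2017}). However, there is a slip in the convexity/smoothness constants that, if followed literally, yields a weaker contraction than the lemma claims.

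You write that the potential satisfies $f\in\mathcal{S}_{\mu,L}(\mathbb{R}^d)$, but the paper assumes $f_i\in\mathcal{S}_{\mu,L}$ for each $i$ and $f=\sum_{i=1}^N f_i$, so in fact $f\in\mathcal{S}_{N\mu,\,NL}$. Applying DK's bound with $m=\mu$, $M=L$ and $h=\eta/N$ as you propose gives the contraction factor $(1-\mu h)^k=(1-\mu\eta/N)^k$, which is \emph{not} the $(1-\mu\eta)^k$ stated in the lemma. The fix is immediate: use $m=N\mu$, $M=NL$, so that $mh=N\mu\cdot\eta/N=\mu\eta$ and $M/m=L/\mu$, recovering both the claimed contraction $(1-\mu\eta)^k$ and the bias $\tfrac{1.65L}{\mu}\sqrt{\eta d/N}$. (Under this correct reading the admissibility condition becomes $h\le 2/(m+M)$, i.e.\ $\eta\le 2/(L+\mu)$; the $2N/(L+\mu)$ in the lemma statement appears to be a typo in the paper, but it is harmless since Theorem~\ref{thm:overdamped} already imposes $\eta<1/(L+\mu)$.)

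So your plan is correct in spirit and matches the paper's approach; just track the $N$ in the strong convexity constant of the \emph{sum} $f$, not of the components $f_i$.
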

\rev{The proof of this lemma is based on the so-called ``synchronous coupling" technique to control the $\mathcal{W}_2$ distances, see \citet{DK2017} for details.} Next, we bound the $L^{2}$ distance between the minimizer of $f$ and Gibbs distribution $\pi$; the proof is provided in Appendix~\ref{sec:additional}.

\begin{lemma}\label{bound:Gibbs}
Let $x_{\ast}$ be the unique minimizer of $f(x)$. Then, we have
$\mathbb{E}_{X\sim\pi}\Vert X-x_{\ast}\Vert^{2}
\leq\frac{2dN^{-1}}{\mu}$.
\end{lemma}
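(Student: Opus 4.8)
The plan is to exploit two structural facts: that the target has the explicit Gibbs form $\pi(x)\propto e^{-f(x)}$, and that $f=\sum_{i=1}^{N}f_{i}$ inherits strong convexity additively from its components. Since each $f_{i}$ is $\mu$-strongly convex, summing the lower bound in \eqref{eq:Lip} over $i$ shows that the aggregate $f$ in \eqref{defn:f} is $N\mu$-strongly convex; this is precisely where the factor $N^{-1}$ in the claimed bound originates. The heart of the argument is a Stein-type (integration-by-parts) identity: because $\nabla e^{-f}=-\nabla f\,e^{-f}$, integrating the scalar field $(x-x_{\ast})^{T}\nabla f(x)$ against $\pi$ and shifting the derivative onto the test field $x\mapsto x-x_{\ast}$ produces the dimension $d$ on the right-hand side.

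Concretely, I would proceed in three steps. First I would note that strong log-concavity forces $\pi$ to have (at least) Gaussian-type tails, so $\pi\in\mathcal{P}_{2}(\mathbb{R}^{d})$ and every integral below is finite; the same decay makes the boundary terms in the integration by parts vanish, since $(x-x_{\ast})e^{-f(x)}\to 0$ as $\Vert x\Vert\to\infty$. Second, using $\nabla\pi=-\nabla f\,\pi$ together with the divergence theorem I would establish
\[
\mathbb{E}_{X\sim\pi}\bigl[(X-x_{\ast})^{T}\nabla f(X)\bigr]
=\int_{\mathbb{R}^{d}}\nabla\cdot(x-x_{\ast})\,\pi(x)\,dx=d .
\]
Third, I would invoke the $N\mu$-strong convexity of $f$: since $\nabla f(x_{\ast})=0$, strong monotonicity of $\nabla f$ gives the pointwise bound $(x-x_{\ast})^{T}\nabla f(x)=(x-x_{\ast})^{T}\bigl(\nabla f(x)-\nabla f(x_{\ast})\bigr)\ge N\mu\Vert x-x_{\ast}\Vert^{2}$. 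Taking expectations and combining with the identity yields $N\mu\,\mathbb{E}_{\pi}\Vert X-x_{\ast}\Vert^{2}\le d$, hence $\mathbb{E}_{\pi}\Vert X-x_{\ast}\Vert^{2}\le \frac{d}{N\mu}\le \frac{2dN^{-1}}{\mu}$, which is in fact sharper than the stated bound (the factor $2$ simply leaves slack).

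An equivalent route, more in keeping with the diffusion-based narrative, is a Lyapunov/stationarity argument on the overdamped SDE \eqref{eq:over-N}. Starting the diffusion at $X_{0}=x_{\ast}$ and applying It\^o's formula to $V(x)=\Vert x-x_{\ast}\Vert^{2}$ gives $\frac{d}{dt}\mathbb{E}\Vert X_{t}-x_{\ast}\Vert^{2}=-\frac{2}{N}\mathbb{E}\bigl[(X_{t}-x_{\ast})^{T}\nabla f(X_{t})\bigr]+\frac{2d}{N}$; strong convexity converts this into the differential inequality $\frac{d}{dt}m(t)\le -2\mu\,m(t)+\frac{2d}{N}$ for $m(t):=\mathbb{E}\Vert X_{t}-x_{\ast}\Vert^{2}$, and Gr\"onwall yields $m(t)\le \frac{d}{N\mu}\bigl(1-e^{-2\mu t}\bigr)$. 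Letting $t\to\infty$ and using that $X_{t}\to\pi$ in $\mathcal{W}_{2}$ (guaranteed by strong convexity) together with lower semicontinuity of the second moment recovers the same estimate. The only points requiring genuine care — and the main potential pitfalls — are bookkeeping the correct strong-convexity constant $N\mu$ for the aggregate $f$ (a slip here would distort the $N$-dependence that drives the whole bound), and justifying the integration by parts / the passage to the stationary limit; both are disposed of by the a priori finiteness of the second moment that strong log-concavity supplies.
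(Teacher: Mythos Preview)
Your proposal is correct. Your second route---the Lyapunov argument via It\^o's formula on the overdamped diffusion \eqref{eq:over-N}---is exactly what the paper does: it applies It\^o to $e^{\mu t}\Vert X_t-x_\ast\Vert^2$, uses $\mu$-strong convexity of $x\mapsto \frac{1}{N}f(x)$ to kill the drift term, and sends $t\to\infty$. The only difference is that the paper uses $e^{\mu t}$ rather than $e^{2\mu t}$ as the integrating factor, which is why it lands on $\frac{2d}{N\mu}$ rather than the $\frac{d}{N\mu}$ you obtain; your observation that the stated bound carries a superfluous factor of $2$ is spot on.

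Your primary route---the Stein-type integration-by-parts identity $\mathbb{E}_\pi\bigl[(X-x_\ast)^T\nabla f(X)\bigr]=d$ combined with $N\mu$-strong monotonicity of $\nabla f$---is genuinely different from the paper and arguably cleaner: it works directly with the stationary density, avoids the SDE machinery and the limit $t\to\infty$ altogether, and yields the sharp constant in one line. The diffusion approach, on the other hand, delivers the transient bound $\mathbb{E}\Vert X_t-x_\ast\Vert^2\le e^{-\mu t}\Vert X_0-x_\ast\Vert^2+\frac{2d}{N\mu}$ as a by-product, which is not needed here but fits the paper's broader narrative of tracking Langevin trajectories.
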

Putting all the pieces together, the stage is set for the proof of Theorem~\ref{thm:overdamped}.

\subsubsection{Proof of Theorem~\ref{thm:overdamped}} 
Since $x_{0}=\frac{1}{N}\sum_{i=1}^{N}x_{i}^{(0)}$, 
we have $\mathbb{E}\Vert x_{0}\Vert^{2}<\infty$. By Lemma~\ref{bound:Gibbs},
\begin{align*}
\mathcal{W}_{2}\left(\mathcal{L}(x_{0}),\pi\right)
&\leq
\left(\mathbb{E}\Vert x_{0}-x_{\ast}\Vert^{2}\right)^{1/2}
+\left(\mathbb{E}_{X\sim\pi}\Vert X-x_{\ast}\Vert^{2}\right)^{1/2}
\\
&\leq\left(\mathbb{E}\Vert x_{0}-x_{\ast}\Vert^{2}\right)^{1/2}+\sqrt{2\mu^{-1}dN^{-1}}.
\end{align*}
Under our assumptions on the stepsize $\eta$, we have clearly $\eta \in (0,\frac{2N}{L+\mu}]$ as $N\geq 1$. Therefore Lemma~\ref{lem:DK} is applicable. More specifically, 
it follows from Lemma~\ref{lem:DK} that,
\begin{equation*}
\mathcal{W}_{2}\left(\mathcal{L}(x_{k}),\pi\right)
\leq
(1-\mu\eta)^{k}\left(\left(\mathbb{E}\Vert x_{0}-x_{\ast}\Vert^{2}\right)^{1/2}+\sqrt{2\mu^{-1}dN^{-1}}\right)
+\frac{1.65L}{\mu}\sqrt{\eta dN^{-1}}.
\end{equation*}
Moreover, it follows from Lemma~\ref{lem:3} that
\begin{align*}
&\mathcal{W}_{2}\left(\mathcal{L}\left(\bar{x}^{(k)}\right),\mathcal{L}(x_{k})\right)
\\
&\leq
\left(\mathbb{E}\left\Vert\bar{x}^{(k)}-x_{k}\right\Vert^{2}\right)^{1/2}
\\
&\leq
\eta^{1/2}\left(\frac{\eta}{\mu(1-\frac{\eta L}{2})}+\frac{(1+\eta L)^{2}}{\mu^{2}(1-\frac{\eta L}{2})^{2}}\right)^{1/2}
\cdot
\left(\frac{4L^{2}D^{2}\eta}{N(1-\bar{\gamma})^{2}}
+\frac{4L^{2}\sigma^{2}\eta}{(1-\bar{\gamma}^{2})}
+\frac{8L^{2}d}{(1-\bar{\gamma}^{2})}
\right)^{1/2}
\\
&\quad
+\frac{\sqrt{\eta}\sigma}{\sqrt{\mu(1-\frac{\eta L}{2})N}}
+\left(\frac{\bar{\gamma}^{2k}-
\left(1-\eta\mu\left(1-\frac{\eta L}{2}\right)\right)^{k}}
{\bar{\gamma}^{2}-1+\eta\mu\left(1-\frac{\eta L}{2}\right)}\right)^{1/2}
\frac{2L\bar{\gamma}}{\sqrt{N}}\left(\mathbb{E}\left\Vert x^{(0)}\right\Vert^{2}\right)^{1/2}.
\end{align*}
Hence, we conclude that
\begin{align}
\mathcal{W}_{2}\left(\mathcal{L}\left(\bar{x}^{(k)}\right),\pi\right) 
&\leq (1-\mu\eta)^{k}\left(\left(\mathbb{E}\Vert \bar{x}^{(0)}-x_{\ast}\Vert^{2}\right)^{1/2}+\sqrt{2\mu^{-1}dN^{-1}}\right)\nonumber\\
&\quad +\left(\frac{\left(1-\eta\mu\left(1-\frac{\eta L}{2}\right)\right)^{k}-\bar{\gamma}^{2k}}
{\left(1-\eta\mu\left(1-\frac{\eta L}{2}\right)\right)-\bar{\gamma}^{2}}\right)^{1/2}
\frac{2L\bar{\gamma}}{\sqrt{N}}\left(\mathbb{E}\left\Vert x^{(0)}\right\Vert^{2}\right)^{1/2} + \sqrt{\eta} E_1, \end{align}
with
\begin{align*}
&    E_1 := \frac{1.65L}{\mu}\sqrt{dN^{-1}} + \frac{\sigma}{\sqrt{\mu(1-\frac{\eta L}{2})N}}\\
&\qquad
+\left(\frac{\eta}{\mu(1-\frac{\eta L}{2})}+\frac{(1+\eta L)^{2}}{\mu^{2}(1-\frac{\eta L}{2})^{2}}\right)^{1/2}
\cdot
\left(\frac{4L^{2}D^{2}\eta}{N(1-\bar{\gamma})^{2}}
+\frac{4L^{2}\sigma^{2}\eta}{(1-\bar{\gamma}^{2})}
+\frac{8L^{2}d}{(1-\bar{\gamma}^{2})}
\right)^{1/2}\,.
\end{align*}
Finally, by the Cauchy-Schwarz inequality,
\begin{align}
\frac{1}{N}\sum_{i=1}^{N}\mathcal{W}_{2}\left(\mathcal{L}\left(x_{i}^{(k)}\right),\mathcal{L}\left(\bar{x}^{(k)}\right)\right)
&\leq\sqrt{\frac{1}{N}\sum_{i=1}^{N}\mathcal{W}_{2}^{2}\left(\mathcal{L}\left(x_{i}^{(k)}\right),\mathcal{L}\left(\bar{x}^{(k)}\right)\right)}\nonumber
\\
&\leq\sqrt{\frac{1}{N}\sum_{i=1}^{N}\mathbb{E}\left\Vert x_{i}^{(k)}-\bar{x}^{(k)}\right\Vert^{2}}.\label{by:cauchy:schwarz}
\end{align}
Also, by Lemma~\ref{lem:1}, we have
\begin{align*}
\sqrt{\frac{1}{N}\sum_{i=1}^{N}\mathbb{E}\left\Vert x_{i}^{(k)}-\bar{x}^{(k)}\right\Vert^{2}}
&\leq
\left(\frac{4\bar{\gamma}^{2k}}{N}\mathbb{E}\left\Vert x^{(0)}\right\Vert^{2}
+\frac{4D^{2}\eta^{2}}{N(1-\bar{\gamma})^{2}}
+\frac{4\sigma^{2}\eta^{2}}{(1-\bar{\gamma}^{2})}
+\frac{8d\eta}{(1-\bar{\gamma}^{2})}\right)^{1/2}
\\
&\leq
\frac{2\bar{\gamma}^{k}}{\sqrt{N}}\left(\mathbb{E}\left\Vert x^{(0)}\right\Vert^{2}\right)^{1/2}
+\frac{2D\eta}{\sqrt{N}(1-\bar{\gamma})}
+\frac{2\sigma\eta}{\sqrt{1-\bar{\gamma}^{2}}}
+\frac{2\sqrt{2d\eta}}{\sqrt{1-\bar{\gamma}^{2}}}.
\end{align*}
The inequality \eqref{ineq-thm-overdamped-to-prove} then follows from the triangular inequality for the $2$-Wasserstein distance. 
This completes the proof.
\hfill $\Box$

\section{Decentralized Stochastic Gradient Hamiltonian Monte Carlo}

We introduce the following algorithm which we call decentralized stochastic gradient Hamiltonian Monte Carlo (DE-SGHMC): For each agent $i=1, \ldots, N,$
\begin{align}
&v_{i}^{(k+1)}=v_{i}^{(k)}-\eta\left[\gamma v_{i}^{(k)}+\tilde{\nabla} f_{i}\left(x_{i}^{(k)}\right)\right]+\sqrt{2\gamma\eta}w_{i}^{(k+1)},
\label{eq:under-vik}\\
&x_{i}^{(k+1)}=\sum_{j\in\Omega_{i}}W_{ij}x_{j}^{(k)}+\eta v_{i}^{(k+1)}, \label{eq:under-xik}
\end{align}
starting from the initializations $x_i^{(0)}, v_i^{(0)} \in \mathbb{R}^d$, where $\eta>0$ is the stepsize, $w_{i}^{(k+1)}$
are i.i.d. Gaussian noise with mean $0$ and covariance being $d-$dimensional identity matrices. We note that in this section, we are abusing the notation for simplicity of the presentation and using $x_i^{(k)}$ to denote the DE-SGHMC iterates instead of DE-SGLD iterates. This algorithm is a natural adaptation of the SGHMC algorithm to the decentralized setting:
If the term $\sum_{j\in\Omega_{i}}W_{ij}x_{j}^{(k)}$ is replaced by $x_i^{(k)}$, then the resulting dynamics at each node reduces to SGHMC which is a discretization of the underdamped Langevin diffusion given in \eqref{eq:VL}-\eqref{eq:XL} (see e.g. \citet{GGZ}).

Note that the gradient noise
$\xi_{i}^{(k+1)}:=\tilde{\nabla}f_{i}(x_{i}^{(k)})-\nabla f_{i}(x_{i}^{(k)})$
satisfies Assumption~\ref{assumption} 
so that 
$\xi^{(k+1)}:=\left[(\xi_{1}^{(k+1)})^{T},\ldots,(\xi_{N}^{(k+1)})^{T}\right]^{T}$
satisfies \eqref{total:noise} and $\bar{\xi}^{(k+1)}:=\frac{1}{N}\sum_{i=1}^{N}\xi_{i}^{(k+1)}$ satisfies
\eqref{bar:grad:noise}.
By defining the column vectors
\begin{align*}
&x^{(k)}:=\left[\left(x_{1}^{(k)}\right)^{T},\left(x_{2}^{(k)}\right)^{T},\ldots,\left(x_{N}^{(k)}\right)^{T}\right]^{T}\in\mathbb{R}^{Nd},
\\
&v^{(k)}:=\left[\left(v_{1}^{(k)}\right)^{T},\left(v_{2}^{(k)}\right)^{T},\ldots,\left(v_{N}^{(k)}\right)^{T}\right]^{T}\in\mathbb{R}^{Nd},
\end{align*}
where $v_{i}^{(k)}$ and $x_{i}^{(k)}$ satisfy \eqref{eq:under-vik}--\eqref{eq:under-xik}, we can rewrite the DE-SGHMC as follows:
\begin{align} 
&v^{(k+1)}=v^{(k)}-\eta\left[\gamma v^{(k)}+\nabla F\left(x^{(k)}\right)+\xi^{(k+1)}\right]+\sqrt{2\gamma\eta}w^{(k+1)}, \label{eq:underdamped}
\\
&x^{(k+1)}=\mathcal{W}x^{(k)}+\eta v^{(k+1)},  \label{eq:underdamped2}
\end{align}
where $\mathcal{W} = W \otimes I_d$ and $F:\mathbb{R}^{Nd}\rightarrow\mathbb{R}$ is defined as
$F(x):=F(x_{1},\ldots,x_{N})
=\sum_{i=1}^{N}f_{i}(x_{i})$,
$w^{(k+1)}$ are i.i.d. Gaussian noise with mean $0$ and covariance being $Nd-$dimensional identity matrix.
Let us define the average at $k$-th iteration as:
\begin{equation} \label{eq:under-avg}
\bar{x}^{(k)}:=\frac{1}{N}\sum_{i=1}^{N}x_{i}^{(k)},
\qquad
\bar{v}^{(k)}:=\frac{1}{N}\sum_{i=1}^{N}v_{i}^{(k)}.
\end{equation}
Since $\mathcal{W}$ is doubly stochastic, we get
\begin{align*}
&\bar{v}^{(k+1)}=\bar{v}^{(k)}-\eta\gamma\bar{v}^{(k)}-\eta\frac{1}{N}\sum_{i=1}^{N}\nabla f_{i}\left(x_{i}^{(k)}\right)
-\eta\bar{\xi}^{(k+1)}
+\sqrt{2\gamma \eta}\bar{w}^{(k+1)},
\\
&\bar{x}^{(k+1)}=\bar{x}^{(k)}+\eta\bar{v}^{(k+1)},
\end{align*}
where $\bar{\xi}^{(k+1)}:=\frac{1}{N}\sum_{i=1}^{N}\xi_{i}^{(k+1)}$ and $\bar{w}^{(k+1)}:=\frac{1}{N}\sum_{i=1}^{N}w_{i}^{(k+1)}\sim\frac{1}{\sqrt{N}}\mathcal{N}(0,I_{d})$.

We now state the main result of this section which bounds the average of $\mathcal{W}_{2}$ distance between the distribution of the node iterates $x_{i}^{(k)}$ and the target distribution $\pi$. 
The result shows that if the parameters $\eta$ and $\gamma$ are suitably chosen, then this distance decays geometrically fast (in $k$) to a level of $\mathcal{O}(\eta)$.
This result also bounds the $\mathcal{W}_2$ distance of the node averages $\bar{x}^{(k)}$ and the target distribution $\pi$. \rev{The main idea of the proof is to analyze DE-SGHMC as a perturbed heavy-ball method (see Section~\ref{proof:underdamped} and the proof of Lemma~\ref{lemma-l2-underdamped}) which appears to be a new technique to analyze SGHMC methods}. Recall $\bar{\gamma}=\max\left\{\left|\lambda_{2}^{W}\right|,\left|\lambda_{N}^{W}\right|\right\} \in [0,1)$
from \eqref{eq:gamma},
and $x_*$ is the minimizer of $f(x)$. 

\begin{theorem}\label{thm:underdamped}
Assume $\mathbb{E}\Vert x^{(0)}\Vert^{2}$ and $\mathbb{E}\Vert v^{(0)}\Vert^{2}$ are finite. Let $\eta$ be given satisfying
 \beq \eta^2 \in \bigg(0, \frac{1+\lambda_N^W}{2(L+\mu)}\bigg].
    \label{ineq-step-cond}
 \eeq
Then, we can can choose $\gamma \in (0,\frac{1}{\eta}]$ such that $\beta := 1-\gamma \eta \in [0,1)$ and satisfies the inequality
\begin{align} 
\beta \leq \bar{\beta}:= \min \left(\frac{1+\lambda_N^W - 4\eta^2\mu}{4}, \eta^3 \sqrt{c_1\mu^3 \frac{(1+\lambda_N^W)}{64 }}\right)\,,
\label{ineq-momentum-cond}
\end{align}
where
\begin{align}
c_1 := \frac{1}{2}\frac{\eta^2\mu}{(1+\beta) + (1-\beta)\left(\frac{\eta^2\mu}{1-\lambda_N^W + \eta^2 L }\right)}\,,
\nonumber
\end{align}
and for every $k$, DE-SGHMC iterates $x_i^{(k)}$ given by \eqref{eq:under-xik} and their average $\bar{x}^{(k)}$ satisfy 
\begin{align}
\mathcal{W}_{2}\left(\mathcal{L}\left(\bar{x}^{(k)}\right),\pi\right) 
&\leq
\left(1-\mu\eta^{2}\right)^{k}\left(\left(\mathbb{E}\left\Vert \bar{x}^{(0)}-x_{\ast}\right\Vert^{2}\right)^{1/2}+\sqrt{2\mu^{-1}dN^{-1}}\right)
\nonumber
\\
&\quad
+\left(\bar{\gamma}^2\frac{
\left(1-\eta^{2}\mu\left(1-\frac{\eta^{2} L}{2}\right)\right)^{k}-\bar{\gamma}^{2k}}
{\left(1-\eta^{2}\mu\left(1-\frac{\eta^{2} L}{2}\right)\right)-\bar{\gamma}^{2}}\right)^{1/2}
\frac{2L}{\sqrt{N}}\left(\mathbb{E}\left\Vert x^{(0)}\right\Vert^{2}\right)^{1/2}  + \eta E_4, 
\end{align}
with
\begin{align*}
    E_4 &:=\sqrt{2}\left(\frac{\eta^{2}}{\mu(1-\frac{\eta^{2} L}{2})}+\frac{(1+\eta^{2} L)^{2}}{\mu^{2}(1-\frac{\eta^{2} L}{2})^{2}}\right)^{1/2}
\\
&\qquad\qquad
\cdot
\left[\left(\frac{\beta^{2}c_{5}}{\eta^{4}N}
+\frac{2L^{2}c_{5}}{N(1-\bar{\gamma})^{2}}\right)^{1/2}
+\left(\frac{(\sqrt{1-\beta}-1)^{2}}{\eta^{4}}\frac{d}{N}\right)^{1/2}\right]
\\
&\qquad\qquad\qquad
+\frac{1.65L}{\mu}\sqrt{dN^{-1}} + \frac{\sigma}{\sqrt{\mu(1-\frac{\eta^{2}L}{2})N}}
= \mathcal{O}(1)\,,
\end{align*}
and
\begin{align}
&\frac{1}{N}\sum_{i=1}^{N}\mathcal{W}_{2}\left(\mathcal{L}\left(x_{i}^{(k)}\right),\pi\right)\nonumber
\\
&\leq
\left(1-\mu\eta^{2}\right)^{k}\left(\left(\mathbb{E}\left\Vert \bar{x}^{(0)}-x_{\ast}\right\Vert^{2}\right)^{1/2}+\sqrt{2\mu^{-1}dN^{-1}}\right)
+ \frac{\sqrt{2}\bar{\gamma}^{k}}{\sqrt{N}}\left(\mathbb{E}\left\Vert x^{(0)}\right\Vert^{2}\right)^{1/2} \nonumber
\\
&\quad
+\left(\frac{
\left(1-\eta^{2}\mu\left(1-\frac{\eta^{2} L}{2}\right)\right)^{k}-\bar{\gamma}^{2k}}
{\left(1-\eta^{2}\mu\left(1-\frac{\eta^{2} L}{2}\right)\right)-\bar{\gamma}^{2}}\right)^{1/2}
\frac{2L\bar{\gamma}}{\sqrt{N}}\left(\mathbb{E}\left\Vert x^{(0)}\right\Vert^{2}\right)^{1/2}  + \eta E_5, \label{ineq-W2-bound-underdamped}
\end{align}
with $E_5:=E_4+\frac{\sqrt{2c_{5}}}{\sqrt{N}(1-\bar{\gamma})}=\mathcal{O}(1)$,
and $\beta= \mathcal{O}(\eta^4)$ where $\mathcal{O}(\cdot)$ hides the constants that depend on $d$, $\mu$, $L$, $\sigma$ and $\bar{\gamma}$ and $N$, $\mathcal{L}\left(\bar{x}^{(k)}\right)$ denotes the law of $\bar{x}^{(k)}$ and $\pi$
denotes the Gibbs distribution with probability density function proportional to $e^{-f(x)}$, 
and $c_{5}$ is defined in Lemma~\ref{lemma-l2-underdamped}. 
\end{theorem}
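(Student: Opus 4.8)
The plan is to follow the same three-term decomposition used for DE-SGLD in Section~\ref{proof:overdamped}, controlling (i) the consensus error $\sum_{i}\mathbb{E}\|x_i^{(k)}-\bar{x}^{(k)}\|^2$, (ii) the $L^2$ gap between the average iterate $\bar{x}^{(k)}$ and a genuine Euler--Maruyama discretization $x_k$ of the overdamped diffusion \eqref{eq:over-N}, and (iii) the $\mathcal{W}_2$ distance between $\mathcal{L}(x_k)$ and $\pi$ via Lemma~\ref{lem:DK}. The essential new ingredient is the observation that the \emph{effective stepsize} of DE-SGHMC is $\eta^2$ rather than $\eta$: eliminating the velocity from the averaged dynamics \eqref{eq:under-avg} by writing $\bar{v}^{(k+1)}=(\bar{x}^{(k+1)}-\bar{x}^{(k)})/\eta$ and substituting yields the heavy-ball recursion
\begin{equation*}
\bar{x}^{(k+1)}-\bar{x}^{(k)}
=\beta\bigl(\bar{x}^{(k)}-\bar{x}^{(k-1)}\bigr)
-\eta^{2}\frac{1}{N}\sum_{i=1}^{N}\nabla f_i\bigl(x_i^{(k)}\bigr)
-\eta^{2}\bar{\xi}^{(k+1)}
+\eta\sqrt{2\gamma\eta}\,\bar{w}^{(k+1)},
\end{equation*}
with momentum $\beta=1-\gamma\eta\in[0,1)$. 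This explains the appearance of $(1-\mu\eta^2)^{k}$ and of the $\tfrac{\eta^2 L}{2}$ terms throughout the bound.

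The main obstacle is Step (i): establishing uniform-in-$k$ $L^2$ bounds (stability) on the pair $(x^{(k)},v^{(k)})$. Unlike the overdamped case, one must analyze DE-SGHMC as a \emph{noisy heavy-ball method} whose deterministic part is Polyak's method applied to $F$ under the mixing $\mathcal{W}$. I would construct a Lyapunov function that is a quadratic form in $(x^{(k)}-x^{\ast},\,x^{(k)}-x^{(k-1)})$ with coefficients tuned so that the associated iteration matrix is contractive; the step and momentum conditions \eqref{ineq-step-cond}--\eqref{ineq-momentum-cond} are exactly what guarantees this contraction, and the constant $c_5$ emerges from solving the resulting recursion (this is Lemma~\ref{lemma-l2-underdamped}). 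The delicate point---and the reason existing stochastic heavy-ball analyses do not transfer---is that for small $\eta$ the injected Gaussian noise dominates the stochastic-gradient noise, so one cannot lump the perturbations into a single $\mathcal{O}(\eta)$-variance term but must carry the Gaussian and gradient noises separately through the Lyapunov recursion; this is the origin of the distinct $\sqrt{1-\beta}-1$ and $\beta^2/\eta^4$ contributions appearing in $E_4$.

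With stability in hand, the remaining steps mirror Lemmas~\ref{lem:1}--\ref{lem:3}. Projecting the position dynamics onto the subspace orthogonal to the consensus direction $\ones$, the mixing $\mathcal{W}$ contracts at rate $\bar{\gamma}$, so the consensus error is driven by the now uniformly bounded gradients, the gradient noise $\sigma$, and the Gaussian noise scale $d$; this produces the $\bar{\gamma}^k$ and $\bar{\gamma}$-dependent terms. Lipschitzness of $\nabla f_i$ then converts the consensus error into an $L^2$ bound on $\mathcal{E}_{k+1}=\frac{1}{N}\sum_{i}[\nabla f_i(\bar{x}^{(k)})-\nabla f_i(x_i^{(k)})]$, and a discrete Grönwall argument exploiting the $\frac{\mu}{N}$-strong convexity and $\frac{L}{N}$-smoothness of $\frac{1}{N}f$ bounds $\mathbb{E}\|\bar{x}^{(k)}-x_k\|^2$, where $x_k$ is the Euler--Maruyama scheme of \eqref{eq:over-N} with stepsize $\eta^2$ viewed as the unperturbed heavy-ball/overdamped reference.

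Finally, I would invoke Lemma~\ref{lem:DK} with the effective stepsize $\eta^2$---admissible since \eqref{ineq-step-cond} gives $\eta^2\in(0,\frac{2N}{L+\mu}]$---to bound $\mathcal{W}_2(\mathcal{L}(x_k),\pi)$, and Lemma~\ref{bound:Gibbs} to bound $\mathcal{W}_2(\mathcal{L}(x_0),\pi)$. The triangle inequality for $\mathcal{W}_2$ combines the three pieces into the stated bound for $\bar{x}^{(k)}$; adding the consensus term via Cauchy--Schwarz as in \eqref{by:cauchy:schwarz} yields the per-node bound \eqref{ineq-W2-bound-underdamped}. Collecting the various noise floors produces the $\eta E_4$ and $\eta E_5$ asymptotic-bias terms, both of order $\mathcal{O}(1)$ in $\eta$.
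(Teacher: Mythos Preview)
Your proposal is correct and follows essentially the same route as the paper. The only organizational difference is that the paper splits your step (ii) into two by introducing an intermediate sequence $\tilde{x}_{k+1}=\tilde{x}_k-\frac{\eta^2}{N}\nabla f(\tilde{x}_k)+\sqrt{2(1-\beta)}\,\eta\,\bar{w}^{(k+1)}$ (Lemma~\ref{lem:3:under} bounds $\mathbb{E}\|\bar{x}^{(k)}-\tilde{x}_k\|^2$, absorbing the momentum and consensus perturbations; Lemma~\ref{lem:4:under} bounds $\mathbb{E}\|\tilde{x}_k-x_k\|^2$, isolating the noise-scale mismatch $\sqrt{2(1-\beta)}$ versus $\sqrt{2}$), and for the consensus term (Lemma~\ref{lem:1:under}) it writes $x^{(k+1)}=\mathcal{W}x^{(k)}+\eta v^{(k+1)}$ and bounds the geometric sum directly via $\sup_k\mathbb{E}\|v^{(k)}\|^2\le c_5$ rather than separating gradient, gradient-noise, and Gaussian-noise contributions as in the overdamped analysis.
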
 


\begin{remark}\label{remark-sghmc}
We observe that in the setting of Theorem~\ref{thm:underdamped}, the asymptotic error with respect to the target distribution satisfies 
$ \limsup_{k\to\infty}\frac{1}{N}\sum_{i=1}^{N}\mathcal{W}_{2}\left(\mathcal{L}\left(x_{i}^{(k)}\right),\pi\right) = \mathcal{O}\left(\eta\right),$
where $\mathcal{O}(\cdot)$ hides other constants ($d$, $\mu$, $L$, $\sigma, N$ and $\bar{\gamma}$). 
In particular, for $\eta$ small enough, it is easy to check that $\left(1-\eta^2\mu\left(1-\frac{\eta^2 L}{2}\right)\right) \geq \bar{\gamma}^2$ and consequently from Theorem~\ref{thm:underdamped}, defining $\alpha:=\eta^2$, we obtain 
\begin{align}
\mathcal{W}_{2}\left(\mathcal{L}\left(\bar{x}^{(2K)}\right),\pi\right) &\leq \left(1-\alpha\mu\left(1-\frac{\alpha L}{2}\right)\right)^{K}b_0 +  \mathcal{O}\left(\sqrt{\alpha} \right) \label{ineq wass underdamped with constant step} \\
&\leq e^{-\alpha\mu \left(1-\frac{\alpha L}{2}\right)K} b_0 +  \mathcal{O}\left(\sqrt{\alpha} \right)
\label{ineq wass underdamped part 2}
\end{align}
for some $b_0$ (that depends on the initialization $x^{(0)}, d, \mu, L, \sigma,  \bar{\gamma}$ and $N$) where $K$ is the iteration budget. We observe that this bound in $\alpha$ is similar to that of DE-SGLD case analyzed in \eqref{ineq wass with constant step}--\eqref{ineq wass part 2} if we were to replace $\eta$ in \eqref{ineq wass with constant step}--\eqref{ineq wass part 2} by $\alpha$. By following the same argument as in Remark~\ref{remark-desgld}, if we choose $\alpha = \eta^2 =  \frac{c\log\sqrt{K}}{\mu K}$, where the constant $c>1$, then we obtain 
$$ \mathcal{W}_{2}\left(\mathcal{L}\left(\bar{x}^{(2K)}\right),\pi\right) = \mathcal{O}\left(\frac{\sqrt{\log(K)}}{\sqrt{K}}\right).
$$
We conclude that in order to sample fwerom a distribution that is $\varepsilon$ close to the target in the 2-Wasserstein distance, it suffices to have $\mathcal{O}(\frac{1}{\varepsilon^2})$ iterations of DE-SGHMC, ignoring logarithmic factors. This iteration complexity bound is of the same order with that we obtained for DE-SGLD (see Remark~\ref{remark-desgld}). However, in practice we have seen that DE-SGHMC outperformed DE-SGLD in some cases (see Section~\ref{subsec-logistic-real}). \rev{We also note that, with a similar analysis to that in Remark \ref{remark-spectral-gap}, it can be shown that all the terms appearing in the performance bounds is monotonically increasing as a function of $\bar{\gamma}$ in the setting of Theorem~\ref{thm:underdamped} except the constant $c_5$ whose dependency to $\bar\gamma$ is more complicated to determine within our analysis}.
\end{remark} 


\subsection{Proof of Theorem~\ref{thm:underdamped}}\label{proof:underdamped}
To facilitate the analysis, we introduce the iterates $(x_k)$ (with slight abuse of notations):
 \begin{align} \label{eq:under-over}
 x_{k+1}=x_{k}-\eta^{2}\frac{1}{N}\nabla f(x_{k})+\sqrt{2}\eta\bar{w}^{(k+1)},
\end{align}
where $\bar{w}^{(k+1)}$ is the Gaussian noise given in \eqref{eq:w-bar} and $x_0=\bar{x}^{(0)}$.
 This is an Euler-Maruyama discretization (with stepsize $\eta^2$) of the continuous-time overdamped Langevin diffusion:
 \begin{align*}
 dX_{t}=-\frac{1}{N}\nabla f(x_{k})dt+\sqrt{2N^{-1}}dW_{t},
 \end{align*}
where $W_{t}$ is a standard $d$-dimensional Brownian motion. We also define iterates $(\tilde{x}_k)$:
\begin{align}\label{eq:tilde-x-k}
\tilde{x}_{k+1}=\tilde{x}_{k}-\eta^{2}\frac{1}{N}\nabla f(\tilde{x}_{k})+\sqrt{2\gamma\eta}\eta\bar{w}^{(k+1)},
\end{align}
where $\tilde{x}_0= x_0=\bar{x}^{(0)}$.

We recall that SGHMC can be viewed as a discretization of the kinetic (inertial) Langevin SDE \eqref{eq:VL}--\eqref{eq:XL}. It is also known that as the friction coefficient $\gamma\to\infty$, the paths of this SDE becomes more and more similar to the paths of the overdamped Langevin SDE (see e.g. \citet{leimkuhler2016computation}). However, this is not the case when $\gamma>0$ is small enough. Therefore, the stepsize $\eta$ is small enough and the friction coefficient $\gamma$ is large enough, it is reasonable to expect that the node averages $\bar{x}^{(k)}$ of DE-SGHMC given by \eqref{eq:under-avg} track the overdamped SDE dynamics. In the setting of Theorem~\ref{thm:underdamped}, we consider such a case when the stepsize $\eta$ is small enough and $\gamma \approx \frac{1}{\eta}$ (see \eqref{ineq-step-cond}-\eqref{ineq-momentum-cond}). We will next show that node averages $\bar{x}^{(k)}$ and the discretized overdamped SDE iterates will be close to each other in the 2-Wasserstein metric and that the iterates ${x}_i^{(k)}$ will remain close to their average $\bar{x}^{(k)}$ in $L_2$ distance. We note that in general, the optimal choice of $\gamma$ is not known in the decentralized setting; and it is only known in the centralized setting for special cases: For centralized Langevin dynamics with deterministic gradients and $\mu$-strongly convex quadratic objectives, it is recently shown that the choice of $\gamma=2\sqrt{\mu}$ optimizes the convergence rate to the stationary distribution in the 2-Wasserstein distance \citep{GGZ2}. Studying the convergence of DE-SGHMC iterates for other choices of the friction coefficient $\gamma$ will be left as a future work, as our current proof techniques do not allow arbitrary choice of $\gamma$.

In the proof of Theorem~\ref{thm:underdamped}, to bound the $\mathcal{W}_{2}$ distance between the average of $\mathcal{L}\left(x_{i}^{(k)}\right)$ and $\pi$
over $1\leq i\leq N$, we follow a similar approach to the analysis of DE-SGLD where
the idea is to bound the following four terms: (1) the $L^2$ distance between $x_{i}^{(k)}$ and the average iterate $\bar x^{(k)}$;
(2) the $L^2$ distance between the average iterate $\bar x^{(k)}$ and iterates $\tilde{x}_k$ in \eqref{eq:tilde-x-k};
(3) the $L^2$ distance between the iterates $\tilde{x}_k$ and the iterates $x_{k}$ in \eqref{eq:under-over}; (4) the $\mathcal{W}_{2}$ distance between $\mathcal{L}\left(x_k\right)$ and $\pi$, i.e. the convergence of overdamped Langevin dynamics. For analyzing the first term, 
we first present a technical lemma \rev{(Lemma~\ref{lemma-l2-underdamped})} on uniform $L^2$ bounds on the iterates $v^{(k)}, x^{(k)}$ in \eqref{eq:underdamped}--\eqref{eq:underdamped2}. 
The result will be used in the proof of Lemma~\ref{lem:1:under}. The proof idea is to analyze DE-SGHMC as a perturbed heavy-ball method. Momentum-based first order methods such as heavy-ball methods are less robust to noise compared to gradient descent methods (see e.g. \citet{can2019accelerated,kuru-privacy,flammarion2015averaging,mohammadi2020robustness,devolder2014first}), and achieving this result requires significantly more work 
compared to the analogous result we obtained for DE-SGLD. The proof of this result (and all the other lemmas) are given in the Appendix.

{We first provide uniform $L^2$ bounds on the iterates $v^{(k)}, x^{(k)}$ in \eqref{eq:underdamped}--\eqref{eq:underdamped2}
in the following lemma.} 

\begin{lemma}\label{lemma-l2-underdamped} 
Under the assumptions of Theorem~\ref{thm:underdamped},
there exist constants $c_4$ and $c_5$ (that do not depend on $\eta$ or $\gamma$) that can be made explicit such that 
\begin{align} 
&\sup_{k\geq 1} \mathbb{E} \left[\left\| \k{x} + \frac{\beta}{1-\beta} \left( \k{x} - \km{x}\right)\right\|^2\right] \leq c_4\,,
\label{ineq-unif-l2}
\\
&\sup_{k\geq 1} \max\left(\mathbb{E} \left\|v^{(k)}\right\| ^2, \mathbb{E} \left\|x^{(k)}\right\| ^2\right) \leq c_5\,.
 \label{ineq-unif-l2-iter}
\end{align}
\end{lemma}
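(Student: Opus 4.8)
The plan is to eliminate the velocity variable and recast DE-SGHMC as a noisy heavy-ball iteration in the position variable. Solving \eqref{eq:underdamped2} for $v^{(k+1)} = \eta^{-1}(x^{(k+1)} - \mathcal{W}x^{(k)})$, substituting into \eqref{eq:underdamped}, and writing $\beta = 1-\gamma\eta$, one obtains the second-order recursion
\begin{equation*}
x^{(k+1)} = (\mathcal{W} + \beta I)x^{(k)} - \beta\mathcal{W}x^{(k-1)} - \eta^2\nabla F(x^{(k)}) - \eta^2\xi^{(k+1)} + \eta\sqrt{2\gamma\eta}\,w^{(k+1)}.
\end{equation*}
This is exactly Polyak's heavy-ball method with effective stepsize $\eta^2$ and momentum $\beta$, perturbed by the mixing matrix $\mathcal{W}$, by the stochastic-gradient noise $\eta^2\xi^{(k+1)}$ (whose second moment is $O(\eta^4)$ by \eqref{total:noise}) and by the injected Gaussian noise $\eta\sqrt{2\gamma\eta}\,w^{(k+1)}$ (whose second moment is $2\gamma\eta^3 Nd = O(\eta^2)$ since $\gamma \le 1/\eta$). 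A direct computation shows that in the consensus direction, where $\mathcal{W}$ acts as the identity, the transformed variable $p^{(k)} := x^{(k)} + \frac{\beta}{1-\beta}(x^{(k)} - x^{(k-1)})$ obeys a gradient-descent-type recursion with effective stepsize $\frac{\eta^2}{1-\beta} = \frac{\eta}{\gamma}$, which explains why it is the natural slow-mode object in \eqref{ineq-unif-l2}; in the disagreement directions $p^{(k)}$ is additionally contracted by $\mathcal{W}$.

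Next I would set up a Lyapunov argument. Diagonalizing the symmetric matrix $W$, the homogeneous part of the recursion decouples mode by mode: on the eigenspace of $W$ with eigenvalue $\lambda_\ell$, the companion matrix of the linear dynamics has characteristic polynomial $(z-\lambda_\ell)(z-\beta)$, hence spectrum $\{\lambda_\ell, \beta\}$. For the disagreement modes ($|\lambda_\ell| \le \bar\gamma < 1$) and for $\beta \in [0,1)$, both roots lie strictly inside the unit disk, so the linear part is contractive there; only the consensus mode carries the eigenvalue $1$, and that mode is stabilized through the gradient. Accordingly I would take a quadratic Lyapunov function of the form $\mathcal{V}_k = \mathbb{E}\|p^{(k)} - x^\ast\|^2 + \theta\,\mathbb{E}\|x^{(k)} - x^{(k-1)}\|^2$, equivalently a positive-definite quadratic in the augmented state $(x^{(k)} - x^\ast,\, x^{(k-1)} - x^\ast)$, with the weight $\theta$ tuned — this is where the constant $c_1$ of Theorem~\ref{thm:underdamped} enters — so that the gradient term, handled via the standard co-coercivity consequences $\langle x - x^\ast, \nabla F(x) - \nabla F(x^\ast)\rangle \ge \mu\|x - x^\ast\|^2$ and $\|\nabla F(x) - \nabla F(x^\ast)\| \le L\|x - x^\ast\|$ of \eqref{eq:Lip}, produces a net contraction of rate $1 - \Theta(\eta^2\mu)$ on the deterministic part.

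With this Lyapunov function in hand I would take conditional expectations given $\mathcal{F}_k$, use the unbiasedness and variance bound \eqref{total:noise} together with the independence of $w^{(k+1)}$, and invoke the step and momentum conditions \eqref{ineq-step-cond}--\eqref{ineq-momentum-cond} (which keep $(\eta^2,\beta)$ in the stable region of the heavy-ball map and make the weighted decrease positive-definite) to derive a one-step inequality $\mathcal{V}_{k+1} \le (1 - \Theta(\eta^2\mu))\,\mathcal{V}_k + C$, where the additive term obeys $C = O(\eta^2 dN) + O(\eta^4\sigma^2 N)$. Unrolling gives $\sup_k \mathcal{V}_k \le \max(\mathcal{V}_0,\, C/(1-\rho))$, and the crucial point is that $C/(1-\rho) = O(dN/\mu) + O(\eta^2\sigma^2 N/\mu)$ is bounded uniformly in $\eta$ and $\gamma$, yielding \eqref{ineq-unif-l2}. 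Finally, \eqref{ineq-unif-l2-iter} follows by writing $x^{(k)} = p^{(k)} - \frac{\beta}{1-\beta}(x^{(k)} - x^{(k-1)})$ to bound $\mathbb{E}\|x^{(k)}\|^2$ from $\mathcal{V}_k$, and then bounding $\mathbb{E}\|v^{(k)}\|^2$ through a companion fixed-point estimate on the velocity recursion \eqref{eq:underdamped}, where the contraction factor $1 - \beta^2 \approx 2\gamma\eta$ exactly balances the $O(\gamma\eta\, Nd)$ injected noise so that $\mathbb{E}\|v^{(k)}\|^2 = O(Nd)$.

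I expect the main obstacle to be controlling the near-critical regime $\beta \uparrow 1$, $\eta \downarrow 0$: the heavy-ball operates at the boundary of stability, where both the per-step contraction and the per-step (Gaussian-dominated) noise are of order $\eta^2$, so that $c_4$ and $c_5$ are ratios which must be shown to stay finite independently of $\eta$ and $\gamma$. This is precisely what forces the delicate conditions \eqref{ineq-momentum-cond} on $\beta$ and the specific weighting encoded by $c_1$, and it is the reason existing stochastic-heavy-ball stability analyses — tailored to gradient noise rather than to the $\sqrt{\eta}$-scale injected Gaussian noise that dominates here — do not apply directly.
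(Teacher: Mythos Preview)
Your high-level strategy---eliminate $v$, recast DE-SGHMC as a perturbed heavy-ball iteration, define the slow variable $p^{(k)}=x^{(k)}+\frac{\beta}{1-\beta}(x^{(k)}-x^{(k-1)})$, and run a Lyapunov contraction argument with rate $1-\Theta(\eta^2\mu)$ against noise of order $\Theta(\eta^2)$---is exactly what the paper does. Two technical choices in the paper differ from yours and are worth flagging.

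First, instead of splitting into consensus/disagreement modes, the paper absorbs the network term into a modified objective $\bar F(x):=F(x)+\frac{1}{2\eta^2}x^T(I-\mathcal W)x$, which is $\mu$-strongly convex and $L_\alpha$-smooth with $L_\alpha=L+(1-\lambda_N^W)/\eta^2$; the heavy-ball recursion then reads $x^{(k+1)}=x^{(k)}-\eta^2\nabla\bar F(x^{(k)})+\beta(x^{(k)}-x^{(k-1)})+\bar\Delta^{(k+1)}$ and the analysis is centered at the minimizer $x_\alpha^*$ of $\bar F$ (not at $x^*$). This sidesteps the mode-mixing that $\nabla F$ induces between consensus and disagreement directions; your decomposition would have to track that coupling explicitly. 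Second, the paper's Lyapunov function is $V_{k+1}=b\big(\bar F(x^{(k)})-\bar F^*\big)+\|x^{(k+1)}+p^{(k+1)}-x_\alpha^*\|^2$, i.e.\ it includes a \emph{function-value} term rather than the purely quadratic $\theta\|x^{(k)}-x^{(k-1)}\|^2$ you propose. That term is what lets the cross term $\langle x^{(k)}-x^{(k-1)},\nabla\bar F(x^{(k)})\rangle$ telescope via the convexity inequality $\bar F(x^{(k)})-\bar F(x^{(k-1)})+\frac{\mu}{2}\|x^{(k)}-x^{(k-1)}\|^2\le\langle x^{(k)}-x^{(k-1)},\nabla\bar F(x^{(k)})\rangle$; a purely quadratic Lyapunov for global heavy-ball on non-quadratic strongly convex objectives is not standard, and you would need an IQC/LMI-type argument to close it.

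One point of confusion: you identify the hard regime as $\beta\uparrow 1$, but under the assumptions of Theorem~\ref{thm:underdamped} the momentum is forced to be \emph{small}, $\beta\le\bar\beta=\mathcal O(\eta^4)$ (equivalently $\gamma\eta=1-\beta\approx 1$). The paper is explicitly in an overdamped-like regime, not the high-momentum regime. Your order-of-magnitude bookkeeping (contraction $1-\Theta(\eta^2)$, Gaussian noise $\Theta(\eta^2)$, velocity balance giving $\mathbb E\|v^{(k)}\|^2=\mathcal O(Nd)$) is nonetheless correct for the regime actually being analyzed.
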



With this lemma, we can bound the deviation of $x_{i}^{(k)}$ in \eqref{eq:under-xik} from the mean $\bar{x}^{(k)}$ in \eqref{eq:under-avg}. We state the result in the next subsection.

\subsubsection{Uniform $L^2$ bounds on the deviation from the mean}

\begin{lemma}\label{lem:1:under}
Under the assumptions of Theorem~\ref{thm:underdamped}, 
for any $k$, we have
\begin{equation*}
\sum_{i=1}^{N}\mathbb{E}\left\Vert x_{i}^{(k)}-\bar{x}^{(k)}\right\Vert^{2}
\leq
2\bar{\gamma}^{2k}\mathbb{E}\left\Vert x^{(0)}\right\Vert^{2}
+\frac{2c_{5}\eta^{2}}{(1-\bar{\gamma})^{2}},
\end{equation*}
where $c_{5}$ is defined in Lemma~\ref{lemma-l2-underdamped} and
$\bar{\gamma}=\max\left\{\left|\lambda_{2}^{W}\right|,\left|\lambda_{N}^{W}\right|\right\} \in [0,1).$ 
\end{lemma}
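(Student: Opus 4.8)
The plan is to recast the left-hand side as the squared norm of the ``consensus deviation'' of the position iterate and then reduce the whole statement to the uniform $L^2$ bound on the momentum variable already provided by Lemma~\ref{lemma-l2-underdamped}. I would introduce the averaging matrix $\mathbf{J} := \frac{1}{N}\ones\ones^{T}\otimes I_d \in \mathbb{R}^{Nd\times Nd}$, the orthogonal projection onto the consensus subspace, whose complement $I_{Nd}-\mathbf{J}$ satisfies $(I_{Nd}-\mathbf{J})x^{(k)} = \big[(x_1^{(k)}-\bar{x}^{(k)})^{T},\ldots,(x_N^{(k)}-\bar{x}^{(k)})^{T}\big]^{T}$. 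Consequently $\sum_{i=1}^{N}\mathbb{E}\|x_i^{(k)}-\bar{x}^{(k)}\|^2 = \mathbb{E}\|(I_{Nd}-\mathbf{J})x^{(k)}\|^2$, so writing $y^{(k)}:=(I_{Nd}-\mathbf{J})x^{(k)}$ reduces the task to an $L^2$ estimate on $y^{(k)}$.

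Next I would derive a linear recursion for $y^{(k)}$. Applying $I_{Nd}-\mathbf{J}$ to the position update \eqref{eq:underdamped2}, $x^{(k+1)}=\mathcal{W}x^{(k)}+\eta v^{(k+1)}$, and using that $\mathcal{W}=W\otimes I_d$ commutes with $\mathbf{J}$ with $\mathbf{J}\mathcal{W}=\mathcal{W}\mathbf{J}=\mathbf{J}$ (because $W$ is symmetric and doubly stochastic, so $W\ones=\ones$ and $\ones^{T}W=\ones^{T}$), I obtain $(I_{Nd}-\mathbf{J})\mathcal{W}=\mathcal{W}-\mathbf{J}$ and $(\mathcal{W}-\mathbf{J})\mathbf{J}=0$, hence $y^{(k+1)}=(\mathcal{W}-\mathbf{J})y^{(k)}+\eta(I_{Nd}-\mathbf{J})v^{(k+1)}$. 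The crucial spectral fact is that $\mathcal{W}-\mathbf{J}$ replaces the top eigenvalue $\lambda_1^{W}=1$ of $\mathcal{W}$ by $0$ and leaves the remaining eigenvalues untouched, so by the ordering \eqref{ineq-network-spectrum} its operator norm equals $\bar{\gamma}=\max(|\lambda_2^{W}|,|\lambda_N^{W}|)$ from \eqref{eq:gamma}; thus $\|(\mathcal{W}-\mathbf{J})u\|\le\bar{\gamma}\|u\|$ for every $u$.

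Unrolling from $y^{(0)}=(I_{Nd}-\mathbf{J})x^{(0)}$ gives $y^{(k)}=(\mathcal{W}-\mathbf{J})^{k}y^{(0)}+\eta\sum_{j=1}^{k}(\mathcal{W}-\mathbf{J})^{k-j}(I_{Nd}-\mathbf{J})v^{(j)}$. Taking the $L^2$ norm $(\mathbb{E}\|\cdot\|^2)^{1/2}$, applying the triangle inequality for the $L^2$ norm (Minkowski's inequality), and using $\|\mathcal{W}-\mathbf{J}\|\le\bar{\gamma}$, $\|I_{Nd}-\mathbf{J}\|\le 1$, and $\|y^{(0)}\|\le\|x^{(0)}\|$ yields $(\mathbb{E}\|y^{(k)}\|^2)^{1/2}\le\bar{\gamma}^{k}(\mathbb{E}\|x^{(0)}\|^2)^{1/2}+\eta\sum_{j=1}^{k}\bar{\gamma}^{k-j}(\mathbb{E}\|v^{(j)}\|^2)^{1/2}$. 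Now I invoke Lemma~\ref{lemma-l2-underdamped}, which gives $\mathbb{E}\|v^{(j)}\|^2\le c_5$ uniformly in $j\ge 1$; bounding the geometric sum $\sum_{j=1}^{k}\bar{\gamma}^{k-j}\le (1-\bar{\gamma})^{-1}$ gives $(\mathbb{E}\|y^{(k)}\|^2)^{1/2}\le\bar{\gamma}^{k}(\mathbb{E}\|x^{(0)}\|^2)^{1/2}+\frac{\eta\sqrt{c_5}}{1-\bar{\gamma}}$. Squaring and using $(a+b)^2\le 2a^2+2b^2$ produces exactly the claimed bound.

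The genuine difficulty of this result has already been absorbed into Lemma~\ref{lemma-l2-underdamped}: the uniform-in-$k$ control of $\mathbb{E}\|v^{(k)}\|^2$ by $c_5$ is precisely what makes the geometric series converge and keeps the estimate free of $k$-dependent growth. Granting that lemma, the rest is a standard consensus/mixing argument, and notably simpler than the analogous Lemma~\ref{lem:1} for DE-SGLD, because the gradient noise and the injected Gaussian noise enter the position recursion \eqref{eq:underdamped2} only through the momentum $v^{(k+1)}$ rather than directly; this is why the bound collapses to the two displayed terms instead of carrying separate $D^2$, $\sigma^2$, and $d$ contributions. The only point requiring care is the spectral identification $\|\mathcal{W}-\mathbf{J}\|=\bar{\gamma}$, which rests on the symmetry and double stochasticity of $W$ together with the eigenvalue ordering \eqref{ineq-network-spectrum}.
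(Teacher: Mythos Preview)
Your proof is correct and follows essentially the same route as the paper: both unroll the position recursion $x^{(k+1)}=\mathcal{W}x^{(k)}+\eta v^{(k+1)}$, project onto the disagreement subspace, exploit $\|\mathcal{W}^m-\mathbf{J}\|=\bar{\gamma}^{m}$, and then invoke the uniform bound $\mathbb{E}\|v^{(j)}\|^2\le c_5$ from Lemma~\ref{lemma-l2-underdamped} together with a geometric-series estimate. The only cosmetic difference is that the paper first applies $(a+b)^2\le 2a^2+2b^2$ and then controls the cross-sum via a Jensen-type weighted-average trick, whereas you apply Minkowski in $L^2(\Omega)$ first and square at the end; both yield the identical factor $2/(1-\bar{\gamma})^{2}$.
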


Note that we have
\begin{align}
&\bar{v}^{(k+1)}=\bar{v}^{(k)}-\gamma\eta\bar{v}^{(k)}-\eta\frac{1}{N}\nabla f\left(\bar{x}^{(k)}\right)
+\eta\mathcal{E}_{k+1}
-\eta\bar{\xi}^{(k+1)}
+\sqrt{2\gamma\eta}\bar{w}^{(k+1)}, \label{eq:mean-v}
\\
&\bar{x}^{(k+1)}=\bar{x}^{(k)}+\eta\bar{v}^{(k+1)}, \label{eq:mean-x}
\end{align}
where
$\mathcal{E}_{k+1}
:=\frac{1}{N}\nabla f\left(\bar{x}^{(k)}\right)-\frac{1}{N}\sum_{i=1}^{N}\nabla f_{i}\left(x_{i}^{(k)}\right)$. As a corollary of Lemma~\ref{lem:1:under}, we have the following estimate.

\begin{lemma}\label{lem:2:under}
Under the assumptions of Theorem~\ref{thm:underdamped},
for any $k$, we have
\begin{align}\label{bound:E}
\mathbb{E}\left\Vert\mathcal{E}_{k+1}\right\Vert^{2}
\leq
\frac{2L^{2}\bar{\gamma}^{2k}}{N}\mathbb{E}\left\Vert x^{(0)}\right\Vert^{2}
+\frac{2L^{2}c_{5}\eta^{2}}{N(1-\bar{\gamma})^{2}}.
\end{align}
\end{lemma}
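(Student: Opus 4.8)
The plan is to treat this as a direct corollary of Lemma~\ref{lem:1:under} via the $L$-smoothness of each component function. First I would rewrite the error term in a form where each summand depends on a single agent. Since $f=\sum_{i=1}^N f_i$, we have $\nabla f(\bar{x}^{(k)})=\sum_{i=1}^N \nabla f_i(\bar{x}^{(k)})$, so that
\begin{equation*}
\mathcal{E}_{k+1}=\frac{1}{N}\sum_{i=1}^{N}\left[\nabla f_i\left(\bar{x}^{(k)}\right)-\nabla f_i\left(x_i^{(k)}\right)\right].
\end{equation*}
This matches the averaged-difference structure used in the DE-SGLD analysis (compare \eqref{eq-gradient-error} and Lemma~\ref{lem:2}), and the argument is identical in spirit.

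Next I would apply Jensen's inequality (equivalently Cauchy--Schwarz) to move the squared norm inside the average, using the convexity of $\|\cdot\|^2$:
\begin{equation*}
\left\|\mathcal{E}_{k+1}\right\|^2
=\left\|\frac{1}{N}\sum_{i=1}^{N}\left[\nabla f_i\left(\bar{x}^{(k)}\right)-\nabla f_i\left(x_i^{(k)}\right)\right]\right\|^2
\leq\frac{1}{N}\sum_{i=1}^{N}\left\|\nabla f_i\left(\bar{x}^{(k)}\right)-\nabla f_i\left(x_i^{(k)}\right)\right\|^2.
\end{equation*}
Then, invoking the $L$-smoothness of each $f_i\in\mathcal{S}_{\mu,L}(\mathbb{R}^d)$ (which gives $\|\nabla f_i(\bar{x}^{(k)})-\nabla f_i(x_i^{(k)})\|\leq L\|\bar{x}^{(k)}-x_i^{(k)}\|$), I obtain the pointwise bound
\begin{equation*}
\left\|\mathcal{E}_{k+1}\right\|^2\leq\frac{L^2}{N}\sum_{i=1}^{N}\left\|x_i^{(k)}-\bar{x}^{(k)}\right\|^2.
\end{equation*}

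Finally I would take expectations and plug in the consensus estimate from Lemma~\ref{lem:1:under}, namely $\sum_{i=1}^N\mathbb{E}\|x_i^{(k)}-\bar{x}^{(k)}\|^2\leq 2\bar{\gamma}^{2k}\mathbb{E}\|x^{(0)}\|^2+\frac{2c_5\eta^2}{(1-\bar{\gamma})^2}$, which yields the claimed inequality \eqref{bound:E} directly, with the factor $L^2/N$ producing the stated constants. There is essentially no genuine obstacle here: the only minor point to get right is the $1/N$ bookkeeping through the Jensen step, so that the leading $L^2$ factor and the $1/N$ normalization combine correctly with Lemma~\ref{lem:1:under} to give $\frac{2L^2\bar{\gamma}^{2k}}{N}$ and $\frac{2L^2c_5\eta^2}{N(1-\bar{\gamma})^2}$ rather than introducing a spurious extra factor of $N$. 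All the analytical difficulty has already been absorbed into the uniform $L^2$ stability bound of Lemma~\ref{lemma-l2-underdamped} and the deviation-from-mean bound of Lemma~\ref{lem:1:under}, so this lemma is purely a smoothness-plus-consensus consequence.
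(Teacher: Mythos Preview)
Your proposal is correct and is essentially identical to the paper's own proof: both rewrite $\mathcal{E}_{k+1}$ as an average of $\nabla f_i(\bar{x}^{(k)})-\nabla f_i(x_i^{(k)})$, apply Jensen/Cauchy--Schwarz to pull the squared norm inside, use $L$-Lipschitzness of $\nabla f_i$, and finish by invoking Lemma~\ref{lem:1:under}. The $1/N$ bookkeeping you flagged is handled exactly as you describe.
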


\subsubsection{$L^2$ distance between the mean and discretized overdamped SDE} 
Given the dynamics of the average iterate $(\bar{v}^{(k)}, \bar{x}^{(k)})$ in \eqref{eq:mean-v}--\eqref{eq:mean-x}, we next show $\bar{x}^{(k)}$ is close to the iterates $\tilde{x}_k$ in \eqref{eq:tilde-x-k}, which is close to the iterates $x_{k}$ in \eqref{eq:under-over} obtained from an Euler-Maruyama discretization of an overdamped Langevin SDE.
By plugging \eqref{eq:mean-v} into \eqref{eq:mean-x}, we get
\begin{equation}\label{eq:becomes}
\bar{x}^{(k+1)}=\bar{x}^{(k)}+\eta\bar{v}^{(k)}-\gamma\eta^{2}\bar{v}^{(k)}-\eta^{2}\frac{1}{N}\nabla f\left(\bar{x}^{(k)}\right)
+\eta^{2}\mathcal{E}_{k+1}
- \eta^2 \bar{\xi}^{(k+1)}
+\sqrt{2\gamma\eta}\eta\bar{w}^{(k+1)}.
\end{equation}
By \eqref{eq:mean-v}, we get
$\bar{v}^{(k)}=\frac{\bar{x}^{(k)}-\bar{x}^{(k-1)}}{\eta}$,
so that \eqref{eq:becomes} becomes:
\begin{equation*}
\bar{x}^{(k+1)}=\bar{x}^{(k)}-\eta^{2}\frac{1}{N}\nabla f\left(\bar{x}^{(k)}\right)
+\beta\left(\bar{x}^{(k)}-\bar{x}^{(k-1)}\right)
+\eta^{2}\mathcal{E}_{k+1}
- \eta^2 \bar{\xi}^{(k+1)}
+\sqrt{2(1-\beta)}\eta\bar{w}^{(k+1)},
\end{equation*}
where we recall that $\beta=1-\gamma\eta$. Also recall that we define $\tilde{x}_{k}$ from the iterates:
\begin{align}
\tilde{x}_{k+1}=\tilde{x}_{k}-\eta^{2}\frac{1}{N}\nabla f(\tilde{x}_{k})+\sqrt{2(1-\beta)}\eta\bar{w}^{(k+1)},
\label{def-tilde-x-k}
\end{align}
where $\tilde{x}_{0}=\frac{1}{N}\sum_{i=1}^{N}x_{i}^{(0)}$. 
We have the following estimate.

\begin{lemma}\label{lem:3:under}
Under the assumptions of Theorem~\ref{thm:underdamped},
we have for every $k$,
\begin{align*}
\mathbb{E}\left\Vert\bar{x}^{(k)}-\tilde{x}_{k}\right\Vert^{2}\nonumber
&\leq
2\left(\frac{\eta^{2}}{\mu(1-\frac{\eta^{2} L}{2})}+\frac{(1+\eta^{2} L)^{2}}{\mu^{2}(1-\frac{\eta^{2} L}{2})^{2}}\right)
\left(\frac{\beta^{2}c_{5}}{\eta^{2}N}
+\frac{2L^{2}c_{5}\eta^{2}}{N(1-\bar{\gamma})^{2}}
\right)
+\frac{\eta^{2}\sigma^{2}}{\mu(1-\frac{\eta^{2}L}{2})N}
\nonumber
\\
&\qquad\qquad
+\frac{\bar{\gamma}^{2k}-
\left(1-\eta^{2}\mu\left(1-\frac{\eta^{2} L}{2}\right)\right)^{k}}
{\bar{\gamma}^{2}-1+\eta^{2}\mu\left(1-\frac{\eta^{2} L}{2}\right)}
\frac{4L^{2}\bar{\gamma}^{2}}{N}\mathbb{E}\left\Vert x^{(0)}\right\Vert^{2},
\end{align*}
where the constant $c_{5}$ is as in Lemma~\ref{lemma-l2-underdamped}.
\end{lemma}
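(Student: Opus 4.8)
The plan is to reuse the synchronous coupling argument behind Lemma~\ref{lem:3}, the one genuinely new feature being the momentum term $\beta(\bar{x}^{(k)}-\bar{x}^{(k-1)})$ that makes the averaged recursion a heavy-ball iteration rather than plain gradient descent. First I would put the averaged iterate in the heavy-ball form derived just above \eqref{def-tilde-x-k},
\[
\bar{x}^{(k+1)}=\bar{x}^{(k)}-\tfrac{\eta^{2}}{N}\nabla f(\bar{x}^{(k)})+\beta\bigl(\bar{x}^{(k)}-\bar{x}^{(k-1)}\bigr)+\eta^{2}\mathcal{E}_{k+1}-\eta^{2}\bar{\xi}^{(k+1)}+\sqrt{2(1-\beta)}\,\eta\,\bar{w}^{(k+1)},
\]
and compare it to $\tilde{x}_{k}$ in \eqref{def-tilde-x-k} under a synchronous coupling, i.e. driving both recursions with the same Gaussian vector $\bar{w}^{(k+1)}$. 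Writing $T(y):=y-\frac{\eta^{2}}{N}\nabla f(y)$ and setting $e_{k}:=\bar{x}^{(k)}-\tilde{x}_{k}$ (so $e_{0}=0$), the Brownian increments cancel and
\[
e_{k+1}=\bigl(T(\bar{x}^{(k)})-T(\tilde{x}_{k})\bigr)+\beta\bigl(\bar{x}^{(k)}-\bar{x}^{(k-1)}\bigr)+\eta^{2}\mathcal{E}_{k+1}-\eta^{2}\bar{\xi}^{(k+1)}.
\]

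Next I would convert this into a scalar recursion for $\mathbb{E}\|e_{k}\|^{2}$ by the same perturbed-contraction accounting as in Lemma~\ref{lem:3}, with $\eta$ there replaced by $\eta^{2}$ here. Since $f=\sum_{i}f_{i}$ is $N\mu$-strongly convex and $NL$-smooth and \eqref{ineq-step-cond} gives $\eta^{2}\le\frac{1+\lambda_{N}^{W}}{2(L+\mu)}<\frac{1}{L}$, the map $T$ (a gradient step of size $\eta^{2}/N$ on $f$) is a contraction: the co-coercivity estimate yields $\|T(y)-T(z)\|^{2}\le(1-2\eta^{2}\mu(1-\tfrac{\eta^{2}L}{2}))\|y-z\|^{2}$, hence $\|T(y)-T(z)\|\le(1-\eta^{2}\mu(1-\tfrac{\eta^{2}L}{2}))\|y-z\|$ via $\sqrt{1-2a}\le1-a$. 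Both $T(\bar{x}^{(k)})-T(\tilde{x}_{k})$ and the perturbations $\beta(\bar{x}^{(k)}-\bar{x}^{(k-1)})$, $\eta^{2}\mathcal{E}_{k+1}$ are $\mathcal{F}_{k}$-measurable, while $\bar{\xi}^{(k+1)}$ is a martingale difference with $\mathbb{E}[\bar{\xi}^{(k+1)}\mid\mathcal{F}_{k}]=0$ and $\mathbb{E}\|\bar{\xi}^{(k+1)}\|^{2}\le\sigma^{2}/N$ by \eqref{bar:grad:noise}; conditioning on $\mathcal{F}_{k}$ therefore eliminates every cross term involving $\bar{\xi}^{(k+1)}$ and leaves it as an additive $\eta^{4}\sigma^{2}/N$. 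Absorbing the two $\mathcal{F}_{k}$-measurable perturbations into the contraction with Young's inequalities produces, exactly as in Lemma~\ref{lem:3}, the per-step decay $\rho:=1-\eta^{2}\mu(1-\tfrac{\eta^{2}L}{2})$ and the weight $\frac{\eta^{2}}{\mu(1-\eta^{2}L/2)}+\frac{(1+\eta^{2}L)^{2}}{\mu^{2}(1-\eta^{2}L/2)^{2}}$ multiplying the squared perturbation size.

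To feed this machine I would bound the two perturbations using results already in hand. For the momentum term, \eqref{eq:mean-x} gives $\bar{x}^{(k)}-\bar{x}^{(k-1)}=\eta\bar{v}^{(k)}$, and since $\|\bar{v}^{(k)}\|^{2}\le\frac{1}{N}\|v^{(k)}\|^{2}$ by Jensen, the uniform second-moment bound \eqref{ineq-unif-l2-iter} of Lemma~\ref{lemma-l2-underdamped} yields $\mathbb{E}\|\bar{x}^{(k)}-\bar{x}^{(k-1)}\|^{2}\le\eta^{2}c_{5}/N$; this is the source of the $\frac{\beta^{2}c_{5}}{\eta^{2}N}$ contribution after the weighting. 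For the gradient error I would insert Lemma~\ref{lem:2:under}, whose steady part $\frac{2L^{2}c_{5}\eta^{2}}{N(1-\bar{\gamma})^{2}}$ supplies the remaining half of the perturbation size and whose transient part $\frac{2L^{2}\bar{\gamma}^{2k}}{N}\mathbb{E}\|x^{(0)}\|^{2}$ becomes the geometric forcing proportional to $\bar{\gamma}^{2k}$.

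It then remains to solve the resulting linear recursion $\mathbb{E}\|e_{k+1}\|^{2}\le\rho\,\mathbb{E}\|e_{k}\|^{2}+C+c\,\bar{\gamma}^{2k}$ with $e_{0}=0$, where $C$ is the constant forcing and $c=\frac{4L^{2}\bar{\gamma}^{2}}{N}\mathbb{E}\|x^{(0)}\|^{2}$. The constant part accumulates to the steady state $C/(1-\rho)$, producing the first two lines of the claimed bound (including the standalone $\frac{\eta^{2}\sigma^{2}}{\mu(1-\eta^{2}L/2)N}$ coming from the injected gradient noise, since $\frac{\eta^{4}\sigma^{2}/N}{1-\rho}=\frac{\eta^{2}\sigma^{2}}{\mu(1-\eta^{2}L/2)N}$), while the geometric part sums via $\sum_{j<k}\rho^{k-1-j}\bar{\gamma}^{2j}=\frac{\bar{\gamma}^{2k}-\rho^{k}}{\bar{\gamma}^{2}-\rho}$ to exactly the displayed transient term. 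The main obstacle, and the only place this proof departs from Lemma~\ref{lem:3}, is the momentum perturbation $\beta(\bar{x}^{(k)}-\bar{x}^{(k-1)})$: unlike the contraction and the martingale noise it neither decays nor averages out, so it can be tamed only through the uniform bound $c_{5}$ of Lemma~\ref{lemma-l2-underdamped}, and its weight $\beta^{2}/\eta^{2}$ is rendered harmless precisely by the momentum condition \eqref{ineq-momentum-cond}, under which $\beta=\mathcal{O}(\eta^{4})$ and hence $\beta^{2}/\eta^{2}=\mathcal{O}(\eta^{6})$.
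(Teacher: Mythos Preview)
Your proposal is correct and follows essentially the same route as the paper: both couple $\bar{x}^{(k)}$ and $\tilde{x}_k$ synchronously, reuse the contraction-plus-perturbation recursion from Lemma~\ref{lem:3} with $\eta\to\eta^{2}$, treat the extra momentum term $\beta(\bar{x}^{(k)}-\bar{x}^{(k-1)})=\beta\eta\bar{v}^{(k)}$ together with $\eta^{2}\mathcal{E}_{k+1}$ as a single $\mathcal{F}_k$-measurable perturbation (hence the factor $2$ from Cauchy--Schwarz), and bound these via Lemma~\ref{lemma-l2-underdamped} and Lemma~\ref{lem:2:under} before summing the resulting linear recursion.
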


Next, recall the iterates $x_k$ defined in \eqref{eq:under-over}:
\begin{align*}
x_{k+1}=x_{k}-\eta^{2}\frac{1}{N}\nabla f(x_{k})+\sqrt{2}\eta\bar{w}^{(k+1)},
\end{align*}
where $x_{0}=\tilde{x}_{0}=\bar{x}^{(0)}$.
This is an Euler-Maruyama discretized version 
of the continuous-time overdamped Langevin diffusion
with stepsize $\eta^{2}$.
Since $\beta=1-\gamma\eta$ is small (see \eqref{ineq-momentum-cond}), 
we will then show that $\tilde{x}_{k}$ and $x_{k}$ are close to each other in $L^{2}$ distance.
Indeed, we have the following estimate.

\begin{lemma}\label{lem:4:under}
Under the assumptions of Theorem~\ref{thm:underdamped},
we have for every $k$,
\begin{align*}
\mathbb{E}\left\Vert\tilde{x}_{k}-x_{k}\right\Vert^{2}\nonumber
\leq
2\left(\frac{\eta^{2}}{\mu(1-\frac{\eta^{2} L}{2})}+\frac{(1+\eta^{2} L)^{2}}{\mu^{2}(1-\frac{\eta^{2} L}{2})^{2}}\right)
\left(\frac{(\sqrt{1-\beta}-1)^{2}}{\eta^{2}}\frac{d}{N}\right).
\end{align*}
\end{lemma}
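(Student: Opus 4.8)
The plan is to exploit the fact that the two sequences $(\tilde{x}_k)$ in \eqref{def-tilde-x-k} and $(x_k)$ in \eqref{eq:under-over} are Euler--Maruyama discretizations of the \emph{same} overdamped Langevin diffusion: they use the identical deterministic drift $\frac{1}{N}\nabla f$, are driven by the identical Gaussian increments $\bar{w}^{(k+1)}$ (a synchronous coupling), and start from the same point $\tilde{x}_0 = x_0 = \bar{x}^{(0)}$. The \emph{only} discrepancy is the scale of the injected noise, namely $\sqrt{2(1-\beta)}\,\eta$ for $\tilde{x}_k$ versus $\sqrt{2}\,\eta$ for $x_k$. Accordingly, I would set $e_k := \tilde{x}_k - x_k$ (so $e_0 = 0$) and subtract the two recursions to obtain
\begin{equation*}
e_{k+1} = e_k - \frac{\eta^2}{N}\left(\nabla f(\tilde{x}_k) - \nabla f(x_k)\right) + \left(\sqrt{2(1-\beta)} - \sqrt{2}\right)\eta\,\bar{w}^{(k+1)}.
\end{equation*}
This is structurally identical to the recursion analyzed in Lemma~\ref{lem:3:under} (and in the overdamped Lemma~\ref{lem:3}): a contractive gradient map acting on $e_k$ plus a single forcing term. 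The difference is that here the forcing is the noise-scale mismatch $(\sqrt{2(1-\beta)}-\sqrt{2})\eta\,\bar{w}^{(k+1)}$ rather than a momentum/gradient-error perturbation, and crucially there is \emph{no} stochastic-gradient (martingale) term and \emph{no} transient $\bar{\gamma}^{2k}$ contribution to track, precisely because the two coupled chains share the same gradient and the same initialization.

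Next I would run the same one-step $L^2$ estimate used for Lemma~\ref{lem:3:under}. Since $\frac{1}{N}f$ is $\mu$-strongly convex and $L$-smooth and the stepsize condition \eqref{ineq-step-cond} guarantees $\eta^2 \le \frac{1}{L+\mu}$, the map $y \mapsto y - \frac{\eta^2}{N}\nabla f(y)$ contracts with the same factor $\big(1-\eta^2\mu(1-\frac{\eta^2 L}{2})\big)$ employed throughout the underdamped analysis, while the Lipschitz bound $\Vert \nabla f(\tilde{x}_k) - \nabla f(x_k)\Vert \le L\Vert e_k\Vert$ supplies the complementary $(1+\eta^2 L)$ factor; note also that \eqref{ineq-step-cond} keeps $1-\frac{\eta^2 L}{2} > \frac12$, so the resulting prefactor stays finite. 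Combining these bounds through the weighted Young's inequality (the same split that produces $\frac{\eta^2}{\mu(1-\frac{\eta^2 L}{2})} + \frac{(1+\eta^2 L)^2}{\mu^2(1-\frac{\eta^2 L}{2})^2}$ in Lemma~\ref{lem:3:under}) and taking expectations yields a recursion $\mathbb{E}\Vert e_{k+1}\Vert^2 \le (1-\rho)\,\mathbb{E}\Vert e_k\Vert^2 + C$, with $\rho = \eta^2\mu(1-\frac{\eta^2 L}{2})$ and forcing constant $C = (\sqrt{2(1-\beta)}-\sqrt{2})^2\eta^2\,\mathbb{E}\Vert\bar{w}^{(k+1)}\Vert^2$. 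Using $\mathbb{E}\Vert \bar{w}^{(k+1)}\Vert^2 = d/N$ from \eqref{eq:w-bar} and $(\sqrt{2(1-\beta)}-\sqrt{2})^2 = 2(\sqrt{1-\beta}-1)^2$ gives $C = 2(\sqrt{1-\beta}-1)^2\eta^2\,d/N$.

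Finally I would iterate from $e_0 = 0$ and sum the geometric series; because the forcing is stationary and there is no transient term, this collapses directly to the closed form $\sup_k \mathbb{E}\Vert e_k\Vert^2 \le 2\big(\frac{\eta^2}{\mu(1-\frac{\eta^2 L}{2})} + \frac{(1+\eta^2 L)^2}{\mu^2(1-\frac{\eta^2 L}{2})^2}\big)\frac{(\sqrt{1-\beta}-1)^2}{\eta^2}\frac{d}{N}$, which is exactly the claimed estimate. The main obstacle I anticipate is purely bookkeeping around this prefactor. A naive argument that uses the independence of $\bar{w}^{(k+1)}$ to annihilate the cross term would retain only the first summand $\frac{\eta^2}{\mu(1-\frac{\eta^2 L}{2})}$ (a strictly tighter but differently shaped bound); to land precisely on the stated two-term expression one must instead feed the noise-scale mismatch into the \emph{same} perturbation machinery as Lemma~\ref{lem:3:under}, absorbing the $\eta^{-2}$ factor and the $(1+\eta^2 L)^2$ amplification through the weighted Young split rather than through the martingale variance identity. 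Verifying that this substitution reproduces the exact constants, and that the step condition \eqref{ineq-step-cond} is what licenses both the contraction and the boundedness of $1-\frac{\eta^2 L}{2}$, is the only delicate point.
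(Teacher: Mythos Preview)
Your proposal is correct and matches the paper's proof essentially line by line: the paper also subtracts the two recursions, feeds the noise-scale mismatch $(\sqrt{2(1-\beta)}-\sqrt{2})\eta\bar{w}^{(k+1)}$ into the role of the perturbation $\mathcal{E}_{k+1}$ in the Lemma~\ref{lem:3}/Lemma~\ref{lem:3:under} machinery (rather than exploiting independence), obtains the same one-step recursion with contraction factor $1-\eta^2\mu(1-\frac{\eta^2 L}{2})$, and then iterates from $\tilde{x}_0=x_0$. Your closing remark about why the two-term prefactor appears instead of the tighter single term is exactly the point: the paper deliberately reuses the Young-split template from Lemma~\ref{lem:3:under} to keep the bounds in a uniform shape.
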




\subsubsection{Proof of Theorem~\ref{thm:underdamped}}

Since $x_{0}=\frac{1}{N}\sum_{i=1}^{N}x_{i}^{(0)}$, 
we have $\mathbb{E}\Vert x_{0}\Vert^{2}<\infty$.
By assumption we have also $\eta^{2}\leq\frac{2N}{\mu+L}$. Then, it follows from Lemma~\ref{lem:DK} and Lemma~\ref{bound:Gibbs} that
for $x_k$ defined in \eqref{eq:under-over} we have
\begin{equation*}
\mathcal{W}_{2}\left(\mathcal{L}(x_{k}),\pi\right)
\leq
\left(1-\mu\eta^{2}\right)^{k}\left(\left(\mathbb{E}\Vert x_{0}-x_{\ast}\Vert^{2}\right)^{1/2}+\sqrt{2\mu^{-1}dN^{-1}}\right)
+\frac{1.65L}{\mu}\sqrt{\eta^{2} dN^{-1}}.
\end{equation*}
Moreover, it follows from Lemma~\ref{lem:3:under} that
\begin{align*}
&\mathcal{W}_{2}\left(\mathcal{L}\left(\bar{x}^{(k)}\right),\mathcal{L}(\tilde{x}_{k})\right)
\\
&\leq
\left(\mathbb{E}\left\Vert\bar{x}^{(k)}-\tilde{x}_{k}\right\Vert^{2}\right)^{1/2}
\\
&\leq
\sqrt{2}\left(\frac{\eta^{2}}{\mu(1-\frac{\eta^{2} L}{2})}+\frac{(1+\eta^{2} L)^{2}}{\mu^{2}(1-\frac{\eta^{2} L}{2})^{2}}\right)^{1/2}
\left(\frac{\beta^{2}c_{5}}{\eta^{2}N}
+\frac{2L^{2}c_{5}\eta^{2}}{N(1-\bar{\gamma})^{2}}\right)^{1/2}
\\
&\quad
+\frac{\eta\sigma}{\sqrt{\mu(1-\frac{\eta^{2}L}{2})N}}
+\left(\frac{\bar{\gamma}^{2k}-
\left(1-\eta^{2}\mu\left(1-\frac{\eta^{2} L}{2}\right)\right)^{k}}
{\bar{\gamma}^{2}-1+\eta^{2}\mu\left(1-\frac{\eta^{2} L}{2}\right)}\right)^{1/2}
\frac{2L\bar{\gamma}}{\sqrt{N}}\left(\mathbb{E}\left\Vert x^{(0)}\right\Vert^{2}\right)^{1/2},
\end{align*}
whereas it follows from Lemma~\ref{lem:4:under} that
\begin{align*}
\mathcal{W}_{2}\left(\mathcal{L}(\tilde{x}_{k}),\mathcal{L}(x_{k})\right)
&\leq
\left(\mathbb{E}\left\Vert\tilde{x}_{k}-x_{k}\right\Vert^{2}\right)^{1/2}
\\
&\leq
\sqrt{2}\left(\frac{\eta^{2}}{\mu(1-\frac{\eta^{2} L}{2})}+\frac{(1+\eta^{2} L)^{2}}{\mu^{2}(1-\frac{\eta^{2} L}{2})^{2}}\right)^{1/2}
\left(\frac{(\sqrt{1-\beta}-1)^{2}}{\eta^{2}}\frac{d}{N}\right)^{1/2}.
\end{align*}
Hence, we conclude that
\begin{align}
\mathcal{W}_{2}\left(\mathcal{L}\left(\bar{x}^{(k)}\right),\pi\right)
&\leq
\left(1-\mu\eta^{2}\right)^{k}\left(\left(\mathbb{E}\left\Vert \bar{x}^{(0)}-x_{\ast}\right\Vert^{2}\right)^{1/2}+\sqrt{2\mu^{-1}dN^{-1}}\right)
\nonumber
\\
&\quad
+\left(\frac{
\left(1-\eta^{2}\mu\left(1-\frac{\eta^{2} L}{2}\right)\right)^{k}-\bar{\gamma}^{2k}}
{\left(1-\eta^{2}\mu\left(1-\frac{\eta^{2} L}{2}\right)\right)-\bar{\gamma}^{2}}\right)^{1/2}
\frac{2L\bar{\gamma}}{\sqrt{N}}\left(\mathbb{E}\left\Vert x^{(0)}\right\Vert^{2}\right)^{1/2}  + \eta E_4, 
\end{align}
with
\begin{align*}
    E_4 &:=\sqrt{2}\left(\frac{\eta^{2}}{\mu(1-\frac{\eta^{2} L}{2})}+\frac{(1+\eta^{2} L)^{2}}{\mu^{2}(1-\frac{\eta^{2} L}{2})^{2}}\right)^{1/2}
\\
&\qquad\qquad\cdot
\left[\left(\frac{\beta^{2}c_{5}}{\eta^{4}N}
+\frac{2L^{2}c_{5}}{N(1-\bar{\gamma})^{2}}\right)^{1/2}
+\left(\frac{(\sqrt{1-\beta}-1)^{2}}{\eta^{4}}\frac{d}{N}\right)^{1/2}\right]
\\
&\qquad\qquad\qquad+ \frac{1.65L}{\mu}\sqrt{dN^{-1}} + \frac{\sigma}{\sqrt{\mu(1-\frac{\eta^{2}L}{2})N}}\,.
\end{align*}
Finally, by \eqref{by:cauchy:schwarz}, we have
\begin{align*}
\frac{1}{N}\sum_{i=1}^{N}\mathcal{W}_{2}\left(\mathcal{L}\left(x_{i}^{(k)}\right),\mathcal{L}\left(\bar{x}^{(k)}\right)\right)
\leq
\sqrt{\frac{1}{N}\sum_{i=1}^{N}\mathbb{E}\left\Vert x_{i}^{(k)}-\bar{x}^{(k)}\right\Vert^{2}}.
\end{align*}
On the other hand, by Lemma~\ref{lem:1:under},
\begin{align*}
\sqrt{\frac{1}{N}\sum_{i=1}^{N}\mathbb{E}\left\Vert x_{i}^{(k)}-\bar{x}^{(k)}\right\Vert^{2}}
&\leq
\left(\frac{2\bar{\gamma}^{2k}}{N}\mathbb{E}\left\Vert x^{(0)}\right\Vert^{2}
+\frac{2c_{5}\eta^{2}}{N(1-\bar{\gamma})^{2}}\right)^{1/2}
\\
&\leq
\frac{\sqrt{2}\bar{\gamma}^{k}}{\sqrt{N}}\left(\mathbb{E}\left\Vert x^{(0)}\right\Vert^{2}\right)^{1/2}
+\frac{\sqrt{2c_{5}}\eta}{\sqrt{N}(1-\bar{\gamma})}.
\end{align*}
We then obtain \eqref{ineq-W2-bound-underdamped} by applying the triangular inequality for the $2$-Wasserstein distance. Finally, since $\beta$ satisfies the inequality \eqref{ineq-momentum-cond}, we have $\beta = \mathcal{O}(\eta^4)$ as $\eta \to 0$ and this implies that 
$E_4 = \mathcal{O}(1)$ 
and $E_5 = \mathcal{O}(1)$ as claimed. This completes the proof.
\hfill $\Box$

\section{Numerical Experiments}\label{sec:numerical}
We present our numerical results in this section. We conduct several experiments to validate our theory and investigate the performance of DE-SGLD and DE-SGHMC. We focus on applying our methods to Bayesian linear regression and Bayesian logistic regression problems. In our experiments,  each agent has its own data in the form of i.i.d. samples. We will consider three different network architectures: (a) Fully-connected network (b) Circular network (c) A disconnected network with no edges 
as illustrated in Figure~\ref{fig:network}. Fully-connected network structure corresponds to the complete graph where all the nodes are connected to each other whereas for the circular graph, each node can communicate with only ``left" and ``right" neighbors. Disconnected graph corresponds to the case when nodes do not communicate at all with each other.  
The disconnected network is considered as a baseline case for comparison purposes to see how the individual agents would perform without sharing any information among themselves.

\rev{Before we proceed to the numerical experiments, we remark that the following examples
all satisfy the assumptions in our paper. We will have a discussion on this
in the Appendix in detail. In particular, Appendix~\ref{sec:gradient:noise:assump}
shows that the variance of the gradient noise is bounded, and 
Appendix~\ref{sec:assump:numerical} shows that the gradient of the component functions are Lipschitz. }

\begin{figure}[t]
\centering
    \subfigure[Fully-connected]{
    \includegraphics[width=0.3\columnwidth]{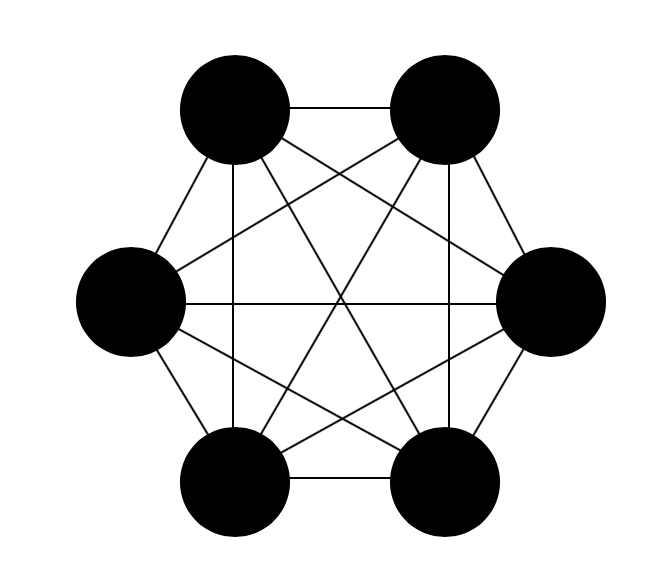}
    }
    \hfill
    \subfigure[Circular]{
    \includegraphics[width=0.3\columnwidth]{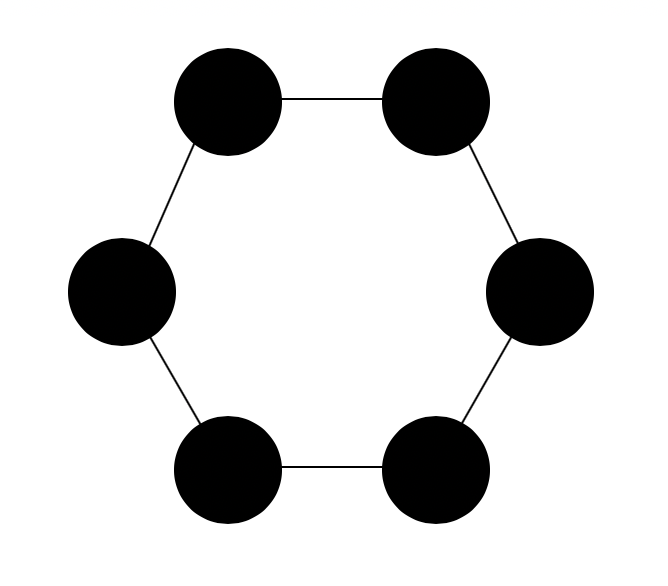}
    }
    \hfill
    \subfigure[Disconnected]{
    \includegraphics[width=0.3\columnwidth]{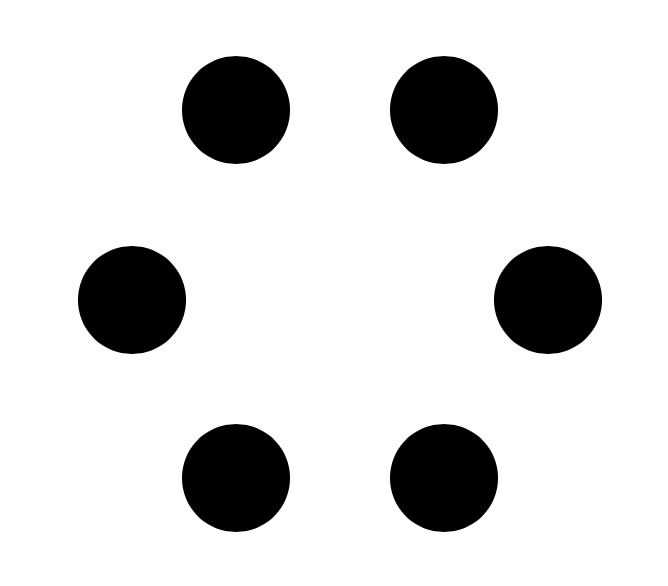}
    }
\caption{Illustration of the network architectures.}
\label{fig:network}
\end{figure}

\subsection{Bayesian linear regression}

In this section, we present our experiments on the Bayesian linear regression problem, where our main goal is to validate Theorems~\ref{thm:overdamped} and  \ref{thm:underdamped} in a basic setting and show that each agent can sample from the posterior distribution up to an error tolerance with constant stepsize. In this set of experiments, we first generate data for each agent by simulating the model:
\begin{equation}
    \delta_j \sim \mathcal{N}\left(0,\xi^2\right),\quad  X_j \sim \mathcal{N}(0,I),\quad  y_j=x^TX_j+\delta_j,
\label{num:lin_gen}
\end{equation}
where the noise term $\delta_j$ are i.i.d. scalars with $\xi=1$, $x \in \mathbb{R}^2$, and the prior distribution of $x$ follows $\mathcal{N}(0,\lambda I)$ where we take $\lambda=10$ in the experiments. 
For the Bayesian linear regression, we can derive the posterior distribution as:
\begin{equation*}
    \pi(x) \sim \mathcal{N}(m,V),\quad m= \left(\Sigma^{-1}+X^TX/\xi^2\right)^{-1}\left(X^Ty/\xi^2\right),\quad V = \left(X^TX/\xi^2+\Sigma^{-1}\right)^{-1},
\end{equation*} 
where $\Sigma=\lambda I$ is the covariance matrix of the prior distribution of $x$, $X=[X_1^T,X_2^T,\dots]^T$ and $Y=[y_1,y_2,\dots]^T$ are the matrices containing all data points. We simulate 5,000 data points and partition them randomly among the $N=100$ agents so that each agent will have the same number of data points. Each agent has access to its own data but not to other agents' data. The posterior distribution $\pi(x) \propto e^{-f(x)}$ is of the form $f(x)=\sum_{i=1}^N f_i(x)$
with
$$f_i(x) := -\sum_{j=1}^{n_i}\log p\left(y_j^i | x,X_j^i\right) - \frac{1}{N} \log p(x) = 
\sum_{j=1}^{n_i} \left(y_j^i - x^T X_j^i\right)^2 + \frac{1}{2\lambda N}\|x\|^2,
$$
where
$$p\left(y_j^i | x,X_j^i\right)=\frac{1}{\sqrt{2\pi \xi^2}}e^{-\frac{1}{2\xi^2} (y_j^i-x^TX_j^i)^2}, \quad p(x)\propto
e^{-\frac{1}{2\lambda}\|x\|^2}, $$
and agent $i$ has $n_i=50$ data points $\{(X_j^i,y_j^i)\}_{j=1}^{n_i}$. 

In the first experiment, we report the performance of the DE-SGLD method on the fully-connected, circular and disconnected networks in Figure~\ref{fig:1exp}. We tune the stepsize $\eta$ to the dataset where we take $\eta=0.009$.  We consider the case when gradients are deterministic (i.e. $\sigma=0$). In this case, it can be seen that $x_i^{(k)}$ follows a Gaussian distribution, i.e.  $x_i^{(k)}\sim \mathcal{N}\left(m_i^{(k)},\Sigma_i^{(k)}\right)$ for some mean vector $m_i^{(k)}$ and covariance matrix $\Sigma_i^{(k)}$. Based on 100 independent runs, we estimate the parameters $m_i^{(k)}$ and $\Sigma_i^{(k)}$ and then compute the 2-Wasserstein distance with respect to the posterior distribution $\pi(x) \sim \mathcal{N}(m,V)$ based on the explicit formula \citep{givens1984class} which characterizes the 2-Wasserstein distance between any two Gaussian distributions. This allows us to plot the 2-Wasserstein distance to the stationary distribution for each agent and for the distribution of the average $\bar{x}^{(k)}=\sum_{i=1}^N x_i^{(k)}/N$ over the iterations in Figure~\ref{fig:1exp}. We observe that for both complete and circular graphs all the agents will converge to the posterior distribution up to an error tolerance. In the case of the disconnected network, we observe that individual agents do relatively worse compared to the fully-connected and circular network cases; as they do not leverage any information about their neighbors' data points. For the disconnected case, the nodes averages $\bar{x}^{(k)}$ (which is neither computed nor accessible by agents) is closer to the target distribution than the individual iterates $x_i^{(k)}$ as it contains information from each agent; however the performance of the node averages $\bar{x}^{(k)}$ is still worse compared to the performance of node averages for the fully-connected case as expected. We observe that the experiments in the fully-connected network converges faster than the circular network. This behavior is predicted by Theorem~\ref{thm:overdamped}.
Since the fully-connected network has a larger \emph{spectral gap} $1-\bar{\gamma}$ compared to the circular network (see the paragraph after \eqref{eq:gamma}), our performance bounds for the fully-connected network is better compared to the circular network case.\footnote{This is a consequence of the fact that our upper bounds given for the 2-Wasserstein distances to the target $\pi$ in Theorem~\ref{thm:overdamped} and Theorem~\ref{thm:underdamped} are both monotonically increasing in $\bar{\gamma}$ (see Remark \ref{remark-spectral-gap} and Remark \ref{remark-sghmc}).
}

\begin{figure}[t]
\centering
    \subfigure[Fully-connected]{
    \includegraphics[width=0.3\columnwidth]{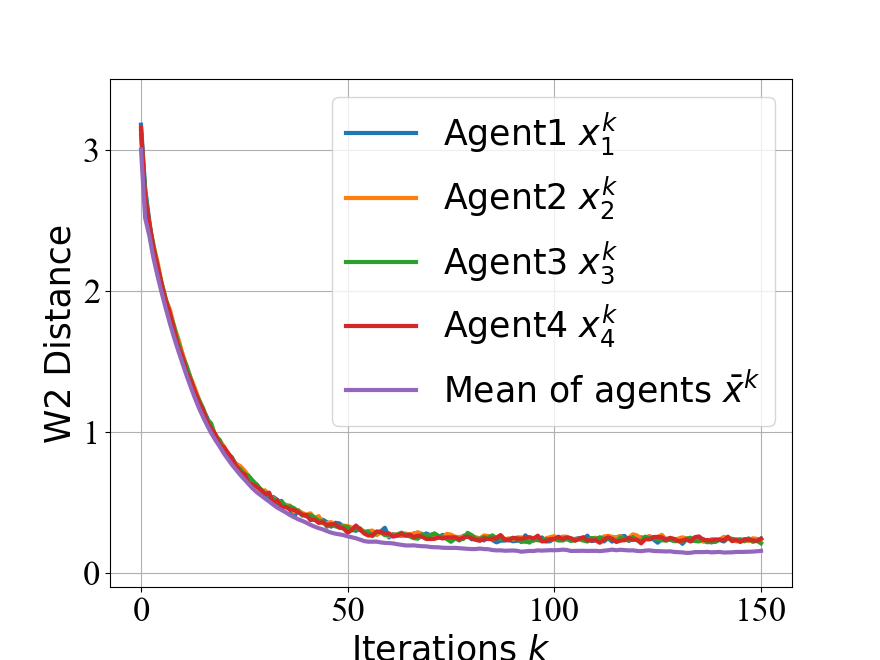}
    }
    \hfill
    \subfigure[Circular]{
    \includegraphics[width=0.3\columnwidth]{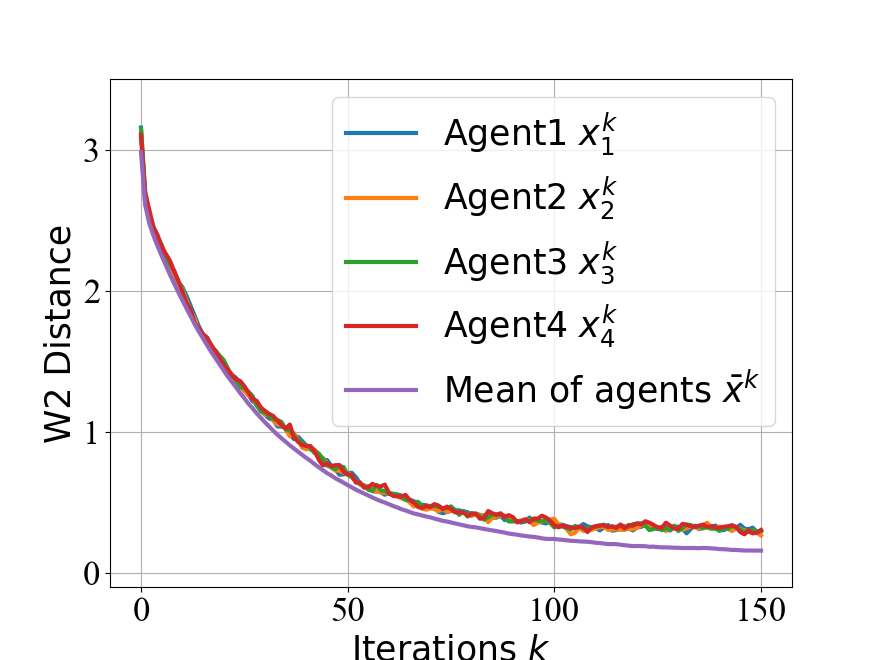}
    }
    \hfill
    \subfigure[Disconnected]{
    \includegraphics[width=0.3\columnwidth]{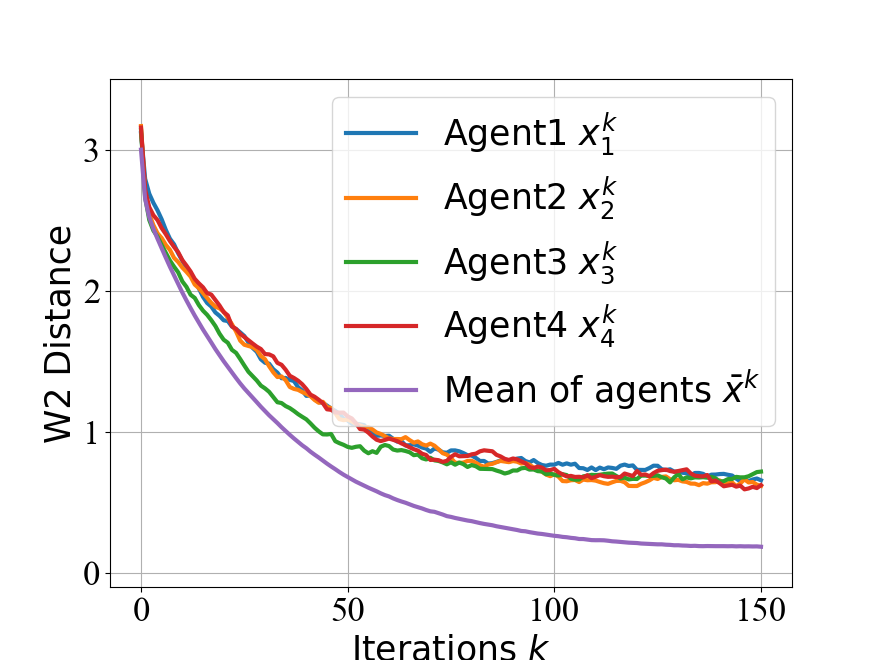}
    }
\caption{Performance of DE-SGLD for Bayesian regression on different network structures with $N=100$ agents. The results of the first 4 agents $x_i^{k}$ and the node averages $\bar{x}^k=\sum_{i=1}^N x_i^{(k)}/N$ are reported. 
} 
\label{fig:1exp}
\end{figure}

In the next experiment, we investigate the performance of the DE-SGHMC method on the same data set with (the same) three network structures. The stepsize $\eta$ and the friction coefficient $\gamma$ are tuned to the dataset where we take $\eta=0.1$ and $\gamma=7$. The results are displayed in Figure~\ref{fig:2exp}. The results are qualitatively similar to the DE-SGLD case. The convergence of DE-SGHMC is fastest for the fully-connected case and is the slowest for the disconnected case. 

\begin{figure}[t]
\centering
    \subfigure[Fully-connected]{
    \includegraphics[width=0.3\columnwidth]{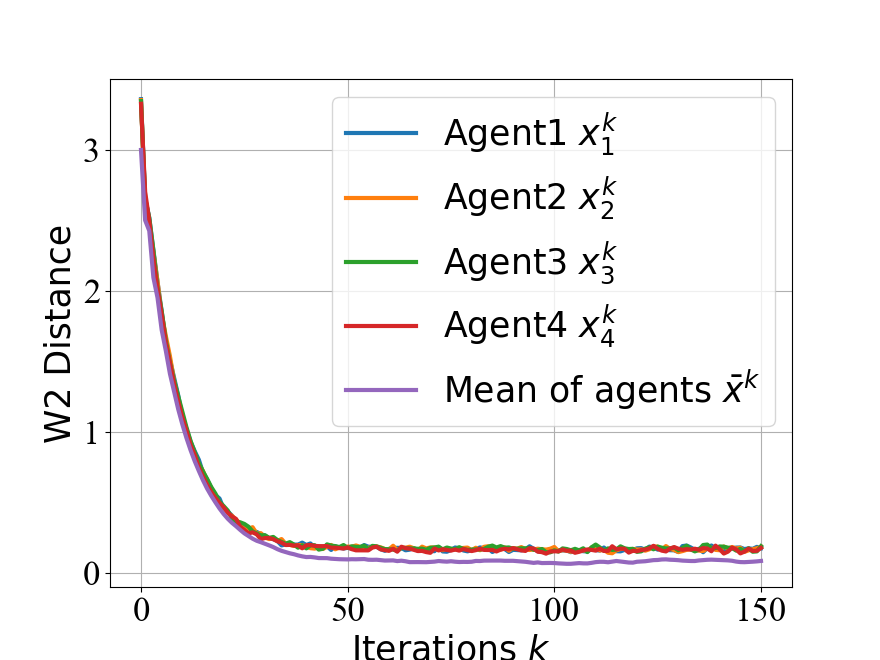}
    }
    \hfill
    \subfigure[Circular]{
    \includegraphics[width=0.3\columnwidth]{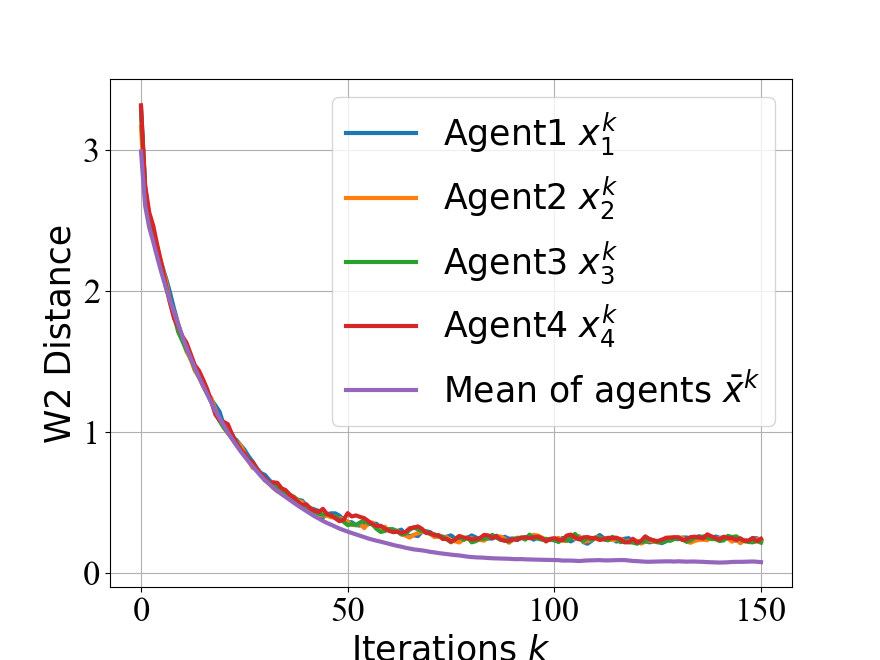}
    }
    \hfill
    \subfigure[Disconnected]{
    \includegraphics[width=0.3\columnwidth]{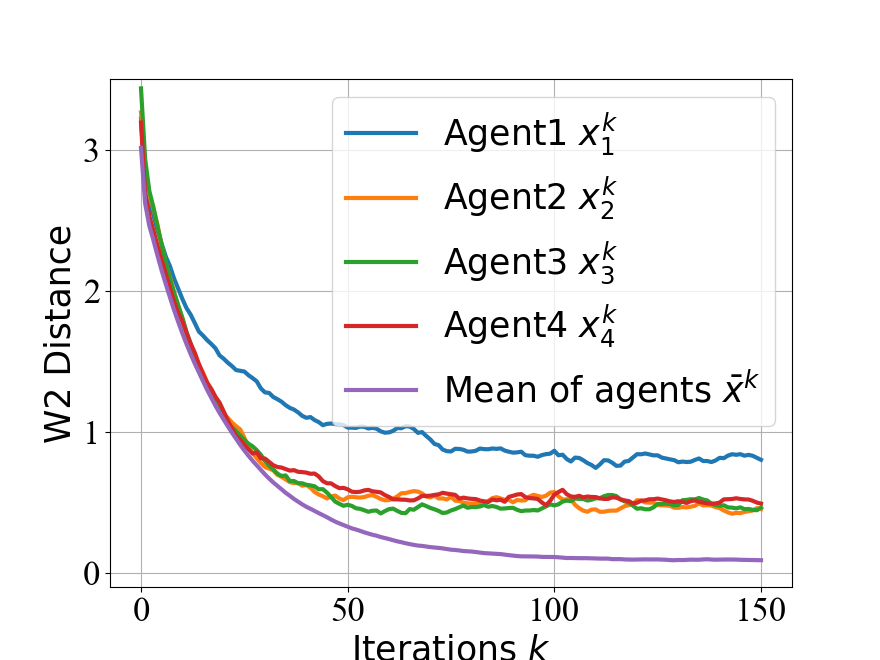}
    }
\caption{Performance of DE-SGHMC method for Bayesian regression on different network structures. The stepsize $\eta$ and the friction coefficient $\gamma$ are tuned to the dataset where we take $\eta=0.1$ and $\gamma=7$. 
} 
\label{fig:2exp}
\end{figure}

\begin{figure}[t]
\centering
    \subfigure[Batch size]{
    \includegraphics[width=0.3\columnwidth]{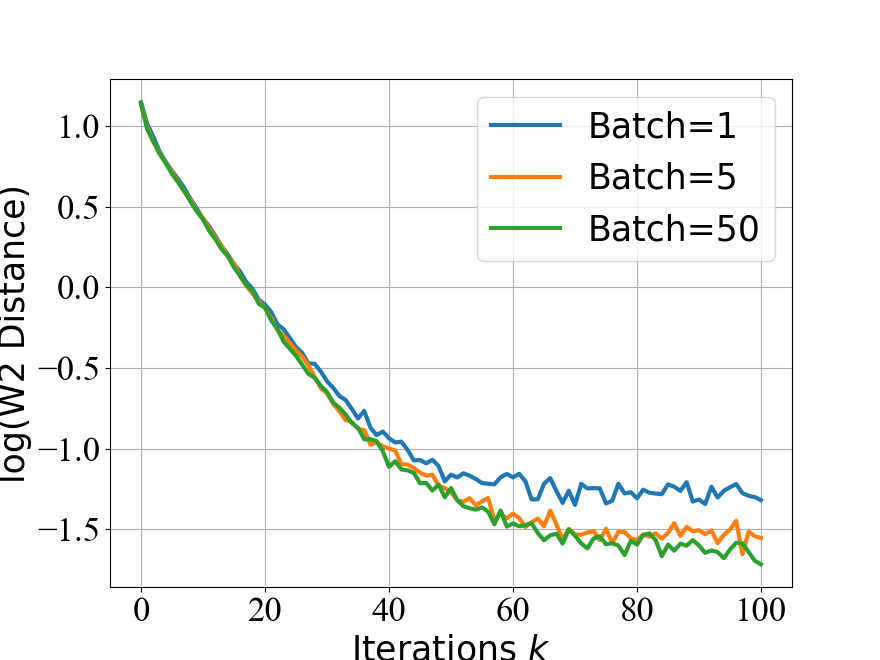}
    \label{fig:3_batch}
    }
    \hfill
    \subfigure[Stepsize]{
    \includegraphics[width=0.3\columnwidth]{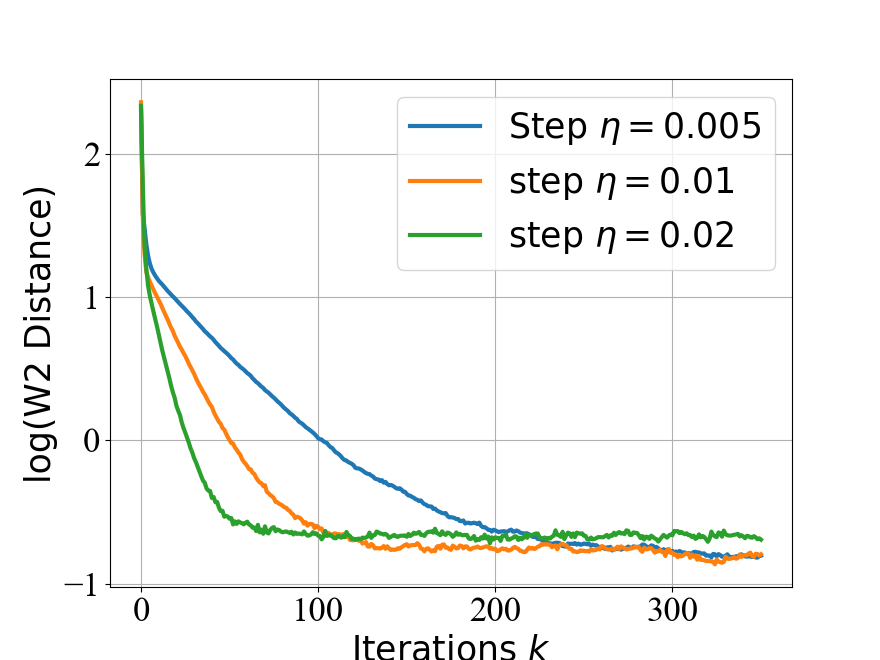}
    \label{fig:3_step}
    }
    \hfill
    \subfigure[Network structure]{
    \includegraphics[width=0.3\columnwidth]{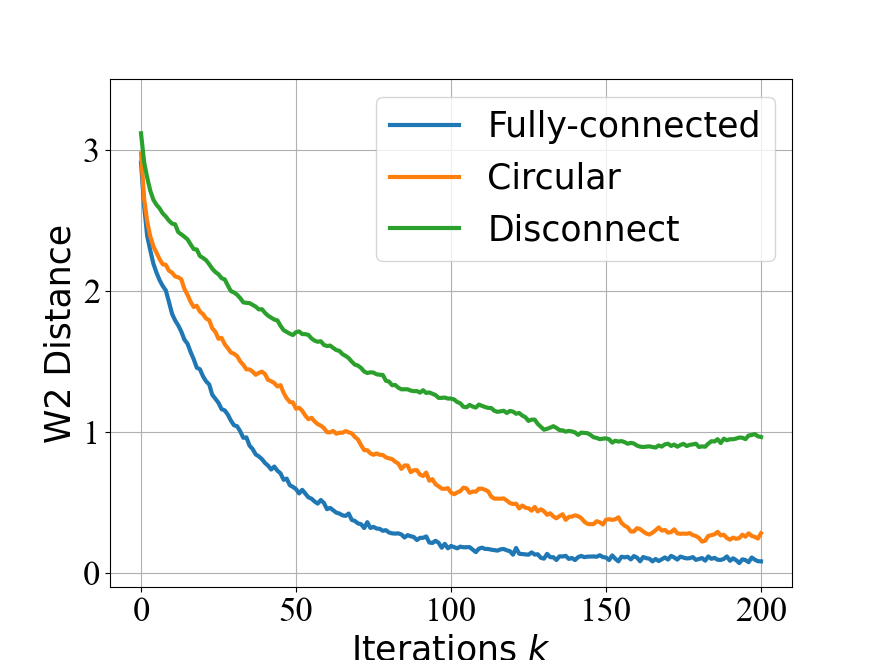}
    \label{fig:3_network}
    }
\caption{Performance of DE-SGLD method for Bayesian regression under different settings. Figures are based on one randomly picked agent. The y-axis is presented in a logarithmic scale in \subref{fig:3_batch} and \subref{fig:3_step}
. 
} 
\label{fig:3exp}
\end{figure}

In the next set of experiments, we investigate the effect of changing stepsize, batch size and the network structure on the speed of convergence where we stick to the DE-SGLD method for this set of experiments. We measure the 2-Wasserstein distance to the target $\pi$ with a similar approach as before by fitting a Gaussian distribution $\mathcal{N}(m_i^{(k)}, \Sigma_i^{(k)})$ to the empirical distribution of $x_i^{(k)}$ over 100 independent runs. The results are shown in Figure~\ref{fig:3exp}. Both Figure~\ref{fig:3_batch} and Figure~\ref{fig:3_step} are based on the fully-connected network architecture. In Figure~\ref{fig:3_batch}, we fix the stepsize to $\eta=0.009$ and vary the batch sizes (the number of data points sampled with replacement to estimate the gradient). We conclude that different batch sizes affect the asymptotic error the iterates have with respect to the 2-Wasserstein distance. Larger batch sizes reduce the amount of noise (i.e. the upper bound $\sigma^2$ on the gradient noise) and therefore lead to smaller asymptotic error as predicted by Theorem~\ref{thm:overdamped}. In Figure~\ref{fig:3_step}, we used stochastic gradients with batch size $b=25$ while we varied the stepsize. The result clearly demonstrates the trade-off between the convergence rate and the asymptotic accuracy; for larger stepsize the algorithm converges faster to an asymptotic error region but the accuracy becomes worse as predicted by Theorem~\ref{thm:overdamped} (see also Remark~\ref{remark-desgld}). In Figure~\ref{fig:3_network} we report the effect of network structure with a constant stepsize $\eta=0.008$ and batch size $b=25$ where we report the performance of a randomly picked agent. The fastest convergence is observed for the fully-connected network. 
For the disconnected network, each agent will converge to a stationary distribution based on its own data rather than the posterior distribution based on the whole data set; therefore the asymptotic error in 2-Wasserstein distance will be bounded away from zero. 

\subsection{Bayesian logistic regression}\label{subsec-bayesian-logistic}

In Bayesian logistic regression, we are given a dataset of input-output pairs $A = \{a_j\}_{j=1}^n$ where $a_j = (X_j, y_j)$,  $X_j \in R^d$ are the features and $y_j \in \{0,1\}$ are the binary labels. We assume that $X_j$ are independent, and that the probability distribution of the output $y_j$ given features $X_j$ and the regression coefficients
$x\in \mathbb{R}^d$ is given by
    \beq \mathbb{P}(y_j=1 | X_j, x) = \frac{1}{1+e^{-x^T X_j}}.
    \label{eq-logistic-proba}
    \eeq
The prior distribution $p(x)$ is often taken as a Gaussian distribution $\mathcal{N}(0,\lambda I)$ for some $\lambda>0$ (see e.g. \citet{chatterji2018theory,dubey2016variance,zou2018subsampled}). If each agent $i$ possesses a subset $A_i$ of the data where $A_i = \{(X_j^i, y_j^i) \}_{i=1}^{n_i}$, then the goal in Bayesian logistic regression is to sample from $\pi(x) \propto e^{-f(x)}$ with $f(x)=\sum_i f_i(x)$ where
\begin{equation} 
f_i(x) := -\sum_{j=1}^{n_i}\log p\left(y_j^i=1 | X_j^i,x\right) - \frac{1}{N} \log p(x)= \sum_{j=1}^{n_i}\log \left(1+e^{-x^T X_j^i}\right) + \frac{1}{2N\lambda}\|x\|^2
\end{equation}
is strongly convex and smooth.
We first test our algorithms on synthetic data where we simulate \eqref{eq-logistic-proba} by
\begin{equation*}
   {X}_j \sim \mathcal{N}(0,20I),\quad 
    p_j \sim \mathbb{}\mathcal{U}(0,1),\quad 
    y_j= \begin{cases} 1 &\mbox{if } p_j \leq \frac{1}{1+e^{-x^T X_j}} \\
0 & \mbox{otherwise }  \end{cases},
\end{equation*}
where $\mathcal{U}(0,1)$ is the uniform distribution on $[0,1]$,  $x = [x_1, x_2, x_3]^T\in \mathbb{R}^3$ and the prior distribution of $x$ follows $\mathcal{N}(0,\lambda I)$ where we take $\lambda=10$ in the experiments. 
Similar to the case of Bayesian linear regression, we separate the data points approximately equally among all the agents, where we take $N=6$.
Each agent can access to one part of the data set. 
Unlike Bayesian linear regression, where the posterior distribution admits an explicit formula, the posterior distribution $\pi(x)$ of Bayesian logistic regression does not admit an explicit formula. In principle, one can approximate the stationary distribution by running the algorithm over many runs and compute the Wasserstein distance between this approximate distribution and the empirical distribution of the iterates and report this distance as a performance measure. However, this is not practical. Instead, we resort to another performance measure for each agent, which is the distribution of the accuracy over the whole data set where accuracy is defined as the ratio of the correctly predicted labels. This ratio is relatively simpler to compute and serves as a measure correlated with the goodness of fit to the training data. For this purpose, we run the DE-SGLD method multiple times and for each realization of the $k$-th iterate $x_i^{(k)}$ at node $i$, we classify the whole data set based on $x_i^{(k)}$ and calculate the accuracy over $n=1,000$ data points. Over 100 independent runs of the DE-SGLD algorithm with batch size $b=32$, 
we
estimate the distribution of the accuracy for each agent at step $k$. We report the mean and the standard deviation of the accuracy in 
\begin{figure}[t]
\centering
    \subfigure[Fully-connected]{
    \includegraphics[width=0.3\columnwidth]{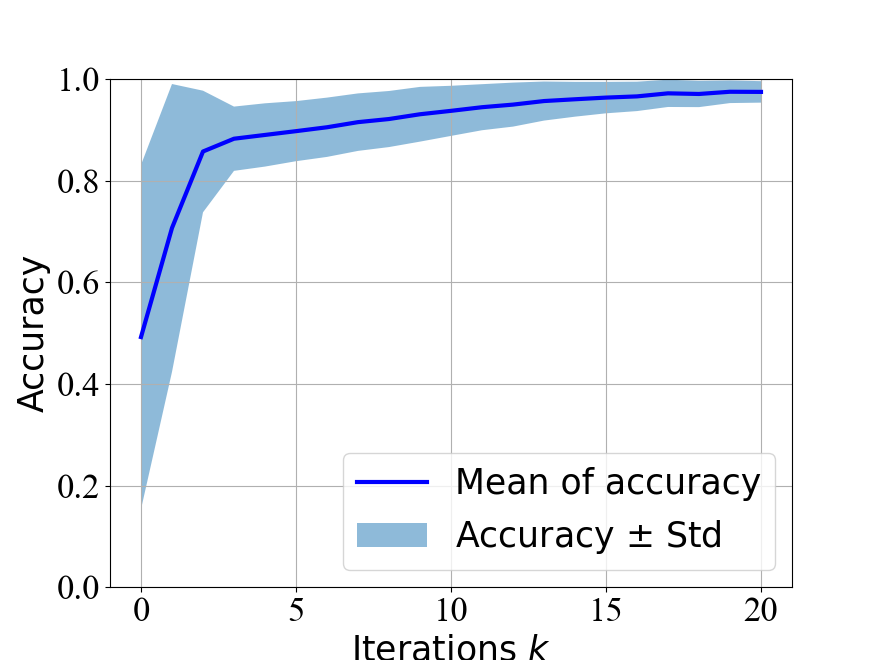}
    }
    \hfill
    \subfigure[Circular]{
    \includegraphics[width=0.3\columnwidth]{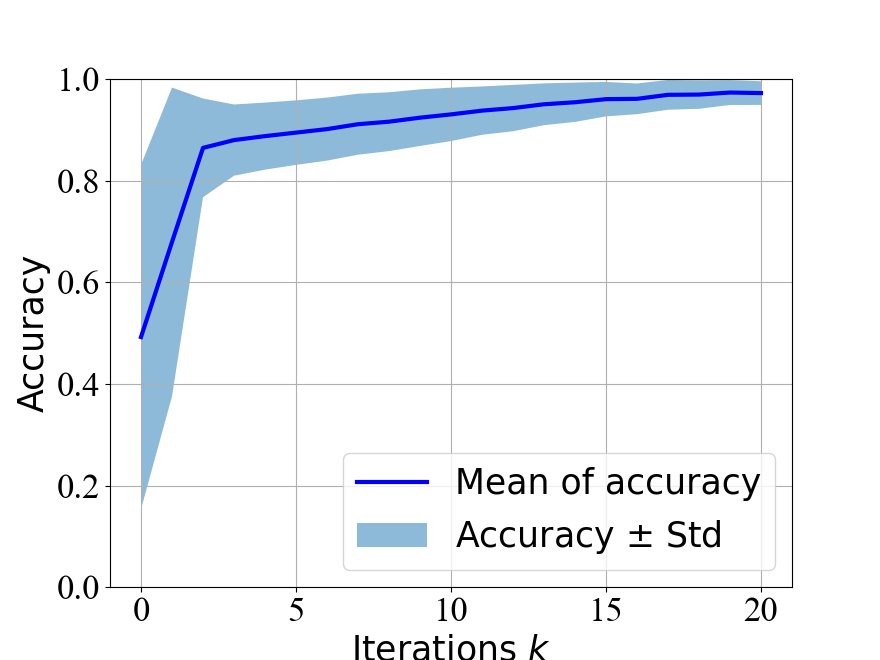}
    }
    \hfill
    \subfigure[Disconnected]{
    \includegraphics[width=0.3\columnwidth]{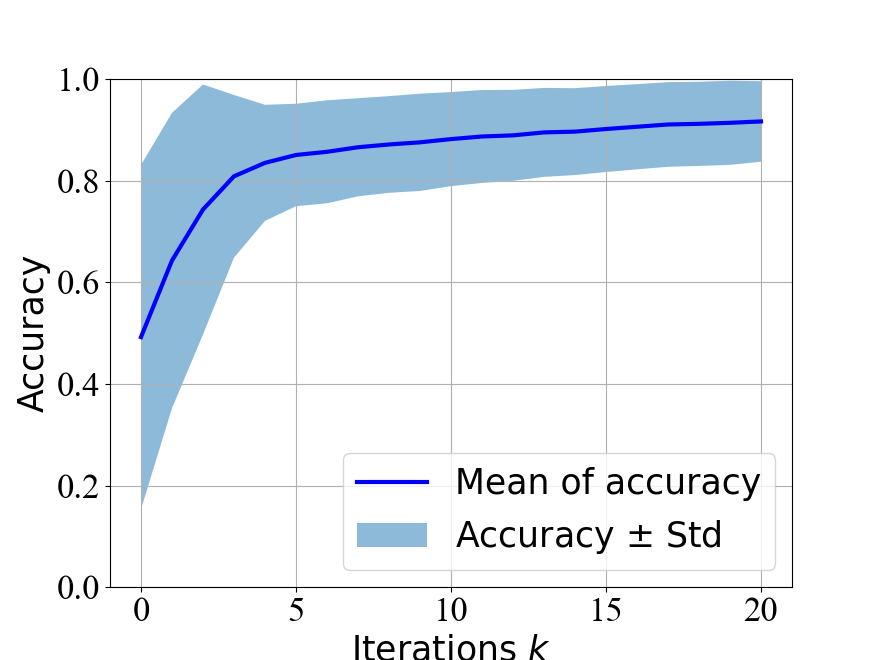}
    }
    \hfill
\caption{The plots show the the accuracy over the data set versus number of iterations for the DE-SGLD method on different network structures.  Figures are based on one randomly picked agent. Here, the stepsizes are tuned to the dataset where we take $\eta=0.0003$. We use the stochastic gradient with batch size $b=32$ in the experiments.}
\label{fig:4exp}
\end{figure}
Figure~\ref{fig:4exp}. We can clearly observe that the DE-SGLD method works well for both fully-connected and circular networks for Bayesian logistic regression, which supports our theory. In the right panel of Figure~\ref{fig:4exp}, we show the results of the DE-SGLD method for the disconnected network. The performance on the disconnected network is worse compared to the fully-connected and circular network settings as expected. 


In our next set of experiments, we investigate the DE-SGHMC method in Figure~\ref{fig:6exp} where we take $\eta=0.02$ and $\gamma=30$ after tuning these parameters to the dataset. We use the batch size $b=32$ in this set of experiments. We see that the performance of DE-SGHMC for fully-connected and circular networks is also better compared to the disconnected setting as expected. 

\begin{figure}[t]
\centering
    \subfigure[Fully-connected]{
    \includegraphics[width=0.3\columnwidth]{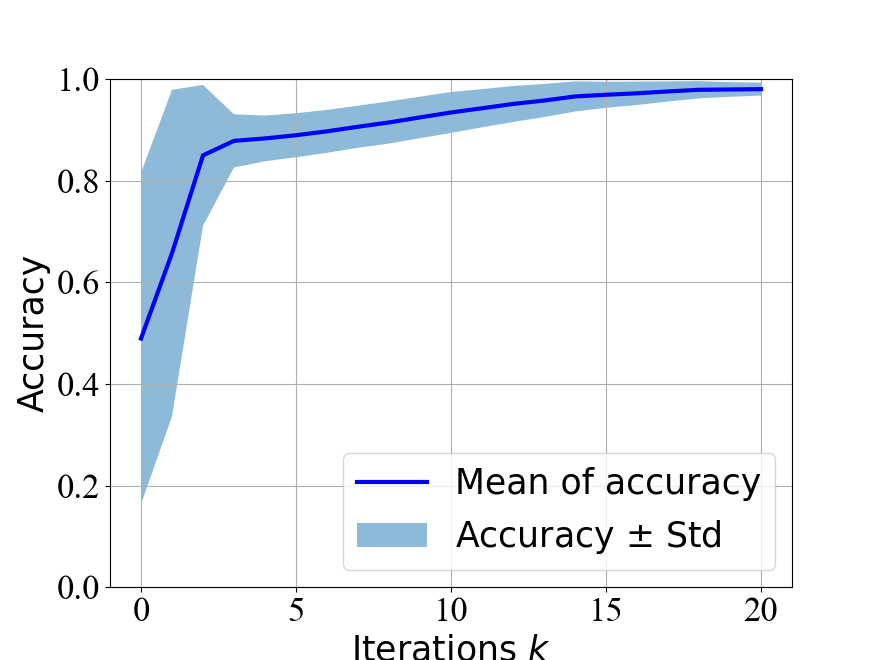}
    }
    \hfill
    \subfigure[Circular]{
    \includegraphics[width=0.3\columnwidth]{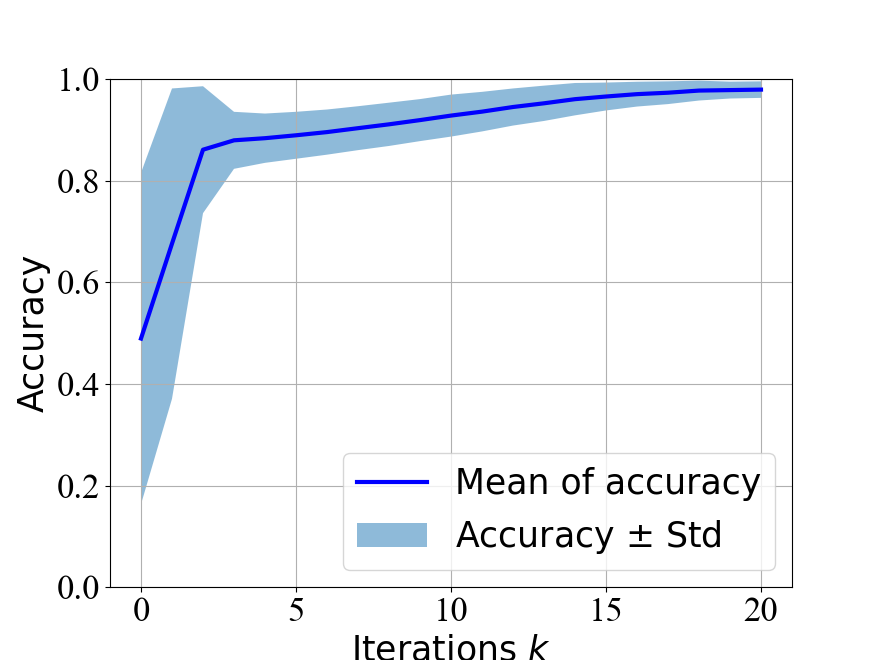}
    }
    \hfill
    \subfigure[Disconnected]{
    \includegraphics[width=0.3\columnwidth]{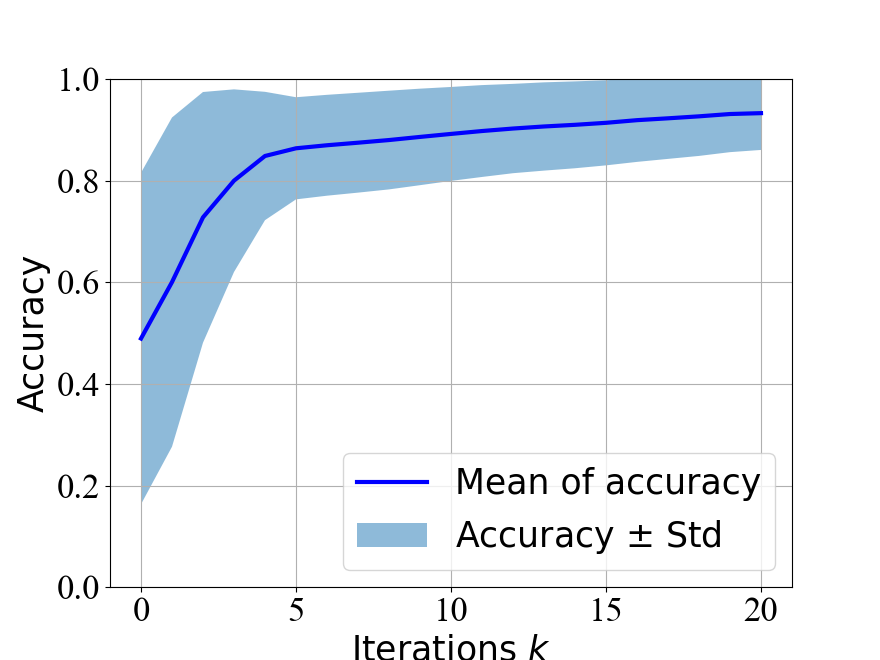}
    }
    \hfill
\caption{The plots show the accuracy over the data set versus number of iterations for the DE-SGHMC method on different network structures.  Figures are based on one randomly picked agent. Here, the stepsize $\eta$ and the friction coefficient $\gamma$ are tuned to the dataset where we take $\eta=0.02$ and $\gamma=30$. We use the stochastic gradient with batch size $b=32$ in the experiments.}
\label{fig:6exp}
\end{figure}



\begin{figure}[t]
\centering
    \subfigure[Fully-connected]{
    \includegraphics[width=0.3\columnwidth]{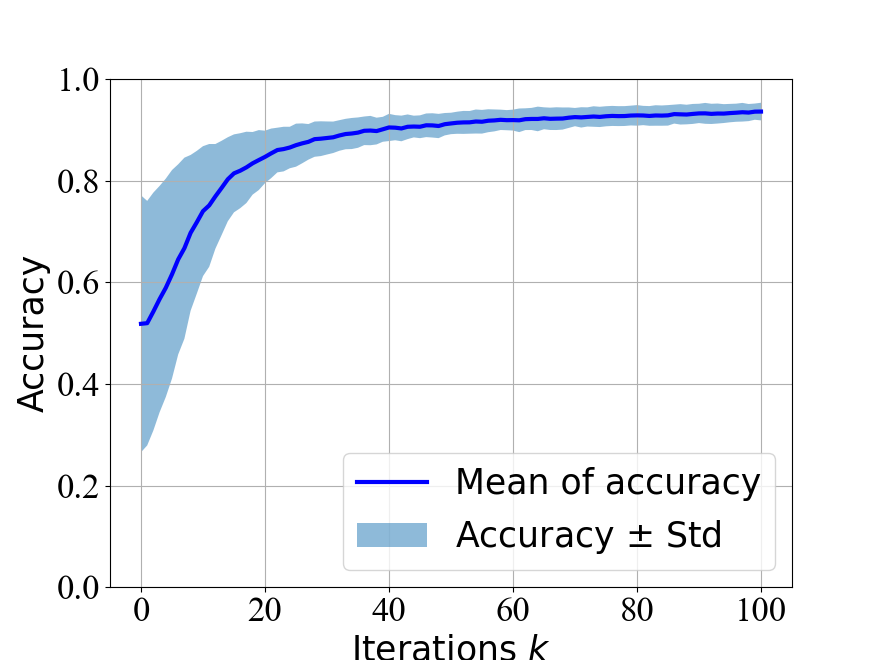}
    }
    \hfill
    \subfigure[Circular]{
    \includegraphics[width=0.3\columnwidth]{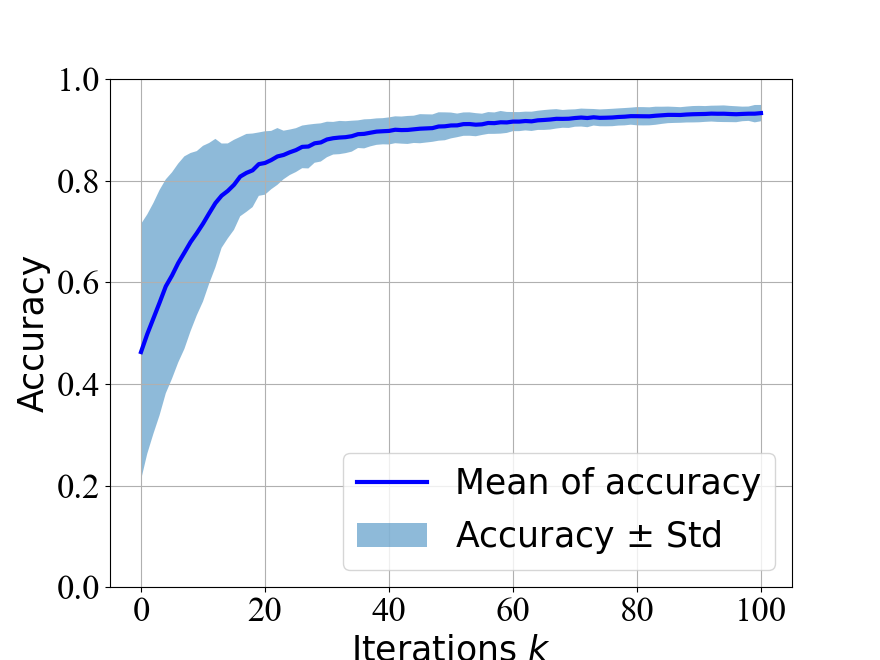}
    }
    \hfill
    \subfigure[Disconnected]{
    \includegraphics[width=0.3\columnwidth]{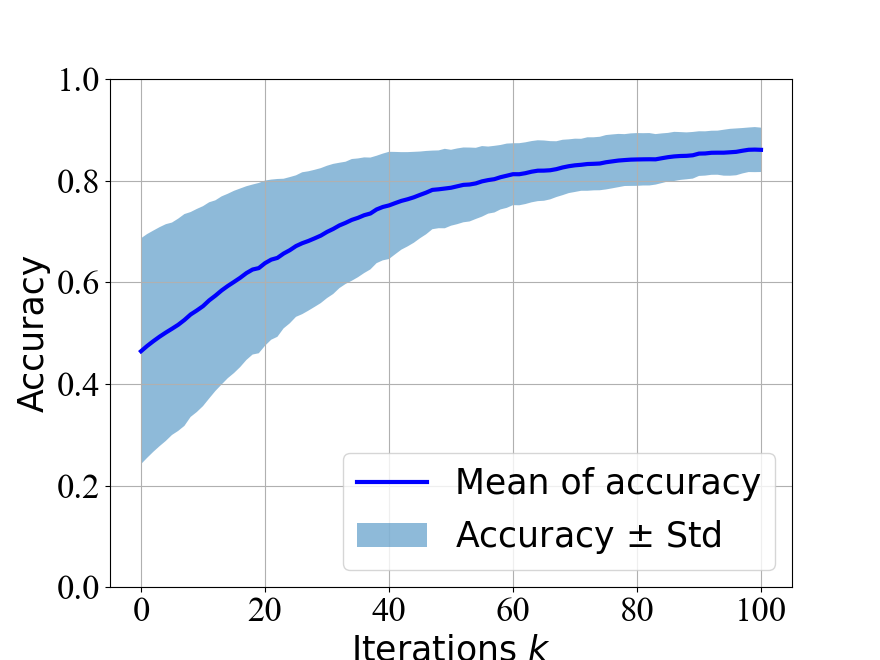}
    }
\caption{The plots show the accuracy over the data set versus number of iterations for the DE-SGLD method on different network structures over Breast Cancer data set. Figures are based on one randomly picked agent. Here, the stepsizes are chosen as $\eta=0.0008$. We use batch size $b=32$ in the experiments.}
\label{fig:8exp}
\end{figure}

\begin{figure}[th!]
\centering
    \subfigure[Fully-connected]{
    \includegraphics[width=0.3\columnwidth]{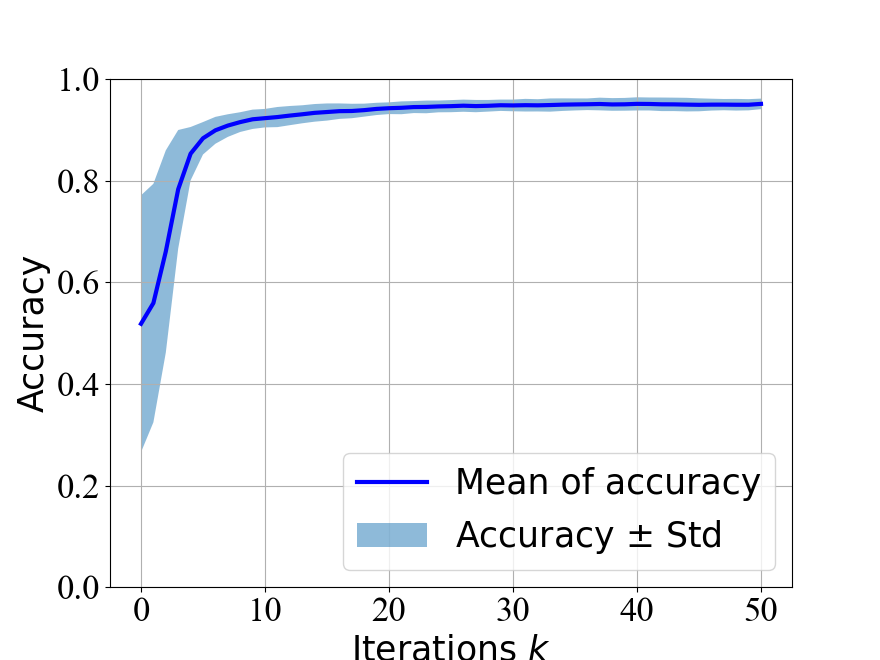}
    }
    \hfill
    \subfigure[Circular]{
    \includegraphics[width=0.3\columnwidth]{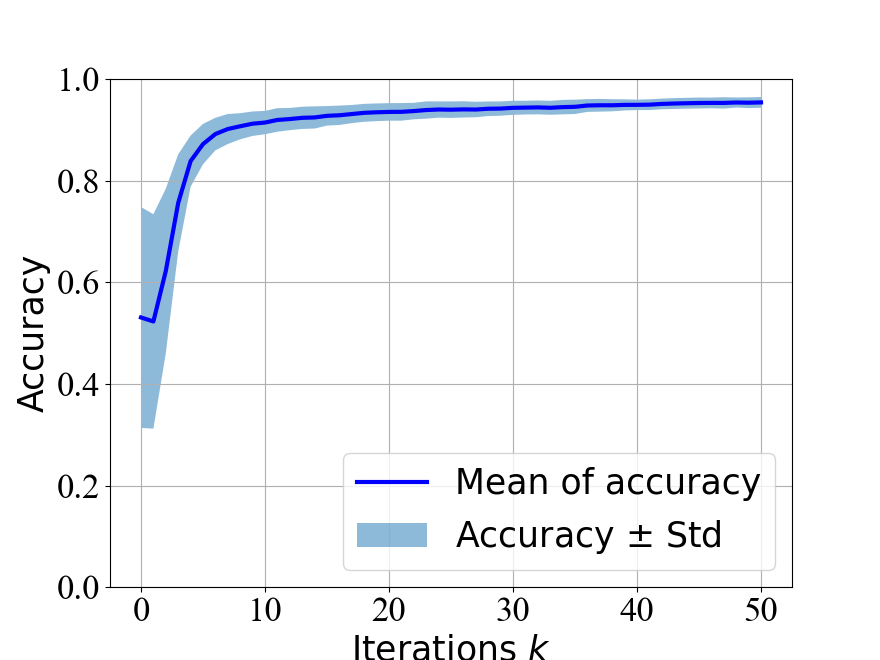}
    }
    \hfill
    \subfigure[Disconnected]{
    \includegraphics[width=0.3\columnwidth]{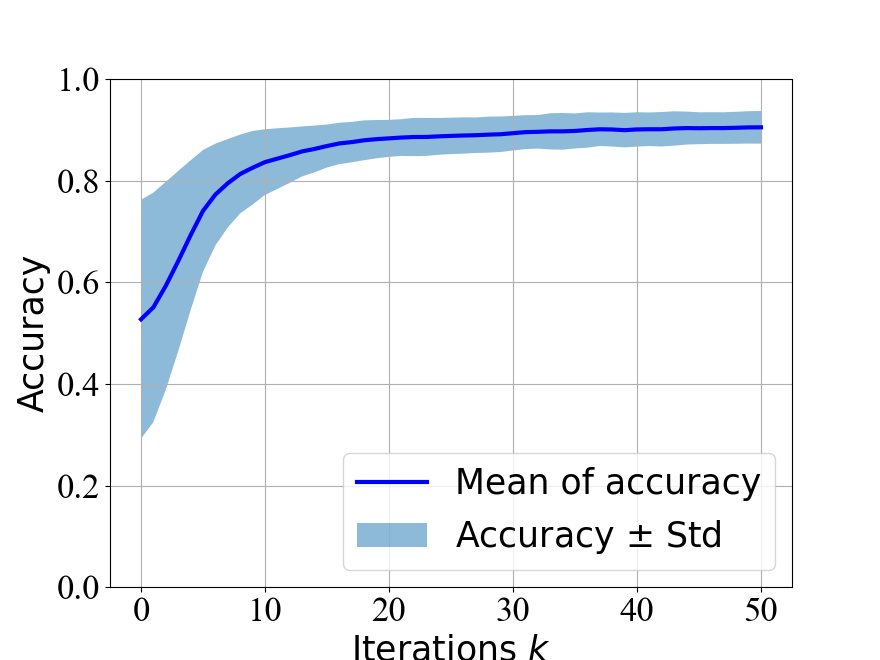}
    }
\caption{The plots show the accuracy over the data set versus number of iterations for the DE-SGHMC method on different network structures over Breast Cancer data set. Figures are based on one randomly picked agent. Here, the stepsize $\eta$ and the friction coefficient $\gamma$ are well tuned to the data set so that we take $\eta=0.05$, $\gamma=10$. We use batch size $b=32$ in the experiments.}
\label{fig:9exp}
\end{figure}

\begin{figure}[t]
\centering
    \subfigure[Fully-connected over train]{
    \includegraphics[width=0.3\columnwidth]{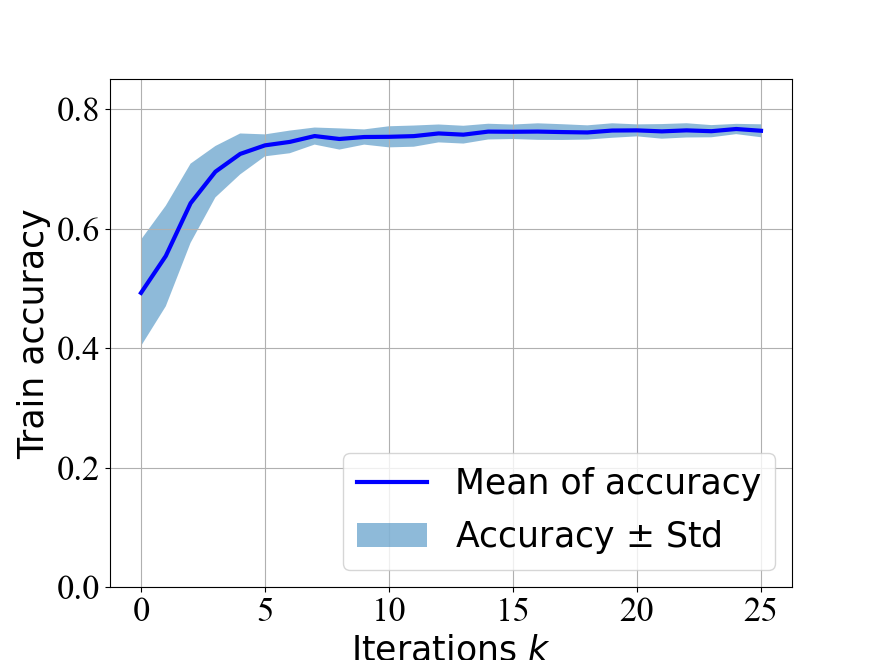}
    }
    \hfill
    \subfigure[Circular over train]{
    \includegraphics[width=0.3\columnwidth]{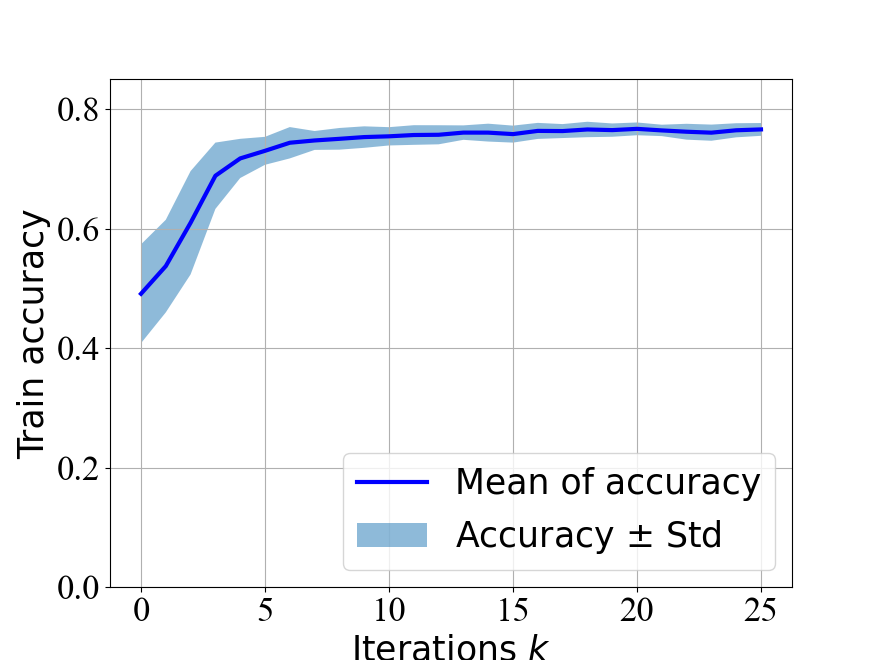}
    }
    \hfill
    \subfigure[Disconnected over train]{
    \includegraphics[width=0.3\columnwidth]{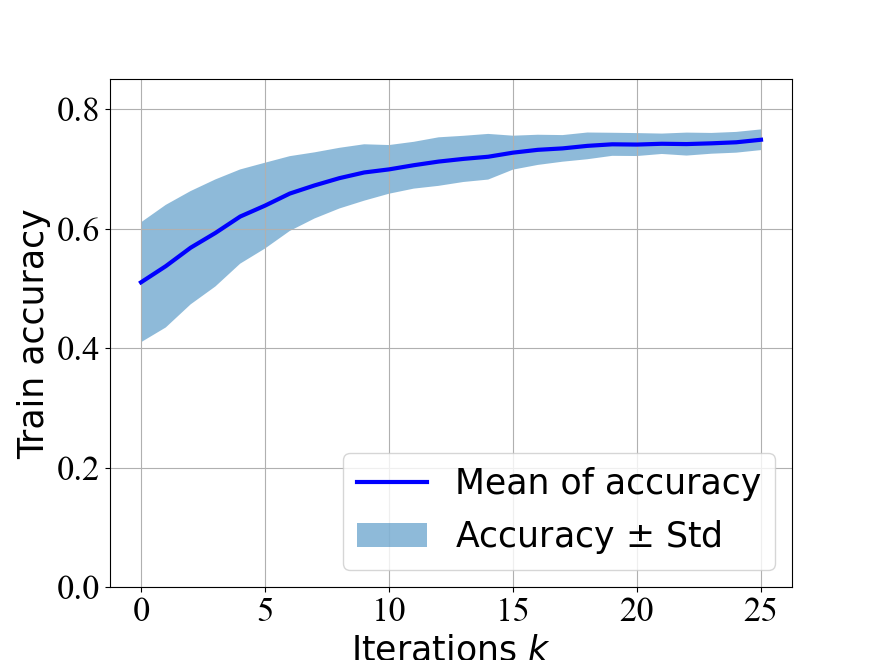}
    }
    \subfigure[Fully-connected over test]{
    \includegraphics[width=0.3\columnwidth]{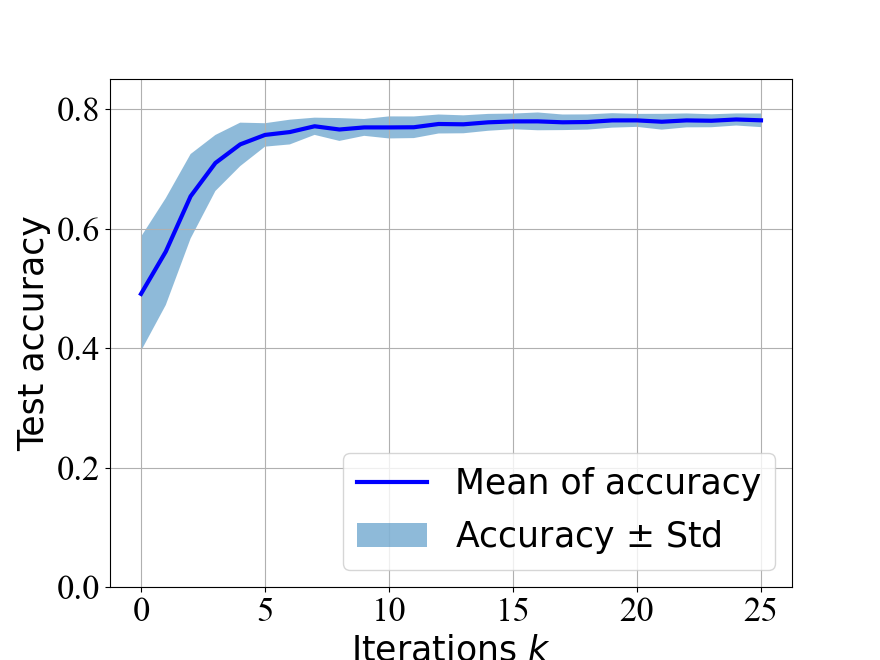}
    }
    \hfill
    \subfigure[Circular over test]{
    \includegraphics[width=0.3\columnwidth]{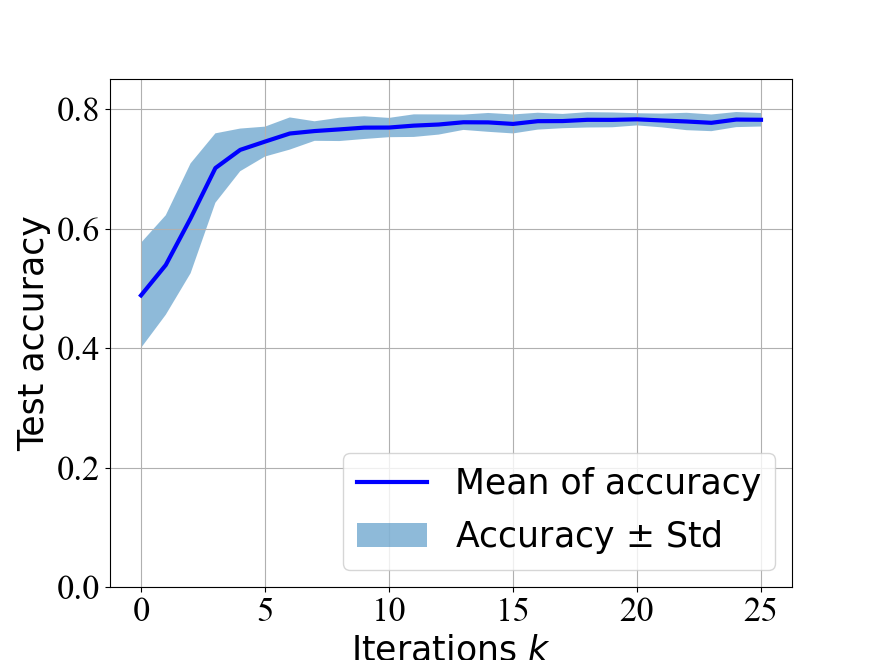}
    }
    \hfill
    \subfigure[Disconnected over test]{
    \includegraphics[width=0.3\columnwidth]{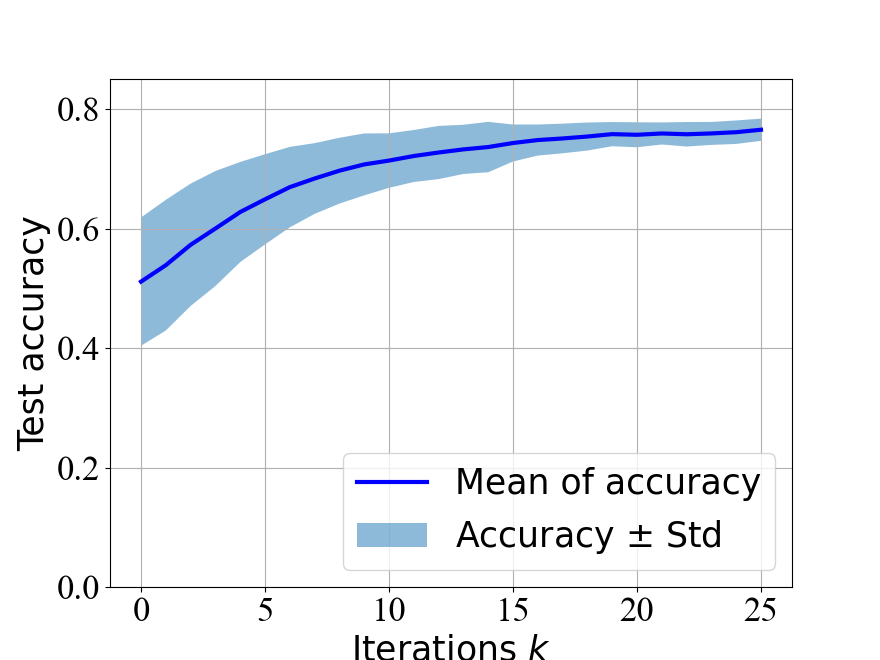}
    }
\caption{The plots show the ``distribution of the accuracy over the data set" versus number of iterations for the DE-SGLD method on different network structures over the training data and the test data of Telescope data set. Figures are based on one randomly picked agent. Here, the stepsizes are chosen as $\eta=0.008$. We use batch size $b=100$ in the experiments.}
\label{fig:10exp}
\end{figure}

\begin{figure}[th!]
\centering
    \subfigure[Fully-connected over train]{
    \includegraphics[width=0.3\columnwidth]{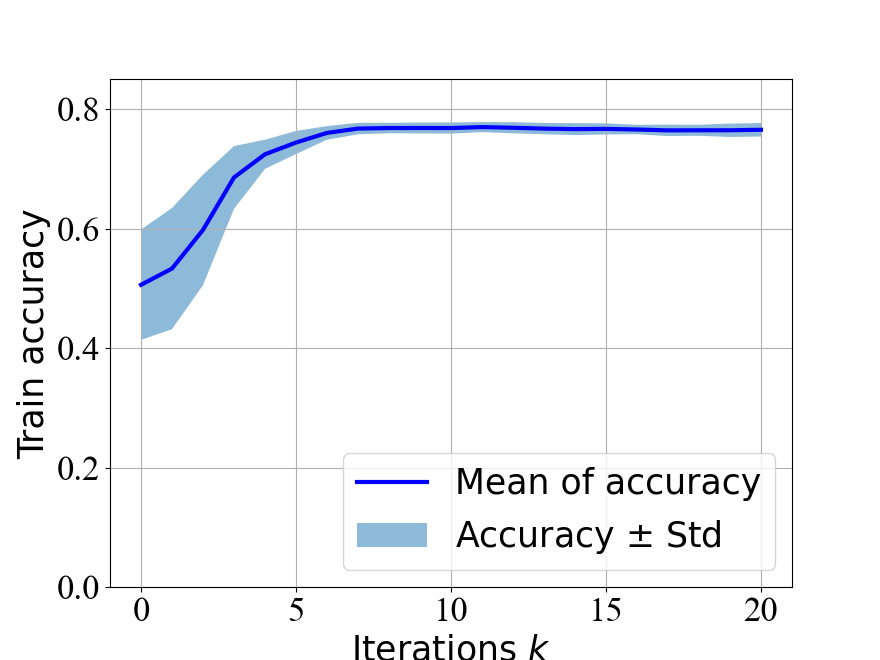}
    }
    \hfill
    \subfigure[Circular over train]{
    \includegraphics[width=0.3\columnwidth]{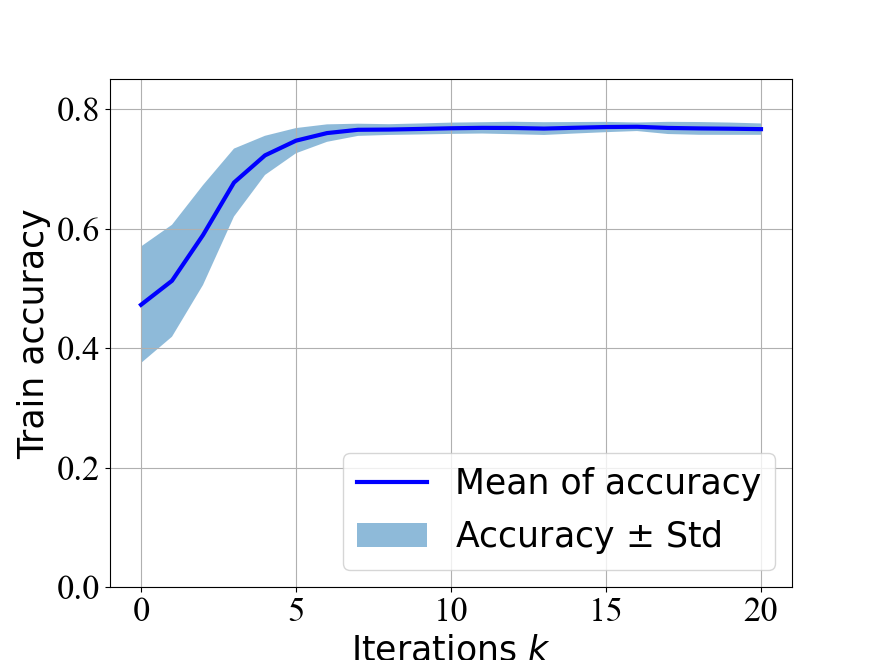}
    }
    \hfill
    \subfigure[Disconnected over train]{
    \includegraphics[width=0.3\columnwidth]{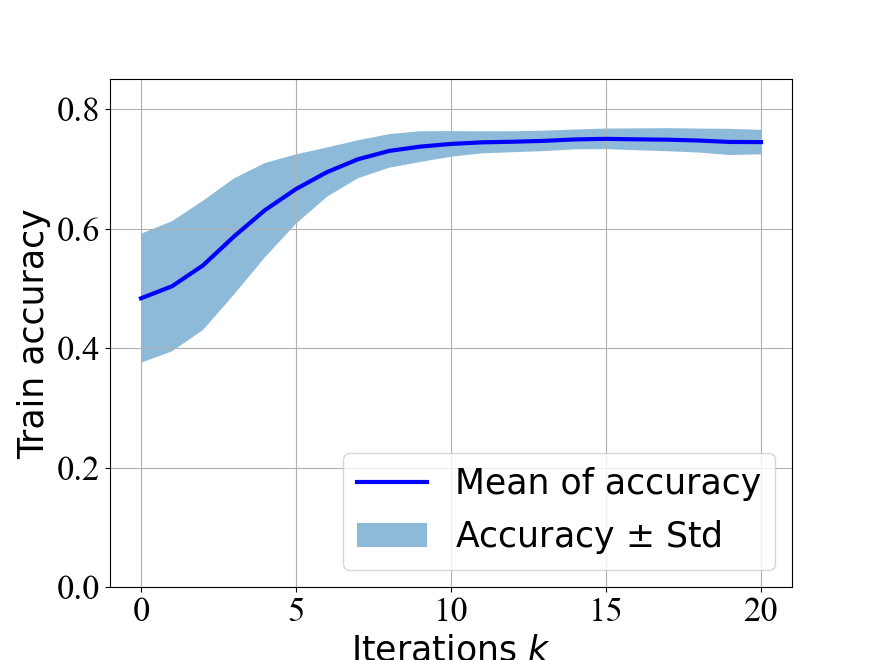}
    }
    \subfigure[Fully-connected over test]{
    \includegraphics[width=0.3\columnwidth]{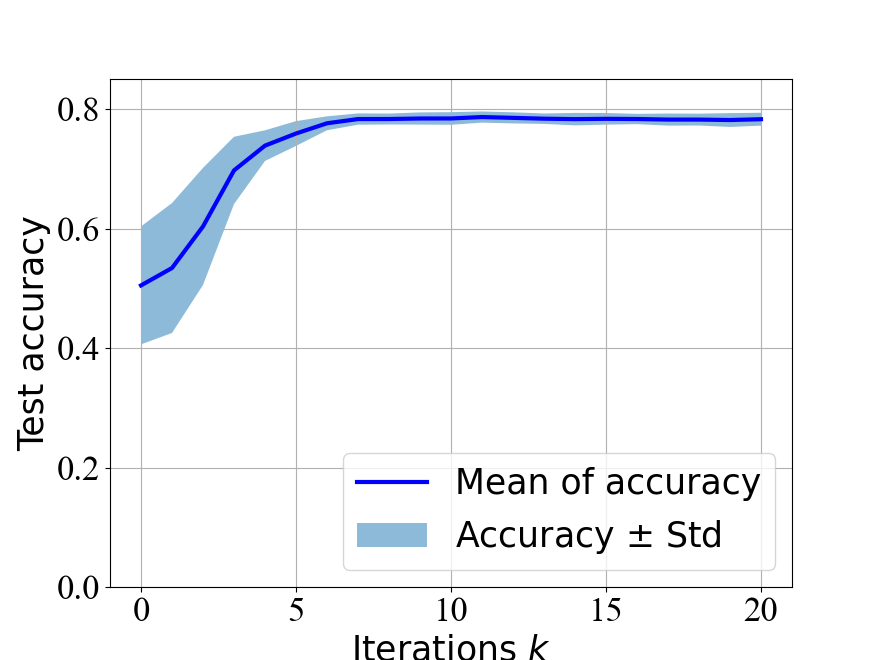}
    }
    \hfill
    \subfigure[Circular over test]{
    \includegraphics[width=0.3\columnwidth]{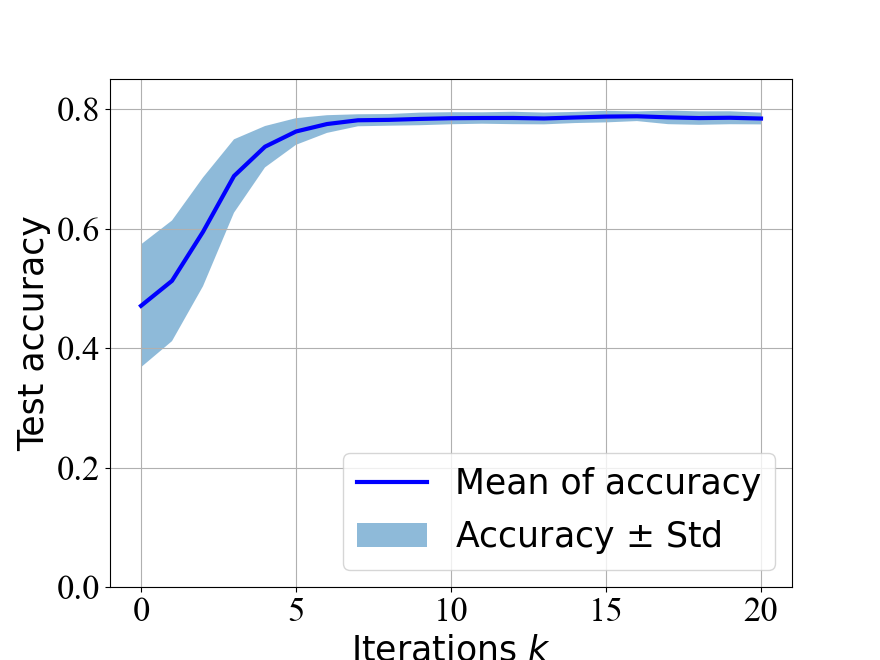}
    }
    \hfill
    \subfigure[Disconnected over test]{
    \includegraphics[width=0.3\columnwidth]{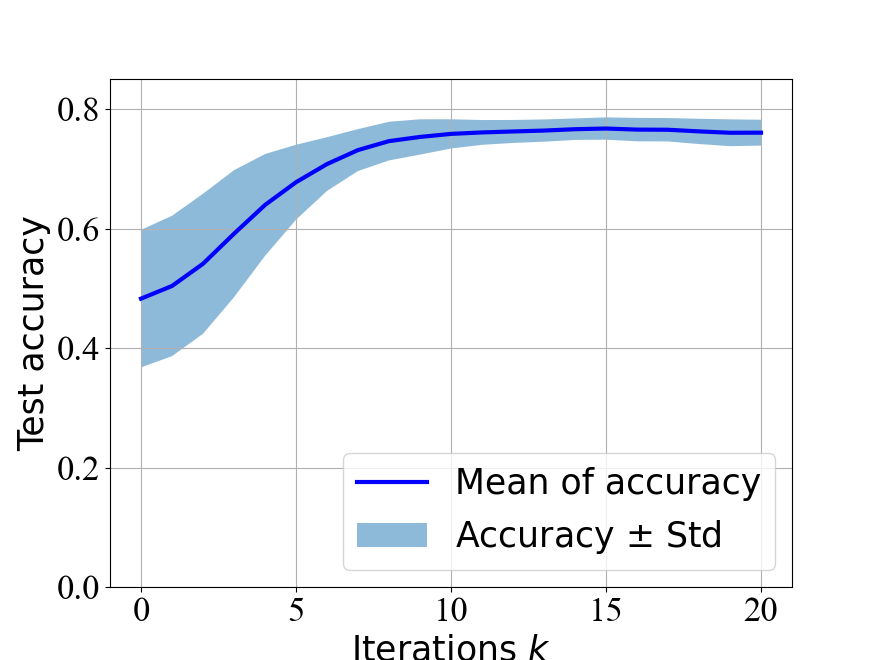}
    }
\caption{The plots show the accuracy over the data set versus number of iterations for the DE-SGHMC method on different network structures over training data and test data from the Telescope data set. Figures are based on one randomly picked agent. Here, the stepsize $\eta$ and the friction coefficient $\gamma$ are tuned to the data set where we take $\eta=0.07$, $\gamma=5$. We use batch size $b=100$ in the experiments.}
\label{fig:11exp}
\end{figure}

\subsection{Bayesian logistic regression with real data}\label{subsec-logistic-real}

In this section, we consider the Bayesian logistic regression problem on the
UCI ML Breast Cancer Wisconsin (Diagnostic) data set\protect\footnote{The corresponding data set is available online at  \url{https://archive.ics.uci.edu/ml/datasets/Breast+Cancer+Wisconsin+(Diagnostic)}.} and MAGIC Gamma Telescope data set\protect\footnote{The data set is available at \url{https://archive.ics.uci.edu/ml/datasets/magic+gamma+telescope}.}. The Breast Cancer data set contains 569 samples with dimension 31 and each sample describes characteristics of the cell nuclei present in a digitized image of a fine needle aspirate (FNA) of a breast mass. The Telescope data set contains 19,020 samples with dimension 11 and each sample describes the registration of high energy gamma particles in a ground-based atmospheric Cherenkov gamma telescope using the imaging technique.

Figure~\ref{fig:8exp} and Figure~\ref{fig:9exp} illustrate the results of using DE-SGLD and DE-SGHMC methods applied to the classification problem over the Breast Cancer data set. For Breast Cancer data set, we separate the data set into $N=6$ parts with approximately equal sizes and each agent can access to only one part of the whole data set. 
Similar to the previous section, we use the distribution of accuracy over the whole data set as the performance measure and report the performance of a randomly picked agent. The performance in the disconnected network setting is worse compared to the connected setting. We observe that the convergence of DE-SGHMC method displayed in Figure~\ref{fig:9exp} compared to DE-SGLD is slightly faster with a smaller standard deviation of accuracy in general.

Figure~\ref{fig:10exp} and Figure~\ref{fig:11exp} illustrate the results of using DE-SGLD and DE-SGHMC methods for classification over the Telescope data set. For Telescope data set, we separate the data set into training data and test data, where test data has $10\%$ data points. Then we separate the training data into 6 parts same as before. 
We report the accuracy over both training data and test data in the figures. We get similar results that illustrate that both methods perform better for fully-connected and circular networks compared to the disconnected network setting.
These results illustrate our theoretical results and show the performance of our methods for decentralized Bayesian logistic regression problems.

\section{Conclusion}
In this paper, we studied DE-SGLD and DE-SGHMC methods which allow scalable Bayesian inference for decentralized learning settings. For both methods, we show that 
the distribution of the iterate $x_i^{(k)}$ of node $i$ converges linearly (in $k$) to a neighborhood of the target distribution in the 2-Wasserstein metric when the target density $\pi(x) \propto e^{-f(x)}$ is strongly log-concave (i.e. $f$ is strongly convex) and $f$ is smooth. Our results are non-asymptotic and provide performance bounds for any finite $k$. We also illustrated the efficiency of our methods on the Bayesian linear regression and Bayesian logistic regression problems.  

\section*{Acknowledgements}
The authors are indebted to Umut \c{S}im\c{s}ekli for fruitful discussions and for his help with the experiments. 
The authors are also grateful to the Associate Editor and three anonymous referees for helpful suggestions
and comments.
Mert G\"{u}rb\"{u}zbalaban and Yuanhan Hu's research are supported in part by the grants Office of Naval Research Award Number
N00014-21-1-2244, National Science Foundation (NSF)
CCF-1814888, NSF DMS-2053485, NSF DMS-1723085. Xuefeng Gao acknowledges support from Hong Kong RGC Grants 14201117, 14201520 and 14201421. 
Lingjiong Zhu is grateful to the partial support from a Simons Foundation Collaboration Grant
and the grant NSF DMS-2053454 from the National Science Foundation.


\appendix

\section{Proofs of Technical Results in Section~\ref{proof:overdamped}}

\subsection{Proof of Lemma~\ref{lem:0}} 
\rev{In this proof, we aim to provide an $L^2$ bound on the gradients $\nabla F\left(x^{(k)}\right)$ that is uniform in $k$, where $F$ is defined in \eqref{eq:F}.}
Let us define
\begin{equation}\label{def-F-W-eta}
F_{\mathcal{W},\eta}(x):=\frac{1}{2\eta}x^{T}(I-\mathcal{W})x+F(x).
\end{equation}
Then $F_{\mathcal{W},\eta}$ is $\mu$-strongly convex and 
$L_{\eta}$-smooth with $L_{\eta}=\frac{1-\lambda_{N}^{W}}{\eta}+L$,
and we can re-write the DE-SGLD iterates as
\begin{equation} \label{eq:x-k-dyn-w}
x^{(k+1)}=x^{(k)}-\eta \nabla F_{\mathcal{W},\eta}\left(x^{(k)}\right)
-\eta\xi^{(k+1)}
+\sqrt{2\eta}w^{(k+1)}, \quad \mbox{with} \quad \mathcal{W} = W \otimes I_{d}.
\end{equation}
 Define $x_{\eta}^{\ast}$ as the minimizer of $F_{\mathcal{W},\eta}$.
Since $\nabla F(x)$ is $L$-Lipschitz, we have
\begin{align}
\mathbb{E}\left\Vert\nabla F\left(x^{(k)}\right)\right\Vert^{2}
&\leq
2\mathbb{E}\left\Vert\nabla F\left(x^{(k)}\right)- \nabla F\left(x_{\eta}^{\ast}\right)\right\Vert^{2}
+2\left\Vert \nabla F\left(x_{\eta}^{\ast}\right)\right\Vert^{2}
\nonumber
\\
&\leq 2L^{2}\mathbb{E}\left[\left\Vert x^{(k)}-x_{\eta}^{\ast}\right\Vert^{2}\right]
+2\left\Vert \nabla F\left(x_{\eta}^{\ast}\right)\right\Vert^{2}
\nonumber
\\
&\leq 2L^{2}\mathbb{E}\left[\left\Vert x^{(k)}-x_{\eta}^{\ast}\right\Vert^{2}\right]
+4\left\Vert \nabla F\left(x_{\eta}^{\ast}\right)-\nabla F\left(x^{\ast}\right)\right\Vert^{2}
+4\left\Vert \nabla F\left(x^{\ast}\right)\right\Vert^{2}
\nonumber
\\
&\leq 2L^{2}\mathbb{E}\left[\left\Vert x^{(k)}-x_{\eta}^{\ast}\right\Vert^{2}\right]
+4L^{2}\left\Vert x_{\eta}^{\ast}-x^{\ast}\right\Vert^{2}
+4\left\Vert \nabla F\left(x^{\ast}\right)\right\Vert^{2},\label{eq:g-F-decomp}
\end{align}
where we recall from \eqref{eq:x-ast-ND} that
$x^{\ast}=\left[x_{\ast}^{T},x_{\ast}^{T},\ldots,x_{\ast}^{T}\right]^{T}$,
where $x_{\ast}$ is the minimizer of $f(x)$.
Therefore, in order to derive Lemma~\ref{lem:0}, 
we need to have a control on $\| x_{\eta}^{\ast} - x^{\ast}\|$. 
This is provided in the following lemma, which follows from Corollary 9 in \citet{Yuan16}.

\begin{lemma}\label{lem:C1:gamma}
If $\eta \leq \min(\frac{1+\lambda_N^W}{L},\frac{1}{L+\mu})$, then
\begin{equation*}
\| x_{\eta}^{\ast} - x^{\ast}\| \leq C_1 \frac{\eta\sqrt{N}}{1-\bar{\gamma}}, \quad \mbox{with} 
\quad \bar{\gamma} := \max\left\{\left|\lambda_2^W\right|, \left|\lambda_N^W\right|\right\},
\end{equation*} 
where $C_1$ is defined in \eqref{def-C1}.
\end{lemma}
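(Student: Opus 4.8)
The lemma is quoted as a consequence of Corollary~9 in \citet{Yuan16}, and the plan is to reproduce the short underlying argument specialized to $F_{\mathcal{W},\eta}$. The starting point is the first-order optimality condition for the (unique, by $\mu$-strong convexity) minimizer $x_\eta^*$ of $F_{\mathcal{W},\eta}$, namely $\frac{1}{\eta}(I-\mathcal{W})x_\eta^* + \nabla F(x_\eta^*) = 0$, i.e. $(I-\mathcal{W})x_\eta^* = -\eta\,\nabla F(x_\eta^*)$. I would then measure the error $x_\eta^* - x^*$ along the consensus subspace and its orthogonal complement separately, using the orthogonal projector $J := \frac{1}{N}\mathbf{1}\mathbf{1}^\top \otimes I_d$ onto the consensus subspace; note $x^* = J x^*$, so that $(I-J)(x_\eta^*-x^*) = (I-J)x_\eta^*$ and $J(x_\eta^*-x^*) = \mathbf{1}\otimes(\bar{x}_\eta^* - x_\ast)$, where $\bar{x}_\eta^*$ is the block-average of $x_\eta^*$.

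The key clean ingredient is a uniform bound on $\|\nabla F(x_\eta^*)\|$, which is where the constant $\bar{C}_1$ enters. Since $x_\eta^*$ minimizes $F_{\mathcal{W},\eta}$ and $0\in\mathbb{R}^{Nd}$ is a feasible point with $F_{\mathcal{W},\eta}(0)=F(0)=\sum_i f_i(0)$, and since $(I-\mathcal{W})\succeq 0$, I obtain $F(x_\eta^*) \le F_{\mathcal{W},\eta}(x_\eta^*)\le F(0)$, hence $\sum_i\big(f_i((x_\eta^*)_i)-f_i^*\big)\le \sum_i\big(f_i(0)-f_i^*\big)$. Applying the standard smoothness inequality $\|\nabla f_i(y)\|^2\le 2L(f_i(y)-f_i^*)$ componentwise and summing then gives $\|\nabla F(x_\eta^*)\|^2 \le 2L\sum_i(f_i(0)-f_i^*) = \bar{C}_1^2$.

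For the disagreement component I would apply the restriction of $(I-\mathcal{W})^{-1}$ to the orthogonal complement of the consensus subspace, on which the smallest eigenvalue of $I-\mathcal{W}$ is $1-\lambda_2^W \ge 1-\bar{\gamma}$; together with the optimality condition this yields $\|(I-J)x_\eta^*\| \le \frac{\eta}{1-\bar{\gamma}}\|\nabla F(x_\eta^*)\| \le \frac{\eta\,\bar{C}_1}{1-\bar{\gamma}}$. For the consensus component, projecting the optimality condition with $J$ and using $J(I-\mathcal{W})=0$ gives $\sum_i\nabla f_i((x_\eta^*)_i)=0$; writing $\sum_i\nabla f_i(\bar{x}_\eta^*) = \sum_i[\nabla f_i(\bar{x}_\eta^*)-\nabla f_i((x_\eta^*)_i)]$ and using $L$-Lipschitz gradients together with the strong convexity of $f$ bounds $\|\bar{x}_\eta^*-x_\ast\|$ by a multiple of $\|(I-J)x_\eta^*\|$. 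Adding the two orthogonal contributions via the triangle inequality produces the overall factor $\big(1+\tfrac{2(L+\mu)}{\mu}\big)$, and combining with the gradient bound gives the stated estimate with $C_1=\bar{C}_1\big(1+\tfrac{2(L+\mu)}{\mu}\big)$.

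The routine calculations (projection norms and the combination of the two orthogonal pieces) are straightforward; the main care is in matching the exact constant $C_1$ and the precise $N$- and spectral-gap dependence, which is exactly the bookkeeping carried out in Corollary~9 of \citet{Yuan16} — so in practice I would verify that the stepsize conditions $\eta\le\min(\frac{1+\lambda_N^W}{L},\frac{1}{L+\mu})$ guarantee the PSD/spectral facts used above and the applicability of their corollary, and then quote it for the final constant rather than re-deriving every factor. The one step that requires genuine thought, rather than mechanical estimation, is the a priori gradient bound $\|\nabla F(x_\eta^*)\|\le\bar{C}_1$ via the minimizing property of $x_\eta^*$, since the naive estimate would be expressed in terms of the unknown $x_\eta^*$ rather than the fixed reference point $0$.
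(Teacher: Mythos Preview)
Your proposal is correct and, like the paper, ultimately relies on Corollary~9 of \citet{Yuan16}; the paper's own proof is in fact nothing more than that citation (after identifying $x_\eta^*$ with the DGD fixed point, choosing $\hat{x}^{(0)}=0$ so that the corollary applies with the stated $\bar{C}_1$, and summing the per-node bound $\|\hat{x}_i^\infty - x_\ast\|\le C_1\eta/(1-\bar{\gamma})$ over $i$ to pick up the $\sqrt{N}$). Your sketch of the underlying consensus/disagreement decomposition and the a priori bound $\|\nabla F(x_\eta^*)\|\le \bar{C}_1$ via $F(x_\eta^*)\le F_{\mathcal{W},\eta}(x_\eta^*)\le F(0)$ is more than the paper provides, but is a faithful outline of how Yuan's result is proved.
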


\textbf{Proof of Lemma~\ref{lem:C1:gamma}}
The proof of Lemma~\ref{lem:C1:gamma} will be provided in Appendix~\ref{sec:additional}.
\hfill $\Box$ 

\rev{Next, to continue with the proof of Lemma~\ref{lem:0}, we control the term $\mathbb{E}\left[\left\Vert x^{(k)}-x_{\eta}^{\ast}\right\Vert^{2}\right]$ in \eqref{eq:g-F-decomp} by deriving a recursion.}  
From \eqref{eq:x-k-dyn-w}, we get  
\begin{equation*}
x^{(k+1)}-x_{\eta}^{\ast}=x^{(k)}-x_{\eta}^{\ast}-\eta \nabla F_{\mathcal{W},\eta}\left(x^{(k)}\right)
-\eta\xi^{(k+1)}+\sqrt{2\eta}w^{(k+1)}, \quad \mbox{with} \quad \mathcal{W} = W \otimes I_{d}.
\end{equation*}
 Since $F_{\mathcal{W},\eta}$ is $L_{\eta}$-smooth and $\mu$-strongly convex, we have
 \begin{eqnarray} L_\eta \langle 
  \nabla F_{\mathcal{W},\eta}\left(z\right)- \nabla F_{\mathcal{W},\eta}\left(y\right), z - y \rangle 
 &\geq&
  \|\nabla F_{\mathcal{W},\eta}\left(z\right) - \nabla F_{\mathcal{W},\eta}\left(y\right)\|^2 \quad \forall z,y\in\mathbb{R}^d, \label{ineq-str-cvx-4}\\
 \langle z-y,\nabla F_{\mathcal{W},\eta}\left(z\right) - 
 \nabla F_{\mathcal{W},\eta}\left(y\right)
  \rangle 
  &\geq&
  \mu \|z-y\|^2 \quad \forall z,y\in\mathbb{R}^d.
   \label{ineq-str-cvx-5}
  \end{eqnarray}
 If we choose $z=x^{(k)}$ and $y=x_\eta^*$ and use the fact that $\nabla F_{\mathcal{W},\eta}\left(x_\eta^*\right)=0$, we obtain
\begin{align*}
\mathbb{E}\left[\left\Vert x^{(k+1)}-x_{\eta}^{\ast}\right\Vert^{2}\right]
&=\mathbb{E}\left[\left\Vert x^{(k)}-x_{\eta}^{\ast}\right\Vert^{2}\right]
-2\eta\mathbb{E}\left\langle x^{(k)}-x_{\eta}^{\ast},\nabla F_{\mathcal{W},\eta}\left(x^{(k)}\right)\right\rangle
\\
&\qquad\qquad\qquad
+\eta^{2}\mathbb{E}\left[\left\Vert\nabla F_{\mathcal{W},\eta}\left(x^{(k)}\right)\right\Vert^{2}\right]
+\eta^{2}\mathbb{E}\left\Vert\xi^{(k+1)}\right\Vert^{2}
+2\eta dN
\\
&\leq
\mathbb{E}\left[\left\Vert x^{(k)}-x_{\eta}^{\ast}\right\Vert^{2}\right]
-2\eta\left(1-\frac{\eta L_{\eta}}{2}\right)\mathbb{E}\left\langle x^{(k)}-x_{\eta}^{\ast},\nabla F_{\mathcal{W},\eta}\left(x^{(k)}\right)\right\rangle
\\
&\qquad\qquad\qquad\qquad+\eta^{2}\sigma^{2}N+2\eta dN
\\
&\leq
\left(1-2\mu\eta\left(1-\frac{\eta L_{\eta}}{2}\right)\right)\mathbb{E}\left[\left\Vert x^{(k)}-x_{\eta}^{\ast}\right\Vert^{2}\right]
+\eta^{2}\sigma^{2}N+2\eta dN
\\
&=\left(1-\mu\eta(1+\lambda_{N}^{W}-\eta L)\right)\mathbb{E}\left[\left\Vert x^{(k)}-x_{\eta}^{\ast}\right\Vert^{2}\right]
+\eta^{2}\sigma^{2}N+2\eta dN,
\end{align*}
where we used $\frac{\eta L_{\eta}}{2}<1$ and $\mu\eta(1+\lambda_{N}^{W}-\eta L)\in(0,1)$ which follows directly from our assumptions on the stepsize. Therefore, 
\begin{equation*}
\mathbb{E}\left[\left\Vert x^{(k)}-x_{\eta}^{\ast}\right\Vert^{2}\right]\leq
\left(1-\mu\eta(1+\lambda_{N}^{W}-\eta L)\right)^{k}\left\Vert x^{(0)}-x_{\eta}^{\ast}\right\Vert^{2}
+\frac{\eta\sigma^{2}N+2dN}{\mu(1+\lambda_{N}^{W}-\eta L)}.
\end{equation*}
Now we are ready to bound $\mathbb{E}\left\Vert\nabla F\left(x^{(k)}\right)\right\Vert^{2}$. We can compute from \eqref{eq:g-F-decomp} that 
\begin{align*}
\mathbb{E}\left\Vert\nabla F\left(x^{(k)}\right)\right\Vert^{2}
&\leq 2L^{2}\mathbb{E}\left[\left\Vert x^{(k)}-x_{\eta}^{\ast}\right\Vert^{2}\right]
+4L^{2}\left\Vert x_{\eta}^{\ast}-x^{\ast}\right\Vert^{2}
+4\left\Vert \nabla F\left(x^{\ast}\right)\right\Vert^{2}
\\
&\leq
2L^{2}\left(1-\mu\eta(1+\lambda_{N}^{W}-\eta L)\right)^{k}\left\Vert x^{(0)}-x_{\eta}^{\ast}\right\Vert^{2}
+\frac{2L^{2}(\eta\sigma^{2}N+2dN)}{\mu(1+\lambda_{N}^{W}-\eta L)}
\\
&\qquad\qquad
+4L^{2}\left\Vert x_{\eta}^{\ast}-x^{\ast}\right\Vert^{2}
+4\left\Vert \nabla F\left(x^{\ast}\right)\right\Vert^{2}
\\
&\leq
4L^{2}\left(1-\mu\eta(1+\lambda_{N}^{W}-\eta L)\right)^{k}\left\Vert x^{(0)}-x^{\ast}\right\Vert^{2}
\\
&\qquad\qquad\qquad
+4L^{2}\left(1-\mu\eta(1+\lambda_{N}^{W}-\eta L)\right)^{k}\left\Vert x^{\ast}-x_{\eta}^{\ast}\right\Vert^{2}
\\
&\qquad\qquad\qquad
+\frac{2L^{2}(\eta\sigma^{2}N+2dN)}{\mu(1+\lambda_{N}^{W}-\eta L)}+4L^{2}\left\Vert x_{\eta}^{\ast}-x^{\ast}\right\Vert^{2}
+4\left\Vert \nabla F\left(x^{\ast}\right)\right\Vert^{2}
\\
&\leq
4L^{2}\left(1-\mu\eta(1+\lambda_{N}^{W}-\eta L)\right)^{k}\left\Vert x^{(0)}-x^{\ast}\right\Vert^{2}
+8L^{2}\left\Vert x^{\ast}-x_{\eta}^{\ast}\right\Vert^{2}
\\
&\qquad\qquad\qquad
+\frac{2L^{2}(\eta\sigma^{2}N+2dN)}{\mu(1+\lambda_{N}^{W}-\eta L)}
+4\left\Vert \nabla F\left(x^{\ast}\right)\right\Vert^{2},
\end{align*}
where we recall from \eqref{eq:x-ast-ND} that $x^{\ast}=\left[x_{\ast}^{T},x_{\ast}^{T},\ldots,x_{\ast}^{T}\right]^{T}$ where $x_{\ast}$ is the minimizer of $f(x)$.
Finally, we apply Lemma~\ref{lem:C1:gamma} to
complete the proof \rev{of Lemma~\ref{lem:0}}.
\hfill $\Box$ 




\subsection{Proof of Lemma~\ref{lem:1}}
\rev{In this proof, we aim to provide uniform $L_2$ bounds between the iterates $x_i^{(k)}$ and their means $\bar{x}^{(k)}$.
First,} by the definition of $x^{(k)}$, we get
\begin{equation*}
x^{(k+1)}=(W\otimes I_{d})x^{(k)}-\eta \nabla F\left(x^{(k)}\right)
-\eta\xi^{(k+1)}+\sqrt{2\eta}w^{(k+1)}.
\end{equation*}
It follows that
\begin{align}
x^{(k)}&=(W^{k}\otimes I_{d})x^{(0)}-\eta\sum_{s=0}^{k-1}\left(W^{k-1-s}\otimes I_{d}\right)\nabla F\left(x^{(s)}\right)
\nonumber
\\
&\qquad\qquad
-\eta\sum_{s=0}^{k-1}\left(W^{k-1-s}\otimes I_{d}\right)\xi^{(s+1)}
+\sqrt{2\eta}\sum_{s=0}^{k-1}\left(W^{k-1-s}\otimes I_{d}\right)w^{(s+1)}.\label{x:k:expression}
\end{align}
Let us define $\mathbf{\bar{x}}^{(k)}:={[\bar{x}^{(k)}{}^{T},\cdots,\bar{x}^{(k)}{}^{T}]}^{T}\in\mathbb{R}^{Nd}$. 
Notice that
\begin{equation*}
\mathbf{\bar{x}}^{(k)}=\frac{1}{N}\left(\left(1_{N}1_{N}^{T}\right)\otimes I_{d}\right)x^{(k)}\,,
\end{equation*}
where $1_N \in \mathbb{R}^N$ is a vector of ones; i.e. it is a column vector with all entries equal to one and the superscript $^T$ denotes the vector transpose.
Therefore, we get
\begin{equation*}
\sum_{i=1}^{N}\left\Vert x_{i}^{(k)}-\bar{x}^{(k)}\right\Vert^{2}
=\left\Vert x^{(k)}-\mathbf{\bar{x}}^{(k)}\right\Vert^{2}
=\left\Vert x^{(k)}-\frac{1}{N}\left(\left(1_{N}1_{N}^{T}\right)\otimes I_d\right)x^{(k)}\right\Vert^{2}.
\end{equation*}
Note that it follows from \eqref{x:k:expression} that
\begin{align*}
&x^{(k)}-\frac{1}{N}\left(\left(1_{N}1_{N}^{T}\right)\otimes I_{d}\right)x^{(k)}
\\
&=(W^{k}\otimes I_{d})x^{(0)}-\frac{1}{N}\left(\left(1_{N}1_{N}^{T}W^{k}\right)\otimes I_{d}\right)x^{(0)}
\\
&-\eta\sum_{s=0}^{k-1}\left(W^{k-1-s}\otimes I_{d}\right)\nabla F\left(x^{(s)}\right)
+\eta\sum_{s=0}^{k-1}\frac{1}{N}\left(\left(1_{N}1_{N}^{T}W^{k-1-s}\right)\otimes I_{d}\right)\nabla F\left(x^{(s)}\right)
\\
&\quad
-\eta\sum_{s=0}^{k-1}\left(W^{k-1-s}\otimes I_{d}\right)\xi^{(s+1)}
+\eta\sum_{s=0}^{k-1}\frac{1}{N}\left(\left(1_{N}1_{N}^{T}W^{k-1-s}\right)\otimes I_{d}\right)\xi^{(s+1)}
\\
&\quad
+\sqrt{2\eta}\sum_{s=0}^{k-1}\left(W^{k-1-s}\otimes I_{d}\right)w^{(s+1)}
-\sqrt{2\eta}\sum_{s=0}^{k-1}\frac{1}{N}\left(\left(1_{N}1_{N}^{T}W^{k-1-s}\right)\otimes I_{d}\right)w^{(s+1)}.
\end{align*}
By the Cauchy-Schwarz inequality, we have
\begin{align*}
&\left\Vert x^{(k)}-\frac{1}{N}\left(\left(1_{N}1_{N}^{T}\right)\otimes I_{d}\right)x^{(k)}\right\Vert^{2}
\\
&\leq 
4\left\Vert(W^{k}\otimes I_{d})x^{(0)}-\frac{1}{N}\left(\left(1_{N}1_{N}^{T}W^{k}\right)\otimes I_{d}\right)x^{(0)}\right\Vert^{2}
\\
&\quad
+4\left\Vert-\eta\sum_{s=0}^{k-1}\left(W^{k-1-s}\otimes I_{d}\right)\nabla F\left(x^{(s)}\right)
+\eta\sum_{s=0}^{k-1}\frac{1}{N}\left(\left(1_{N}1_{N}^{T}W^{k-1-s}\right)\otimes I_{d}\right)\nabla F\left(x^{(s)}\right)\right\Vert^{2}
\\
&\quad
+4\left\Vert\eta\sum_{s=0}^{k-1}\left(W^{k-1-s}\otimes I_{d}\right)\xi^{(s+1)}
-\eta\sum_{s=0}^{k-1}\frac{1}{N}\left(\left(1_{N}1_{N}^{T}W^{k-1-s}\right)\otimes I_{d}\right)\xi^{(s+1)}\right\Vert^{2}
\\
&\quad
+4\left\Vert\sqrt{2\eta}\sum_{s=0}^{k-1}\left(W^{k-1-s}\otimes I_{d}\right)w^{(s+1)}
-\sqrt{2\eta}\sum_{s=0}^{k-1}\frac{1}{N}\left(\left(1_{N}1_{N}^{T}W^{k-1-s}\right)\otimes I_{d}\right)w^{(s+1)}\right\Vert^{2}
\\
&=4\left\Vert(W^{k}\otimes I_{d})x^{(0)}-\frac{1}{N}\left(\left(1_{N}1_{N}^{T}\right)\otimes I_{d}\right)x^{(0)}\right\Vert^{2}
\\
&\quad
+4\left\Vert-\eta\sum_{s=0}^{k-1}\left(W^{k-1-s}\otimes I_{d}\right)\nabla F\left(x^{(s)}\right)
+\eta\sum_{s=0}^{k-1}\frac{1}{N}\left(\left(1_{N}1_{N}^{T}\right)\otimes I_{d}\right)\nabla F\left(x^{(s)}\right)\right\Vert^{2}
\\
&\quad
+4\left\Vert\eta\sum_{s=0}^{k-1}\left(W^{k-1-s}\otimes I_{d}\right)\xi^{(s+1)}
-\eta\sum_{s=0}^{k-1}\frac{1}{N}\left(\left(1_{N}1_{N}^{T}\right)\otimes I_{d}\right)\xi^{(s+1)}\right\Vert^{2}
\\
&\quad
+4\left\Vert\sqrt{2\eta}\sum_{s=0}^{k-1}\left(W^{k-1-s}\otimes I_{d}\right)w^{(s+1)}
-\sqrt{2\eta}\sum_{s=0}^{k-1}\frac{1}{N}\left(\left(1_{N}1_{N}^{T}\right)\otimes I_{d}\right)w^{(s+1)}\right\Vert^{2},
\end{align*}
where we used the property that $W$ is doubly stochastic. Therefore, we get
\begin{align}
&\left\Vert x^{(k)}-\frac{1}{N}\left(\left(1_{N}1_{N}^{T}\right)\otimes I_{d}\right)x^{(k)}\right\Vert^{2}
\nonumber
\\
&\leq 4\left\Vert\left(\left(W^{k}-\frac{1}{N}1_{N}1_{N}^{T}\right)\otimes I_{d}\right)x^{(0)}\right\Vert^{2}
\nonumber
\\
&\qquad
+4\eta^{2}\left\Vert\sum_{s=0}^{k-1}\left(\left(W^{k-1-s}-\frac{1}{N}1_{N}1_{N}^{T}\right)\otimes I_{d}\right)\nabla F\left(x^{(s)}\right)\right\Vert^{2}
\nonumber \\
&\qquad\qquad+4\eta^{2}\left\Vert\sum_{s=0}^{k-1}\left(\left(W^{k-1-s}-\frac{1}{N}1_{N}1_{N}^{T}\right)\otimes I_{d}\right)\xi^{(s+1)}\right\Vert^{2}
\nonumber
\\
&\qquad\qquad\qquad\qquad
+8\eta\left\Vert\sum_{s=0}^{k-1}\left(\left(W^{k-1-s}-\frac{1}{N}1_{N}1_{N}^{T}\right)\otimes I_{d}\right)w^{(s+1)}\right\Vert^{2}. 
\label{eq:inter-estimate}
\end{align}
Note that
\begin{align}
&4\eta^{2}\left\Vert\sum_{s=0}^{k-1}\left(\left(W^{k-1-s}-\frac{1}{N}1_{N}1_{N}^{T}\right)\otimes I_{d}\right)\nabla F\left(x^{(s)}\right)\right\Vert^{2} \nonumber
\\
&\leq
4\eta^{2}\left(\sum_{s=0}^{k-1}\left\Vert\left(W^{k-1-s}-\frac{1}{N}1_{N}1_{N}^{T}\right)\otimes I_{d}\right\Vert
\cdot\left\Vert\nabla F\left(x^{(s)}\right)\right\Vert\right)^{2} \nonumber
\\
&\leq
4\eta^{2}\left(\sum_{s=0}^{k-1}\left\Vert W^{k-1-s}-\frac{1}{N}1_{N}1_{N}^{T}\right\Vert
\cdot\left\Vert\nabla F\left(x^{(s)}\right)\right\Vert\right)^{2} \nonumber
\\
&=
4\eta^{2}\left(\sum_{s=0}^{k-1}\bar{\gamma}^{k-1-s}
\cdot\left\Vert\nabla F\left(x^{(s)}\right)\right\Vert\right)^{2} \nonumber
\\
&=4\eta^{2}\left(\sum_{s=0}^{k-1}\bar{\gamma}^{k-1-s}\right)^{2}\left(\frac{\sum_{s=0}^{k-1}\bar{\gamma}^{k-1-s}
\cdot\left\Vert\nabla F\left(x^{(s)}\right)\right\Vert}{\sum_{s=0}^{k-1}\bar{\gamma}^{k-1-s}}\right)^{2} \nonumber
\\
&\leq
4\eta^{2}\left(\sum_{s=0}^{k-1}\bar{\gamma}^{k-1-s}\right)^{2}
\sum_{s=0}^{k-1}\frac{\bar{\gamma}^{k-1-s}}{\sum_{s=0}^{k-1}\bar{\gamma}^{k-1-s}}\left\Vert\nabla F\left(x^{(s)}\right)\right\Vert^{2}, \label{last-ineq-dev-from-mean}
\end{align}
where we used Jensen's inequality in the last step above,
and the fact that $W^{k-1-s}$ has eigenvalues
$(\lambda_{i}^{W})^{k-1-s}$
with $1=\lambda_{1}^{W}>\lambda_{2}^{W}\geq\cdots\geq\lambda_{N}^{W}>-1$, and hence
$\left\Vert W^{k-1-s}-\frac{1}{N}1_{N}1_{N}^{T}\right\Vert
=\max\{|\lambda_{2}^{W}|^{k-1-s},|\lambda_{N}^W|^{k-1-s}\}=\bar{\gamma}^{k-1-s}$.
Recall from Lemma~\ref{lem:0} that for every $k=0,1,2,\ldots$, 
$\mathbb{E}\left[\left\Vert\nabla F\left(x^{(k)}\right)\right\Vert^{2}\right]
\leq D^{2}$,
where $D$ is defined in \eqref{eqn:D1}.
Therefore, by \eqref{last-ineq-dev-from-mean}, we have
\begin{align*}
&4\eta^{2}\mathbb{E}\left[\left\Vert\sum_{s=0}^{k-1}\left(\left(W^{k-1-s}-\frac{1}{N}1_{N}1_{N}^{T}\right)\otimes I_{d}\right)\nabla F\left(x^{(s)}\right)\right\Vert^{2}\right]
\\
&\leq
4D^{2}\eta^{2}\left(\sum_{s=0}^{k-1}\bar{\gamma}^{k-1-s}\right)^{2}
\sum_{s=0}^{k-1}\frac{\bar{\gamma}^{k-1-s}}{\sum_{s=0}^{k-1}\bar{\gamma}^{k-1-s}}
\leq 
4D^{2}\eta^{2}\frac{1}{(1-\bar{\gamma})^{2}}.
\end{align*}
Similarly, we can show that
\begin{align*}
4\left\Vert\left(\left(W^{k}-\frac{1}{N}1_{N}1_{N}^{T}\right)\otimes I_{d}\right)x^{(0)}\right\Vert^{2}
&\leq
4\left\Vert\left(W^{k}-\frac{1}{N}1_{N}1_{N}^{T}\right)\otimes I_{d}\right\Vert^{2} \left\Vert x^{(0)}\right\Vert^{2}
\\
&\leq 4\bar{\gamma}^{2k}\left\Vert x^{(0)}\right\Vert^{2}.
\end{align*}
It follows from \eqref{eq:inter-estimate} that 
\begin{align*}
&\sum_{i=1}^{N}\mathbb{E}\left\Vert x_{i}^{(k)}-\bar{x}^{(k)}\right\Vert^{2} 
\\
&=\left\Vert x^{(k)}-\frac{1}{N}\left(\left(1_{N}1_{N}^{T}\right)\otimes I_{d}\right)x^{(k)}\right\Vert^{2}
\\
&\leq
4\bar{\gamma}^{2k}\mathbb{E}\left\Vert x^{(0)}\right\Vert^{2}
+4D^{2}\eta^{2}\frac{1}{(1-\bar{\gamma})^{2}}
+4\eta^{2}\sum_{s=0}^{k-1}\mathbb{E}\left\Vert\left(\left(W^{k-1-s}-\frac{1}{N}1_{N}1_{N}^{T}\right)\otimes I_{d}\right)\xi^{(s+1)}\right\Vert^{2}
\\
&\qquad
+8\eta\sum_{s=0}^{k-1}\mathbb{E}\left\Vert\left(\left(W^{k-1-s}-\frac{1}{N}1_{N}1_{N}^{T}\right)\otimes I_{d}\right)w^{(s+1)}\right\Vert^{2}
\\
&\leq
4\bar{\gamma}^{2k}\mathbb{E}\left\Vert x^{(0)}\right\Vert^{2}
+4D^{2}\eta^{2}\frac{1}{(1-\bar{\gamma})^{2}}
+4\eta^{2}\sum_{s=0}^{k-1}\left\Vert W^{k-1-s}-\frac{1}{N}1_{N}1_{N}^{T}\right\Vert^{2}
\mathbb{E}\left\Vert \xi^{(s+1)}\right\Vert^{2}
\\
&\qquad
+8\eta\sum_{s=0}^{k-1}\left\Vert W^{k-1-s}-\frac{1}{N}1_{N}1_{N}^{T}\right\Vert^{2}
\mathbb{E}\left\Vert w^{(s+1)}\right\Vert^{2}
\\
&\leq
4\bar{\gamma}^{2k}\mathbb{E}\left\Vert x^{(0)}\right\Vert^{2}
+4D^{2}\eta^{2}\frac{1}{(1-\bar{\gamma})^{2}}
+4\sigma^{2}N\eta^{2}\sum_{s=0}^{k-1}\bar{\gamma}^{2(k-1-s)}
+8dN\eta\sum_{s=0}^{k-1}\bar{\gamma}^{2(k-1-s)}  \\
&\leq
4\bar{\gamma}^{2k}\mathbb{E}\left\Vert x^{(0)}\right\Vert^{2}
+\frac{4D^{2}\eta^{2}}{(1-\bar{\gamma})^{2}}
+\frac{4\sigma^{2}N\eta^{2}}{(1-\bar{\gamma}^{2})}
+\frac{8dN\eta}{(1-\bar{\gamma}^{2})}.
\end{align*} 
The proof is complete.
\hfill $\Box$ 


\subsection{Proof of Lemma~\ref{lem:2}}
By Lemma~\ref{lem:1}, we can compute that
\begin{align*}
\mathbb{E}\left\Vert\mathcal{E}_{k+1}\right\Vert^{2}&=\mathbb{E}\left\Vert
\frac{1}{N}\sum_{i=1}^{N}\left(\nabla f_{i}\left(x_{i}^{(k)}\right)
-\nabla f_{i}\left(\bar{x}^{(k)}\right)\right)\right\Vert^{2}
\\
&\leq
\frac{1}{N^{2}}\sum_{i=1}^{N}
N\mathbb{E}\left\Vert
\nabla f_{i}\left(x_{i}^{(k)}\right)
-\nabla f_{i}\left(\bar{x}^{(k)}\right)\right\Vert^{2}
\\
&\leq\frac{1}{N}L^{2}\sum_{i=1}^{N}
\mathbb{E}\left\Vert
x_{i}^{(k)}
-\bar{x}^{(k)}\right\Vert^{2}
\\
&\leq
\frac{4L^{2}\bar{\gamma}^{2k}}{N}\mathbb{E}\left\Vert x^{(0)}\right\Vert^{2}
+\frac{4L^{2}D^{2}\eta^{2}}{N(1-\bar{\gamma})^{2}}
+\frac{4L^{2}\sigma^{2}\eta^{2}}{(1-\bar{\gamma}^{2})}
+\frac{8L^{2}d\eta}{(1-\bar{\gamma}^{2})}.
\end{align*} 
The proof is complete.
\hfill $\Box$ 

\subsection{Proof of Lemma~\ref{lem:3}}

\rev{In this proof, we aim to show that the mean of the iterates $\bar{x}^{(k)}$ which is defined in \eqref{eq:avg-error}
is close to $x_{k}$ in $L^{2}$ distance, where $x_{k}$ is defined in \eqref{eq:x_k} which is 
an Euler-Maruyama discretization of the continuous-time overdamped Langevin SDE in \eqref{eq:over-N}.}

\rev{First,} we can compute that
\begin{equation*}
\bar{x}^{(k+1)}-x_{k+1}=\bar{x}^{(k)}-x_{k}
-\frac{\eta}{N}\left[\nabla f\left(\bar{x}^{(k)}\right)-\nabla f(x_{k})\right]
+\eta\mathcal{E}_{k+1}
-\eta\bar{\xi}^{(k+1)},
\end{equation*}
where
$\mathcal{E}_{k+1}
=\frac{1}{N} \nabla f\left(\bar{x}^{(k)}\right)-\frac{1}{N}\sum_{i=1}^{N}\nabla f_{i}\left(x_{i}^{(k)}\right)$,
and this implies that
\begin{align}
&\left\Vert\bar{x}^{(k+1)}-x_{k+1}\right\Vert^{2}
\nonumber
\\
&=\left\Vert\bar{x}^{(k)}-x_{k}
-\frac{\eta}{N}\left[\nabla f\left(\bar{x}^{(k)}\right)-\nabla f(x_{k})\right]\right\Vert^{2}
+\eta^{2}\left\Vert\mathcal{E}_{k+1}-\bar{\xi}^{(k+1)}\right\Vert^{2}
\nonumber
\\
&\qquad\qquad
+2\left\langle\bar{x}^{(k)}-x_{k}
-\frac{\eta}{N}\left[\nabla f\left(\bar{x}^{(k)}\right)-\nabla f(x_{k})\right]
,\eta\mathcal{E}_{k+1}-\eta\bar{\xi}^{(k+1)}\right\rangle
\nonumber
\\
&=\left\Vert\bar{x}^{(k)}-x_{k}\right\Vert^{2}
+\eta^{2}\left\Vert\frac{1}{N}\left[\nabla f\left(\bar{x}^{(k)}\right)-\nabla f(x_{k})\right]\right\Vert^{2}
\nonumber
\\
&\qquad
-2\left\langle\bar{x}^{(k)}-x_{k},\eta\frac{1}{N}\left[\nabla f\left(\bar{x}^{(k)}\right)-\nabla f(x_{k})\right]\right\rangle
+\eta^{2}\left\Vert\mathcal{E}_{k+1}-\bar{\xi}^{(k+1)}\right\Vert^{2}
\nonumber
\\
&\qquad\qquad
+2\left\langle\bar{x}^{(k)}-x_{k}
-\eta\frac{1}{N}\left[\nabla f\left(\bar{x}^{(k)}\right)-\nabla f(x_{k})\right]
,\eta\mathcal{E}_{k+1}-\eta\bar{\xi}^{(k+1)}\right\rangle
\nonumber
\\
&\leq
\left\Vert\bar{x}^{(k)}-x_{k}\right\Vert^{2}
+\eta^{2}L\left\langle\bar{x}^{(k)}-x_{k},\frac{1}{N}\left[\nabla f\left(\bar{x}^{(k)}\right)-\nabla f(x_{k})\right]\right\rangle
\nonumber
\\
&\qquad
-2\left\langle\bar{x}^{(k)}-x_{k},\eta\frac{1}{N}\left[\nabla f\left(\bar{x}^{(k)}\right)-\nabla f(x_{k})\right]\right\rangle
+\eta^{2}\left\Vert\mathcal{E}_{k+1}-\bar{\xi}^{(k+1)}\right\Vert^{2}
\nonumber
\\
&\qquad\qquad
+2\left\langle\bar{x}^{(k)}-x_{k}
-\eta\frac{1}{N}\left[\nabla f\left(\bar{x}^{(k)}\right)-\nabla f(x_{k})\right]
,\eta\mathcal{E}_{k+1}-\eta\bar{\xi}^{(k+1)}\right\rangle
\nonumber
\\
&\leq
\left(1-2\eta\mu\left(1-\frac{\eta L}{2}\right)\right)\left\Vert\bar{x}^{(k)}-x_{k}\right\Vert^{2}
+\eta^{2}\left\Vert\mathcal{E}_{k+1}-\bar{\xi}^{(k+1)}\right\Vert^{2}
\nonumber
\\
&\qquad\qquad
+2\left\langle\bar{x}^{(k)}-x_{k}
-\eta\frac{1}{N}\left[\nabla f\left(\bar{x}^{(k)}\right)-\nabla f(x_{k})\right]
,\eta\mathcal{E}_{k+1}-\eta\bar{\xi}^{(k+1)}\right\rangle,\label{take:expect}
\end{align} 
where we used $L$-smoothness of $\frac{1}{N}f$ to obtain the second term after
the first inequality above 
and $\mu$-strongly convexity of $\frac{1}{N}f$ (inequalities \eqref{ineq-str-cvx-4}-\eqref{ineq-str-cvx-5} apply to the function $\frac{1}{N}f$ as well if we replace the smoothness constant $L_\eta$ by $L$) and the assumption that $\eta< 2/L$ to obtain the first term after the second inequality above. Note that according to \eqref{gradient:noise:i}, $\bar{\xi}^{(k+1)}$ has mean zero conditional 
on the natural filtration of the iterates till time $k$ and by Lemma~\ref{lem:2},
\begin{equation}\label{apply:tilde}
\mathbb{E}\left\Vert\mathcal{E}_{k+1}\right\Vert^{2}
\leq
\frac{4L^{2}\bar{\gamma}^{2k}}{N}\mathbb{E}\left\Vert x^{(0)}\right\Vert^{2}
+\frac{4L^{2}D^{2}\eta^{2}}{N(1-\bar{\gamma})^{2}}
+\frac{4L^{2}\sigma^{2}\eta^{2}}{(1-\bar{\gamma}^{2})}
+\frac{8L^{2}d\eta}{(1-\bar{\gamma}^{2})}. 
\end{equation}
Also, we recall from \eqref{bar:grad:noise} that
$\mathbb{E}\left\Vert\bar{\xi}^{(k+1)}\right\Vert^{2}
\leq\frac{\sigma^{2}}{N}$.
By taking expectations in \eqref{take:expect}
and applying \eqref{gradient:noise:i}, we get
\begin{align}
&\mathbb{E}\left\Vert\bar{x}^{(k+1)}-x_{k+1}\right\Vert^{2}
\nonumber
\\
&\leq
\left(1-2\eta\mu\left(1-\frac{\eta L}{2}\right)\right)\mathbb{E}\left\Vert\bar{x}^{(k)}-x_{k}\right\Vert^{2}
+\eta^{2}\mathbb{E}\left\Vert\mathcal{E}_{k+1}-\bar{\xi}^{(k+1)}\right\Vert^{2}
\nonumber
\\
&\qquad\qquad
+\mathbb{E}\left[2\left\langle\bar{x}^{(k)}-x_{k}
-\eta\frac{1}{N}\left[\nabla f\left(\bar{x}^{(k)}\right)-\nabla f(x_{k})\right]
,\eta\mathcal{E}_{k+1}-\eta\bar{\xi}^{(k+1)}\right\rangle\right]
\\
&=\left(1-2\eta\mu\left(1-\frac{\eta L}{2}\right)\right)\mathbb{E}\left\Vert\bar{x}^{(k)}-x_{k}\right\Vert^{2}
+\eta^{2}\mathbb{E}\left\Vert\mathcal{E}_{k+1}\right\Vert^{2}
+\eta^{2}\mathbb{E}\left\Vert\bar{\xi}^{(k+1)}\right\Vert^{2}
\nonumber
\\
&\qquad\qquad
+\mathbb{E}\left[2\left\langle\bar{x}^{(k)}-x_{k}
-\eta\frac{1}{N}\left[\nabla f\left(\bar{x}^{(k)}\right)-\nabla f(x_{k})\right]
,\eta\mathcal{E}_{k+1}\right\rangle\right]
\\
&\leq
\left(1-2\eta\mu\left(1-\frac{\eta L}{2}\right)\right)\mathbb{E}\left\Vert\bar{x}^{(k)}-x_{k}\right\Vert^{2}
+\eta^{2}\mathbb{E}\left\Vert\mathcal{E}_{k+1}\right\Vert^{2}
+\eta^{2}\frac{\sigma^{2}}{N}
\nonumber
\\
&\qquad\qquad
+2(1+\eta L)
\eta\mathbb{E}\left[\left\Vert\bar{x}^{(k)}-x_{k}\right\Vert\cdot\left\Vert\mathcal{E}_{k+1}\right\Vert\right], \label{last-ineq-to-ref-5}
\end{align}
where we used $L$-smoothness of $\frac{1}{N}f$. For any $x,y\geq 0$ and $c>0$, we have the inequality $2xy\leq cx^{2}+\frac{y^{2}}{c}$. Applying this inequality with $c= \frac{\mu(1-\frac{\eta L}{2})}{1+\eta L}$ to \eqref{last-ineq-to-ref-5}, we obtain
\begin{align*}
&\mathbb{E}\left\Vert\bar{x}^{(k+1)}-x_{k+1}\right\Vert^{2}
\\
&\leq
\left(1-2\eta\mu\left(1-\frac{\eta L}{2}\right)\right)\mathbb{E}\left\Vert\bar{x}^{(k)}-x_{k}\right\Vert^{2}
+\eta^{2}\mathbb{E}\left\Vert\mathcal{E}_{k+1}\right\Vert^{2}
+\eta^{2}\frac{\sigma^{2}}{N}
\\
&\qquad
+(1+\eta L)\eta
\left(\frac{\mu(1-\frac{\eta L}{2})}{1+\eta L}\mathbb{E}\left\Vert\bar{x}^{(k)}-x_{k}\right\Vert^{2}
+\frac{1+\eta L}{\mu(1-\frac{\eta L}{2})}\mathbb{E}\left\Vert\mathcal{E}_{k+1}\right\Vert^{2}\right)
\\
&=\left(1-\eta\mu\left(1-\frac{\eta L}{2}\right)\right)\mathbb{E}\left\Vert\bar{x}^{(k)}-x_{k}\right\Vert^{2}
+\eta\left(\eta+\frac{(1+\eta L)^{2}}{\mu(1-\frac{\eta L}{2})}\right)\mathbb{E}\left\Vert\mathcal{E}_{k+1}\right\Vert^{2}
+\eta^{2}\frac{\sigma^{2}}{N}\,,
\end{align*}
where we note that the leading term
$1-\eta\mu\left(1-\frac{\eta L}{2}\right)\in[0,1)$ by our assumption on stepsize $\eta$. By applying \eqref{apply:tilde}, we get
\begin{align*}
&\mathbb{E}\left\Vert\bar{x}^{(k+1)}-x_{k+1}\right\Vert^{2}
\\
&\leq\left(1-\eta\mu\left(1-\frac{\eta L}{2}\right)\right)
\mathbb{E}\left\Vert\bar{x}^{(k)}-x_{k}\right\Vert^{2}
\\
&\quad
+\eta\left(\eta+\frac{(1+\eta L)^{2}}{\mu(1-\frac{\eta L}{2})}\right)
\left(\frac{4L^{2}\bar{\gamma}^{2k}}{N}\mathbb{E}\left\Vert x^{(0)}\right\Vert^{2}
+\frac{4L^{2}D^{2}\eta^{2}}{N(1-\bar{\gamma})^{2}}
+\frac{4L^{2}\sigma^{2}\eta^{2}}{(1-\bar{\gamma}^{2})}
+\frac{8L^{2}d\eta}{(1-\bar{\gamma}^{2})}\right)
+\eta^{2}\frac{\sigma^{2}}{N},
\end{align*}
for every $k$. Note that $\mathbb{E}\left\Vert\bar{x}^{(0)}-x_{0}\right\Vert^{2}=0$.
By iterating the above equation, we get
\begin{align*}
&\mathbb{E}\left\Vert\bar{x}^{(k)}-x_{k}\right\Vert^{2}
\\
&\leq
\sum_{i=0}^{k-1}
\left(1-\eta\mu\left(1-\frac{\eta L}{2}\right)\right)^{i}
\\
&\qquad\qquad
\cdot\left(\eta\left(\eta+\frac{(1+\eta L)^{2}}{\mu(1-\frac{\eta L}{2})}\right)
\left(\frac{4L^{2}D^{2}\eta^{2}}{N(1-\bar{\gamma})^{2}}
+\frac{4L^{2}\sigma^{2}\eta^{2}}{(1-\bar{\gamma}^{2})}
+\frac{8L^{2}d\eta}{(1-\bar{\gamma}^{2})}
\right)+\eta^{2}\frac{\sigma^{2}}{N}\right)
\\
&\qquad
+\sum_{i=0}^{k-1}
\left(1-\eta\mu\left(1-\frac{\eta L}{2}\right)\right)^{i}
\eta\left(\eta+\frac{(1+\eta L)^{2}}{\mu(1-\frac{\eta L}{2})}\right)
\frac{4L^{2}\bar{\gamma}^{2(k-i)}}{N}\mathbb{E}\left\Vert x^{(0)}\right\Vert^{2}
\\
&=\frac{1-
\left(1-\eta\mu\left(1-\frac{\eta L}{2}\right)\right)^{k}}
{1-\left(1-\eta\mu\left(1-\frac{\eta L}{2}\right)\right)}
\\
&\qquad\qquad
\cdot\left(\eta\left(\eta+\frac{(1+\eta L)^{2}}{\mu(1-\frac{\eta L}{2})}\right)
\left(\frac{4L^{2}D^{2}\eta^{2}}{N(1-\bar{\gamma})^{2}}
+\frac{4L^{2}\sigma^{2}\eta^{2}}{(1-\bar{\gamma}^{2})}
+\frac{8L^{2}d\eta}{(1-\bar{\gamma}^{2})}\right)
+\eta^{2}\frac{\sigma^{2}}{N}\right)
\\
&\qquad
+\frac{\bar{\gamma}^{2k}-
\left(1-\eta\mu\left(1-\frac{\eta L}{2}\right)\right)^{k}}
{1-\left(1-\eta\mu\left(1-\frac{\eta L}{2}\right)\right)(\bar{\gamma})^{-2}}
\frac{4L^{2}}{N}\mathbb{E}\left\Vert x^{(0)}\right\Vert^{2}.
\end{align*}
Hence\footnote{We recall that the last term is proportional to the ratio $h(x,y) = \frac{x^k - y^k}{x-y}$ with $x=\bar{\gamma}^2$ and $y =\left(1-\eta\mu\left(1-\frac{\eta L}{2}\right)\right)$ and according to our notation (see Section~\ref{sec:background}), we interpret this ratio as $ky^{k-1}$ in the special case when $x=y$.}, for every $k$.
\begin{align*}
\mathbb{E}\left\Vert\bar{x}^{(k)}-x_{k}\right\Vert^{2}
&\leq\frac{\eta\left(\eta+\frac{(1+\eta L)^{2}}{\mu(1-\frac{\eta L}{2})}\right)
\left(\frac{4L^{2}D^{2}\eta^{2}}{N(1-\bar{\gamma})^{2}}
+\frac{4L^{2}\sigma^{2}\eta^{2}}{(1-\bar{\gamma}^{2})}
+\frac{8L^{2}d\eta}{(1-\bar{\gamma}^{2})}
\right)+\eta^{2}\frac{\sigma^{2}}{N}}{
1-\left(1-\eta\mu\left(1-\frac{\eta L}{2}\right)\right)}
\\
&\qquad\qquad
+\frac{\bar{\gamma}^{2k}-
\left(1-\eta\mu\left(1-\frac{\eta L}{2}\right)\right)^{k}}
{1-\left(1-\eta\mu\left(1-\frac{\eta L}{2}\right)\right)(\bar{\gamma})^{-2}}
\frac{4L^{2}}{N}\mathbb{E}\left\Vert x^{(0)}\right\Vert^{2}
\\
&=\frac{\eta\left(\eta+\frac{(1+\eta L)^{2}}{\mu(1-\frac{\eta L}{2})}\right)
\left(\frac{4L^{2}D^{2}\eta}{N(1-\bar{\gamma})^{2}}
+\frac{4L^{2}\sigma^{2}\eta}{(1-\bar{\gamma}^{2})}
+\frac{8L^{2}d}{(1-\bar{\gamma}^{2})}
\right)+\eta\frac{\sigma^{2}}{N}}{
\mu\left(1-\frac{\eta L}{2}\right)}
\\
&\qquad\qquad
+\frac{\bar{\gamma}^{2k}-
\left(1-\eta\mu\left(1-\frac{\eta L}{2}\right)\right)^{k}}
{\bar{\gamma}^{2}-1+\eta\mu\left(1-\frac{\eta L}{2}\right)}
\frac{4L^{2}\bar{\gamma}^{2}}{N}\mathbb{E}\left\Vert x^{(0)}\right\Vert^{2}.
\end{align*}
The proof is complete.
\hfill $\Box$ 


\subsection{Proof of Lemma~\ref{bound:Gibbs}}
The proof of Lemma~\ref{bound:Gibbs} will be provided in Appendix~\ref{sec:additional}.
\hfill $\Box$

\section{Proofs of Technical Results in Section~\ref{proof:underdamped}}

\subsection{Proof of Lemma~\ref{lemma-l2-underdamped}}

{In this proof, we aim to provide uniform $L^2$ bounds on the iterates $v^{(k)}, x^{(k)}$ in \eqref{eq:underdamped}--\eqref{eq:underdamped2}.} 

Based on the expression \eqref{eq:underdamped} for $v^{(k+1)}$, first we rewrite the DE-SGHMC iterates \eqref{eq:underdamped2} for $k\geq 1$ as
\begin{eqnarray} 
x^{(k+1)}&=&\mathcal{W}x^{(k)}+\eta v^{(k+1)},  \nonumber \\
&=& \mathcal{W}x^{(k)}+\eta \left( v^{(k)}-\eta\left[\gamma v^{(k)}+\nabla F\left(x^{(k)}\right)+\xi^{(k+1)}\right]+\sqrt{2\gamma\eta}w^{(k+1)}\right)\nonumber\\
&=& \mathcal{W}x^{(k)} - \eta^2 \nabla F\left(x^{(k)}\right) + \eta (1-\gamma \eta) v^{(k)} + \Delta^{(k+1)}\nonumber \\
&=& \mathcal{W}x^{(k)} - \eta^2 \nabla F\left(x^{(k)}\right) +  (1-\gamma \eta) \left(x^{(k)}-\mathcal{W}x^{(k-1)}\right) + \Delta^{(k+1)}\,, \label{eq-x-iter-under}
\end{eqnarray}
where 
 $$ \Delta^{(k+1)} := -\eta^2 \xi^{(k+1)} + \eta \sqrt{2\gamma\eta}w^{(k+1)}.$$
If we consider
\beq 
\alpha=\eta^2,
\label{def-alpha}
\eeq  
then \eqref{eq-x-iter-under} is equivalent to
\begin{eqnarray} 
x^{(k+1)} &=& \mathcal{W}x^{(k)} - \alpha \nabla F\left(x^{(k)}\right) + \beta \left(x^{(k)}-\mathcal{W}x^{(k-1)}\right) + \Delta^{(k+1)}\nonumber\\
&=&x^{(k)} - \alpha \nabla \bar{F}\left(x^{(k)}\right) + \beta \left(x^{(k)}-x^{(k-1)}\right) + \bar{\Delta}^{(k+1)}\,, \label{eq-perturbed-heavy-ball}
\end{eqnarray}
where $\beta=1-\gamma\eta$ and
\begin{equation*} 
\bar{F}(x) := F(x) + 
\frac{1}{2\alpha} x^T (I-\mathcal{W})x, \quad
  \bar{\Delta}^{(k+1)} := \Delta^{(k+1)} + \beta (I-\mathcal{W}) x^{(k-1)}.  
\end{equation*}
Let $x_\alpha^*$ be the unique minimizer of $\bar{F}(x)$. Since $\alpha>0$, the function $\bar{F}(x)$ is strongly convex with parameter $\mu$ and smooth with parameter 
\beq L_\alpha = L + \frac{1-\lambda_N^W}{\alpha}.
\label{def-Lalpha}
\eeq 
In the special case, $\bar{\Delta}^{(k+1)} = 0$, the iterations \eqref{eq-perturbed-heavy-ball} would exactly coincide with the iterations of the heavy-ball method of Polyak applied to the function $\bar{F}(x)$ with momentum parameter $\beta$. Therefore, we can view the iterations \eqref{eq-perturbed-heavy-ball} as a perturbed heavy-ball method with perturbation $\bar{\Delta}^{(k+1)}$ at iteration $k$. For the heavy-ball method, linear convergence to the optimum of $\bar{F}(x)$ is obtained if the parameters $\alpha$ and $\beta$ are properly chosen. In the rest of the proof, we will extend the proof technique of \citet{polyak-linear-conv} for the convergence of the heavy-ball method to allow perturbations $\bar{\Delta}^{(k+1)}$ and show that the second moments of the iterates remain bounded. First of all, we notice that the assumptions \eqref{ineq-step-cond}--\eqref{ineq-momentum-cond} on the choice of $\eta$ and $\beta$ can be restated in terms of conditions on $\alpha=\eta^2$ as follows:
\begin{align} 
&\alpha \in \bigg(0, \frac{1+\lambda_N^W}{2(L+\mu)}\bigg], \label{eq-cond-alpha}
\\
&0\leq \beta \leq \frac{1+\lambda_N^W - 4\alpha\mu}{4}, \label{eq-cond-beta}\\
&\beta^2 
\leq c_1\mu^3 \alpha^3 \frac{(1+\lambda_N^W)}{64}, \label{eq-cond-beta2}
\end{align}
where we see after a straightforward computation that the constants $c_1$ defined by \eqref{ineq-momentum-cond} can be rewritten in terms of the smoothness constant $L_\alpha$ as
\beq c_1 =  \frac{1}{2}\frac{\alpha \mu L_\alpha}{(1-\beta)(L_\alpha+\mu) +2 L_\alpha \beta}.
\label{eq-c1}
\eeq
In particular, the condition \eqref{eq-cond-beta} implies $\beta \in [0, \frac{1}{2})$ due to the fact that $\lambda_N^W < 1$ and $\alpha,\mu>0$. Next, we introduce
\beq\label{def-pk} 
\k{p} = \frac{\beta}{1-\beta} \left( \k{x} - \km{x}\right), 
\eeq
for $k\geq 1$. From the update rule \eqref{eq-perturbed-heavy-ball}, it follows that 
\begin{align*} 
\kp{x} + \kp{p} &=\frac{1}{1-\beta}x^{(k+1)} - \frac{\beta}{1-\beta}x^{(k)} 
\\
&=\k{x} + \k{p} - \frac{\alpha}{1-\beta} \nabla \bar{F}\left(\k{x}\right) + \frac{1}{1-\beta} \kp{\bar{\Delta}}. 
\end{align*}
This implies that
\begin{align*} 
&\left\|\kp{x} + \kp{p} - x_\alpha^*\right\|^2 
\\
&= \left\|x^{(k)} + p^{(k)} - x_\alpha^*\right\|^2 
- \frac{2\alpha}{1-\beta}\left\langle x^{(k)}-x_\alpha^*, \nabla \bar{F}\left(x^{(k)}\right)\right\rangle \\
&\qquad\qquad+ \frac{\alpha^2}{(1-\beta)^2} \left\|\nabla \bar{F}\left(x^{
(k)}\right)\right\|^2 - \frac{2\alpha\beta}{(1-\beta)^2}\left\langle x^{(k)}-x^{(k-1)}, \nabla \bar{F}\left(x^{(k)}\right)\right\rangle \\
&\qquad+ \frac{1}{(1-\beta)^2}\left\|\bar{\Delta}^{(k+1)}\right\|^2 + 2 \left\langle x^{(k)} + p^{(k)} -\frac{\alpha}{1-\beta}\nabla \bar{F}\left(x^{(k)}\right) - x_\alpha^*, \frac{1}{(1-\beta)}\bar{\Delta}^{(k+1)} \right\rangle\,,
\end{align*}
where we used the definition \eqref{def-pk} of $p^{(k)}$. Next, we bound the last two terms by applying the Cauchy-Schwarz inequality: 
\begin{align*} 
\mathbb{E}_k  \left[
 \frac{1}{(1-\beta)^2}\left\|\bar{\Delta}^{(k+1)}\right\|^2 \right]
 &\leq \mathbb{E}_k \left[ \frac{1}{(1-\beta)^2} \left(2\left\|{\Delta}^{(k+1)}\right\|^2 + 2\left\|\beta(I-\mathcal{W})x^{
 (k-1)}\right\|^2\right)\right] \\
 &\leq \frac{2}{(1-\beta)^2}\mathbb{E}_k \left\|{\Delta}^{(k+1)}\right\|^2 +
 \frac{2\beta^2}{(1-\beta)^2}\left(1-\lambda_N^W\right)^2  \left\|x^{
 (k-1)}\right\|^2 \\
 &\leq\frac{2}{(1-\beta)^2}\left(\eta^4 \sigma^2 N + \eta^3 2\gamma Nd\right) + \frac{2\beta^2}{(1-\beta)^2}\left(1-\lambda_N^W\right)^2 \left\|x^{
 (k-1)}\right\|^2\,,
\end{align*}
where $\mathbb{E}_k$ denotes the conditional expectation with respect to the natural filtration up to time $k$ (which includes the history of the iterations up to (and including) $x^{(k)}$). Similarly,
\begin{align*} 
&\mathbb{E}_k \left[ 2\left\langle x^{(k)} + p^{(k)} -\frac{\alpha}{1-\beta}\nabla \bar{F}\left(x^{(k)}\right) - x_\alpha^*, \frac{1}{(1-\beta)}\bar{\Delta}^{(k+1)} \right\rangle \right] \\
&=  2 \left\langle x^{(k)} + p^{(k)} -\frac{\alpha}{1-\beta}\nabla \bar{F}\left(x^{(k)}\right) - x_\alpha^*, \frac{1}{(1-\beta)}\beta (I-\mathcal{W})x^{(k-1)} \right\rangle   \\
& \leq c_1 \left\| x^{(k)} + p^{(k)} -\frac{\alpha}{1-\beta}\nabla \bar{F}\left(x^{(k)}\right) - x_\alpha^*\right\|^2
+ \frac{1}{c_1}\frac{\beta^2}{(1-\beta)^2}\left(1-\lambda_N^W\right)^2 \left\|x^{(k-1)}\right\|^2\,, 
\end{align*}
where we use Cauchy-Schwarz and $c_1$ is the constant given by \eqref{eq-c1}. Combining everything, 
\begin{align} 
&\mathbb{E}_k \left\|\kp{x} + \kp{p} - x_\alpha^*\right\|^2
\nonumber
\\
&\leq \left\|x^{(k)} + p^{(k)} - x_\alpha^*\right\|^2 - \frac{2\alpha}{1-\beta}\left\langle x^{(k)}-x_\alpha^*, \nabla \bar{F}\left(x^{(k)}\right)\right\rangle\nonumber \\
&\qquad+ \frac{\alpha^2}{(1-\beta)^2} \left\|\nabla \bar{F}\left(x^{
(k)}\right)\right\|^2 - \frac{2\alpha\beta}{(1-\beta)^2}\left\langle x^{(k)}-x^{(k-1)}, \nabla \bar{F}\left(x^{(k)}\right)\right\rangle
+ E^{(k+1)}\,, \label{ineq-dist-bound}
\end{align}
where 
\begin{align*}  
E^{(k+1)} &:=\frac{2}{(1-\beta)^2}(\eta^4 \sigma^2 N + \eta^3 2\gamma Nd) 
+ c_1 \left\| x^{(k)} 
 + p^{(k)} -\frac{\alpha}{1-\beta}\nabla \bar{F}\left(x^{(k)}\right) - x_\alpha^*\right\|^2\\
&\qquad\qquad\qquad+ \left(2 + \frac{1}{c_1}\right)\frac{\beta^2}{(1-\beta)^2}\left(1-\lambda_N^W\right)^2 \left\|x^{(k-1)}\right\|^2 \\
&\leq \frac{2}{(1-\beta)^2}\left(\eta^4 \sigma^2 N + \eta^3 2\gamma Nd\right) 
\\
&\qquad+ 2c_1 \left\| x^{(k)} 
 + p^{(k)} - x_\alpha^*\right\|^2 + 2c_1 \frac{\alpha^2}{(1-\beta)^2}\left\|\nabla \bar{F}\left(x^{(k)}\right)\right\|^2  \\
&\qquad\qquad+ \left(2 + \frac{1}{c_1}\right)\frac{\beta^2}{(1-\beta)^2}(1-\lambda_N^W)^2 
\left(2 \left\|x^{(k-1)} - x_\alpha^{*}\right\|^2 +  2\left\|x_\alpha^*\right\|^2\right)\,,
\end{align*}
and in the last step we used the Cauchy-Schwarz inequality, i.e.
$$ 
\left\|x^{
 (k-1)}\right\|^2  \leq 2 \left\|x^{(k-1)} - x_\alpha^{*}\right\|^2 + 2\left\|x_\alpha^*\right\|^2\,.  
$$ 
Since $\bar{F}$ is $\mu$-strongly convex and $L_\alpha$ smooth, we have also
\begin{align*}
&\frac{\mu L_\alpha}{ L_\alpha + \mu} \left\|x^{(k)}-x_\alpha^*\right\|^2 + \frac{1}{L_\alpha+\mu}\left\|\nabla \bar{F}\left(x^{(k)}\right)\right\|^2 \leq \left\langle x^{(k)}- x_\alpha^*,\nabla \bar{F}\left(x^{(k)}\right) \right\rangle\,,
\\
&\bar{F}\left(x^{(k)}\right) - \bar{F}\left(x^{(k-1)}\right) + \frac{\mu}{2}\left\|x^{(k)}- x^{(k-1)}\right\|^2 \leq \left\langle x^{(k)}-x^{(k-1)}, \nabla \bar{F}\left(x^{(k)}\right)\right\rangle\,,
\end{align*} 
(see e.g. \citet{nesterov2013introductory}). These inequalities combined with \eqref{ineq-dist-bound} implies
\begin{align*} 
&\frac{2\alpha \beta}{(1-\beta)^2}\left(\bar{F}\left(x^{(k)}\right)- \bar{F}^*\right) + 
\mathbb{E}_k \left\|\kp{x} + \kp{p} - x_\alpha^*\right\|^2  \\
&\leq \frac{2\alpha\beta}{(1-\beta)^2} \left(\bar{F}\left(x^{(k-1)}\right)- \bar{F}^*\right) 
+ \left\|\k{x} + \k{p} - x_\alpha^*\right\|^2 - \frac{2\alpha\mu L_\alpha}{(1-\beta)(L_\alpha+\mu)}\left\|x^{(k)}-x_\alpha^*\right\|^2 \\
&\qquad - \frac{\alpha\beta \mu}{(1-\beta)^2}\left\|x^{(k)}-x^{(k-1)}\right\|^2  + \frac{\alpha}{(1-\beta)}\left(\frac{\alpha}{1-\beta} - \frac{2}{L_\alpha + \mu}\right) \left\|\nabla \bar{F}\left(x^{(k)}\right)\right\|^2 + E^{(k+1)}\,.
\end{align*}
Plugging the upper bound for $E^{(k+1)}$, we obtain
\begin{align} 
&\frac{2\alpha\beta}{(1-\beta)^2}\left(\bar{F}\left(x^{(k)}\right)- \bar{F}^*\right) + 
\left\|\kp{x} + \kp{p} - x_\alpha^*\right\|^2 
\nonumber
\\
&\leq \frac{2\alpha\beta}{(1-\beta)^2}\left(\bar{F}\left(x^{(k-1)}\right)- \bar{F}^*\right) \nonumber
\\
&\qquad+ \left\|\k{x} + \k{p} - x_\alpha^*\right\|^2(1+2c_1) 
- \frac{2\alpha\mu L_\alpha}{(1-\beta)(L_\alpha+\mu)}\left\|x^{(k)}-x_\alpha^*\right\|^2 
\nonumber
\\
&\qquad - \frac{\alpha\beta \mu}{(1-\beta)^2}\left\|x^{(k)}-x^{(k-1)}\right\|^2  + \frac{\alpha}{(1-\beta)}\left(\frac{\alpha}{1-\beta} - \frac{2}{L_\alpha + \mu} + \frac{2c_1 \alpha}{1-\beta} \right) \left\|\nabla \bar{F}\left(x^{(k)}\right)\right\|^2  \nonumber
\\
&\qquad + \frac{2}{(1-\beta)^2} \left(\eta^4 \sigma^2 N + \eta^3 2\gamma Nd\right)
\nonumber
\\
&\qquad
+\frac{2\beta^2}{(1-\beta)^2}\left(1-\lambda_N^W\right)^2 \left(1+\frac{1}{2c_1}\right) \left(2 \left\|x^{(k-1)} - x_\alpha^{*}\right\|^2 + 2 \left\|x_\alpha^*\right\|^2 \right)\,. \label{ineq-pert-hb-estimate}
\end{align}
By Lemma~\ref{lem-negative-coef}, the coefficient in front of  $\left\|\nabla \bar{F}\left(x^{(k)}\right)\right\|^2$ is negative, i.e. 
 \beq k_{\alpha,\beta}:=\frac{\alpha}{(1-\beta)} \left( \frac{\alpha}{1-\beta} - \frac{2}{L_\alpha + \mu} + \frac{2c_1 \alpha}{1-\beta} \right) < 0.
 \label{def-k-ab}
 \eeq
We then move the term with $\left\|\nabla \bar{F}\left(x^{(k)}\right)\right\|^2$ to the left-hand side of \eqref{ineq-pert-hb-estimate} to obtain
\begin{align} 
&\frac{2\alpha\beta}{(1-\beta)^2}\left(\bar{F}\left(x^{(k)}\right)- \bar{F}^*\right) +
\left\|\kp{x} + \kp{p} - x_\alpha^*\right\|^2 - k_{\alpha,\beta}\left\|\nabla \bar{F}(x^{(k)})\right\|^2
\nonumber
\\
&\leq \frac{2\alpha\beta}{(1-\beta)^2}\left(\bar{F}\left(x^{(k-1)}\right)- \bar{F}^*\right) + \left\|\k{x} + \k{p} - x_\alpha^*\right\|^2(1+2c_1) 
\nonumber
\\
&\qquad
- \frac{2\alpha\mu L_\alpha}{(1-\beta)(L_\alpha+\mu)}\left\|x^{(k)}-x_\alpha^*\right\|^2 
- \frac{\alpha\beta \mu}{(1-\beta)^2}\left\|x^{(k)}-x^{(k-1)}\right\|^2 
\nonumber
\\
&\qquad\qquad+ \frac{2}{(1-\beta)^2} \left(\eta^4 \sigma^2 N + \eta^3 2\gamma Nd\right)
\nonumber
\\   
&\qquad\qquad\qquad+\frac{2\beta^2}{(1-\beta)^2}\left(1-\lambda_N^W\right)^2 \left(1+\frac{1}{2c_1}\right) \left(2 \left\|x^{(k-1)} - x_\alpha^{*}\right\|^2 + 2 \left\|x_\alpha^*\right\|^2 \right)\,. \label{ineq-hb-inexact}
\end{align} 
By standard inequalities for $\mu$-strongly convex functions from \citet[Section 2.1]{nesterov2013introductory}, also 
\beq 
2\mu \left(\bar{F}\left(x^{(k)}\right)- \bar{F}^*\right) 
&\leq& 
\left\|\nabla \bar F\left(x^{(k)}\right)\right\|^2, \label{ineq-str-cvx-1}\\
\left\|x^{(k-1)}-x_\alpha^*\right\|^2 &\leq& \frac{2}{\mu}\left[\bar{F}\left(x^{(k-1)}\right)- \bar{F}\left(x_\alpha^*\right)\right]\,, \label{ineq-str-cvx-2}
\eeq
where $\bar{F}^* := \bar{F}(x_\alpha^*)$ is the global minimum of $\bar{F}$. In particular, by multiplying both sides of the first inequality \eqref{ineq-str-cvx-1} with $-k_{\alpha,\beta}>0$ we obtain
\beq 
 -2k_{\alpha,\beta}\mu  \left(\bar{F}\left(x^{(k)}\right)- \bar{F}^*\right) 
\leq 
-k_{\alpha,\beta}\left\|\nabla \bar F\left(x^{(k)}\right)\right\|^2. \label{ineq-str-cvx-3}
\eeq
Inserting the estimates \eqref{ineq-str-cvx-2} and \eqref{ineq-str-cvx-3} into \eqref{ineq-hb-inexact}, we obtain 
\begin{align*} 
&b\left(\bar{F}\left(x^{(k)}\right)- \bar{F}^*\right) + 
\left\|\kp{x} + \kp{p} - x_\alpha^*\right\|^2  \\
&\leq a\left(\bar{F}\left(x^{(k-1)}\right)- \bar{F}^*\right) 
+ \left\|\k{x} + \k{p} - x_\alpha^*\right\|^2(1+2c_1) - \frac{2\alpha\mu L_\alpha}{(1-\beta)(L_\alpha+\mu)}\left\|x^{(k)}-x_\alpha^*\right\|^2 \\
&\qquad - \frac{\alpha\beta \mu}{(1-\beta)^2}\left\|x^{(k)}-x^{(k-1)}\right\|^2   
+ \frac{2}{(1-\beta)^2} \left(\eta^4 \sigma^2 N + \eta^3 2\gamma Nd\right) + c_2 \|x_\alpha^*\|^2 \,,
\end{align*}
where 
\begin{align} 
&a := \frac{2\alpha\beta}{(1-\beta)^2} + \frac{2c_2}{\mu} \,,\label{a:eqn}
\\
&b:= \frac{2\alpha\beta}{(1-\beta)^2} - 2 k_{\alpha,\beta}\mu
= \frac{2\alpha}{(1-\beta)} \left(\frac{\beta - \mu\alpha}{1-\beta}+ \frac{2\mu}{L_\alpha+\mu} - \frac{2c_1\alpha\mu}{1-\beta}
\right)\,,\label{b:eqn}
\end{align}
where $k_{\alpha,\beta}$ is defined by \eqref{def-k-ab}, and
\begin{equation*} 
c_2 :=  \frac{4\beta^2}{(1-\beta)^2}\left(1-\lambda_N^W\right)^2 \left(1+\frac{1}{c_1}\right).
\end{equation*}
We can also write 
$$ 
\left\|x^{(k)} + p^{(k)}-x_\alpha^*\right\|^2 = \left[z^{(k)}\right]^T M z^{(k)}\,,
$$
where 
$$
z^{(k)} = \left[x^{(k)}-x_\alpha^*, x^{(k)}-x^{(k-1)}\right]^T\,,
\qquad
\text{and} 
\qquad
M = \begin{bmatrix} 
I_d & \frac{\beta}{1-\beta} I_d \\
\frac{\beta}{1-\beta}I_d & \frac{\beta^2}{(1-\beta)^2}I_d
\end{bmatrix}\,.
$$
Therefore, we can write 
\begin{align*} 
&b\left(\bar{F}\left(x^{(k)}\right)- \bar{F}^*\right) + 
 \mathbb{E}_k\left[ \left(z^{(k+1)}\right)^T M \left(z^{(k+1)}\right)\right] \\
&\leq a\left(\bar{F}\left(x^{(k-1)}\right)- \bar{F}^*\right) +  \left(z^{(k)}\right)^T Q \left(z^{(k)}\right) 
+ \frac{2}{(1-\beta)^2} \left(\eta^4 \sigma^2 N + \eta^3 2\gamma Nd\right) 
+ c_2 \|x_\alpha^*\|^2\,, 
\end{align*}
where 
\begin{equation}
Q := \begin{bmatrix}
        \left(1+2c_1 -\frac{2\alpha\mu L_\alpha}{(1-\beta)(L+\mu)}\right)I_d & \frac{\beta}{(1-\beta)}(1+2c_1)I_d \\
        \frac{\beta}{(1-\beta)}(1+2c_1)I_d & \left(\frac{(1+2c_1)\beta^2 -\alpha\beta \mu}{(1-\beta)^2}  \right )I_d
    \end{bmatrix}\,. 
\end{equation}
By Lemma~\ref{lemma-q1} and Lemma~\ref{lemma-q2}, for a given positive scalar $s$, we have 
$s M \succeq Q$ as long as $s\geq q_2$ and $a\leq s b$ as long as $s\geq q_1$ where $q_1,q_2 \in [0,1)$ and are defined by \eqref{def-q1} and \eqref{def-q2} respectively. If we introduce $q:= \max(q_1,q_2)$ and choose $s=q$, then we obtain
\begin{align*}
&b\left(\bar{F}\left(x^{(k)}\right)- \bar{F}^*\right) + 
\mathbb{E}_k \left[ \left(z^{(k+1)}\right)^T M \left(z^{(k+1)}\right) \right] \\
&\leq q\left(b\left(\bar{F}\left(x^{(k-1)}\right)- \bar{F}^*\right) +  \left(z^{(k)}\right)^T M \left(z^{(k)}\right)\right) 
\\
&\qquad\qquad\qquad\qquad\qquad
+ \frac{2}{(1-\beta)^2} \left(\eta^4 \sigma^2 N + \eta^3 2\gamma Nd\right) 
+ c_2 \left\|x_\alpha^*\right\|^2. 
\end{align*}
Let us introduce the Lyapunov function 
\begin{align*}
V_{k+1} &:= \mathbb{E}\left[b\left(\bar{F}\left(x^{(k)}\right)- \bar{F}^*\right) +  \left(z^{(k+1)}\right)^T M \left(z^{(k+1)}\right)\right] 
\\
&= \mathbb{E}\left[b\left(\bar{F}\left(x^{(k)}\right)- \bar{F}^*\right) +  \left(z^{(k+1)}\right)^T M \left(z^{(k+1)}\right)\right]\,,
\end{align*}
for $k\geq 0$. Then, taking expectations, we get
$$ V_{k+1} \leq qV_k +  \frac{2}{(1-\beta)^2} \left(\eta^4 \sigma^2 N + \eta^3 2\gamma Nd\right) 
+ c_2 \left\|x_\alpha^*\right\|^2.  
$$
This recursion implies that
\begin{align}
 &b \mathbb{E}\left[\bar{F}\left(x^{(k)}\right)- \bar{F}^*\right] + \mathbb{E}\left\|x^{(k+1)} + p^{(k+1)}-x_\alpha^*\right\|^2   = V_{k+1}
 \nonumber
 \\
&\leq V_1 q^k + \frac{1}{1-q} 
\left( \frac{2}{(1-\beta)^2} \left(\eta^4 \sigma^2 N + \eta^3 2\gamma Nd\right) 
+ c_2 \left\|x_\alpha^*\right\|^2
\right)\,,
 \label{ineq-final-bound}
\end{align}
where we recall that
$ q = \max(q_1, q_2)$.
From the representation, \eqref{ineq-momentum-cond}, and the fact that $\alpha=\eta^2$ we observe that 
\beq \frac{1}{2}\frac{\alpha\mu}{(1+\beta) + (1-\beta)\left(\frac{\eta^2\mu}{1-\lambda_N^W + \eta^2 L }\right)} = c_1 \leq \frac{1}{2}\frac{\alpha\mu}{(1+\beta)}\,.
\label{ineq-ub-c1}
\eeq 
Using the fact that the function $h(\alpha):= \frac{\alpha \mu}{1-\lambda_N^W + \alpha L }$ is monotonically increasing on the positive real line and by the assumption \eqref{ineq-step-cond}, we obtain $$h(\alpha) \leq h\left(\frac{1+\lambda_N^W}{2(L+\mu)}\right) 
= \frac{(1+\lambda_N^W)\mu}{(1+\lambda_N^W)L + 2(1-\lambda_N^W)(L+\mu)} = \Theta(1).$$
Therefore, we also have the following lower bound for $c_1$:
\beq 
\frac{1}{2}\frac{\alpha\mu}{(1+\beta) + (1-\beta)\Theta(1)} \leq c_1\,.
\label{ineq-lb-c1}
\eeq
It follows then from \eqref{ineq-lb-c1} and \eqref{ineq-ub-c1} that
\beq c_1 = \Theta(\alpha) = \Theta\left(\eta^2\right).
\label{eq-order-c1}
\eeq
Consequently, by our assumption \eqref{ineq-momentum-cond}, we have 
\beq \beta = 1-\eta \gamma = \mathcal{O}\left(\eta^3 \sqrt{c_1}\right) = \mathcal{O}\left(\eta^4\right)= \mathcal{O}\left(\alpha^2\right),\quad \gamma \eta = 1-\beta = \Theta(1).
\label{eq-order-beta}
\eeq
Then, it follows from the definition of $c_2$ 
that 
$$c_2 = \Theta\left({\beta^2}\left(1+\frac{1}{\alpha}\right)\right) 
= \mathcal{O}\left(\beta^2 + \frac{\beta^2}{\alpha}\right)\,.$$
Since $\beta = \mathcal{O}(\alpha^2)$ by \eqref{eq-order-beta}, we obtain
\begin{equation} 
c_2 = \mathcal{O}\left(\alpha^3\right).
\label{eq-order-c2}
\end{equation}
This also implies that 
\begin{align} 
V_1 &= b \mathbb{E}\left[\bar{F}\left(x^{(0)}\right)- \bar{F}^*\right] + \mathbb{E} \left\|x^{(0)} + \frac{\beta}{1-\beta}\left(x^{(1)}-x^{(0)}\right) -x_\alpha^*\right\|^2 \\
&\leq b L_\alpha \left\|x^{(0)} -x_\alpha^*\right\|^2 + 2 \mathbb{E} \left\|x^{(0)}  -x_\alpha^*\right\|^2  +
\frac{2\beta^2}{(1-\beta)^2}\mathbb{E}\left\|x^{(1)}-x^{(0)} \right\|^2\,, \label{eq-V0-bound}
\end{align}
where
\begin{equation} 
b = \frac{2\alpha}{1-\beta}\left(\frac{\beta - \mu\alpha}{1-\beta}+ \frac{2\mu}{L_\alpha+\mu} - \frac{2c_1\alpha\mu}{1-\beta}
\right)= \Theta\left(\alpha^2\right)\,, \label{eq-order-b}
\end{equation}
where we used the fact that $\beta = \mathcal{O}(\alpha^2)$, $1-\lambda_N^W = \Theta(1)$ and \eqref{eq-order-c1}. Then,  
\begin{equation}\label{eq-order-bLalpha} 
bL_\alpha = b \left(\frac{1-\lambda_N^W}{\alpha}+L\right)=\mathcal{O}(\alpha)\,. 
\end{equation}
From the definition of $a$ given in \eqref{a:eqn}, we have also  
 $a= \mathcal{O}\left(\alpha\beta + c_2 
 \right) = \mathcal{O}\left(\alpha^3\right)\,$,
where we used $\beta = \Theta(\alpha^2)$ and $c_2 = \mathcal{O}(\alpha^3)$ obtained in \eqref{eq-order-c2}. Then, it follows from \eqref{eq-order-b} that 
\begin{equation}
q_1 = \frac{a}{b} = \mathcal{O}(\alpha).
\label{eq-order-q1}
\end{equation}
We also see that 
\beq  q_2 = \max(0, 1-2c_1) = 1-\Theta(\alpha),
\label{eq-order-q2}
\eeq
due to \eqref{eq-order-c1}. This estimate and \eqref{eq-order-q1} implies
\begin{align}
q = \max(q_1,q_2) =  1-\Theta(\alpha), \quad 
\frac{1}{1-q} = \frac{1}{\Theta(\alpha)} = \Theta\left( \frac{1}{\eta^2}\right). \label{eq-order-q}
\end{align}
On the other hand, a consequence of Lemma~\ref{lem:C1:gamma} is that
 \beq\|x_\alpha^*\| = \mathcal{O}(1).
 \label{eq-xalpha-bd}
 \eeq
Then, we get from \eqref{eq-order-bLalpha} and \eqref{eq-V0-bound} that 
    $V_1 = \mathcal{O}(1)$.
Combining this fact with the estimates \eqref{eq-order-c1}, \eqref{eq-order-c2}, \eqref{eq-order-beta}, \eqref{eq-order-q1}, \eqref{eq-order-q2}, \eqref{eq-order-q} 
we conclude that the terms on the right-hand side of \eqref{ineq-final-bound} satisfy
$ 
V_1 q^k = \Theta \left( \left(1-\Theta\left(\eta^2\right)\right)^k\right)\,, 
$
and 
\begin{align*}
&\frac{1}{1-q} 
\left( \frac{2}{(1-\beta)^2} \left(\eta^4 \sigma^2 N + \eta^3 2\gamma Nd\right) 
+ c_2 \|x_\alpha^*\|^2\right) = \frac{\mathcal{O}(\eta^3)}{\Theta(\eta^2)}=  \mathcal{O}(\eta)\,,
\\
&V_{k+1}= b \mathbb{E}\left[\bar{F}\left(x^{(k)}\right)- \bar{F}^*\right] + \mathbb{E} \left\|x^{(k+1)} + p^{(k+1)}-x_\alpha^*\right\|^2  \leq  c_3\,,
\end{align*}
for some constant $c_3 = \mathcal{O}(1)$ and every $k\geq 0$. Resorting to the estimate \eqref{eq-xalpha-bd}, we conclude that this proves \eqref{ineq-unif-l2}. The last inequality also implies that
\beq
\mathbb{E} \left\|x^{(k)}- x_\alpha^*\right\|^2w\leq \frac{1}{\mu}\mathbb{E}\left[\bar{F}\left(x^{(k)}\right)- \bar{F}^*\right]  \leq \frac{c_3}{\mu b}\,,
\label{ineq-c3-over-mub}
\eeq
as well as the inequality
\begin{align*}
\mathbb{E}\left\|
\left(1+\frac{\beta}{1-\beta}\right) x^{(k)} - \beta \frac{x^{(k-1)}}{1-\beta} \right\|^2 
&=\mathbb{E}\left\|
x^{(k)} + p^{(k)} \right\|^2
\\
&\leq 2 \mathbb{E} \left\|x^{(k)} + p^{(k)}-x_\alpha^*\right\|^2 
+ 2 \|x_\alpha^*\|^2 
\leq 2 c_3 + 2 \|x_\alpha^*\|^2\,, \nonumber
\end{align*}
where we applied the Cauchy-Schwarz inequality. If we apply the Cauchy-Schwarz inequality again, we obtain
\begin{align*}
\mathbb{E}\left\|x^{(k)}\right\|^2 
&\leq 
\mathbb{E}\left\|
\left(1+\frac{\beta}{1-\beta}\right) x^{(k)}\right\|^2 
\\
&\leq 
2\mathbb{E}\left\|
\left(1+\frac{\beta}{1-\beta}\right) x^{(k)} - \beta \frac{x^{(k-1)}}{1-\beta} \right\|^2 +  2 \mathbb{E}\left\|\beta \frac{x^{(k-1)}}{1-\beta} \right\|^2 \\
&\leq 2\mathbb{E}\left\|
\left(1+\frac{\beta}{1-\beta}\right) x^{(k)} - \beta \frac{x^{(k-1)}}{1-\beta} \right\|^2 
+ \frac{2\beta^2}{(1-\beta)^2} \left(2\mathbb{E}\left\|x^{(k-1)} - x_\alpha^* \right\|^2+2\|x_\alpha^*\|^2\right)\\
&\leq 4c_3 + 4\|x_\alpha^*\|^2 + \frac{2\beta^2}{(1-\beta)^2} \left(\frac{2c_3}{\mu b} + 2\|x_\alpha^*\|^2\right)\,,
\end{align*}
where we used \eqref{ineq-c3-over-mub}.

Within our assumptions on the stepsize and momentum $\beta$, $b = \mathcal{O}(\alpha^2)$ and $\|x_\alpha^*\| = \Theta(1)$. Furthermore, we have $\beta = \mathcal{O}(\alpha^2)$ as well as $b = \Theta(\alpha^2) = \Theta(\eta^4)$. Therefore, we conclude that
$\mathbb{E}\left\|x^{(k)} \right\|^2  = \mathcal{O}(1)$,
which is equivalent to $\eqref{ineq-unif-l2-iter}$. This implies that
\begin{equation*}
\mathbb{E} \left\|\nabla F\left(x^{(k)}\right)\right\|^2 
\leq \tilde{c}_4: = L \left(\|\nabla F(x^*)\|^2 + \sup_{k\geq 0}\mathbb{E} \left\|x^{(k)} - x^*\right\|^2\right),
\end{equation*}
where we used $L$-smoothness of $F$. Consequently, we find from the update equation~\eqref{eq:underdamped} that
  \begin{align} 
  \mathbb{E} \left\|v^{(k+1)}\right\|^2 &= \mathbb{E} \left\| \beta v^{(k)}- \eta \nabla F\left(x^{(k)}\right)\right\|^2
  +\eta^2 \sigma^2 N + {2\gamma\eta}Nd\\
    &\leq 2 \beta^2  \mathbb{E} \left\| v^{(k)}\right\|^2 + 2 \eta^2 \mathbb{E} \left\| \nabla F\left(x^{(k)}\right)\right\|^2 + \eta^2 \sigma^2 N + {2\gamma\eta}Nd,
\end{align}
which implies that for any $k$,
\beq
\mathbb{E} \left\|v^{(k+1)}\right\|^2  
\leq c_5:=\mathbb{E} \left\|v^{(0)}\right\|^2 
+\frac{2 \eta^2 \tilde{c}_4 + \eta^2 \sigma^2 N + {2\gamma\eta}Nd}{1-2\beta^2}  = \mathcal{O}(1),
  \eeq
where we used the fact that $2\beta^2 < 1$ by our assumptions. This completes the proof \rev{of Lemma~\ref{lemma-l2-underdamped}}.

\hfill $\Box$


\rev{The next three technical lemmas are used in the proof of Lemma~\ref{lemma-l2-underdamped}}.

\begin{lemma}\label{lem-negative-coef} 
In the setting of the proof of Lemma~\ref{lemma-l2-underdamped}; if the parameters $\alpha$ and $\beta$ satisfy the inequalities \eqref{eq-cond-alpha} and \eqref{eq-cond-beta}, then $k_{\alpha,\beta} < 0$ where $k_{\alpha,\beta}$ is defined by \eqref{def-k-ab}. 
\end{lemma}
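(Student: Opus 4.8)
The plan is to show that the bracketed factor in the definition \eqref{def-k-ab} of $k_{\alpha,\beta}$ is strictly negative while its prefactor is strictly positive. From \eqref{eq-cond-beta}, together with $\lambda_N^W<1$ and $\alpha,\mu>0$, one has $\beta\in[0,\tfrac12)$, so $1-\beta>0$; since $\alpha>0$ the prefactor $\tfrac{\alpha}{1-\beta}$ is strictly positive. It therefore suffices to prove that the bracket is negative, and after multiplying through by $(1-\beta)(L_\alpha+\mu)>0$ this is equivalent to the single scalar inequality
$$(1+2c_1)\,\alpha(L_\alpha+\mu) < 2(1-\beta).$$
Using the definition \eqref{def-Lalpha} of $L_\alpha$, I would first record the exact identity $\alpha(L_\alpha+\mu)=\alpha(L+\mu)+(1-\lambda_N^W)=:S$, which removes the $1/\alpha$ singularity and turns the target into $(1+2c_1)S<2(1-\beta)$.

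Next I would control the two factors on the left separately. Rewriting the denominator in \eqref{eq-c1} as $(1-\beta)(L_\alpha+\mu)+2L_\alpha\beta=(1+\beta)L_\alpha+(1-\beta)\mu$ and dropping the nonnegative term $(1-\beta)\mu$ gives the clean bound $c_1\le\tfrac12\alpha\mu$, hence $1+2c_1\le 1+\alpha\mu$. For $S$, condition \eqref{eq-cond-alpha} gives $\alpha(L+\mu)\le\tfrac{1+\lambda_N^W}{2}$, so $S\le 1+\tfrac{1-\lambda_N^W}{2}$. Finally, condition \eqref{eq-cond-beta} rearranges to $\alpha\mu+\beta\le\tfrac{1+\lambda_N^W}{4}$, equivalently $2\alpha\mu+2\beta\le 1-\tfrac{1-\lambda_N^W}{2}$; this is the only place where the momentum bound \eqref{eq-cond-beta} enters.

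Putting these together, writing $t:=\tfrac{1-\lambda_N^W}{2}\in(0,1)$, I would estimate
$$(1+2c_1)S\le(1+\alpha\mu)(1+t)=1+t+\alpha\mu(1+t)<1+t+2\alpha\mu,$$
where the strict inequality uses $t<1$ (equivalently $\lambda_N^W>-1$ from \eqref{ineq-network-spectrum}) together with $\alpha\mu>0$. Combining with $2\alpha\mu\le 1-t-2\beta$ yields $(1+2c_1)S<1+t+(1-t-2\beta)=2-2\beta=2(1-\beta)$, which is exactly the target; unwinding the reduction gives $k_{\alpha,\beta}<0$.

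The computation is essentially elementary once the right reductions are made; the only genuinely delicate point is securing the \emph{strict} inequality, which is why I keep the term $\alpha\mu(1+t)$ and bound it strictly by $2\alpha\mu$ rather than collapsing everything at once, invoking $\lambda_N^W>-1$ to guarantee $t<1$. The main risk is a sign or algebra slip in the bound on $c_1$, so I would double-check the rewriting $(1-\beta)(L_\alpha+\mu)+2L_\alpha\beta=(1+\beta)L_\alpha+(1-\beta)\mu$ before simplifying.
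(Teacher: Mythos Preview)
Your proof is correct and follows essentially the same approach as the paper: both bound $c_1\le\tfrac12\alpha\mu$ via the rewriting of the denominator, use \eqref{eq-cond-alpha} to control $\alpha(L_\alpha+\mu)$, and invoke \eqref{eq-cond-beta} together with $\lambda_N^W>-1$ to close the strict inequality. The only difference is cosmetic: the paper manipulates the bracket directly and reduces to the equivalent condition $\beta<\tfrac{1+\lambda_N^W+\alpha\mu\lambda_N^W-3\alpha\mu}{4}$ before invoking \eqref{eq-cond-beta}, whereas you clear denominators first and work with the substitution $t=\tfrac{1-\lambda_N^W}{2}$, which makes the final chain $(1+2c_1)S\le(1+\alpha\mu)(1+t)<1+t+2\alpha\mu\le 2(1-\beta)$ a bit more transparent.
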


\textbf{Proof of Lemma~\ref{lem-negative-coef}}
The proof of Lemma~\ref{lem-negative-coef} will be provided in Appendix~\ref{sec:additional}.
\hfill $\Box$


\begin{lemma}\label{lemma-q1} 
In the setting of Lemma~\ref{lemma-l2-underdamped}, let $\alpha$ and $\beta$ satisfy the conditions \eqref{eq-cond-alpha}, \eqref{eq-cond-beta} and \eqref{eq-cond-beta2} where $\alpha$ is defined by \eqref{def-alpha}. Then, we have 
\beq q_1 := \frac{a}{b}  \in (0,1)\,,
\label{def-q1}
\eeq
where $a$ and $b$ are defined in \eqref{a:eqn} and \eqref{b:eqn}.
\end{lemma}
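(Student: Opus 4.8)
The plan is to reduce the two-sided claim $q_1=a/b\in(0,1)$ to a single scalar inequality. First I would record that $q_1$ is well defined and nonnegative: by Lemma~\ref{lem-negative-coef} we have $k_{\alpha,\beta}<0$, and since $\tfrac{2\alpha\beta}{(1-\beta)^2}\ge 0$ the expression \eqref{b:eqn} gives $b=\tfrac{2\alpha\beta}{(1-\beta)^2}-2k_{\alpha,\beta}\mu>0$, while every summand of \eqref{a:eqn} is nonnegative so $a\ge 0$ (with $a>0$ once $\beta>0$). Thus $q_1\ge 0$, and the content is the bound $q_1<1$, i.e. $a<b$. Subtracting \eqref{a:eqn} from \eqref{b:eqn} the common term cancels and $b-a=-2k_{\alpha,\beta}\mu-\tfrac{2c_2}{\mu}=\tfrac{2\alpha\mu}{1-\beta}g-\tfrac{2c_2}{\mu}$, where $g:=\tfrac{2}{L_\alpha+\mu}-\tfrac{\alpha(1+2c_1)}{1-\beta}>0$ is the positive factor in $-k_{\alpha,\beta}=\tfrac{\alpha}{1-\beta}g$ read off from \eqref{def-k-ab}. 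Hence it suffices to prove the key inequality $g>\tfrac{(1-\beta)c_2}{\alpha\mu^2}$.

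For the right-hand side I would exploit that the momentum bound \eqref{eq-cond-beta2} is exactly calibrated to cancel the $c_1$ hidden in $c_2$. From \eqref{eq-c1} the denominator of $c_1$ is at least $L_\alpha(1+\beta)\ge L_\alpha$, so $c_1\le\tfrac{\alpha\mu}{2}$; writing $1+\tfrac1{c_1}=\tfrac{1+c_1}{c_1}$ in the definition of $c_2$ and inserting $\beta^2\le c_1\mu^3\alpha^3\tfrac{1+\lambda_N^W}{64}$ from \eqref{eq-cond-beta2}, the factor $c_1$ cancels and I obtain the clean estimate
\[
\frac{(1-\beta)c_2}{\alpha\mu^2}\ \le\ \frac{(1-\lambda_N^W)^2(1+\lambda_N^W)(1+c_1)\,\mu\,\alpha^2}{16\,(1-\beta)},
\]
which is of order $\alpha^2$ since $c_1=\mathcal{O}(\alpha)$. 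It is essential to keep the exact factor $1+c_1$ rather than crudely bounding $1+\tfrac1{c_1}\le\tfrac2{c_1}$, since $c_1\to 0$ as $\lambda_N^W\to-1$ and that cruder bound would inflate the right-hand side by a factor approaching two, which the margin cannot afford.

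For the left-hand side I would produce a matching order-$\alpha^2$ lower bound on $g$. Using \eqref{def-Lalpha} to write $\tfrac{2}{L_\alpha+\mu}=\tfrac{2\alpha}{(1-\lambda_N^W)+\alpha(L+\mu)}$ and the stepsize cap $\alpha(L+\mu)\le\tfrac{1+\lambda_N^W}{2}$ from \eqref{eq-cond-alpha} (worst case at the upper endpoint) gives $\tfrac{2}{L_\alpha+\mu}\ge\tfrac{4\alpha}{3-\lambda_N^W}$. Subtracting $\tfrac{\alpha(1+2c_1)}{1-\beta}$ and using $\beta<\tfrac12$ from \eqref{eq-cond-beta} yields $g\ge\alpha\big(\tfrac{1+\lambda_N^W}{3-\lambda_N^W}-\tfrac{2c_1+\beta}{1-\beta}\big)$. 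Away from $\lambda_N^W=-1$ the constant $\tfrac{1+\lambda_N^W}{3-\lambda_N^W}$ dominates the $\mathcal{O}(\alpha)$ correction $\tfrac{2c_1+\beta}{1-\beta}$ and $g=\Theta(\alpha)$, so the key inequality holds with a wide margin; the delicate regime is $\lambda_N^W\to-1$, where $\tfrac{1+\lambda_N^W}{3-\lambda_N^W}$ and $2c_1$ both collapse to the same leading order in $1-\lambda_N^W$ and their difference must be expanded to second order.

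The hard part will be closing the comparison uniformly in the corner $\lambda_N^W\to-1$, $\alpha\to\tfrac{1+\lambda_N^W}{2(L+\mu)}$ and $L\to\mu$, where both sides are $\Theta(\alpha^2)$ with nearly equal constants so that the crude estimate $c_1\le\tfrac{\alpha\mu}{2}$ only yields a tie. The resolution is to use the exact $c_1$ from \eqref{eq-c1}: the denominator $D:=(1+\beta)+(1-\beta)\tfrac{\alpha\mu}{1-\lambda_N^W+\alpha L}>1$ forces $2c_1=\tfrac{\alpha\mu}{D}<\alpha\mu$ strictly, and this strict deficit supplies precisely the second-order surplus in $g$ that beats the right-hand side. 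Meanwhile the extra factor $(1+\lambda_N^W)=2-(1-\lambda_N^W)$ that \eqref{eq-cond-beta2} plants in $\beta^2$ drives $\beta=o(\alpha^2)$ in this corner, rendering the $\mathcal{O}(\beta)$ correction in $g$ negligible. Together with the retained $1+c_1$ factor on the right, the constant $\tfrac1{64}$ in \eqref{eq-cond-beta2} is exactly what makes $g>\tfrac{(1-\beta)c_2}{\alpha\mu^2}$, hence $a<b$ and $q_1\in[0,1)$, with $q_1\in(0,1)$ whenever $\beta>0$.
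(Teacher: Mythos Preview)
Your overall reduction is the same as the paper's: show $b>0$ via Lemma~\ref{lem-negative-coef}, note $a\ge0$, and then verify $a<b$ by comparing $-k_{\alpha,\beta}\mu$ against $c_2/\mu$. Where you diverge is in how you close the ``hard corner'' $\lambda_N^W\to-1$, $L\to\mu$, $\alpha$ at its upper endpoint.

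The paper does \emph{not} appeal to the exact formula for $c_1$ to extract a second-order surplus in $g$. Instead it keeps $c_1$ as a factor on the right (never dividing it out), uses the crude bound $c_1+1<2$ you warn against, and handles the near-cancellation by exploiting the \emph{other} momentum constraint \eqref{eq-cond-beta}: it argues the expression $-1+\tfrac{4(1-\beta)}{3-\lambda_N^W}-\tfrac{\alpha\mu}{1+\beta}$ is monotone decreasing in $\beta$ and substitutes $\beta_{\max}=\tfrac{1+\lambda_N^W-4\alpha\mu}{4}$, which collapses the bracket to a clean positive quantity of the right order, namely (up to an inequality) $\tfrac{\alpha\mu(1+\lambda_N^W)}{3-\lambda_N^W}$. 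The resulting sufficient condition on $\beta^2$ is then exactly \eqref{eq-cond-beta2}, so nothing needs to be ``compared'' at second order.

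Your proposed route---insert \eqref{eq-cond-beta2} into $c_2$ to cancel $c_1$, retain the $(1+c_1)$ factor, and then rely on the strict deficit $2c_1<\alpha\mu$ from the exact denominator $D>1$---can be made to work (in the corner one finds a margin of roughly $3/2$ after expanding both $2c_1$ and $\tfrac{1+\lambda_N^W}{3-\lambda_N^W}$ to second order in $\alpha\mu$), but you have not actually carried out that computation; your last paragraph only asserts that ``the constant $\tfrac1{64}$ \dots\ is exactly what makes'' the inequality hold. That is the genuine gap: the delicate case you correctly flag is precisely where a complete argument is needed, and the paper's $\beta_{\max}$-substitution from \eqref{eq-cond-beta} is both what is missing from your sketch and considerably simpler than the second-order analysis you outline.
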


\textbf{Proof of Lemma~\ref{lemma-q1}}
The proof of Lemma~\ref{lemma-q1} will be provided in Appendix~\ref{sec:additional}.
\hfill $\Box$ 

\begin{lemma}\label{lemma-q2} 
In the setting of the proof of Lemma~\ref{lemma-l2-underdamped}, we have $s M - {Q} \succeq 0$ if 
\beq \label{def-q2}
s\geq q_2  := \max\left(0, 1 - \frac{\alpha\mu L_\alpha}{(1-\beta)(L_\alpha+\mu) + 2L_\alpha \beta}\right)\,.	
\eeq
\end{lemma}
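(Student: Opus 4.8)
The plan is to exploit the block structure of $M$ and $Q$. Since every block of both matrices is a scalar multiple of $I_d$, I can write $M = \tilde{M}\otimes I_d$ and $Q = \tilde{Q}\otimes I_d$, where $\tilde{M},\tilde{Q}\in\mathbb{R}^{2\times 2}$ are the scalar $2\times 2$ matrices obtained by stripping off the $I_d$ factors, so that $sM - Q = (s\tilde{M}-\tilde{Q})\otimes I_d$. Because the eigenvalues of $A\otimes I_d$ are exactly those of $A$, each with multiplicity $d$, we have $sM-Q\succeq 0$ if and only if $s\tilde{M}-\tilde{Q}\succeq 0$. This reduces the claim to a single $2\times 2$ positive-semidefiniteness check.

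Next I would reduce the whole range $s\ge q_2$ to the boundary value $s=q_2$. Observe that $\tilde{M}=uu^{T}$ with $u=\left[1,\tfrac{\beta}{1-\beta}\right]^{T}$ is rank-one and positive semidefinite, so $s\mapsto s\tilde{M}-\tilde{Q}$ is nondecreasing in the Loewner order: for $s\ge q_2$ we may write $s\tilde{M}-\tilde{Q}=(q_2\tilde{M}-\tilde{Q})+(s-q_2)\tilde{M}\succeq q_2\tilde{M}-\tilde{Q}$. Hence it suffices to prove $q_2\tilde{M}-\tilde{Q}\succeq 0$. I would also record, using \eqref{eq-c1}, that $q_2=\max(0,1-2c_1)$, and that the stepsize bound \eqref{eq-cond-alpha} forces $\alpha\mu<1$ and therefore $2c_1<1$; consequently $q_2=1-2c_1$ and only the single value $s=q_2=1-2c_1$ must be handled.

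The core computation is the $2\times 2$ minor (Sylvester) check at $s=q_2=1-2c_1$. Writing $t:=s-(1+2c_1)=-4c_1$, the matrix $\tilde{S}:=q_2\tilde{M}-\tilde{Q}$ has entries $\tilde{S}_{11}=t+\frac{2\alpha\mu L_\alpha}{(1-\beta)(L_\alpha+\mu)}$, $\tilde{S}_{12}=\frac{\beta}{1-\beta}t$, and $\tilde{S}_{22}=\frac{\beta(\beta t+\alpha\mu)}{(1-\beta)^2}$. Using the identity $4c_1=\frac{2\alpha\mu L_\alpha}{(1-\beta)(L_\alpha+\mu)+2L_\alpha\beta}$, I would verify: (i) $\tilde{S}_{11}\ge 0$, since it equals $2\alpha\mu L_\alpha\big(\tfrac{1}{(1-\beta)(L_\alpha+\mu)}-\tfrac{1}{(1-\beta)(L_\alpha+\mu)+2L_\alpha\beta}\big)\ge 0$; (ii) $\tilde{S}_{22}\ge 0$, since $\alpha\mu\ge 4c_1\beta$ reduces to $(1-\beta)(L_\alpha+\mu)\ge 0$; and (iii) $\det\tilde{S}=0$. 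A nonnegative trace together with a zero determinant gives $\tilde{S}\succeq 0$, and the monotonicity step then yields $sM-Q\succeq 0$ for all $s\ge q_2$.

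The main obstacle is verifying (iii). After factoring $\det\tilde{S}=\frac{\beta\alpha\mu}{(1-\beta)^2}\big[t\big(1+\tfrac{2L_\alpha\beta}{(1-\beta)(L_\alpha+\mu)}\big)+\tfrac{2\alpha\mu L_\alpha}{(1-\beta)(L_\alpha+\mu)}\big]$, one must see that the precise value $t=-4c_1$ makes the bracket collapse identically to zero; this is exactly the algebraic identity that singles out $q_2$ as the correct threshold and explains the particular definition of $c_1$ in \eqref{eq-c1}. I would note in passing that the displayed $(1,1)$ entry of $Q$ carries $(1-\beta)(L+\mu)$ in its denominator, whereas the derivation preceding the statement and the definition of $q_2$ use $(1-\beta)(L_\alpha+\mu)$; the computation above is carried out with $L_\alpha+\mu$, consistent with both the preceding display and with $q_2$.
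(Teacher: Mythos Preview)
Your proof is correct and takes essentially the same approach as the paper: both reduce $sM-Q\succeq 0$ to a $2\times 2$ check via the Kronecker structure and then verify it through the principal minors, with the paper solving the minor inequalities for $s$ directly while you first use Loewner monotonicity of $s\mapsto s\tilde M$ to reduce to the boundary value $s=q_2=1-2c_1$ and then check that the determinant vanishes there with nonnegative diagonal entries. Your remark that the $(1,1)$ entry of $Q$ should carry $L_\alpha+\mu$ rather than $L+\mu$ is a correct catch of a typo; the paper's own proof computes with $L_\alpha+\mu$ as well.
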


\textbf{Proof of Lemma~\ref{lemma-q2}}
The proof of Lemma~\ref{lemma-q2} will be provided in Appendix~\ref{sec:additional}.
\hfill $\Box$


\subsection{Proofs of Lemma~\ref{lem:1:under}--~\ref{lem:4:under}}

The proofs of Lemmas~\ref{lem:1:under} and~\ref{lem:2:under} are inspired
by the proofs of Lemmas~\ref{lem:1} and~\ref{lem:2} respectively, 
and the proofs of Lemmas~\ref{lem:3:under} and~\ref{lem:4:under} 
are inspired by the proof of Lemma~\ref{lem:3}.
We will present the proofs of Lemmas~\ref{lem:1:under}--\ref{lem:4:under} in Appendix~\ref{sec:additional}.
\hfill $\Box$


\section{Additional Technical Proofs}\label{sec:additional}

\subsection{Proof of Lemma~\ref{bound:Gibbs}}
Note that
$\mathbb{E}_{X\sim\pi}\Vert X-x_{\ast}\Vert^{2}
=\mathbb{E}\Vert X_{\infty}-x_{\ast}\Vert^{2}$,
where $X_{\infty}$ is the unique stationary distribution
of the overdamped Langevin diffusion:
\begin{equation*}
dX_{t}=-\frac{1}{N}\nabla f(X_{t})dt+\sqrt{2N^{-1}}dW_{t},
\end{equation*}
where $W_{t}$ is a standard $d-$dimensional Brownian motion.
By It\^{o}'s formula, we have
\begin{align*}
e^{\mu t}\Vert X_{t}-x_{\ast}\Vert^{2}
&=\Vert X_{0}-x_{\ast}\Vert^{2}
+2\sqrt{2N^{-1}}\int_{0}^{t}e^{\mu s}\langle X_{s}-x_{\ast},dW_{s}\rangle
\\
&\qquad\qquad
-2\int_{0}^{t}e^{\mu s}\left\langle X_{s}-x_{\ast},\frac{1}{N}\nabla f(X_{s})\right\rangle ds
\\
&\qquad\qquad\qquad
+2N^{-1}d\int_{0}^{t}e^{\mu s}ds
+\mu\int_{0}^{t}e^{\mu s}\Vert X_{s}-x_{\ast}\Vert^{2}ds
\\
&\leq
\Vert X_{0}-x_{\ast}\Vert^{2}
+2\sqrt{2N^{-1}}\int_{0}^{t}e^{\mu s}\langle X_{s}-x_{\ast},dW_{s}\rangle
+2dN^{-1}\int_{0}^{t}e^{\mu s}ds,
\end{align*}
where we used $\mu$-strongly convex property of $x\mapsto\frac{1}{N}f(x)$.
This implies that
\begin{equation*}
\mathbb{E}\Vert X_{t}-x_{\ast}\Vert^{2}
\leq e^{-\mu t}\Vert X_{0}-x_{\ast}\Vert^{2}+\frac{2dN^{-1}}{\mu},
\end{equation*}
and therefore $\mathbb{E}\Vert X_{\infty}-x_{\ast}\Vert^{2}
\leq\frac{2dN^{-1}}{\mu}$. The proof is complete.
\hfill $\Box$ 


\subsection{Proof of Lemma~\ref{lem:1:under}}
\rev{In this proof, we aim to provide uniform $L_2$ bounds between the iterates $x_i^{(k)}$ and their means $\bar{x}^{(k)}$.
First,} by the definitions of $x^{(k)}$, we get
\begin{equation*}
x^{(k+1)}=(W\otimes I_{d})x^{(k)}+\eta v^{(k+1)}.
\end{equation*}
It follows that
\begin{equation*}
x^{(k)}=\left(W^{k}\otimes I_{d}\right)x^{(0)}
+\eta\sum_{s=0}^{k-1}\left(W^{k-1-s}\otimes I_{d}\right)v^{(s+1)}.
\end{equation*}
Let us define $\mathbf{\bar{x}}^{(k)}:=[\bar{x}^{(k)},\cdots,\bar{x}^{(k)}]\in\mathbb{R}^{Nd}$.
Notice that
\begin{equation*}
\mathbf{\bar{x}}^{(k)}=\frac{1}{N}\left(\left(1_{N}1_{N}^{T}\right)\otimes I_{d}\right)x^{(k)}.
\end{equation*}
Therefore, we get
\begin{equation*}
\sum_{i=1}^{N}\left\Vert x_{i}^{(k)}-\bar{x}^{(k)}\right\Vert^{2}
=\left\Vert x^{(k)}-\mathbf{\bar{x}}^{(k)}\right\Vert^{2}
=\left\Vert x^{(k)}-\frac{1}{N}\left(\left(1_{N}1_{N}^{T}\right)\otimes I_{d}\right)x^{(k)}\right\Vert^{2},
\end{equation*}
and by the Cauchy-Schwarz inequality
\begin{align*}
&\left\Vert x^{(k)}-\frac{1}{N}\left(\left(1_{N}1_{N}^{T}\right)\otimes I_{d}\right)x^{(k)}\right\Vert^{2}
\\
&\leq
2\left\Vert\left(W^{k}\otimes I_{d}\right)x^{(0)}
-\frac{1}{N}\left(\left(1_{N}1_{N}^{T}W^{k}\right)\otimes I_{d}\right)x^{(0)}\right\Vert^{2}
\\
&\qquad
+2\left\Vert-\eta\sum_{s=0}^{k-1}\left(W^{k-1-s}\otimes I_{d}\right)v^{(s+1)}
+\eta\sum_{s=0}^{k-1}\frac{1}{N}\left(\left(1_{N}1_{N}^{T}W^{k-1-s}\right)\otimes I_{d}\right)v^{(s+1)}\right\Vert^{2}
\\
&=2\left\Vert\left(W^{k}\otimes I_{d}\right)x^{(0)}
-\frac{1}{N}\left(\left(1_{N}1_{N}^{T}\right)\otimes I_{d}\right)x^{(0)}\right\Vert^{2}
\\
&\qquad
+2\left\Vert-\eta\sum_{s=0}^{k-1}\left(W^{k-1-s}\otimes I_{d}\right)v^{(s+1)}
+\eta\sum_{s=0}^{k-1}\frac{1}{N}\left(\left(1_{N}1_{N}^{T}\right)\otimes I_{d}\right)v^{(s+1)}\right\Vert^{2}
\\
&=2\left\Vert\left(\left(W^{k}
-\frac{1}{N}1_{N}1_{N}^{T}\right)\otimes I_{d}\right)x^{(0)}\right\Vert^{2}
+2\eta^{2}\left\Vert\sum_{s=0}^{k-1}\left(\left(W^{k-1-s}-\frac{1}{N}1_{N}1_{N}^{T}\right)\otimes I_{d}\right)v^{(s+1)}\right\Vert^{2}.
\end{align*}

Note that
\begin{align*}
&\eta^{2}\left\Vert\sum_{s=0}^{k-1}\left(\left(W^{k-1-s}-\frac{1}{N}1_{N}1_{N}^{T}\right)\otimes I_{d}\right)v^{(s+1)}\right\Vert^{2}
\\
&\leq
\eta^{2}\left(\sum_{s=0}^{k-1}\left\Vert\left(W^{k-1-s}-\frac{1}{N}1_{N}1_{N}^{T}\right)\otimes I_{d}\right\Vert
\cdot\left\Vert v^{(s+1)}\right\Vert\right)^{2}
\\
&\leq
\eta^{2}\left(\sum_{s=0}^{k-1}\left\Vert W^{k-1-s}-\frac{1}{N}1_{N}1_{N}^{T}\right\Vert
\cdot\left\Vert v^{(s+1)}\right\Vert\right)^{2}
\\
&=
\eta^{2}\left(\sum_{s=0}^{k-1}\bar{\gamma}^{k-1-s}
\cdot\left\Vert v^{(s+1)}\right\Vert\right)^{2}
\\
&=\eta^{2}\left(\sum_{s=0}^{k-1}\bar{\gamma}^{k-1-s}\right)^{2}\left(\frac{\sum_{s=0}^{k-1}\bar{\gamma}^{k-1-s}
\cdot\left\Vert v^{(s+1)}\right\Vert}{\sum_{s=0}^{k-1}\bar{\gamma}^{k-1-s}}\right)^{2}
\\
&\leq
\eta^{2}\left(\sum_{s=0}^{k-1}\bar{\gamma}^{k-1-s}\right)^{2}
\sum_{s=0}^{k-1}\frac{\bar{\gamma}^{k-1-s}}{\sum_{s=0}^{k-1}\bar{\gamma}^{k-1-s}}\left\Vert v^{(s+1)}\right\Vert^{2},
\end{align*}
where we used Jensen's inequality in the last step above.

Recall from Lemma~\ref{lemma-l2-underdamped} that for every $s$,
$\mathbb{E}\left[\left\Vert v^{(s+1)}\right\Vert^{2}\right]
\leq c_{5}$.
Therefore, we have
\begin{align*}
&\eta^{2}\mathbb{E}\left[\left\Vert\sum_{s=0}^{k-1}\left(\left(W^{k-1-s}-\frac{1}{N}1_{N}1_{N}^{T}\right)\otimes I_{d}\right)
v^{(s+1)}\right\Vert^{2}\right]
\\
&\leq
c_{5}\eta^{2}\left(\sum_{s=0}^{k-1}\bar{\gamma}^{k-1-s}\right)^{2}
\sum_{s=0}^{k-1}\frac{\bar{\gamma}^{k-1-s}}{\sum_{s=0}^{k-1}\bar{\gamma}^{k-1-s}}
\leq
c_{5}\eta^{2}\frac{1}{(1-\bar{\gamma})^{2}}.
\end{align*}
Similarly, we have
\begin{equation*}
\left\Vert\left(\left(W^{k}
-\frac{1}{N}1_{N}1_{N}^{T}\right)\otimes I_{d}\right)x^{(0)}\right\Vert^{2}
\leq\bar{\gamma}^{2k}\left\Vert x^{(0)}\right\Vert^{2}.
\end{equation*}
The proof is complete.
\hfill $\Box$ 


\subsection{Proof of Lemma~\ref{lem:2:under}}
By Lemma~\ref{lem:1:under}, we can compute that
\begin{align*}
\mathbb{E}\left\Vert\mathcal{E}_{k+1}\right\Vert^{2}
&=\mathbb{E}\left\Vert
\frac{1}{N}\sum_{i=1}^{N}\left(\nabla f_{i}\left(x_{i}^{(k)}\right)
-\nabla f_{i}\left(\bar{x}^{(k)}\right)\right)\right\Vert^{2}
\\
&\leq
\frac{1}{N^{2}}\sum_{i=1}^{N}
N\mathbb{E}\left\Vert
\nabla f_{i}\left(x_{i}^{(k)}\right)
-\nabla f_{i}\left(\bar{x}^{(k)}\right)\right\Vert^{2}
\\
&\leq\frac{1}{N}L^{2}\sum_{i=1}^{N}
\mathbb{E}\left\Vert
x_{i}^{(k)}
-\bar{x}^{(k)}\right\Vert^{2}
\\
&\leq
\frac{2L^{2}\bar{\gamma}^{2k}}{N}\mathbb{E}\left\Vert x^{(0)}\right\Vert^{2}
+\frac{2L^{2}c_{5}\eta^{2}}{N(1-\bar{\gamma})^{2}}.
\end{align*} 
The proof is complete.
\hfill $\Box$

\subsection{Proof of Lemma~\ref{lem:3:under}}

\rev{In this proof, we aim to show that 
the average iterates $\bar{x}^{(k)}$ are close to the iterates $\tilde{x}_k$ which is defined in \eqref{eq:tilde-x-k}. 
First,} we can compute that
\begin{equation*}
\bar{x}^{(k+1)}-\tilde{x}_{k+1}=\bar{x}^{(k)}-\tilde{x}_{k}
-\frac{\eta^{2}}{N}\left[\nabla f\left(\bar{x}^{(k)}\right)-\nabla f(\tilde{x}_{k})\right]
+\beta\left(\bar{x}^{(k)}-\bar{x}^{(k-1)}\right)
+\eta^{2}\mathcal{E}_{k+1}-\eta^{2}\bar{\xi}^{(k+1)},
\end{equation*}
where
\begin{equation*}
\mathcal{E}_{k+1}=\frac{1}{N}\nabla f\left(\bar{x}^{(k)}\right)-\frac{1}{N}\sum_{i=1}^{N}\nabla f_{i}\left(x_{i}^{(k)}\right).
\end{equation*}
We also observe that under our assumptions $\eta \in (0,\sqrt{2/L})$, $\eta^{2}\mu(1-\frac{\eta^{2} L}{2})\leq 1$.
Then, it follows from the proof of Lemma~\ref{lem:3}
that we have
\begin{align}
\mathbb{E}\left\Vert\bar{x}^{(k+1)}-\tilde{x}_{k+1}\right\Vert^{2}
&\leq
\left(1-\eta^{2}\mu\left(1-\frac{\eta^{2} L}{2}\right)\right)\mathbb{E}\left\Vert\bar{x}^{(k)}-\tilde{x}_{k}\right\Vert^{2}
+\eta^{4}\frac{\sigma^{2}}{N}
\nonumber
\\
&\qquad\qquad
+\eta^{2}\left(\eta^{2}+\frac{(1+\eta^{2} L)^{2}}{\mu(1-\frac{\eta^{2} L}{2})}\right)
\mathbb{E}\left\Vert\frac{\beta}{\eta^{2}}\left(\bar{x}^{(k)}-\bar{x}^{(k-1)}\right)
+\mathcal{E}_{k+1}\right\Vert^{2}
\nonumber
\\
&\leq
\left(1-\eta^{2}\mu\left(1-\frac{\eta^{2} L}{2}\right)\right)\mathbb{E}\left\Vert\bar{x}^{(k)}-x_{k}\right\Vert^{2}
+\eta^{4}\frac{\sigma^{2}}{N}
\nonumber
\\
&\qquad\qquad\qquad
+2\eta^{2}\left(\eta^{2}+\frac{(1+\eta^{2} L)^{2}}{\mu(1-\frac{\eta^{2} L}{2})}\right)
\left(\mathbb{E}\left\Vert\frac{\beta}{\eta}\bar{v}^{(k)}\right\Vert^{2}
+\mathbb{E}\left\Vert\mathcal{E}_{k+1}\right\Vert^{2}\right),\label{take:under}
\end{align}
where we used $\bar{x}^{(k)}-\bar{x}^{(k-1)}=\eta\bar{v}^{(k)}$ and \eqref{bar:grad:noise}.
We recall from Lemma~\ref{lem:2:under} that
\begin{align}\label{take:under:2}
\mathbb{E}\left\Vert\mathcal{E}_{k+1}\right\Vert^{2}
\leq
\frac{2L^{2}\bar{\gamma}^{2k}}{N}\mathbb{E}\left\Vert x^{(0)}\right\Vert^{2}
+\frac{2L^{2}c_{5}\eta^{2}}{N(1-\bar{\gamma})^{2}},
\end{align}
and by Lemma~\ref{lemma-l2-underdamped}, we get
\begin{equation}\label{take:under:3}
\mathbb{E}\left\Vert\bar{v}^{(k)}\right\Vert^{2}
\leq\frac{1}{N}\sum_{i=1}^{N}\mathbb{E}\left\Vert v_{i}^{(k)}\right\Vert^{2}
=\frac{1}{N}\mathbb{E}\left\Vert v^{(k)}\right\Vert^{2}
\leq\frac{c_{5}}{N}.
\end{equation}
By applying \eqref{take:under:2}-\eqref{take:under:3} to \eqref{take:under}, we get
\begin{align*}
&\mathbb{E}\left\Vert\bar{x}^{(k+1)}-\tilde{x}_{k+1}\right\Vert^{2}
\\
&\leq\left(1-\eta^{2}\mu\left(1-\frac{\eta^{2} L}{2}\right)\right)
\mathbb{E}\left\Vert\bar{x}^{(k)}-\tilde{x}_{k}\right\Vert^{2}
+\eta^{4}\frac{\sigma^{2}}{N}
\\
&\qquad
+2\eta^{2}\left(\eta^{2}+\frac{(1+\eta^{2} L)^{2}}{\mu(1-\frac{\eta^{2} L}{2})}\right)
\left(\frac{\beta^{2}c_{5}}{\eta^{2}N}+\frac{2L^{2}\bar{\gamma}^{2k}}{N}\mathbb{E}\left\Vert x^{(0)}\right\Vert^{2}
+\frac{2L^{2}c_{5}\eta^{2}}{N(1-\bar{\gamma})^{2}}\right),
\end{align*}
for every $k$. Note that $\mathbb{E}\left\Vert\bar{x}^{(0)}-\tilde{x}_{0}\right\Vert^{2}=0$.
By our assumption on stepsize $\eta$, we have
$1-\eta^{2}\mu\left(1-\frac{\eta^{2} L}{2}\right)\in[0,1)$.
By following the same argument
as in the proof of Lemma~\ref{lem:3}, 
we conclude that for every $k$,
\begin{align*}
\mathbb{E}\left\Vert\bar{x}^{(k)}-\tilde{x}_{k}\right\Vert^{2}
&\leq\frac{2\eta^{2}\left(\eta^{2}+\frac{(1+\eta^{2} L)^{2}}{\mu(1-\frac{\eta^{2} L}{2})}\right)
\left(\frac{\beta^{2}c_{5}}{\eta^{2}N}
+\frac{2L^{2}c_{5}\eta^{2}}{N(1-\bar{\gamma})^{2}}
\right)+\eta^{4}\frac{\sigma^{2}}{N}}{
1-\left(1-\eta^{2}\mu\left(1-\frac{\eta^{2} L}{2}\right)\right)}
\\
&\qquad\qquad
+\frac{\bar{\gamma}^{2k}-
\left(1-\eta^{2}\mu\left(1-\frac{\eta^{2} L}{2}\right)\right)^{k}}
{\bar{\gamma}^{2}-1+\eta^{2}\mu\left(1-\frac{\eta^{2} L}{2}\right)}
\frac{4L^{2}\bar{\gamma}^{2}}{N}\mathbb{E}\left\Vert x^{(0)}\right\Vert^{2},
\end{align*}
which completes the proof.
\hfill $\Box$ 

\subsection{Proof of Lemma~\ref{lem:4:under}}

\rev{In this proof, we aim to show that the iterates $\tilde{x}_k$, which is defined in \eqref{eq:tilde-x-k},
is close to the iterates $x_{k}$ in \eqref{eq:under-over} obtained from an Euler-Maruyama discretization of an overdamped Langevin SDE. First,} we can compute that
\begin{equation*}
\tilde{x}_{k+1}-x_{k+1}=\tilde{x}_{k}-x_{k}
-\frac{\eta^{2}}{N}\left[\nabla f(\tilde{x}_{k})-\nabla f(x_{k})\right]
+\left(\sqrt{2(1-\beta)}-\sqrt{2}\right)\eta\bar{w}^{(k+1)}.
\end{equation*}
It follows from the arguments in the proof of Lemma~\ref{lem:3}
that we have
\begin{align*}
\left\Vert\bar{x}^{(k+1)}-\tilde{x}_{k+1}\right\Vert^{2}
&\leq
\left(1-\eta^{2}\mu\left(1-\frac{\eta^{2} L}{2}\right)\right)\left\Vert\bar{x}^{(k)}-\tilde{x}_{k}\right\Vert^{2}
\nonumber
\\
&\qquad
+\eta^{2}\left(\eta^{2}+\frac{(1+\eta^{2} L)^{2}}{\mu(1-\frac{\eta^{2} L}{2})}\right)
\left\Vert\frac{1}{\eta^{2}}\left(\sqrt{2(1-\beta)}-\sqrt{2}\right)\eta\bar{w}^{(k+1)}\right\Vert^{2}.
\end{align*}
By taking the expectations, we get
\begin{align*}
&\mathbb{E}\left\Vert\bar{x}^{(k+1)}-\tilde{x}_{k+1}\right\Vert^{2}
\\
&\leq\left(1-\eta^{2}\mu\left(1-\frac{\eta^{2} L}{2}\right)\right)
\mathbb{E}\left\Vert\bar{x}^{(k)}-\tilde{x}_{k}\right\Vert^{2}
+2\left(\eta^{2}+\frac{(1+\eta^{2} L)^{2}}{\mu(1-\frac{\eta^{2} L}{2})}\right)
\left(\sqrt{(1-\beta)}-1\right)^{2}\frac{d}{N},
\end{align*}
for every $k$. The rest of the proof follows similarly as in the proof of Lemma~\ref{lem:3:under}.
\hfill $\Box$

\subsection{Proof of Lemma~\ref{lem:C1:gamma}}
Note that $x_{\eta}^{\ast}$ by its definition 
coincides with the fixed point $\hat{x}^{\infty}$ of the decentralized gradient descent without noise:
\begin{equation*}
\hat{x}^{(k+1)}=\mathcal{W}\hat{x}^{(k)}-\eta\nabla F\left(\hat{x}^{(k)}\right),
\end{equation*}
i.e.
\begin{equation*}
\hat{x}^{\infty}=\mathcal{W}\hat{x}^{\infty}-\eta\nabla F\left(\hat{x}^{\infty}\right),
\end{equation*}
and $x_{\eta}^{\ast}=\hat{x}^{\infty}$. 
Since $x_{\eta}^{\ast}$ and $x^{\ast}$
do not depend on $\hat{x}^{(0)}$, 
to get a bound on $\Vert x_{\eta}^{\ast}-x^{\ast}\Vert$, 
we can assume that $\hat{x}^{(0)}=0$, and apply Corollary 9 in \citet{Yuan16}
which is re-stated in \citet{robust-network-asg}:
\begin{equation*}
\left\| \hat{x}_{i}^{\infty} - x^{\ast}\right\| \leq C_1 \frac{\eta}{1-\bar{\gamma}}, \quad \mbox{where} 
\quad \bar{\gamma} := \max\left\{\left|\lambda_2^W\right|, \left|\lambda_N^W\right|\right\},
\end{equation*} 
where $x^{\ast}=(x_{\ast}^{T},x_{\ast}^{T},\ldots,x_{\ast}^{T})^{T}$,
where $x^{\ast}$ is the minimizer of $f(x)$,
which yields that
\begin{equation*}
\left\| x_{\eta}^{\ast} - x^{\ast}\right\| \leq C_1 \frac{\eta\sqrt{N}}{1-\bar{\gamma}}, \quad \mbox{where} 
\quad \bar{\gamma} := \max\left\{\left|\lambda_2^W\right|, \left|\lambda_N^W\right|\right\}.
\end{equation*} 
The proof is complete. 
\hfill $\Box$ 


\subsection{Proof of Lemma~\ref{lem-negative-coef}}
By the definition of $L_\alpha$ given by \eqref{def-Lalpha}, we have 
 \beq k_{\alpha,\beta}:=\frac{\alpha}{(1-\beta)} \left( \frac{\alpha}{1-\beta} - \frac{2\alpha}{1-\lambda_N^W + (L+\mu)\alpha} + \frac{2c_1 \alpha}{1-\beta} \right)\,.
 \eeq
 Due to \eqref{eq-cond-alpha}, we have $(L+\mu)\alpha \leq \frac{1+\lambda_N^W}{2}$, therefore
\beq 
 k_{\alpha,\beta}\leq \frac{\alpha}{(1-\beta)} \left( \frac{\alpha}{1-\beta} - \frac{2\alpha}{1-\lambda_N^W + (1+\lambda_N^W)/2} + \frac{2c_1 \alpha}{1-\beta} \right)\,.
 \label{ineq-k-ab}
\eeq
Furthermore, 
\beq c_1 = \frac{1}{2}\frac{\alpha\mu}{(1-\beta)(1+\mu/L_\alpha) + 2\beta} < \frac{1}{2} \frac{\alpha\mu}{1+\beta} < \frac{1}{2}\alpha\mu\,,
\label{ineq-c1-ub}
\eeq
where we used the fact that $\mu/L_\alpha>0$. Therefore, by replacing $c_1$ in \eqref{ineq-k-ab} with its upper bound \eqref{ineq-c1-ub}, we obtain 
\begin{align*}
 k_{\alpha,\beta}&\leq \frac{\alpha^2}{(1-\beta)} \left( \frac{1}{1-\beta} - \frac{2}{1-\lambda_N^W + (1+\lambda_N^W)/2} + \frac{ \alpha\mu}{1-\beta} \right) \\
  &= \frac{\alpha^2}{(1-\beta)} \left( \frac{1}{1-\beta} - \frac{4}{3-\lambda_N^W} + \frac{ \alpha\mu}{1-\beta} \right)\,. 
\end{align*}
Since $\alpha>0$ and $\beta < 1/2$ by our assumptions, $\frac{\alpha^2}{1-\beta}>0$ and $k_{\alpha,\beta} < 0$ if and only if 
$$\frac{1}{1-\beta} - \frac{4}{3-\lambda_N^W} + \frac{ \alpha\mu}{1-\beta} < 0\,,$$
which is equivalent to 
 \beq \beta < \frac{1+\lambda_N^W +\alpha\mu\lambda_N^W - 3\alpha\mu}{4}\,.
 \label{ineq-positivity-cond}
 \eeq
By our assumption \eqref{eq-cond-beta} on $\beta$, we have 
$$\beta \leq \frac{1+\lambda_N^W - 4\alpha\mu}{4}\,,$$
and noticing that $\lambda_N^W>-1$ and $\alpha\mu \lambda_N^W > -\alpha\mu$, we conclude that the inequality \eqref{ineq-positivity-cond} holds. Hence, we obtain $k_{\alpha,\beta}<0$ and the proof is complete.
\hfill $\Box$


\subsection{Proof of Lemma~\ref{lemma-q1}}
Using the definitions of $a$ and $b$ from \eqref{a:eqn} and \eqref{b:eqn}, we have 
$$q_1 =\frac{a}{b}=\frac{2\alpha\beta + 4\beta^2 (1-\lambda_N^W)^2 (1 + 1/c_1)/\mu}{2\alpha\left(\beta-\mu\alpha + 2\mu \frac{(1-\beta)}{L_\alpha+\mu} - 2c_1\alpha\mu\right)}\,,$$
where $c_1$ is given by \eqref{eq-c1}. Therefore, the condition $q_1\in(0,1)$ is equivalent to 
\beq 
b(1-\beta)^2 = 2\alpha\left(\beta-\mu\alpha + 2\mu \frac{(1-\beta)}{L_\alpha+\mu} - 2c_1\alpha\mu\right)> 0\,,
\label{ineq-first-cond}
\eeq 
where $b$ is defined by \eqref{b:eqn}
and 
\beq 
2\alpha\beta + 4\beta^2 \left(1-\lambda_N^W\right)^2 (1 + 1/c_1)/\mu < 2\alpha\left(\beta-\mu\alpha + 2\mu \frac{(1-\beta)}{L_\alpha+\mu} - 2c_1\alpha\mu\right)\,.
\label{ineq-sec-cond}
\eeq 
It suffices to show that under our assumptions on $\alpha$ and $\beta$, these two conditions are satisfied. The first condition \eqref{ineq-first-cond} is satisfied because $b = \frac{2\alpha\beta}{(1-\beta)^2} - 2 k_{\alpha,\beta}\mu > \frac{2\alpha\beta}{(1-\beta)^2}>0$ by Lemma~\ref{lem-negative-coef}.
We next prove that the second condition
\eqref{ineq-sec-cond} holds. We re-organize \eqref{ineq-sec-cond} as
\beq   
4\beta^2 \left(1-\lambda_N^W\right)^2 (c_1 + 1) < 2 c_1\mu \alpha\left(-\mu\alpha + 2\mu \frac{(1-\beta)}{L_\alpha+\mu} - 2c_1\alpha\mu\right)\,. 
\eeq 
We note that 
$$c_1 = \frac{1}{2}\frac{\alpha\mu}{(1-\beta)(1+\mu/L_\alpha) + 2\beta}\leq \frac{\alpha\mu}{2(1+\beta)} < 1\,,$$ 
where in the first inequality we used the fact that $\mu/L_\alpha>0$ whereas in the second inequality we used the assumptions \eqref{eq-cond-alpha} and \eqref{eq-cond-beta}. 
Therefore, $c_1 + 1 < 2$ and 
$2c_1\alpha\mu \leq 2 \alpha \mu \frac{\alpha\mu}{2(1+\beta)}$. Hence it suffices to have
\begin{align*}   
8 \beta^2 \left(1-\lambda_N^W\right)^2  &\leq 2 c_1\mu \alpha\left(-\mu\alpha + 2\mu \frac{(1-\beta)}{L_\alpha+\mu} - 2 \alpha \mu \frac{\alpha\mu}{2(1+\beta)}\right) \\
&= 2 c_1\mu \alpha\left(-\mu\alpha + 2\mu \frac{(1-\beta)\alpha}{1-\lambda_N^W +( L +\mu)\alpha} - 2 \alpha \mu \frac{\alpha\mu}{2(1+\beta)}\right)\,,  
\end{align*}
where we used the definition of $L_\alpha$ given in \eqref{def-Lalpha}. By assumption \eqref{eq-cond-alpha}, we have $\alpha\leq (1+\lambda_N^W)/(2(L+\mu))$; therefore it suffices to have 
\begin{align}   
8 \beta^2 \left(1-\lambda_N^W\right)^2 &\leq 2 c_1\mu \alpha\left(-\mu\alpha + 2\mu \frac{(1-\beta)\alpha}{1-\lambda_N^W + \frac{1+\lambda_N^W}{2}} - 2 \alpha \mu \frac{\alpha\mu}{2(1+\beta)}\right)\\
&= 2 c_1\mu \alpha\left(-\mu\alpha + 4\mu \frac{(1-\beta)\alpha}{3-\lambda_N^W} - 2 \alpha \mu \frac{\alpha\mu}{2(1+\beta)}\right) \\
&= 2c_1\mu^2 \alpha^2 \left(-1 + 4 \frac{(1-\beta)}{3-\lambda_N^W} -  \frac{\alpha\mu}{(1+\beta)}\right)\,.\label{ineq-to-be-relaxed}
\end{align}
By differentiating the right-hand side of \eqref{ineq-to-be-relaxed} with respect to $\beta$, it is easy to see that the right-hand side is a decreasing function of $\beta$ under our assumptions. 
Therefore, by plugging in the largest allowed value $\frac{1+\lambda_N - 4\alpha\mu}{4}$ for $\beta$ on the right-hand side of this inequality, we can relax condition \eqref{ineq-to-be-relaxed} to
\begin{align*}   
8 \beta^2 \left(1-\lambda_N^W\right)^2 
\leq  2c_1\mu^2 \alpha^2 \left(-1 + 4 \frac{1 - \frac{1+\lambda_N^W - 4\alpha\mu}{4}}{3-\lambda_N^W} - \alpha\mu\right)
= 2c_1\mu^2 \alpha^2 \left( \frac{\alpha\mu(1+\lambda_N^W)}{3-\lambda_N^W} \right)\,.
\end{align*}
Since $\lambda_N^W \in (-1,1)$, it suffices to have 
\begin{equation*}   
8\beta^2 \left(1-\lambda_N^W\right)^2 
\leq 2c_1\mu^2 \alpha^2 \left( \frac{\alpha\mu(1+\lambda_N^W)}{4} \right)\,,
\end{equation*}
which holds if and only if 
\begin{equation*}   
\beta^2
\leq c_1\mu^3 \alpha^3 \left( \frac{(1+\lambda_N^W)}{16  (1-\lambda_N^W)^2 } \right)\,.
\end{equation*} 
Since $\lambda_N^W \in (-1,1)$, it suffices to have 
\begin{equation*}   
\beta^2
\leq c_1\mu^3 \alpha^3 \left( \frac{1+\lambda_N^W}{64 } \right)\,, 
\end{equation*} 
which is exactly the condition \eqref{eq-cond-beta2} we assumed in the statement of the lemma. We conclude that the inequality \eqref{ineq-sec-cond} is also satisfied. Finally, we infer from \eqref{ineq-first-cond} and \eqref{ineq-sec-cond} that $q_1 \in (0,1)$ completing the proof.
\hfill $\Box$ 


\subsection{Proof of Lemma~\ref{lemma-q2}}
Consider the matrix pencil $S_s = sM - {Q}$ with $s\geq 0$. We have 
\begin{align*}
S_s = \begin{bmatrix}
        \left(s - 1 - 2c_1  + \frac{2\alpha\mu L_\alpha}{(1-\beta)(L_\alpha+\mu)}\right)I_d & \frac{\beta}{(1-\beta)}(s-1-2c_1)I_d \\
        \frac{\beta}{(1-\beta)}(s-1-2c_1)I_d &  \left(\frac{(s-1-2c_1)\beta^2 + \alpha\beta \mu}{(1-\beta)^2} \right)I_d
    \end{bmatrix}
    = A_s\otimes I_d\,,
\end{align*}
where $\otimes$ denotes the Kronecker product of matrices and $A_s$ is the $2\times 2$ matrix
$$A_s =  \begin{bmatrix}
    s - 1 - 2c_1+ \frac{2\alpha\mu L_\alpha}{(1-\beta)(L_\alpha+\mu)} & \frac{\beta}{(1-\beta)}(s-1-2c_1)\\
    \frac{\beta}{(1-\beta)}(s-1-2c_1) & \frac{(s-1-2c_1)\beta^2 + \alpha\beta \mu}{(1-\beta)^2}
    \end{bmatrix}\,.   $$
By the properties of the Kronecker product, the symmetric matrix $S_s$ has the same eigenvalues with the $2\times 2$ matrix $A_s$ and $S_s$ is positive semi-definite if and only if as $A_s$ is positive semi-definite. 
Therefore, $S_s$ is positive definite if and only if the principal minors of $A_s$ are non-negative, i.e.
$$
s - 1 - 2c_1 + \frac{2\alpha\mu L_\alpha}{(1-\beta)(L_\alpha+\mu)} \geq 0\,,
$$
and 
$$ 
\left(s - 1 - 2c_1 + \frac{2\alpha\mu L_\alpha}{(1-\beta)(L_\alpha+\mu)}\right) \left( \frac{(s-1-2c_1)\beta^2 +\alpha\mu\beta}{(1-\beta)^2}  \right)  \geq \left(\frac{\beta}{(1-\beta)}(s-1-2c_1)\right)^2\,.
$$ 
After some computations we observe that the last inequality is equivalent to 
$$
s - 1 - 2c_1 + \frac{2\alpha\mu L_\alpha}{(1-\beta)(L_\alpha+\mu) + 2L_\alpha \beta}\geq 0\,.
$$
We conclude that $S_s$ is positive semi-definite if and only if 
$$ 
s\geq 1 + 2c_1 - \frac{2\alpha\mu L_\alpha}{(1-\beta)(L_\alpha+\mu) + 2L_\alpha \beta} = 1 - \frac{\alpha\mu L_\alpha}{(1-\beta)(L_\alpha+\mu) + 2L_\alpha \beta}\,,
$$
where we used the definition \eqref{eq-c1} of $c_1$ in the last equality. This completes the proof.
\hfill $\Box$ 

\section{Discussions on Gradient Noise Assumptions}\label{sec:gradient:noise:assump}

\rev{In our analysis, we assumed that the variance of the gradient noise is bounded (Assumption~\ref{assumption}). 
This is a reasonable assumption since it can be shown that if the stepsize $\eta>0$ is small enough the variance of the gradients will stay bounded and satisfy our assumptions on the gradient noise (Assumption~\ref{assumption}) with an analysis similar to \citet[Section K]{universally-optimal-sgd}. We can illustrate this point in detail as follows.} 

\rev{Consider a more general gradient noise setting than Assumption~\ref{assumption}:
    \begin{equation}\label{cond-noise}
        \mathbb{E} \left[\tilde\nabla f_i(x) - \nabla f_i(x) \Big| x \right] = 0, 
        \qquad
         \mathbb{E} \left[\left\|\tilde\nabla f_i(x) - \nabla f_i(x) \right\|^2\Big| x\right]  \leq C\left(1+ \|x\|^2\right), 
    \end{equation}
(see e.g. \cite{jain2018accelerating}) where $C$ is a positive constant.
The assumption \eqref{cond-noise} is satisfied for a wide class of $f_{i}$ functions
when gradients are estimated over mini-batches.
Consider the linear regression example in the empirical risk minimization setting, where the stochastic gradients $\tilde\nabla f_i(x)$ are estimated from mini-batches of size $b$ at a point $x$, i.e. 
$$\tilde\nabla f_i(x) = \frac{2n_i}{b}\sum_{k=1}^b (y_{j_k}^i-x^TX_{j_k}^i) + \frac{1}{ \lambda N}x, $$
where $j_1, j_2, \dots, j_b$ are selected uniformly random with replacement over the index set $\{1,2.\dots,n_i\}$ of the data points where $n_i$ are finite and fixed.  
In this setting, it is well-known that the gradient error satisfies \eqref{cond-noise}.
The $L_2$-regularized logistic regression case will be similar.}

\rev{In the following, we will show that for DE-SGLD, when the stepsize $\eta$ is sufficiently small, 
the assumption \eqref{cond-noise} implies that:
    \begin{equation}\label{cond-noise-2}
            \mathbb{E} \left[\tilde\nabla f_i(x) - \nabla f_i(x) \Big| x \right] = 0, 
            \qquad
         \mathbb{E} \left\|\tilde\nabla f_i(x) - \nabla f_i(x)\right\|^2 \leq \sigma^2, 
    \end{equation}
for some $\sigma>0$, which is the precisely the assumption we had for gradient noise (Assumption~\ref{assumption})
and hence our main result (Theorem~\ref{thm:overdamped}) holds
under the assumption \eqref{cond-noise}. This is primarily because the second moments of the iterates are uniformly bounded and taking expectation with respect to $x$ in \eqref{cond-noise} would result in a condition like \eqref{cond-noise-2}. To see this in more detail, we recall that in the proof of Lemma~\ref{lem:0}, by assuming 
$\mathbb{E} \left\|\tilde\nabla f_i(x) - \nabla f_i(x)\right\|^2 \leq \sigma^2$, 
as in Assumption~\ref{assumption}, we had
\begin{equation}\label{instead:eqn}
\mathbb{E}\left[\left\Vert x^{(k+1)}-x_{\eta}^{\ast}\right\Vert^{2}\right]
\leq\left(1-\mu\eta\left(1+\lambda_{N}^{W}-\eta L\right)\right)\mathbb{E}\left[\left\Vert x^{(k)}-x_{\eta}^{\ast}\right\Vert^{2}\right]
+\eta^{2}\sigma^{2}N+2\eta dN,
\end{equation}
provided that the stepsize $\eta$ is sufficiently small, 
where $x^{(k)}=\left[(x_{1}^{(k)})^{T},\ldots,(x_{N}^{(K)})^{T}\right]^{T}$, 
and $x_{\eta}^{\ast}$ is the minimizer of $F_{\mathcal{W},\eta}(x)=\frac{1}{2\eta}x^{T}(I-\mathcal{W})x+F(x)$,
where $F(x)=\sum_{i=1}^{N}f_{i}(x_{i})$. 
Now, if we assume \eqref{cond-noise}, instead of \eqref{cond-noise-2}, we will obtain
\begin{align}
\mathbb{E}\left[\left\Vert x^{(k+1)}-x_{\eta}^{\ast}\right\Vert^{2}\right]
&\leq\left(1-\mu\eta\left(1+\lambda_{N}^{W}-\eta L\right)\right)\mathbb{E}\left[\left\Vert x^{(k)}-x_{\eta}^{\ast}\right\Vert^{2}\right]
\nonumber
\\
&\qquad\qquad\qquad
+\eta^{2}CN+\eta^{2}C\mathbb{E}\left[\left\Vert x^{(k)}\right\Vert^{2}\right]+2\eta dN\nonumber
\\
&\leq\left(1-\mu\eta\left(1+\lambda_{N}^{W}-\eta L\right)\right)\mathbb{E}\left[\left\Vert x^{(k)}-x_{\eta}^{\ast}\right\Vert^{2}\right]
\nonumber
\\
&\qquad
+\eta^{2}CN+2\eta^{2}C\mathbb{E}\left[\left\Vert x^{(k)}-x_{\eta}^{\ast}\right\Vert^{2}\right]
+2\eta^{2}C\left\Vert x_{\eta}^{\ast}\right\Vert^{2}
+2\eta dN.
\nonumber
\end{align}
By Lemma~\ref{lem:C1:gamma}, for sufficiently small stepsize $\eta>0$,
$\Vert x_{\eta}^{\ast}-x^{\ast}\Vert\leq C_{1}\frac{\eta\sqrt{N}}{1-\bar{\gamma}}$, where $C_{1},\bar{\gamma}$ are constants defined in \eqref{def-C1} and \eqref{eq:gamma}, and 
we recall from \eqref{eq:x-ast-ND} that $x^{\ast}=\left[x_{\ast}^{T},\ldots,x_{\ast}^{T}\right]^{T}$ where $x_*$ is the minimizer of $f(x) = \sum_{i=1}^N f_i(x)$ which is unique by strong convexity. Therefore, we obtain
\begin{align}
&\mathbb{E}\left[\left\Vert x^{(k+1)}-x_{\eta}^{\ast}\right\Vert^{2}\right]
\nonumber
\\
&\leq\left(1-\mu\eta\left(1+\lambda_{N}^{W}-\eta L\right)\right)\mathbb{E}\left[\left\Vert x^{(k)}-x_{\eta}^{\ast}\right\Vert^{2}\right]
\nonumber
\\
&\qquad
+\eta^{2}CN+2\eta^{2}C\mathbb{E}\left[\left\Vert x^{(k)}-x_{\eta}^{\ast}\right\Vert^{2}\right]
+4\eta^{2}C\left\Vert x^{\ast}\right\Vert^{2}
+4\eta^{4}CC_{1}^{2}\frac{N}{(1-\bar{\gamma})^{2}}
+2\eta dN
\nonumber
\\
&\leq\left(1-\frac{1}{2}\mu\eta\left(1+\lambda_{N}^{W}-\eta L\right)\right)\mathbb{E}\left[\left\Vert x^{(k)}-x_{\eta}^{\ast}\right\Vert^{2}\right]
\nonumber
\\
&\qquad\qquad\qquad\qquad\qquad
+\eta^{2}CN
+4\eta^{2}C\left\Vert x^{\ast}\right\Vert^{2}
+4\eta^{4}CC_{1}^{2}\frac{N}{(1-\bar{\gamma})^{2}}
+2\eta dN,
\end{align}
for sufficiently small $\eta$. This implies that
for sufficiently small $\eta$, such that
$\frac{1}{2}\mu\eta(1+\lambda_{N}^{W}-\eta L)\in(0,1)$, 
we have the following uniform $L_{2}$
bound:
\begin{equation}
\mathbb{E}\left[\left\Vert x^{(k)}-x_{\eta}^{\ast}\right\Vert^{2}\right]
\leq\tilde{C}_{1},\qquad\text{for any $k\in\mathbb{N}$},
\end{equation}
for some constant $\tilde{C}_{1}>0$. 
Finally, by taking the expectation
w.r.t. $x$ in \eqref{cond-noise} and applying
the tower property, we conclude that
\eqref{cond-noise-2} holds for some $\sigma>0$.
Hence, the assumption \eqref{cond-noise} implies
the assumption \eqref{cond-noise-2} which is used in Assumption~\ref{assumption}.
This argument shows that our assumption on the finiteness of gradient noise variance is satisfied
when data is sampled with mini-batches such that \eqref{cond-noise} holds.}

\section{Discussions on the Lipschitz Gradient Assumption}\label{sec:assump:numerical}

\rev{In our analysis, we consider sampling from the target distribution with density $\pi(x) \propto e^{-f(x)} = e^{-\sum_{i=1}^N f_i(x)}$, where $f_i$ is the loss function of the agent $i$ for $i=1,2,\dots, N$. We assume that the gradients $\nabla f_i(x)$ are (uniformly) Lipschitz with some Lipschitz constant $L$. In the linear regression example in Section~\ref{sec:numerical}, we consider the empirical risk minimization setting, where the number of data points $n_i$ that agent $i$ possesses is finite and the dataset is given and fixed. The Lipschitz constant $L$ will in general depend on the dataset, 
but will be finite as long as the number of data points $n$ is finite. For example, in the case of linear regression, we have 
\begin{equation} 
f_i(x) =  \sum\nolimits_{j=1}^{n_i}\left(y_j^i-x^TX_j^i\right)^2 + \frac{1}{2 \lambda N} \|x\|^2\,,
\end{equation}
where agent $i$ possesses a dataset $\mathcal{D}_i :=\{(X_j^i,y_j^i)\}_{j=1}^{n_i}$ of $n_i$ data points. The Hessian of $f_i$ satisfies
$$\nabla^2 f_i(x) = 2 \sum\nolimits_{j=1}^{n_i}X_j^i \left(X_j^i\right)^T + \frac{1}{\lambda N} I\,,$$
where $I$ is the identity matrix. Therefore, Hessian of $f_i$ is uniformly bounded satisfying $\|\nabla^2 f_i(x)\| \leq 2 \sum_{j=1}^{n_i} \|X_j^i\|^2 + \frac{1}{\lambda N}$. Therefore, we can take the Lipschitz constant to be 
    $$ L = 2 \max_{i=1,2,\dots, N} \left(  \sum\nolimits_{j=1}^{n_i} \left\|X_j^i\right\|^2 \right)  + \frac{1}{\lambda N}, $$
and this constant is finite because the number of samples $n =\sum_{i=1}^{N} n_i$ is finite and the data points $X_j^i$ are given and fixed. This is the setting considered in our paper, and therefore our uniformly Lipschitz assumptions are satisfied.}

\rev{More generally, one can try to bound $L$ almost surely, i.e. for almost every realization of the dataset. If we take $X_j^i$ to be random without a compact support (i.e. when data is Gaussian), then $L$ will not be bounded almost surely. 
However, if the input data is bounded (which can often hold in machine learning practice naturally after normalizing/preprocessing data if necessary), then we will have $L$ finite almost surely and our Lipschitz assumption will hold for almost every realization of the dataset. By similar computations, we can have the same conclusions for logistic regression. 
In other words, in the empirical risk minimization setting that we consider when each agent has finitely many data points and the dataset is fixed, our uniform Lipschitz gradient assumption will hold although the Lipschitz constant $L$ will depend on the dataset. If we assume further that data has compact support, our Lipschitz assumption will hold almost surely.}

\rev{It is worth noting that the recent elegant approach in \cite{Barkhagen2021} applies even if the data does not have compact support and when $n$ goes to infinity and can handle non-i.i.d. $L$-mixing data streams. 
However, \cite{Barkhagen2021} considers the centralized setting and distributed setting is not discussed. It is not clear how to apply their techniques to the distributed setting but this would definitely be an interesting future research direction. Also, \cite{Barkhagen2021} does not discuss the stochastic gradient Hamiltonian Monte Carlo case, whereas our analysis framework provides a uniform approach where we study the stochastic gradient Hamiltonian Monte Carlo as well in the distributed setting.
}

\bibliography{langevin}

\end{document}